\documentclass[pdflatex,sn-mathphys-num]{sn-jnl}
\usepackage{amsfonts,amsmath,amssymb,amsthm}
\usepackage{bm,bbm}
\usepackage{booktabs}
\usepackage{float}
\usepackage{graphicx}
\usepackage{latexsym}
\usepackage{multirow}
\usepackage{newtxmath}
\usepackage{nicefrac}
\usepackage{xcolor}
\usepackage{url}
\usepackage{array}
\usepackage[ruled, vlined, noend]{algorithm2e}
\theoremstyle{thmstyleone}
\newtheorem{theorem}{Theorem}

\theoremstyle{thmstyletwo}

\theoremstyle{thmstylethree}

\SetKwRepeat{Do}{do}{while}
\newcolumntype{C}[1]{>{\hfil}m{#1mm}<{\hfil}}
\newcommand{\tcr}[1]{{\textcolor{red}{#1}}}
\definecolor{midori}{rgb}{0.0, 0.5, 0.0}

\newcommand{\sq}{{\rm sq}}
\renewcommand{\nll}{{\rm nll}}

\newcommand{\tra}{{\rm tra}}
\newcommand{\tes}{{\rm tes}}
\newcommand{\Binomial}{{\rm Binomial}}
\newcommand{\Logistic}{{\rm Logistic}}
\newcommand{\Cauchy}{{\rm Cauchy}}
\newcommand{\Normal}{{\rm Normal}}
\DeclareMathOperator{\argmin}{{arg\,min}}

\newcommand{\calY}{{\mathcal{Y}}}

\DeclareMathOperator{\bbI}{{\mathbbm{1}}}
\newcommand{\bbR}{{\mathbb{R}}}
\newcommand{\bbN}{{\mathbb{N}}}

\newcommand{\CF}[1]{{\setlength{\fboxrule}{0.4pt}\setlength{\fboxsep}{0.4pt}\fcolorbox{gray}{white}{#1}}}
\begin{document}
\title[Isotonic Bradley-Terry Model for Paired Comparison Data]{Isotonic Bradley-Terry Model for Paired Comparison Data}
\author{\fnm{Ryoya} \sur{Yamasaki}}\email{ryoya.yamasaki@r.hit-u.ac.jp}
\affil{\orgdiv{Hitotsubashi Institute for Advanced Study}, 
\orgname{Hitotsubashi University}, 
\orgaddress{\street{2-1 Naka}, \city{Kunitachi}, 
\postcode{186-8601}, \state{Tokyo}, \country{Japan}}}
\abstract{%
In this paper, we study prediction problems for paired comparison data, 
for example, predicting the win probability between two unmatched players 
and ranking all the players according to the order of their strengths
by using win probability data between two matched players.
Paired comparison data are typically analyzed 
using Bradley-Terry and Thurstone-Mosteller models.
These models predict the win probability 
by transforming the difference between learned rate parameters, 
which represent players'\;strengths, with a pre-specified inverse link function,
and employ the order of learned rate parameters for player ranking.
However, these models may suffer from model misspecification 
owing to the selection of a fixed inverse link function.
Therefore, in this study, we propose to learn the rate parameters by a (sub-)gradient method 
and the inverse link function by an isotonic regression technique alternately.
The proposed model guarantees monotonic improvement in training error, 
and is likely to yield an exact tie when the available data is insufficient to establish a strict ranking.
We also verified that the proposed model could improve the win probability prediction 
and ranking performance through numerical experiments with synthetic data and 
real-world data of football Premier League, baseball MLB, and tennis ATP tour.}
\keywords{Isotonic regression, Bradley-Terry model, paired comparison data}
\maketitle
\section{Introduction}
\label{sec:Introduction}
In this paper, we study prediction problems for paired comparison data
\citep{bradley1976science,cattelan2012models}, for example, 
predicting the win probability between two unmatched players (or teams)
and ranking all the players according to the order of their strengths
by using win probability data between two matched players
for chess, tennis, and soccer matches.
Similar problems arise in surveys of people's perceptions and preferences, 
as in \citet{kissler2000effects,maydeu2008modeling,mazzuchi2008paired}
and Chatbot arena \citep{rafailov2023direct,Chatbot},
since it is burdensome for subjects to completely rank all items and 
a survey often only inquire comparisons between two items.

Paired comparison data are typically analyzed using 
generalized Bradley-Terry model, such as Bradley-Terry model 
\citep{Zermelo1929, ford1957solution, bradley1952rank, luce1959individual},
Thurstone-Mosteller model \citep{thurstone1927law, mosteller1951remarks1},
and variants by \citet{stern1990continuum}.
These models introduce real-valued rate parameters that represent players'\;strengths,
predict the win probability by transforming the difference between 
learned rate parameters with a pre-specified inverse link function,
and rank the players from highest to lowest according to 
the descending order of the learned rate parameters.

Generalized Bradley-Terry model has excellent interpretability and usability.
However, conventional usages of generalized Bradley-Terry model 
that select a fixed inverse link function before learning rate parameters
may suffer from model misspecification and perform poorly.
We are interested in how good a model can be designed while maintaining 
the excellent interpretability of generalized Bradley-Terry model
that represents players'\;strengths by real-valued parameters.
Therefore, in this paper, we propose to learn not only the rate parameters 
but also the inverse link function, and to mitigate model misspecification 
by iterating these learnings alternately:
there, we learn the rate parameters by a (sub-)gradient method
and the inverse link function by an isotonic (monotone) regression technique
\citep{ayer1955empirical,brunk1955maximum,van1956maximum,
barlow1972statistical,robertson1988order}.
We also study under what situations the proposed model can
improve the win probability prediction and ranking performance 
through synthetic data experiments,
and verify practical effectiveness of the proposed model 
through real-world data experiments.

The remaining part of this paper is organized as follows:
Section~\ref{sec:Preliminary} presents the problem formulation 
and review of generalized Bradley-Terry model.
We describe the proposed model in Section~\ref{sec:Proposal},
and discuss relationships between that model and related model,
nonparametric Bradley-Terry model \citep{chatterjee2015matrix,
shah2016stochastically,chatterjee2018matrix} in Section~\ref{sec:Discussion}.
In Sections~\ref{sec:Synthetic} and \ref{sec:RealWorld}, 
we present results of the numerical experiments
respectively for synthetic data and real-world data.
Our experimental codes are summarized at
\url{https://github.com/yamasakiryoya/IBT}.
Finally, we conclude this study and mention future prospects 
regarding the proposed model in Section~\ref{sec:Conclusion}.

\section{Preliminaries: Problem Formulation and Generalized Bradley-Terry Model}
\label{sec:Preliminary}
In this section, we describe the formulation of prediction problems 
for paired comparison data addressed in this paper,
and review a conventional statistical model for those problems, 
generalized Bradley-Terry model.
In order to facilitate concrete understanding, 
this paper adopts the description and terminologies supposing 
the win probability prediction and player ranking as application.

Suppose that there exists $n\in\bbN$ players (or teams) in total, 
and that we have data about the win probability for matches 
(e.g., chess, tennis, and soccer) between two different players:
let $D_\tra\coloneq\{(i,j)\in[n]^2\mid\text{players $i$ and $j$ have played a match at least once.}\}$
(note that $(j,i)\in D_\tra$ when $(i,j)\in D_\tra$),
where we suppose that a player can either win, lose, or draw in each match, 
and let win probability $y_{i,j}$ be 
\begin{align}
	\frac{\substack{\text{`the number of times that (\#) player $i$ has won player $j$'}\\+0.5\times\text{`\# players $i$ and $j$ have drawn'}}}
	{\text{`the number of matches between players $i$ and $j$'}} 
\end{align}
so that $y_{j,i}=1-y_{i,j}$ for every $(i,j)\in D_\tra$ 
according to the convention \citep{elo1978rating,hunter2004mm}.

We are interested in predicting `the probability that player $i$ will win player $j$'%
${}+0.5\times{}$`the probability that players $i$ and $j$ will draw', 
which is also denoted $y_{i,j}$, 
for an unmatched player pair $(i,j)\in D_\tes\coloneq\{(i,j)\in[n]^2\mid
\text{players $i$ and $j$ have not played a match.}\}
\subseteq\{(i,j)\in[n]^2\mid i\neq j\}\setminus D_\tra$.
Also, we are interested in ranking the players so that 
a higher-ranked player tends to win a lower-ranked player.

Generalized Bradley-Terry model serves as a basic
approach for addressing the above two interests.
This model introduces real-valued parameters $(r_i)_{i\in[n]}$
(which we call rate parameters) that represent players'\;strengths,
predicts the win probability $y_{i,j}$ by $\sigma(\hat{r}_i-\hat{r}_j)$ with 
learned rate parameters $(\hat{r}_i)_{i\in[n]}$ and a function $\sigma$,
which we call inverse link function following the terminology 
of generalized linear model \citep{nelder1972generalized}.
It can also rank the players from highest to lowest 
according to the descending order of $(\hat{r}_i)_{i\in[n]}$ 
(or some alternatives discussed in Section~\ref{sec:Proposal}).

\begin{figure}[t!]
\centering%
\includegraphics[width=8cm]{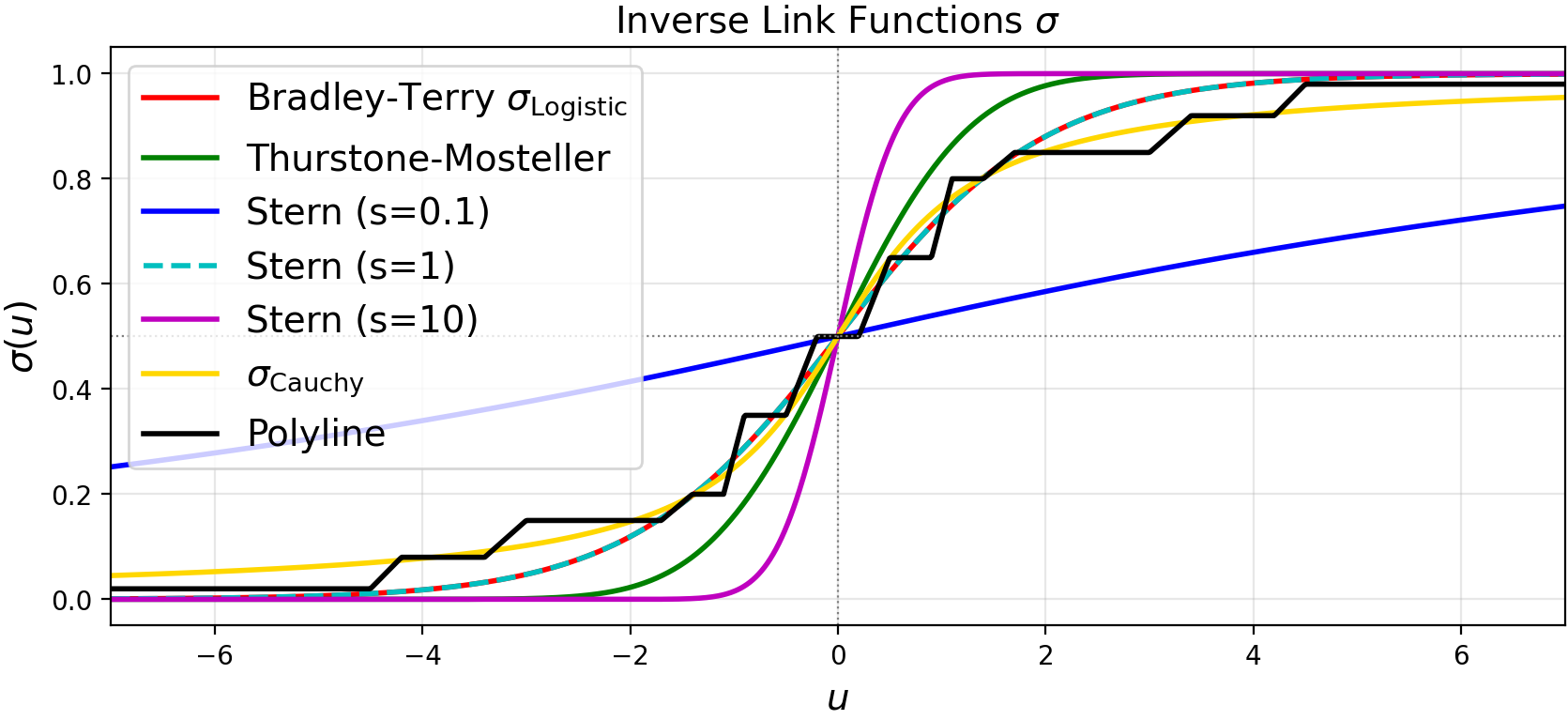}
\caption{%
Examples of the inverse link function $\sigma$.}
\label{fig:InvLinkFunc}
\end{figure}

For such a usage, the inverse link function $\sigma$
should be a non-decreasing function from $\bbR$ to $[0,1]$ and 
satisfy the symmetry $\sigma(-u)=1-\sigma(u)$ 
so that $\sigma(u)\le0.5$ for $u<0$, $\sigma(0)=0.5$, $\sigma(u)\ge0.5$ for $u>0$, 
and the model predicts that player $i$ is not weaker than player $j$ if $\hat{r}_i\ge\hat{r}_j$.
We denote the set of such functions as $\Sigma$.
As the inverse link function $\sigma\in\Sigma$, Bradley-Terry model 
\citep{Zermelo1929, ford1957solution, bradley1952rank, luce1959individual}
applies the logistic function $\sigma_\Logistic(u)\coloneq1/(1+e^{-u})$
so that $\sigma(r_i-r_j)=s_i/(s_i+s_j)$ with $s_i=e^{r_i}$, and
Thurstone-Mosteller model \citep{thurstone1927law, mosteller1951remarks1}
applies the cumulative distribution function of the Gauss distribution
$\sigma(u)=\int_{-\infty}^u (2\pi)^{-1/2}e^{-v^2/2}\,dv$
(see also Figure~\ref{fig:InvLinkFunc}).
Also, \citet{stern1990continuum} proposed to use $\sigma(u)=
\int_0^\infty\int_0^{e^u w} \{\Gamma(s)\}^{-2}(vw)^{s-1}e^{-(v+w)}\,dv\,dw$
with a hyper-parameter $s\in(0,\infty)$ and the gamma function $\Gamma$.
This proposal corresponds to predicting the win probability $y_{i,j}$ by
the probability $\Pr(X_i<X_j)$ 
with independent gamma random variables $X_i, X_j$ of 
common shape parameter $s$ and scale parameter $e^{r_i}, e^{r_j}$,
or equivalently, by the probability that player $i$ will be the first to score 
$s$ points under the assumption that players $i$ and $j$ score points 
according to a Poisson process with rates $e^{r_i}, e^{r_j}$ when $s\in\bbN$.
This inverse link function $\sigma(u)$ parameterizes a continuous transition 
between purely random and deterministic regimes by the hyper-parameter $s$:
Specifically, it degenerates to the constant function $\sigma(u) = 1/2$ as $s\to0$,
reduces to the logistic function when $s=1$, and 
approaches the deterministic step function $\sigma(u)\le0$ for $u<0$, 
$=0.5$ for $u=0$, and $\ge0.5$ for $u>0$ as $s\to\infty$.

A learning procedure for the rate parameters 
is typically designed according to the interpretation that 
$\sigma(r_i-r_j)$ is an estimation model of $y_{i,j}$ as
\begin{align}
\label{eq:Rate}
	(\hat{r}_i)_{i\in[n]}
	\in\underset{(r_i)_{i\in[n]}}{\argmin}\,
	\biggl[\frac{1}{|D_\tra|}\sum_{(i,j)\in D_\tra}\phi(\sigma(r_i-r_j),y_{i,j})\biggr]
\end{align}
with a non-negative-valued loss function $\phi$,
where $|D|$ is the cardinality of the set $D$.
Popular instances of the loss function $\phi$ are,
for example, the negative log-likelihood (NLL) loss $\phi_\nll(u,v)\coloneq-v\log u$ 
if $u\in(0,1]$, $0$ if $v=0$, or $\infty$ if $u=0$ and $v\neq0$
and robust alternatives \citep{bianco1996robust,yamasaki2023label}
including the squared loss $\phi_\sq(u,v)\coloneq(u-v)^2$.
For a loss function $\phi$ that is strictly convex with respect to the first argument,
under some regularity conditions\footnote{%
\label{fn:iden}
\cite{ford1957solution,hunter2004mm} assumed that
in every possible partition of $n$ players into two nonempty subsets,
some player $i$ in the second set wins some player $j$ in the first set at least once
(i.e., $y_{i,j}>0$ for $(i,j)\in D_\tra$).
This condition works in two ways:
First, it guarantees that there are no incomparable player subsets; 
if there are, the learning loses the identifiability of the rate parameters.
Second, it guarantees that there are no players with all loses or all wins; 
if there are, the rate parameter can take the value $\pm\infty$.}
the problem \eqref{eq:Rate} becomes convex and has a unique solution,
which can obtained by Majorize-Minimize (MM) algorithm \citep[Section~2]{hunter2004mm}.
For the win probability prediction, we can employ 
the same-form criterion as that for learning the rate parameters:
\begin{align}
\label{eq:WPE}
	\frac{1}{|D|}\sum_{(i,j)\in D}\phi(\sigma(r_i-r_j),y_{i,j})
\end{align}
with learned rate parameters 
$(r_i)_{i\in[n]}=(\hat{r}_i)_{i\in[n]}$ and $D=D_\tes$.
On the other hand, when we are further interested in the player ranking,
we can rely on Kendall's Tau (Kendall's rank correlation coefficient) \citep{kendall1938new},
\begin{align}
\label{eq:Kendall}
	\frac{n_+-n_-}{\sqrt{(|D|-n_+)(|D|-n_-)}}
\end{align}
with $(r_i)_{i\in[n]}=(\hat{r}_i)_{i\in[n]}$ and $D=D_\tes$,
$n_+=\sum_{(i,j)\in D}\bbI((\sigma(r_i-r_j)-0.5)\cdot(y_{i,j}-0.5)>0)$,
and $n_-=\sum_{(i,j)\in D}\bbI((\sigma(r_i-r_j)-0.5)\cdot(y_{i,j}-0.5)<0)$,
where $\bbI$ is the indicator function that values
$\bbI(c)=1$ if the condition $c$ is true or $0$ otherwise,
and Spearman's Rho (Spearman's rank correlation coefficient) \citep{Spearman04};
Kendall's Tau is considered to be more stable than 
Spearman's Rho in terms of ties (e.g., $\sigma(r_i-r_j)=0.5$),
mathematically easier to handle, and 
therefore preferred as described in \citet{noether1981kendall}.

\section{Proposal: Isotonic Bradley-Terry Model}
\label{sec:Proposal}
Generalized Bradley-Terry model has excellent interpretability,
which comes from real-valued representation of the players'\;strengths.
However, conventional usages of generalized Bradley-Terry model 
that select a fixed inverse link function $\sigma$ in advance
may suffer from model misspecification and perform poorly.
Even with model selection that selects an inverse link function 
from multiple pre-specified candidates based on learning results, 
it is not ensured to select a inverse link function 
that is optimal among the class $\Sigma$.
We are interested in how good a model can be designed while maintaining 
the excellent interpretability of generalized Bradley-Terry model 
that models players'\;strengths by real-valued parameters.
Therefore, in this paper, we propose to learn 
(optimize) the inverse link function $\sigma$ as well.

In this paper, for optimization of the inverse link function $\sigma$,
we relay on (generalized) isotonic regression 
\citep{brunk1955maximum,van1956maximum,barlow1972statistical,robertson1988order},
a nonparametric approach:
\begin{align}
\label{eq:Link}
	\begin{split}
	\hat{\sigma}
	&\coloneq\text{`a polyline function connecting the unique elements}
\\
	&\hphantom{\coloneq~~,}
	\text{of $(\hat{r}_i-\hat{r}_j,\hat{\sigma}_{i,j})_{(i,j)\in D_\tra}$ sorted in $(\hat{r}_i-\hat{r}_j)_{(i,j)\in D_\tra}$'}
	\end{split}
\end{align}
with $(\hat{\sigma}_{i,j})_{(i,j)\in D_\tra}$ that are defined, 
for learned rate parameters $(\hat{r}_i)_{i\in[n]}$, by
\begin{align}
\label{eq:Knots}
	\begin{split}
	(\hat{\sigma}_{i,j})_{(i,j)\in D_\tra}
	&\in\underset{(\sigma_{i,j})_{(i,j)\in D_\tra}}{\argmin}\,
	\biggl[\frac{1}{|D_\tra|}\sum_{(i,j)\in D_\tra}\phi(\sigma_{i,j},y_{i,j})\biggr]
\\
	&\text{~~~~~~~~~\,s.t.~~~~~~~\,}
	\sigma_{j,i}=1-\sigma_{i,j}\text{ and }
	\sigma_{i,j}\le\sigma_{i',j'}\text{ if }\hat{r}_i-\hat{r}_j\le\hat{r}_{i'}-\hat{r}_{j'}
\\
	&\hphantom{\text{~~~~~~~~~\,s.t.~~~~~~~\,}}
	\text{for all }(i,j),(i',j')\in D_\tra.
	\end{split}
\end{align}
See Figure~\ref{fig:InvLinkFunc} for an instance of the polyline function.
We propose to first select an inverse link function $\sigma$
(for example, $\sigma=\sigma_\Logistic$) and then iterate 
the updates \eqref{eq:Rate} and \eqref{eq:Link} alternately:
Denoting $\hat{\sigma}^{[0]}\coloneq\sigma$,
calculate the solution $(\hat{r}_i)_{i\in[n]}$ of \eqref{eq:Rate} 
with $\sigma=\hat{\sigma}^{[t-1]}$ as $(\hat{r}_i^{[t]})_{i\in[n]}$ 
and solution $\hat{\sigma}$ of \eqref{eq:Link} 
with $(\hat{r}_i)_{i\in[n]}=(\hat{r}_i^{[t]})_{i\in[n]}$ as $\hat{\sigma}^{[t]}$
as increasing the index $t\in\bbN$.
We call this model isotonic Bradley-Terry model.
While previous works employ $(\hat{\sigma}^{[0]},(\hat{r}_i^{[1]})_{i\in[n]})$,
the isotonic Bradley-Terry model employs $(\hat{\sigma}^{[t]},(\hat{r}_i^{[t]})_{i\in[n]})$
or $(\hat{\sigma}^{[t]},(\hat{r}_i^{[t+1]})_{i\in[n]})$ with $t\in\bbN$.

{\IncMargin{2em}
\begin{algorithm}[t]
\caption{PAV algorithm for the isotonic regression problem \eqref{eq:Knots}}
\label{alg:PAV-algorithm}
\SetAlgoNlRelativeSize{0}
\KwIn{Loss function $\phi$, inputs $(\hat{r}_i-\hat{r}_j)_{(i,j)\in D_\tra}$, and targets $(y_{i,j})_{(i,j)\in D_\tra}$.}

\tcc{Initialize estimates $L_k$, $R_k$, and $z_k$ for $k=1,\ldots,N$.}
\nl Let $\{x_k\}_{k=1}^N$ be unique sorted elements of $\{\hat{r}_i-\hat{r}_j\}_{(i,j)\in D_\tra}$:
	$x_1<\cdots<x_N$, $\hat{r}_i-\hat{r}_j\in\{x_k\}_{k=1}^N$ for all $(i,j)\in D_\tra$,
	and $x_k\in\{\hat{r}_i-\hat{r}_j\}_{(i,j)\in D_\tra}$ for all $k=1,\ldots,N$.\hypertarget{1}{}

\nl Left edge of $k$th input group $L_k\leftarrow x_k$, 
	right edge of $k$th input group $R_k\leftarrow x_k$, 
	and $k$th target group $\calY_k\leftarrow \{y_{i,j}\mid\hat{r}_i-\hat{r}_j=x_k\}_{(i,j)\in D_\tra}$ for $k=1,\ldots,N$.\hypertarget{3}{}

\nl Target estimate for $k$th group $z_k\in\argmin_{u}\sum_{v\in\calY_k}\{\phi(u,v)+\phi(1-u,1-v)\}$ for $k=1,\ldots,N$.

\tcc{Update estimates $L_k$, $R_k$, and $z_k$ for $k=1,\ldots,N$.}
\nl \While{$z_1<\cdots<z_N$ is false}{
	\tcc{Update groups: Merge adjacent groups if target estimates violate order.}
	\nl Index of updated group $l\leftarrow1$, $L'_1\leftarrow L_1$, $R'_1\leftarrow R_1$, and $\calY'_1\leftarrow\calY_1$.

	\nl \For{$k=1,\ldots,N-1$}{
		\nl \lIf{$z_k<z_{k+1}$}{
			$l\leftarrow l+1$, $L'_l\leftarrow L_{k+1}$, $R'_l\leftarrow R_{k+1}$, and $\calY'_l\leftarrow\calY_{k+1}$
		}\nl\lElse{
			$R'_l\leftarrow R_{k+1}$ and $\calY'_l\leftarrow\calY'_l\cup\calY_{k+1}$
		}
	}
	\nl $N\leftarrow l$, $L_k\leftarrow L'_k$, $R_k\leftarrow R'_k$, and $\calY_k\leftarrow\calY'_k$ for $k=1,\ldots,N$.

	\tcc{Update target estimates.}
	\nl $z_k\in\argmin_{u}\sum_{v\in\calY_k}\{\phi(u,v)+\phi(1-u,1-v)\}$ for $k=1,\ldots,N$.\hypertarget{10}{}
}
\KwOut{$(L_k,R_k,z_k)_{k\in[N]}$, 
or $\hat{\sigma}_{i,j}=z_k$ if $\hat{r}_i-\hat{r}_j\in[L_k,R_k]$ for all $(i,j)\in D_\tra$.}
\end{algorithm}
\DecMargin{2em}}

We can perform the update \eqref{eq:Link} 
(global minimization) of the inverse link function with 
pool-adjacent-violators (PAV) algorithm \citep{ayer1955empirical}%
\footnote{%
There are also other algorithms to solve isotonic regression problem; see, for instances, 
\citet{brunk1957minimizing,thompson1962problem,lee1983min,best1990active}.
However, PAV algorithm achieves the information-theoretic lower bound of computation complex.},
that is, Algorithm~\ref{alg:PAV-algorithm} in this paper:
\begin{theorem}
\label{thm:thm1}
If $\phi(u,v)$ is strictly convex in $u\in(0,1)$ at every $v\in[0,1]$
such as $\phi=\phi_\nll, \phi_\sq$,
Algorithm~\ref{alg:PAV-algorithm} provides a solution of \eqref{eq:Knots},
or equivalently, $\hat{\sigma}$ in \eqref{eq:Link} can be represented as 
\begin{align}
\label{eq:Link2}
	\hat{\sigma}(u)=
	\begin{cases}
	z_1&\text{if }u\le L_1\\
	z_k&\text{if }u\in [L_k,R_k]\text{ with some }k\in[N]\\
	z_k+(z_{k+1}-z_k)\frac{u-R_k}{L_{k+1}-R_k}&\text{if }u\in [R_k,L_{k+1}]\text{ with some }k\in[N-1]\\
	z_N&\text{if }u\ge R_N
	\end{cases}
\end{align}
with $(L_k,R_k,z_k)_{k\in[N]}$ given by Algorithm~\ref{alg:PAV-algorithm}.
\end{theorem}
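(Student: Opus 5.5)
The plan is to reduce the constrained problem \eqref{eq:Knots}, which carries the antisymmetry constraint $\sigma_{j,i}=1-\sigma_{i,j}$, to a standard generalized isotonic regression on the sorted distinct inputs $x_1<\cdots<x_N$, and then invoke the classical correctness of PAV for separable convex objectives \citep{ayer1955empirical,barlow1972statistical,robertson1988order}.

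\textbf{Step 1 (reduction to block values).} First, I show that an optimum of \eqref{eq:Knots} can be taken constant on each level set $\{(i,j):\hat r_i-\hat r_j=x_k\}$. The monotonicity constraint with both $\le$ directions forces pairs with equal input to share the value, so this is automatic; call the common value $w_k$.

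\textbf{Step 2 (folding antisymmetry into the objective).} The set $\{x_k\}$ is symmetric: $x_{N+1-k}=-x_k$, since $(j,i)\in D_\tra$ whenever $(i,j)\in D_\tra$. The constraint becomes $w_{N+1-k}=1-w_k$. Substituting this makes the contribution of each symmetric pair of pairs equal to $\phi(w_k,y_{i,j})+\phi(1-w_k,1-y_{i,j})$. I then claim the equivalence with the unconstrained symmetrized isotonic problem
\[
\min_{w_1\le\cdots\le w_N}\sum_k\sum_{v\in\calY_k}\{\phi(w_k,v)+\phi(1-w_k,1-v)\},
\]
which is exactly the objective Algorithm~\ref{alg:PAV-algorithm} uses. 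Its objective is twice the original one, and its unique minimizer automatically satisfies $w_{N+1-k}=1-w_k$. The reason is that the objective is invariant under $w_k\mapsto 1-w_{N+1-k}$, because $\calY_{N+1-k}=\{1-v:v\in\calY_k\}$, and this map preserves monotonicity. Strict convexity then gives uniqueness, so the minimizer is a fixed point of the map. Hence the minimizer of the symmetrized problem is feasible for \eqref{eq:Knots} and optimal there.

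\textbf{Step 3 (PAV correctness).} For a sum of strictly convex one-dimensional functions $f_k$ under a chain order, I use the standard argument. Block minimizers $z$ of pooled sums have the property that merging two adjacent violating blocks yields a pooled minimizer lying between the two old ones. Moreover, some optimal solution is constant on any adjacent violating pair; this is proved by an exchange argument, since moving both values toward each other does not increase a convex sum. Every merge therefore preserves the existence of an optimum constant on current blocks. Termination comes from $N$ strictly decreasing. At termination $z_1<\cdots<z_N$, and KKT or subgradient conditions hold blockwise, so the output is optimal. Ties $z_k=z_{k+1}$ are merged too, which is harmless.

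\textbf{Step 4 (the interpolated function).} Equation \eqref{eq:Link2} is just the explicit formula for the polyline through the points $(x,\hat\sigma_{i,j})$. Within a block the value is constant $z_k$ on $[L_k,R_k]$; between blocks the polyline interpolates linearly from $(R_k,z_k)$ to $(L_{k+1},z_{k+1})$; outside the range it is extended as a constant.

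\textbf{Main obstacle.} The delicate step is Step 3 with a general convex $\phi$. $\phi_\nll$ can take the value $\infty$ at the boundary, and strict convexity is only assumed on $(0,1)$, so I must check that the pooled argmins exist and are unique, possibly at $0$ or $1$. I would handle this by noting that the symmetrized sum is strictly convex on $(0,1)$ and lower semicontinuous on $[0,1]$, which gives a unique minimizer. I would then verify the betweenness property for the merged block via one-sided derivatives.
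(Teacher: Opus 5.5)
Your argument is correct. It rests on the same backbone as the paper's proof, namely correctness of PAV for a separable convex objective under a chain order, but it proves more than the paper does. The paper simply asserts that \eqref{eq:Knots} is convex and separable and cites the PAV literature. However, the antisymmetry constraint $\sigma_{j,i}=1-\sigma_{i,j}$ couples blocks $k$ and $N+1-k$, so \eqref{eq:Knots} is not literally a standard isotonic regression, and the cited results do not apply verbatim. Your Step~2 supplies exactly this missing reduction: the involution $w_k\mapsto 1-w_{N+1-k}$ preserves both the monotone cone and the symmetrized objective, and uniqueness forces the minimizer to be a fixed point. It also explains why Lines~3 and~10 of Algorithm~\ref{alg:PAV-algorithm} use $\phi(u,v)+\phi(1-u,1-v)$ rather than $\phi(u,v)$.

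Your route has a further benefit. It only needs strict convexity of the symmetrized block function $u\mapsto\sum_{v\in\calY_k}\{\phi(u,v)+\phi(1-u,1-v)\}$, and that is what actually holds for $\phi_\nll$. The theorem's stated hypothesis fails for $\phi_\nll$ at $v=0$, since $\phi_\nll(\cdot,0)\equiv 0$. In short, the paper's version buys brevity, while yours is self-contained and covers the point its citation glosses over.

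Three places need tightening, though none is a gap.

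\begin{itemize}
\item The phrase ``moving both values toward each other does not increase a convex sum'' is false for arbitrary values. The clean exchange argument is one-sided. If $w_k<w_{k+1}$ while $z_k\ge z_{k+1}$, then $w_k<z_k$ or $w_{k+1}>z_{k+1}$, since otherwise $w_{k+1}\le z_{k+1}\le z_k\le w_k$. Shifting that single block toward its own minimizer stays feasible and strictly lowers the objective.

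\item Algorithm~\ref{alg:PAV-algorithm} merges every violating adjacent pair, i.e.\ whole non-increasing runs, in one pass. So you should state the invariant for the unique optimum $w^\ast$ from Step~2: $w^\ast$ is constant on every current block. The per-pair statement about ``some optimal solution'' does not by itself license merging several pairs at once.

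\item In your boundary discussion, lower semicontinuity on $[0,1]$ gives existence but not uniqueness. For uniqueness you need each block function to be convex (extended-valued) on the closed interval. Convexity on $[0,1]$ together with strict convexity on $(0,1)$ then yields strict convexity on $[0,1]$. Both $\phi_\nll$ and $\phi_\sq$ satisfy this.
\end{itemize}
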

\begin{proof}[Proof of Theorem~\ref{thm:thm1}]
This theorem can be proved based on the fact that the problem \eqref{eq:Knots} is 
a convex and separable problem under the assumption of Theorem~\ref{thm:thm1};
refer to \citet[Section~3]{best1990active} or \citet[Sections~2.3 and 3.1]{de2010isotone}
for general cases, and to \citet[Section~1.3]{barlow1972statistical} and 
\citet[Theorem~1.5.1]{robertson1988order} for the specific case with $\phi=\phi_\sq$.
\end{proof}

The inner optimization problem of Algorithm~\ref{alg:PAV-algorithm},
$\argmin_{u}\sum_{v\in\calY_k}\{\phi(u,v)+\phi(1-u,1-v)\}$, 
appeared in Lines~\hyperlink{3}{3} and \hyperlink{10}{10}
may be a significant computational burden.
However, for a special selection of the loss function $\phi$,
this inner problem has a closed-form solution
that can be calculated sequentially and quickly:
\begin{theorem}
\label{thm:thm2}
For $\phi=\phi_\nll, \phi_\sq$, 
$\argmin_{u}\sum_{v\in\calY_k}\{\phi(u,v)+\phi(1-u,1-v)\}$ in 
Lines~\hyperlink{3}{3} and \hyperlink{10}{10} of Algorithm~\ref{alg:PAV-algorithm}
becomes $\frac{1}{|\calY_k|}\sum_{v\in\calY_k}v$,
and $\hat{\sigma}$ in \eqref{eq:Link} and \eqref{eq:Link2} belongs to $\Sigma$.
\end{theorem}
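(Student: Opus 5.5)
The proof has two parts: the closed form for the inner minimizer, and membership of $\hat{\sigma}$ in $\Sigma$.

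\textbf{Closed form for the inner problem.} I would treat the two losses separately. Write $\bar v\coloneq\frac{1}{|\calY_k|}\sum_{v\in\calY_k}v$.

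For $\phi=\phi_\sq$ the objective is
\begin{align}
\sum_{v\in\calY_k}\{(u-v)^2+(1-u-1+v)^2\}=2\sum_{v\in\calY_k}(u-v)^2 .
\end{align}
This is a strictly convex quadratic, and its unique minimizer is $\bar v$.

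For $\phi=\phi_\nll$ the objective is
\begin{align}
-\sum_{v\in\calY_k}\{v\log u+(1-v)\log(1-u)\}=-|\calY_k|\{\bar v\log u+(1-\bar v)\log(1-u)\}
\end{align}
on $u\in(0,1)$. Setting the derivative $\bar v/u-(1-\bar v)/(1-u)$ to zero gives $u=\bar v$, and strict concavity of $\log$ makes this the unique minimizer. The boundary cases $\bar v\in\{0,1\}$ need care. For example, if $\bar v=0$ then every $v$ equals $0$, so the objective is $-|\calY_k|\log(1-u)$. Using the conventions in the definition of $\phi_\nll$ (value $0$ when $v=0$, value $\infty$ when $u=0$ and $v\neq0$), this is minimized uniquely at $u=0=\bar v$. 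The case $\bar v=1$ is symmetric. Because the minimizer is a group mean, merging groups in Line~8 only requires updating running sums and counts. This is what makes the computation sequential and fast.

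\textbf{Membership of $\hat{\sigma}$ in $\Sigma$.} By construction $\hat\sigma$ is a piecewise-linear interpolation with constant extensions of the knots $z_k$. The algorithm terminates only when $z_1<\cdots<z_N$, and each $z_k\in[0,1]$ because it is a mean of values in $[0,1]$. Hence the representation \eqref{eq:Link2} from Theorem~\ref{thm:thm1} gives that $\hat\sigma$ is non-decreasing and $[0,1]$-valued. The remaining property, symmetry $\hat\sigma(-u)=1-\hat\sigma(u)$, is the main obstacle.

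\textbf{Symmetry.} The input set $\{\hat r_i-\hat r_j\}_{(i,j)\in D_\tra}$ is symmetric about $0$, since $(j,i)\in D_\tra$ whenever $(i,j)\in D_\tra$. The targets satisfy $y_{j,i}=1-y_{i,j}$. So the sorted data $(x_k,\calY_k)$ are mapped onto $(x_{N+1-k},1-\calY_{N+1-k})$ by reflection.

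The pooled-mean isotonic regression (the least-squares solution, which by the first part the algorithm computes for both losses) is unique by strict convexity. Reflecting a feasible solution through $u\mapsto -u$, $z\mapsto 1-z$ preserves both the objective and monotonicity. By uniqueness, the solution is therefore reflection-invariant: $z$ on the block containing $x$ equals $1-z$ on the block containing $-x$, and the block edges are mirrored, $L_{N'+1-k}=-R_k$. I would phrase this via the characterization in Theorem~\ref{thm:thm1} that Algorithm~\ref{alg:PAV-algorithm} returns the (unique) minimizer of \eqref{eq:Knots}, rather than tracking the merge order of the PAV passes, which need not be symmetric step by step.

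From invariance of the knots, the interpolation formula \eqref{eq:Link2} yields $\hat\sigma(-u)=1-\hat\sigma(u)$ on each linear piece and on the constant tails, by direct substitution. In particular $\hat\sigma(0)=0.5$. Hence $\hat\sigma\in\Sigma$.
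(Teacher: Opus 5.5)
Your proof is correct. On the closed form it coincides with the paper's proof, which consists entirely of the two stationarity computations $\frac{1}{u(1-u)}\{u|\calY_k|-\sum_{v\in\calY_k}v\}$ and $4\{|\calY_k|u-\sum_{v\in\calY_k}v\}$. You also handle $\bar v\in\{0,1\}$ for $\phi_\nll$, where there is no interior stationary point and the minimizer lies at the boundary; the paper's computation passes over this case.

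The paper gives no argument for $\hat\sigma\in\Sigma$, so your second half is genuinely additional, and it is sound. Three refinements.

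\emph{Which problem carries the uniqueness argument.} State it as the monotone-only weighted least-squares problem over $z_1\le\cdots\le z_N$, which is the problem PAV actually solves. Your observation that Lines~3 and~10 return the same means for both losses lets this cover $\phi_\nll$ as well. Do not route it through Theorem~\ref{thm:thm1}: read literally, the feasible set of \eqref{eq:Knots} already imposes $\sigma_{j,i}=1-\sigma_{i,j}$, so knot symmetry would be immediate and your reflection argument redundant. In fact your argument is what justifies that part of Theorem~\ref{thm:thm1}, which the cited PAV references do not address.

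\emph{Mirroring of the block edges.} The identity $L_{N'+1-k}=-R_k$ needs a one-line reason. The terminal $z_k$ are strictly increasing, so the final blocks are exactly the level sets of the fitted vector, and they are mirrored along with it.

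\emph{Merge order.} Your concern does not apply to Algorithm~\ref{alg:PAV-algorithm} as written. Each pass merges all adjacent pairs with $z_k\ge z_{k+1}$ at once. In a mirror-symmetric configuration, $z_k=1-z_{N+1-k}$ and $\calY_{N+1-k}=\{1-v\mid v\in\calY_k\}$, the pair $(k,k+1)$ is merged iff $(N-k,N+1-k)$ is. Induction over passes therefore gives the symmetric output with no uniqueness argument. That route is more elementary; yours is independent of the merge schedule and would also cover a sequential, stack-based PAV.
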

\begin{proof}[Proof of Theorem~\ref{thm:thm2}]
The stationary condition becomes
$\partial_u[\sum_{v\in\calY_k}\{\phi(u,v)+\phi(1-u,1-v)\}]
=\partial_u[-\sum_{v\in\calY_k}\{v\log u+(1-v)\log(1-u)\}]
=\frac{1}{u(1-u)}\{u|\calY_k|-\sum_{v\in\calY_k}v\}$ for $\phi=\phi_\nll$ or 
$\partial_u[\sum_{v\in\calY_k}\{\phi(u,v)+\phi(1-u,1-v)\}]
=\partial_u[2\sum_{v\in\calY_k}(u-v)^2]
=4\{|\calY_k|u-\sum_{v\in\calY_k}v\}$ for $\phi=\phi_\sq$.
This concludes the proof of this theorem.
\end{proof}

However, since the mean $\frac{1}{|\calY_k|}\sum_{v\in\calY_k}v$ can take 0,
it should be noted that the test evaluation of the isotonic Bradley-Terry model,
$\frac{1}{|D_\tra|}\sum_{(i,j)\in D_\tes}\phi(\hat{\sigma}(\hat{r}_i-\hat{r}_j),y_{i,j})$
for $\phi=\phi_\nll$, can not be calculated (i.e., take NaN).
For the the player ranking,
the isotonic Bradley-Terry model does not suffer from 
this issue by using the evaluation criterion \eqref{eq:Kendall}, 
but for evaluation of the win probability prediction
this model may not be suitable when we have to 
use the criterion \eqref{eq:WPE} with $\phi=\phi_\nll$
(it does not suffer from this issue when using 
the criterion \eqref{eq:WPE} with $\phi=\phi_\sq$).

An important point to remark is that 
subsequent learning of rate parameter by minimizing \eqref{eq:Rate} with 
$\sigma=\hat{\sigma}^{[t]}$, $t\ge1$ can become non-convex generally.
However, we can employ a sub-gradient method with line search of the step size 
to update the rate parameters without increasing the training error,
though it has no guarantee of global optimization.
Thus, we propose to learn the rate parameters by a (sub-)gradient method 
(MM algorithm \citep{hunter2004mm} is also possible for the first time only) 
and the inverse link function by an isotonic regression technique alternately,
for the isotonic Bradley-Terry model, $(\hat{\sigma}^{[t]},(\hat{r}_i^{[t]})_{i\in[n]})$
or $(\hat{\sigma}^{[t]},(\hat{r}_i^{[t+1]})_{i\in[n]})$ with $t\in\bbN$.

Another noteworthy property of the isotonic Bradley-Terry model is
that it is more likely to result in a tie;
consider it can hold that $\hat{\sigma}(r_i-r_j)=0.5$ even if $r_i\neq r_j$.
The isotonic Bradley-Terry model can accurately output a tie as undecidable 
when one has insufficient training pair comparison data to accurately rank the pairs.
In relation to this property, it should be noted that in the isotonic Bradley-Terry model, 
ranking should be performed according to the descending order of Borda count
$(\sum_{j\in[n]}\sigma(\hat{r}_i-\hat{r}_j))_{i\in[n]}$ as \citet{Nihar18},
not that of the rate parameters $(\hat{r}_i)_{i\in[n]}$:
Borda count would better reflect the strength than the rate parameters for the isotonic Bradley-Terry model,
since (i) $\hat{r}_k>\hat{r}_l\iff\sum_{j\in[n]}\sigma(\hat{r}_k-\hat{r}_j)>\sum_{j\in[n]}\sigma(\hat{r}_l-\hat{r}_j)$
for strictly increasing $\sigma$, 
but (ii) $\hat{r}_k>\hat{r}_l\Leftarrow\sum_{j\in[n]}\sigma(\hat{r}_k-\hat{r}_j)>\sum_{j\in[n]}\sigma(\hat{r}_l-\hat{r}_j)$,
$\hat{r}_k>\hat{r}_l\Rightarrow\sigma(\hat{r}_k-\hat{r}_l)\ge0.5$, and
$\hat{r}_k>\hat{r}_l\Rightarrow\sigma(\hat{r}_k-\hat{r}_j)\ge\sigma(\hat{r}_l-\hat{r}_j)$ for all $j\in[n]$,
but $\hat{r}_k>\hat{r}_l\not\Rightarrow\sum_{j\in[n]}\sigma(\hat{r}_k-\hat{r}_j)>\sum_{j\in[n]}\sigma(\hat{r}_l-\hat{r}_j)$ 
for $\sigma$ that is only non-decreasing and not strictly increasing.

\section{Discussion: Relationships to Nonparametric Bradley-Terry Model}
\label{sec:Discussion}
A significant characteristic of the isotonic Bradley-Terry model
is the use of isotonic regression for 
the win probability prediction and player ranking.
A previous model, nonparametric Bradley-Terry model
\citep{chatterjee2015matrix,shah2016stochastically,chatterjee2018matrix},
for the win probability prediction also shares this characteristic.
We here discuss relationships between the isotonic Bradley-Terry model 
and nonparametric Bradley-Terry model.

Under the assumption that the player ranking is known,
nonparametric Bradley-Terry model predicts the win probability $y_{i,j}$ as $\hat{\sigma}_{i,j}$ 
defined by the following bivariate isotonic regression problem:
\begin{align}
	(\hat{\sigma}_{i,j})_{(i,j)\in D_\tra}
	&\in\underset{(\sigma_{i,j})_{(i,j)\in D_\tra}}{\argmin}\,
	\biggl[\frac{1}{|D_\tra|}\sum_{(i,j)\in D_\tra}\phi(\sigma_{i,j},y_{i,j})\biggr]
\nonumber\\
	&\text{~~~~~~~~~\,s.t.~~~~~~~\,}
	\sigma_{j,i}=1-\sigma_{i,j}\text{ and }
	\sigma_{i,j}\le\sigma_{i',j}
	\text{ if player $i'$ is higher-}
\nonumber\\
	&\hphantom{\text{~~~~~~~~~\,s.t.~~~~~~~\,}}
	\text{ranked than player $i$}
	\text{ for all }(i,j),(i',j)\in D_\tra
	\label{eq:NBT}
\end{align}
with $D_\tra=\{(i,j)\in[n]^2\mid i\neq j\}$.
Then, the solution $(\sigma_{i,j})_{(i,j)\in D_\tra}=(\hat{\sigma}_{i,j})_{(i,j)\in D_\tra}$ 
given by \eqref{eq:Knots} satisfies the conditions in \eqref{eq:NBT}, but 
the solution $(\sigma_{i,j})_{(i,j)\in D_\tra}=(\hat{\sigma}_{i,j})_{(i,j)\in D_\tra}$ 
given by \eqref{eq:NBT} does not necessarily satisfy 
the structural conditions as in generalized Bradley-Terry model.
Therefore, the training error $\frac{1}{|D_\tra|}\sum_{(i,j)\in D_\tra}\phi(\hat{\sigma}_{i,j},y_{i,j})$
by the isotonic Bradley-Terry model \eqref{eq:Knots} is larger than or equal to
the training error by the nonparametric Bradley-Terry model \eqref{eq:NBT}.
This point may be a demerit of the isotonic Bradley-Terry model.

On the other hand, the availability for predicting the win probability of an unmatched player pair
and the possibility to improve the player ranking are merits of the isotonic Bradley-Terry model:
Both models construct nonparametric estimates $(\hat{\sigma}_{i,j})_{(i,j)\in D_\tra}$, 
but the isotonic Bradley-Terry model can predict 
the win probability for an unmatched player pair by assuming 
the structural model $\hat{\sigma}_{i,j}=\hat{\sigma}(\hat{r}_i-\hat{r}_j)$.
Also, introducing the model $\hat{\sigma}_{i,j}=\hat{\sigma}(\hat{r}_i-\hat{r}_j)$
allows for the discussion of updating the rate parameters and player ranking;
the nonparametric Bradley-Terry model is designed under the assumption 
that the rankings are known in advance, making it difficult to update them.

\section{Numerical Experiments}
\label{sec:Experiments}
\subsection{Synthetic Data Experiments}
\label{sec:Synthetic}
We are interested in practical behaviors of the isotonic Bradley-Terry model
and situations where that model can improve the win probability prediction 
performance or ranking performance compared with the traditional Bradley-Terry model.
To investigate those aspects, we present synthetic data experiments in this subsection.

\paragraph{Procedure 1}
We generated synthetic data (Cauchy-$N$) consisted of $n=25,50,100,200,400$ players and
win probabilities $y_{i,j}=1-y_{j,i}\sim\Binomial(N,\sigma_\Cauchy(\tilde{r}_i-\tilde{r}_j))/N$
with the number $N=1,5,25$ of matches for each pair of players, underlying inverse link function 
$\sigma_\Cauchy(u)\coloneq\frac{1}{\pi}\arctan(u)+\frac{1}{2}$,
which is the cumulative distribution function of the standard Cauchy distribution,
and underlying players'\;strengths $\tilde{r}_i\sim\Normal(0,1)$,
where $\Binomial(u,v)$ is the binomial distribution with $u$ trials and probability $v$,
and where $\Normal(u,v)$ is the normal distribution with mean $u$ and variance $v$.
We divided all the player pairs into $D_\tra$ and $D_\tes$ in $1:9$, $3:7$, $\ldots$, $9:1$ ratios.
Because a random variable following $\Binomial(N,\sigma_\Cauchy(u))/N$ 
with $u\in\bbR$ has the expectation value $\sigma_\Cauchy(u)$, 
the Bradley-Terry model with $\sigma=\sigma_\Cauchy$ can perform
consistent estimation of the rate parameter $(\tilde{r}_i)_{i=1}^n$
(ignoring the degrees of freedom for translation) 
when $n\to\infty$, or $N\to\infty$, and so on 
\citep{bradley1952rank,han2020asymptotic,simons1999asymptotics}.
However, since we are interested in a situation with model misspecification,
we compared the Bradley-Terry model, $(\hat{\sigma}^{[0]},(\hat{r}_i^{[1]})_{i\in[n]})$,
and isotonic Bradley-Terry model, 
$(\hat{\sigma}^{[t]},(\hat{r}_i^{[t+1]})_{i\in[n]})$ for $t\in[9]$
and $(\hat{\sigma}^{[t]},(\hat{r}_i^{[t]})_{i\in[n]})$ for $t\in[10]$,
with $\hat{\sigma}^{[0]}=\sigma_\Logistic$,
not with $\hat{\sigma}^{[0]}=\sigma_\Cauchy$.
This setting is that the Bradley-Terry model can suffer from model misspecification, 
which leaves room for improvement by using the isotonic Bradley-Terry model;
See Appendix~\ref{sec:Specified} for experiments with a specified model.
We learned the rate parameters \eqref{eq:Rate}
by a coordinate sub-gradient method with line search of the step size 
and inverse link function \eqref{eq:Knots} by PAV algorithm
(exactly, we used the \texttt{IsotonicRegression} class of 
the Python package \texttt{sklearn.isotonic} \citep{scikit-learn}), 
with the objective function \eqref{eq:WPE} for $\phi=\phi_\sq$ (WPP error);
See Appendix~\ref{sec:NLL} for experiments with $\phi=\phi_\nll$.
We repeated this procedure 1000 trials with random data generation
to obtain 1000 test evaluations of the errors,
\eqref{eq:WPE} and \eqref{eq:Kendall}.
Also, we evaluated the tie rate defined by
\begin{align}
\label{eq:TIE}
	\frac{|\{(i,j)\in D\mid \sigma(r_i-r_j)=0.5\}|}{|D|}.
\end{align}
A larger value of the tie rate means more ties in $(\sigma(r_i-r_j))_{(i,j)\in D}$.
Note that, since nonparametric Bradley-Terry model reviewed in 
Section~\ref{sec:Discussion} is not applicable to the win probability 
prediction for unmatched players, we did not try that method.

\paragraph{Procedure 2}
It was observed from Figure~\ref{fig:Cauchy-SQ} that excessive updates in 
the isotonic Bradley-Terry model lead to performance degradation due to overfitting. 
Therefore, the appropriate selection of the number of updates is crucial.
In this additional experiment procedure, we compared 
the Bradley-Terry model with the isotonic Bradley-Terry model 
whose number of updates was determined via 10-fold cross-validation. 
We performed this procedure 1000 trials and evaluated \eqref{eq:WPE}, 
\eqref{eq:Kendall}, and \eqref{eq:TIE} similar to Procedure 1.

\paragraph{Consideration of Results}
Figure~\ref{fig:Cauchy-SQ} shows the dependence of behaviors of 
the isotonic Bradley-Terry model $(\hat{\sigma}^{[*]},(\hat{r}_i^{[*]})_{i\in[n]})$ 
learned with the loss $\phi=\phi_\sq$ on the number of updates.
It would be obvious that the training WPP error does not increase 
with each update due to the design of the isotonic Bradley-Terry model.
The test WPP error with $\phi=\phi_\sq$ was also improved 
by the isotonic Bradley-Terry model in many cases.
In particular, the improvement was more pronounced 
when $n$, $N$, and the ratio of $|D_\tra|$ were small
and when $n$ or $N$ was large;
while the isotonic Bradley-Terry model $(\hat{\sigma}^{[1]},(\hat{r}_i^{[1]})_{i\in[n]})$ 
was the best performer in most cases even when it improved, 
the isotonic Bradley-Terry model $(\hat{\sigma}^{[1]},(\hat{r}_i^{[2]})_{i\in[n]})$, 
which underwent more updates, could achieve the best performance 
under smaller $n$, $N$, and the ratio of $|D_\tra|$ and under larger $n$ and $N$.
The improvement under small $n$, $N$, and the ratio of $|D_\tra|$ is unrelated to model misspecification,
and is likely due to the Bradley-Terry model's severe failure from overfitting 
caused by the scarcity of training data, as Figure~\ref{fig:Res-Cauchy-SQ}
also illustrates that $|\hat{r}_i^{[1]}-\hat{r}_j^{[1]}|$ can take a large value
for the learned rate parameters $(\hat{r}_i^{[1]})_{i\in[n]})$.
On the other hand, the improvement under large $n$ and $N$ would be specific to 
cases of model misspecification, as verified in Appendix~\ref{sec:Specified}.
Regarding training and test ranking errors, 
the isotonic Bradley-Terry model improved Kendall's Tau in most cases.
The isotonic Bradley-Terry model can be interpreted as improving 
overall ranking performance by withholding judgment as a tie 
for player pairs whose strict ranking is difficult to be determined,
as suggested from Figure~\ref{fig:Cauchy-SQ}.
We can see similar results from Table~\ref{tab:Cauchy-SQ},
which summarizes the results of Procedure 2.

\subsection{Real-World Data Experiments}
\label{sec:RealWorld}
In this subsection, we address numerical experiments with real-world data
to see practical effectiveness of the isotonic Bradley-Terry model.

\paragraph{Procedure 1}
We here used win probability data of 
football Premier League (PL) for 2024/2025 season of $n=20$ teams and 380 matches
(\url{https://www.football-data.co.uk/mmz4281/2425/E0.csv}),
baseball MLB for 2025 season of $n=30$ teams and 2430 matches
(\url{https://www.retrosheet.org/gamelogs/gl2025.zip}), and
tennis ATP tour for 2025 season of $n=457$ players, 2944 matches, and 2522 matched player pairs
(\url{https://github.com/JeffSackmann/tennis_atp/blob/master/atp_matches_2025.csv}).
ATP win probability data is sparse in the sense that only 2522 patterns were played
out of $n(n-1)/2=104196$ possible match patterns between two different players.
Similar to the synthesis data experiments in Section~\ref{sec:Synthetic},
we compared the (isotonic) Bradley-Terry models, 
$(\hat{\sigma}^{[t]},(\hat{r}_i^{[t+1]})_{i\in[n]})$ and 
$(\hat{\sigma}^{[t]},(\hat{r}_i^{[t]})_{i\in[n]})$ for $t\in[10]$,
learned with $\phi=\phi_\sq$ and $\hat{\sigma}^{[0]}=\sigma_\Logistic$,
in terms of WPP error \eqref{eq:WPE} with $\phi=\phi_\sq$, 
Kendall's Tau \eqref{eq:Kendall}, and tie rate \eqref{eq:TIE}.
See also Appendix~\ref{sec:NLL} for experiments with $\phi=\phi_\nll$.

\paragraph{Procedure 2}
In order to study practical performance comparison,
we took an additional experiment procedure to compare
the Bradley-Terry model with the isotonic Bradley-Terry model 
whose number of updates was determined via 10-fold cross-validation.

\paragraph{Consideration of Results}
Figure~\ref{fig:Real-SQ} shows the dependence of behaviors of 
the isotonic Bradley-Terry model $(\hat{\sigma}^{[*]},(\hat{r}_i^{[*]})_{i\in[n]})$ 
on the number of updates,
and Table~\ref{tab:Real-SQ} summarizes the results of Procedure 2.
Results are similar to those for synthetic data experiments in Section~\ref{sec:Synthetic}:
The isotonic Bradley-Terry model improved the WPP error 
when the ratio of $|D_\tra|$ was small for the PL and MLB data with small $n$, 
whereas it improved the WPP error across a wider range of ratio of $|D_\tra|$ 
for the ATP data with large $n$;
Although verification is difficult due to the nature of real-world data, 
the latter improvement might be attributed to the mitigation of 
model misspecification by the isotonic Bradley-Terry model as 
Figure~\ref{fig:Real-Res-Cauchy-SQ}, rescaled version illustrates.
Also, isotonic Bradley-Terry model improved the ranking performance in most cases.
These results suggest practical effectiveness of the proposed isotonic Bradley-Terry model.

\begin{sidewaysfigure}
\centering%
\renewcommand{\arraystretch}{0.5}%
\renewcommand{\tabcolsep}{0.5pt}%
\begin{tabular}{cc|ccc|ccc|ccc}%
&&\multicolumn{3}{c|}{\tiny$N=1$, $|D_\tra|:|D_\tes|=$}&\multicolumn{3}{c|}{\tiny$N=5$, $|D_\tra|:|D_\tes|=$}&\multicolumn{3}{c}{\tiny$N=25$, $|D_\tra|:|D_\tes|=$}\\
&&{\tiny$1:9$}&{\tiny$5:5$}&{\tiny$9:1$}&{\tiny$1:9$}&{\tiny$5:5$}&{\tiny$9:1$}&{\tiny$1:9$}&{\tiny$5:5$}&{\tiny$9:1$}\\
\midrule
\multirow{3}{*}[-2.5mm]{\rotatebox{90}{\tiny\eqref{eq:WPE}, $n=$}}
&\rotatebox{90}{\tiny\,~~~\,$25$}&
\CF{\includegraphics[width=2.0cm]{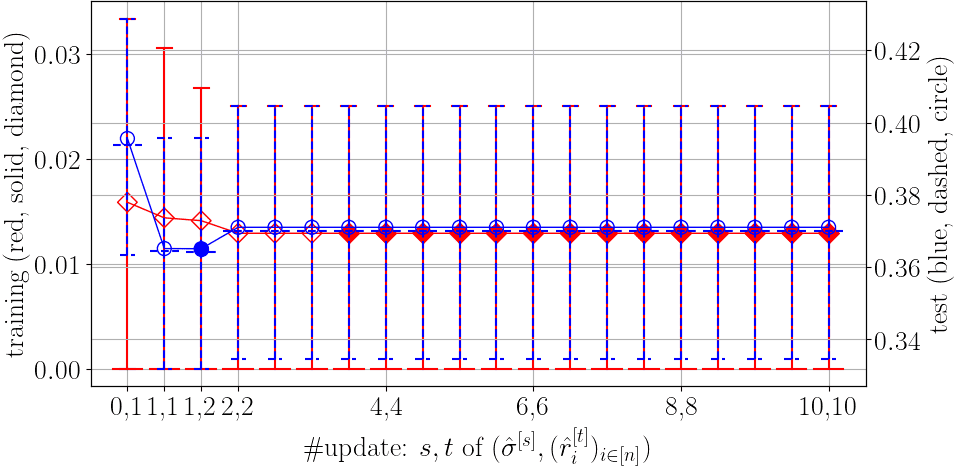}}&
\CF{\includegraphics[width=2.0cm]{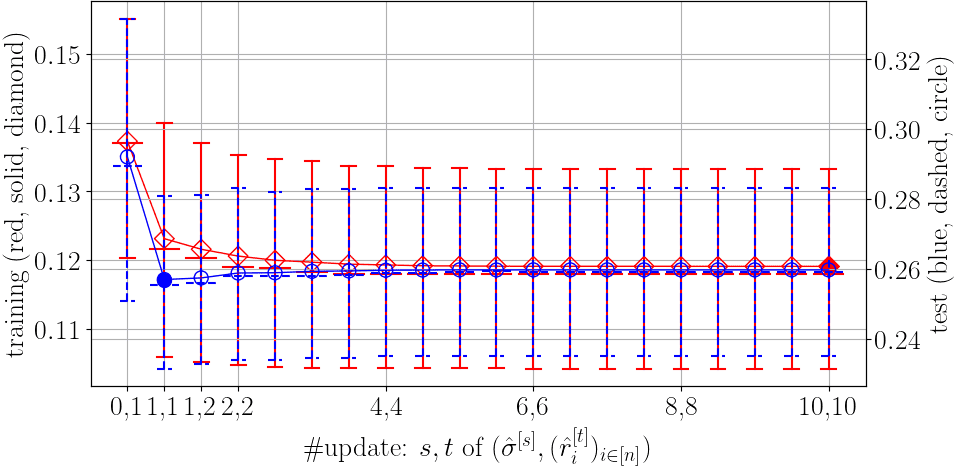}}&
{\includegraphics[width=2.0cm]{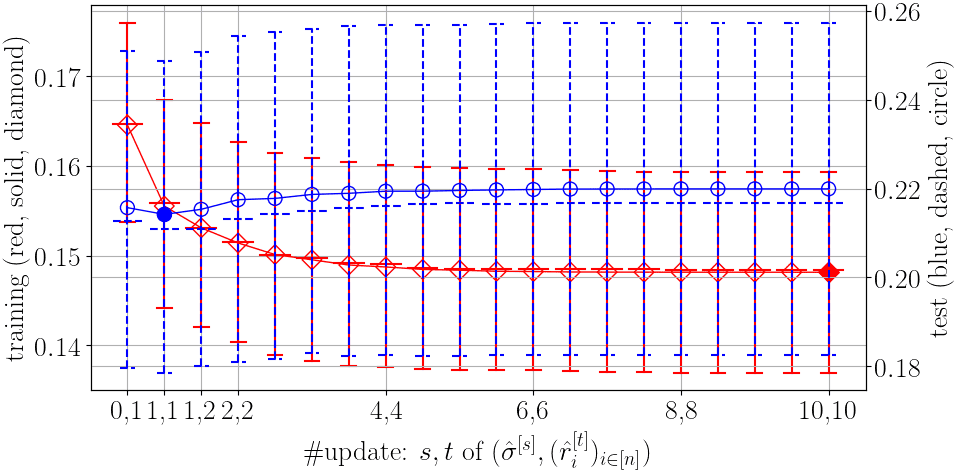}}&
\CF{\includegraphics[width=2.0cm]{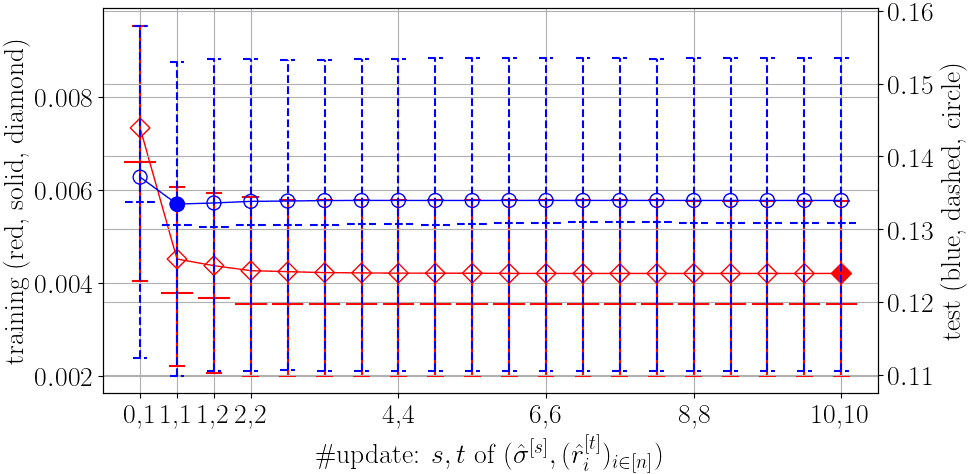}}&
{\includegraphics[width=2.0cm]{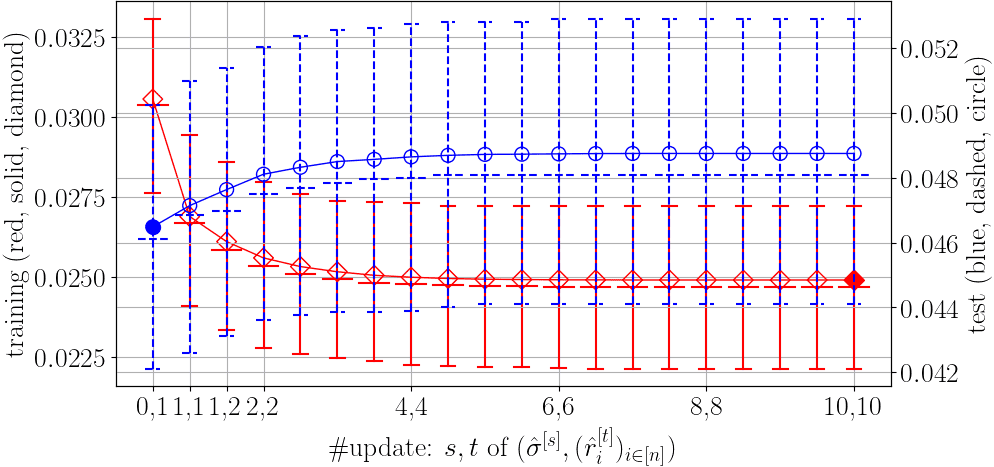}}&
{\includegraphics[width=2.0cm]{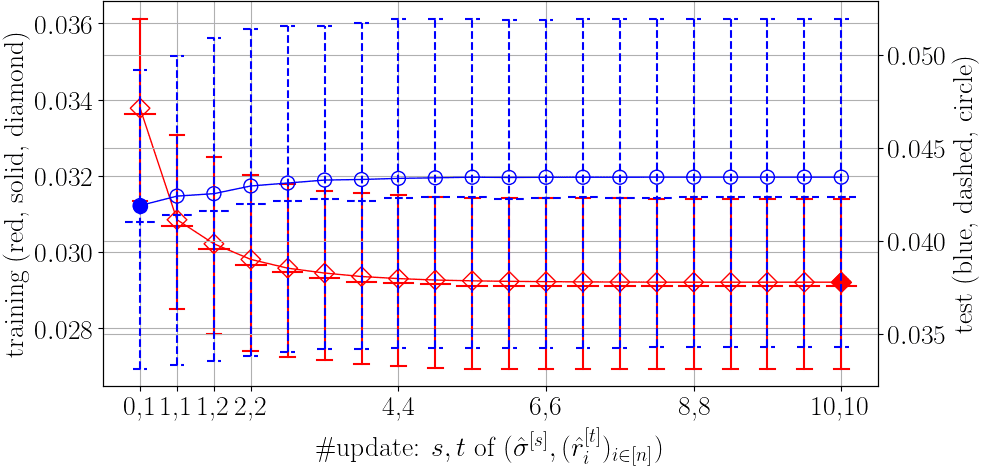}}&
{\includegraphics[width=2.0cm]{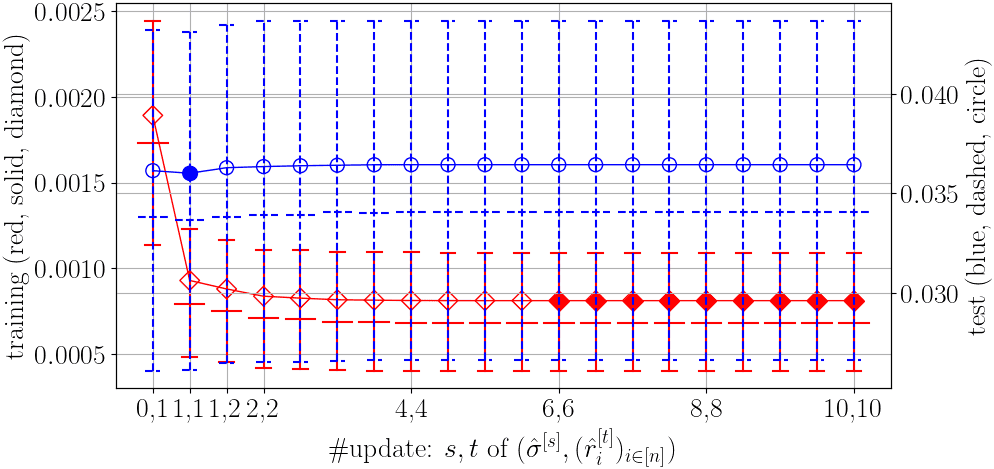}}&
\CF{\includegraphics[width=2.0cm]{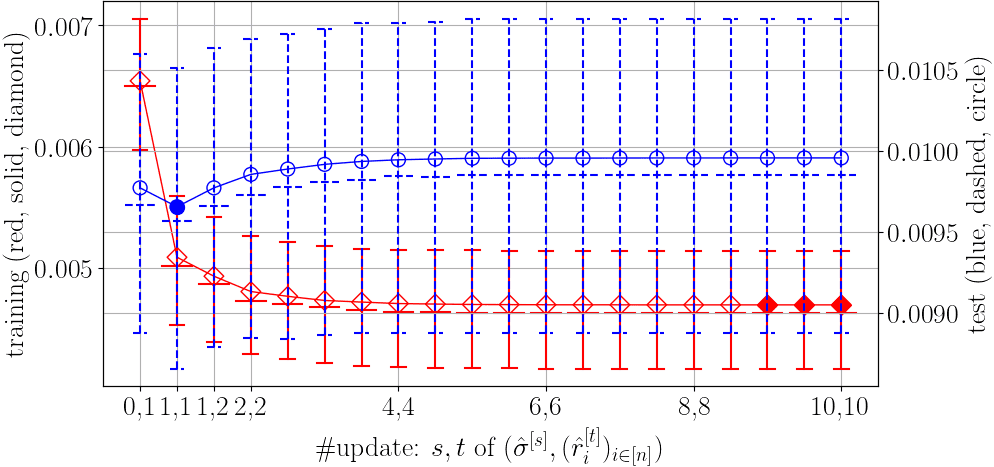}}&
\CF{\includegraphics[width=2.0cm]{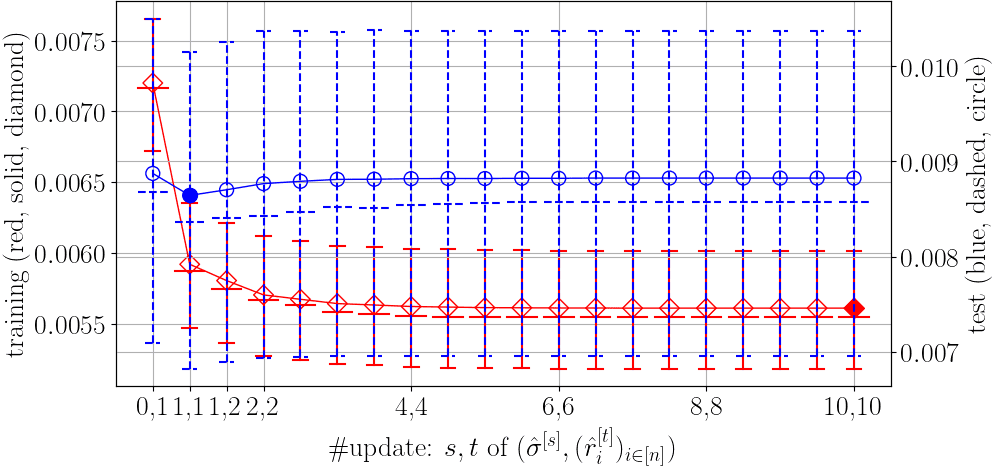}}\\
&\rotatebox{90}{\tiny\,~~\,$100$}&
\CF{\includegraphics[width=2.0cm]{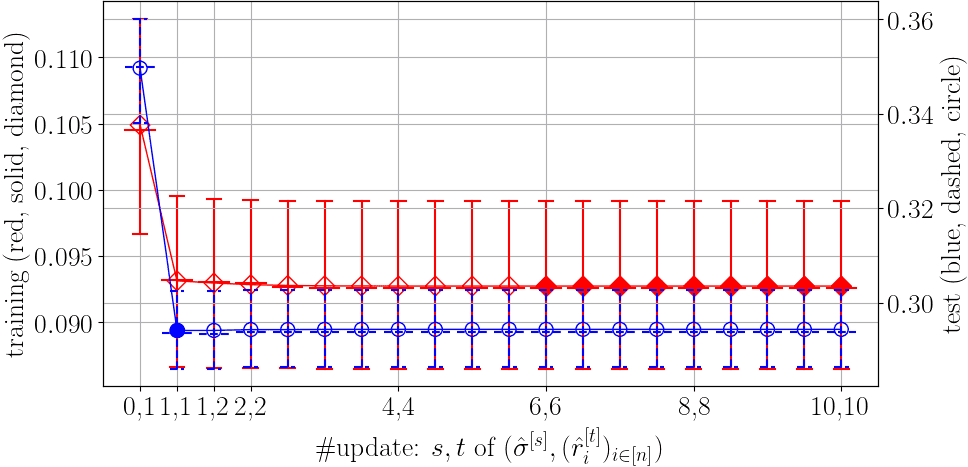}}&
{\includegraphics[width=2.0cm]{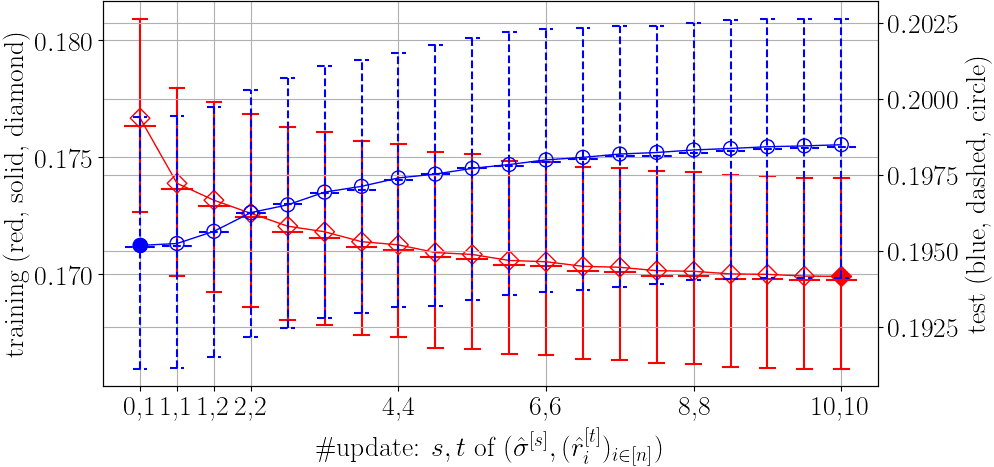}}&
{\includegraphics[width=2.0cm]{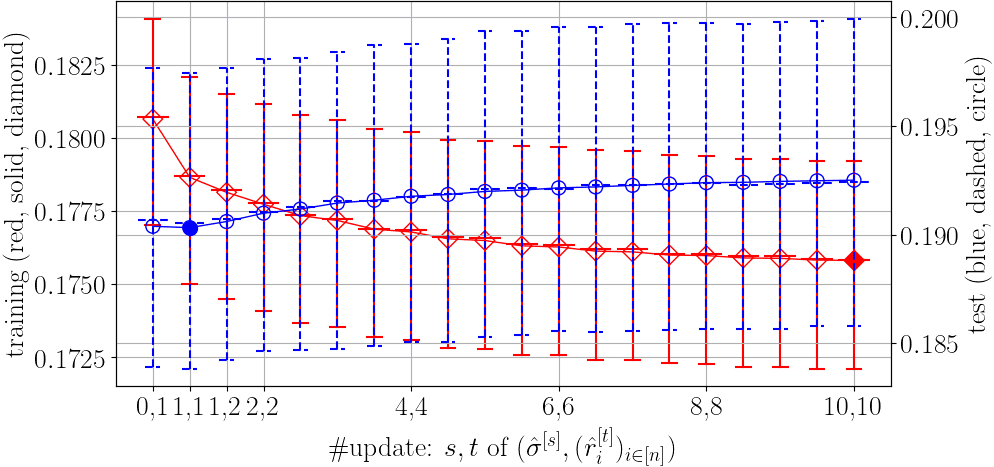}}&
{\includegraphics[width=2.0cm]{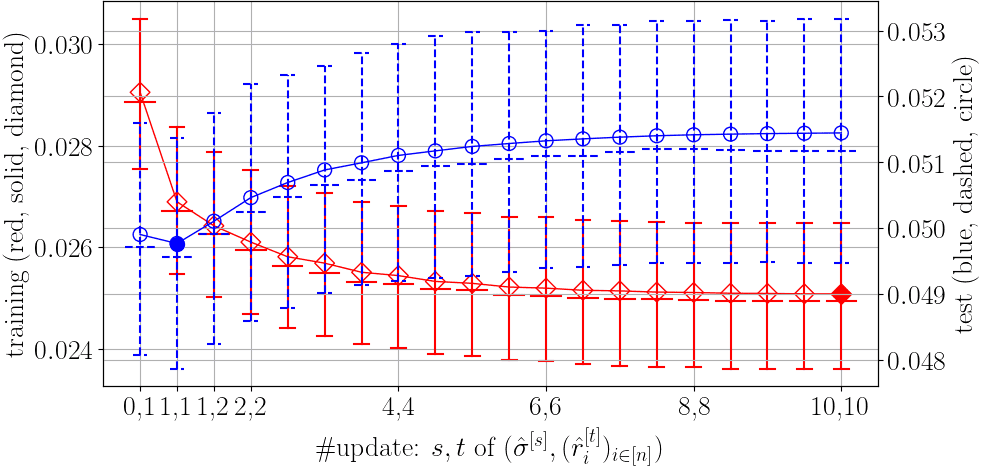}}&
\CF{\includegraphics[width=2.0cm]{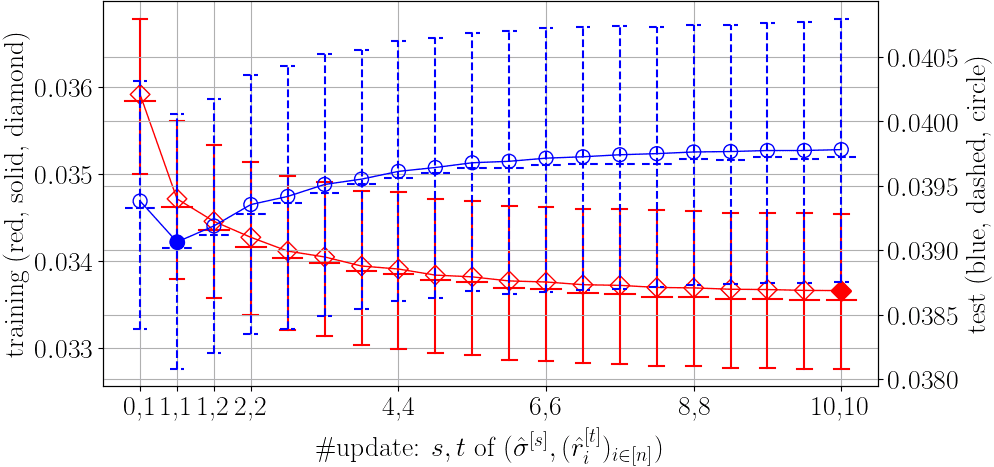}}&
\CF{\includegraphics[width=2.0cm]{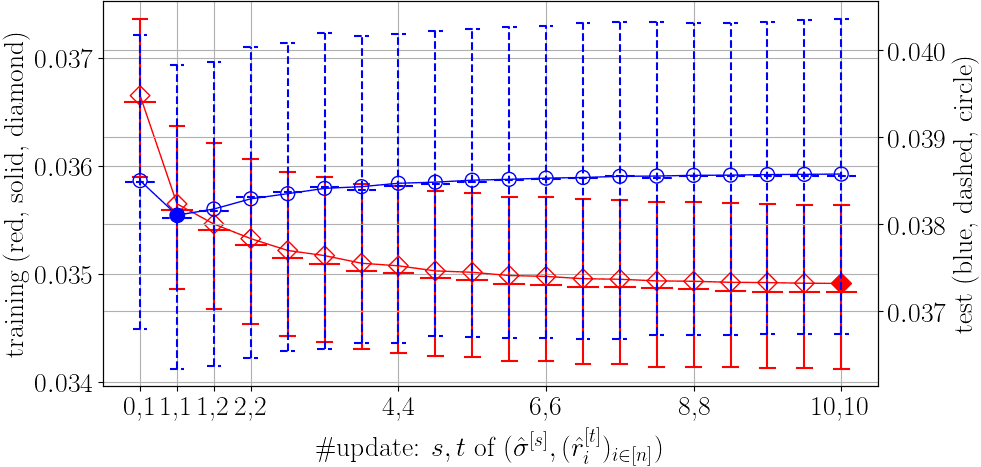}}&
\CF{\includegraphics[width=2.0cm]{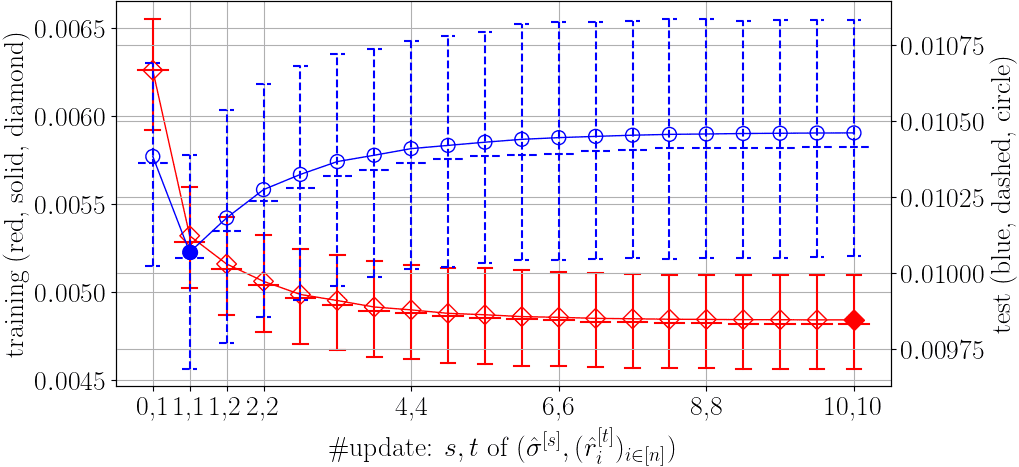}}&
\CF{\includegraphics[width=2.0cm]{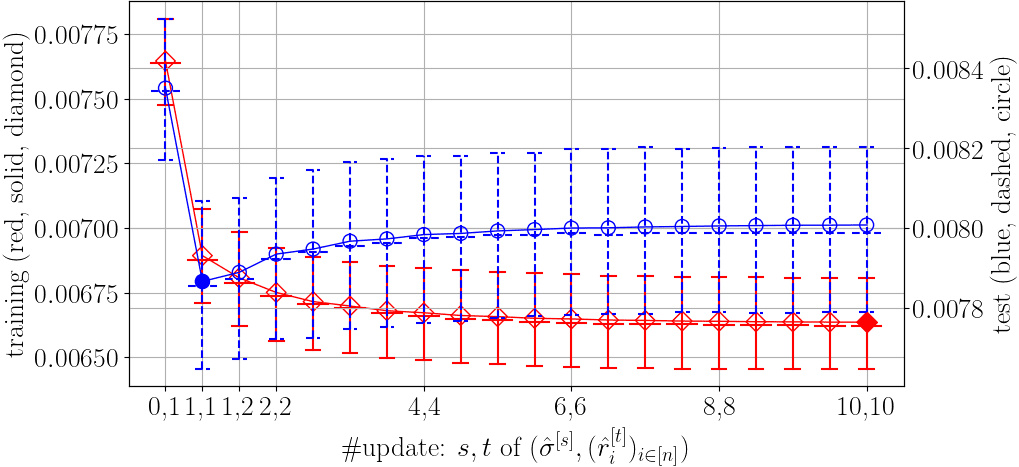}}&
\CF{\includegraphics[width=2.0cm]{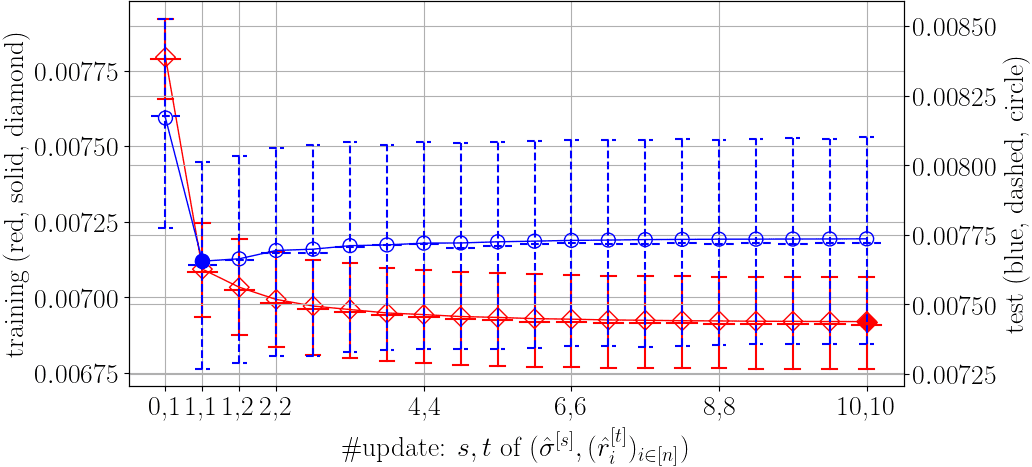}}\\
&\rotatebox{90}{\tiny\,~~\,$400$}&
\CF{\includegraphics[width=2.0cm]{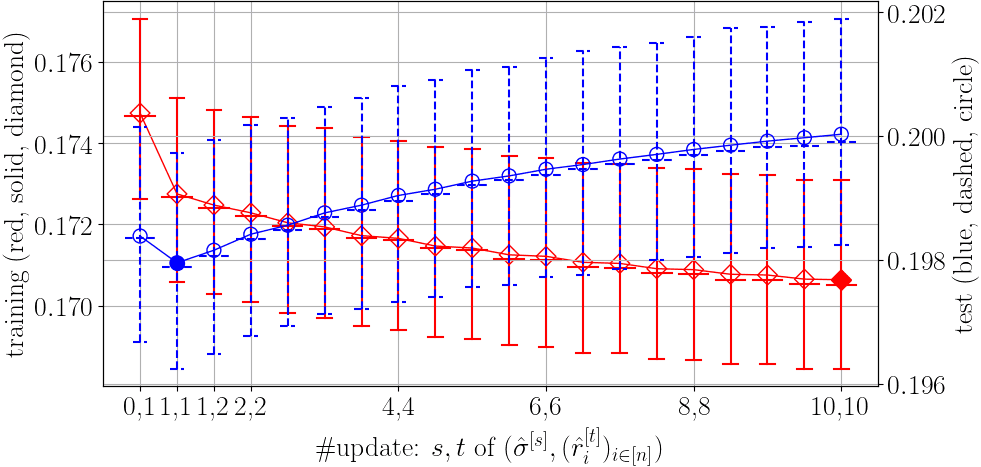}}&
\CF{\includegraphics[width=2.0cm]{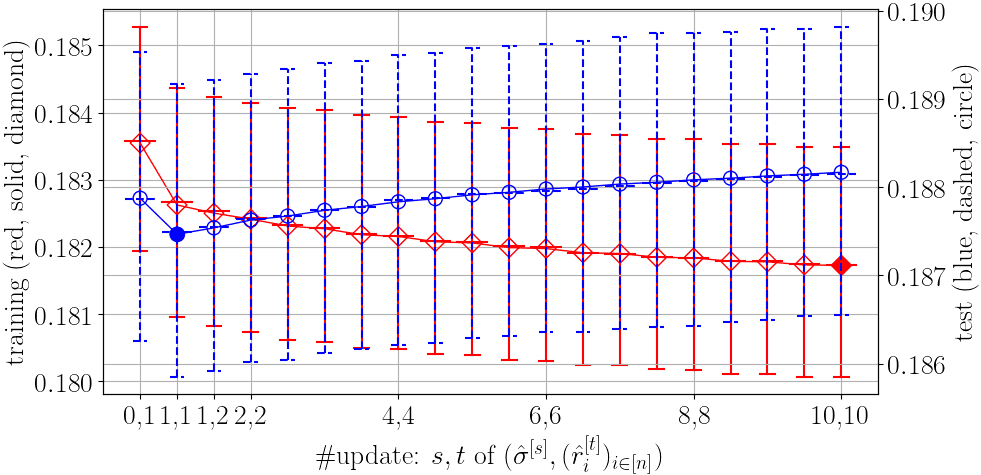}}&
\CF{\includegraphics[width=2.0cm]{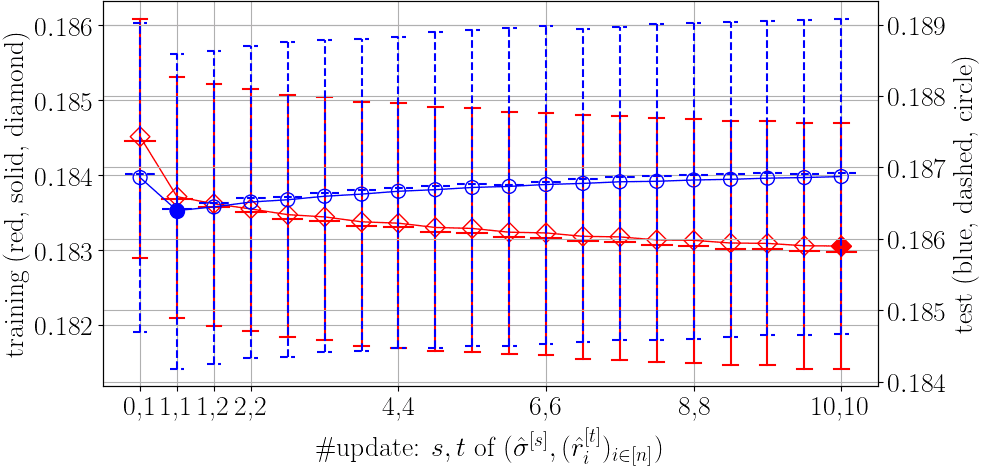}}&
\CF{\includegraphics[width=2.0cm]{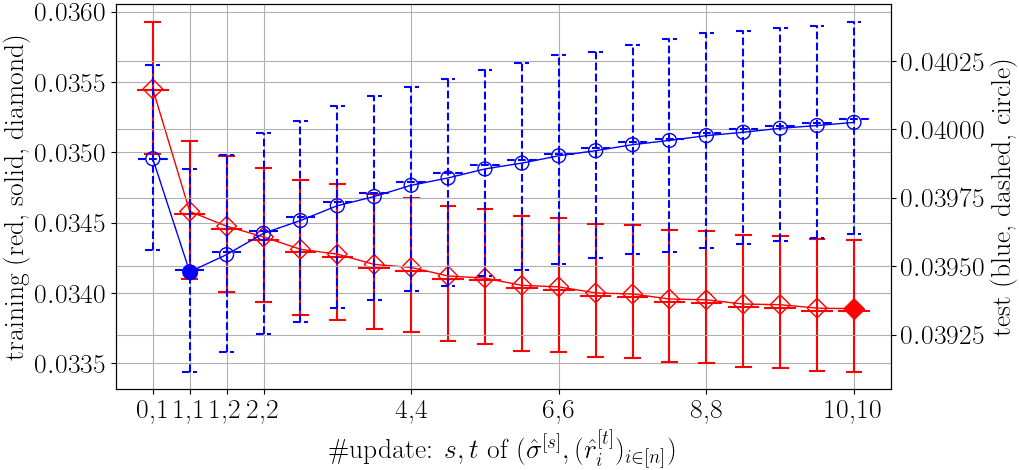}}&
\CF{\includegraphics[width=2.0cm]{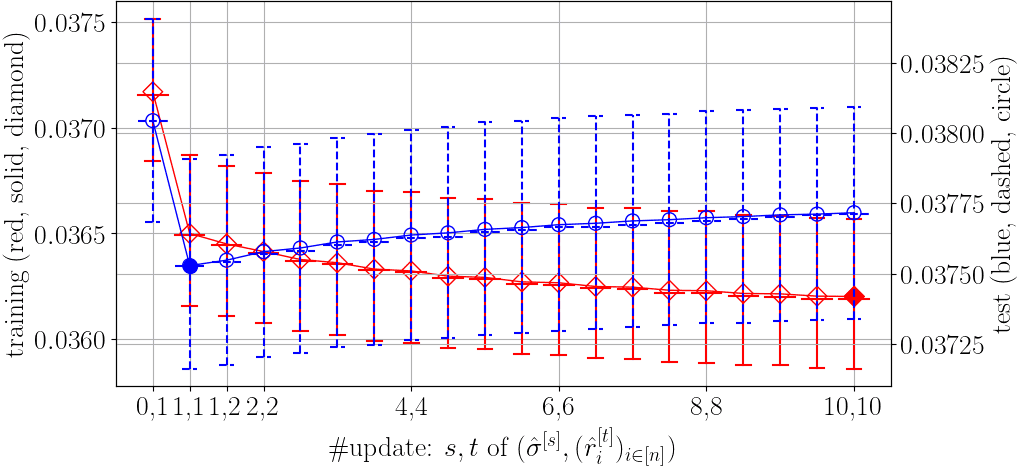}}&
\CF{\includegraphics[width=2.0cm]{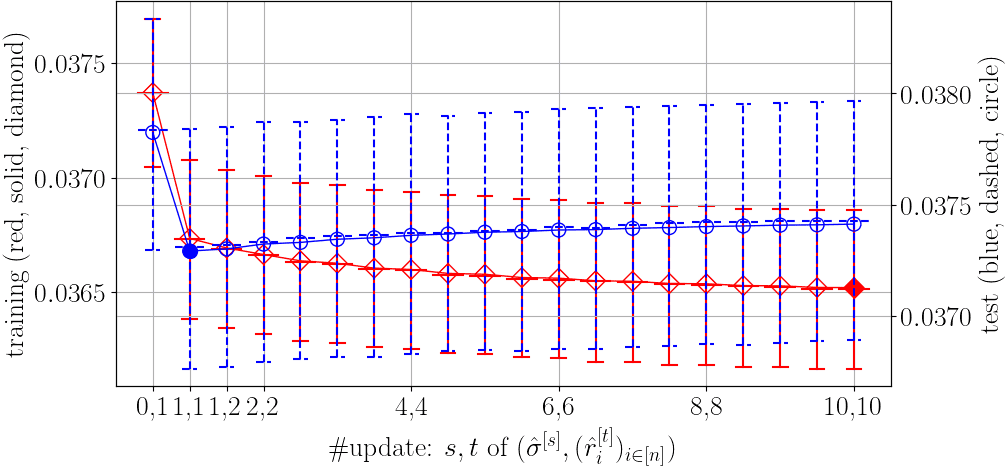}}&
\CF{\includegraphics[width=2.0cm]{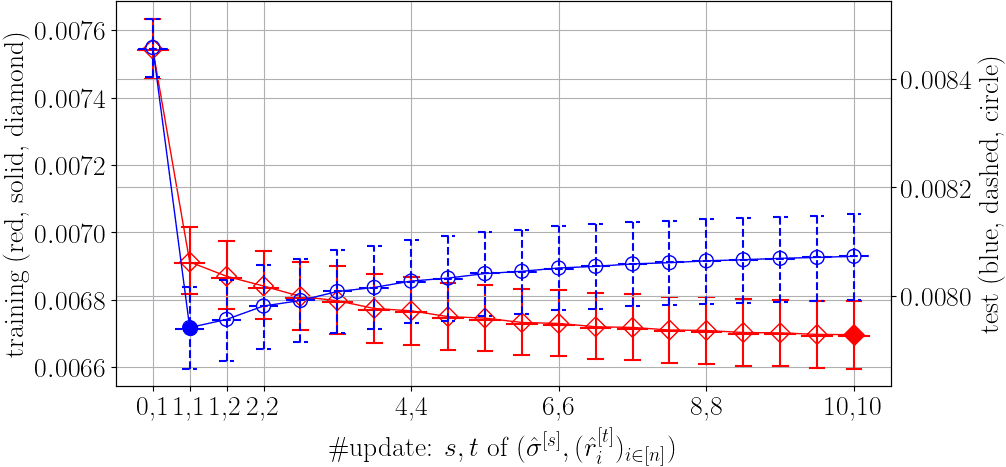}}&
\CF{\includegraphics[width=2.0cm]{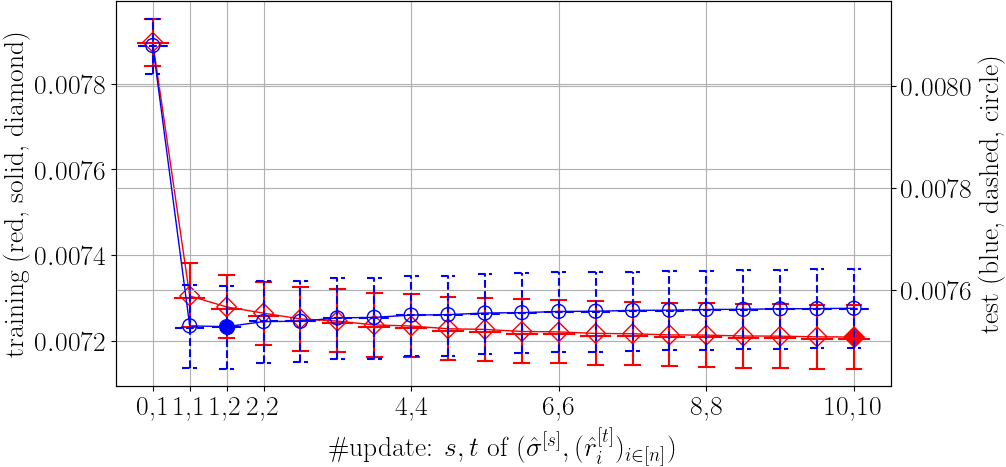}}&
\CF{\includegraphics[width=2.0cm]{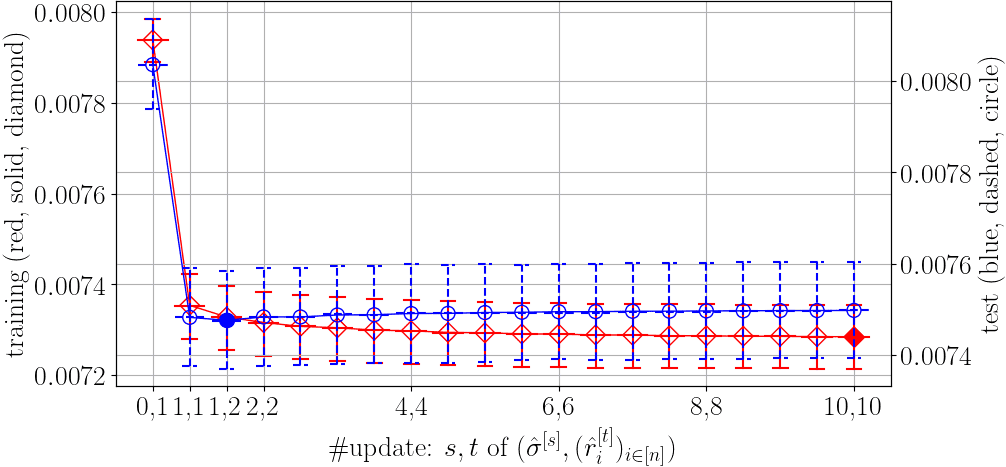}}
\\\midrule
\multirow{3}{*}[-2.5mm]{\rotatebox{90}{\tiny\eqref{eq:Kendall}, $n=$}}
&\rotatebox{90}{\tiny\,~~~\,$25$}&
{\includegraphics[width=2.0cm]{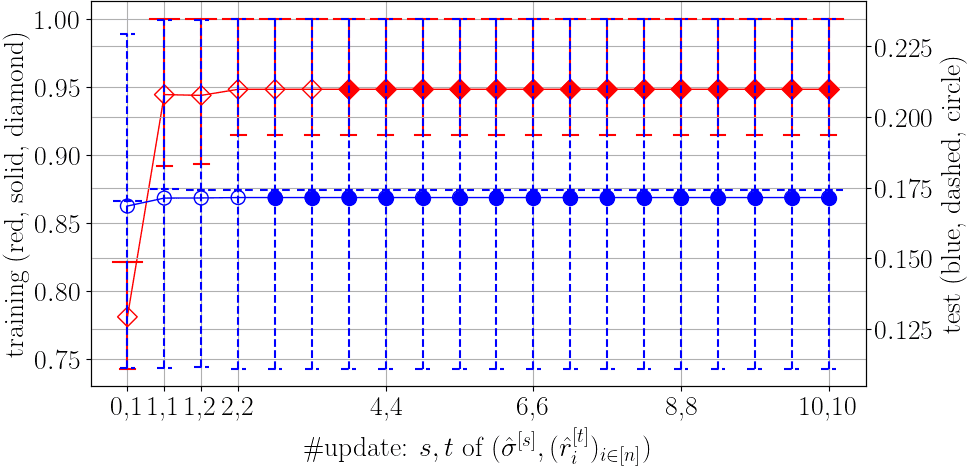}}&
\CF{\includegraphics[width=2.0cm]{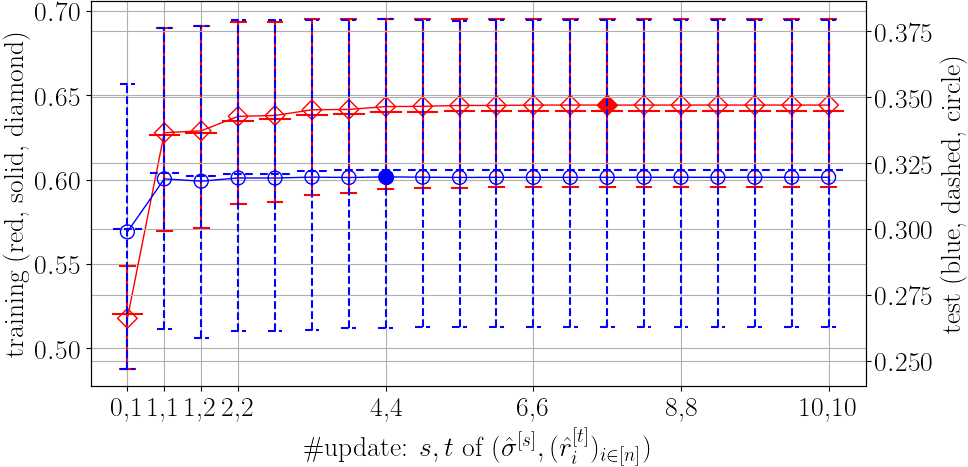}}&
\CF{\includegraphics[width=2.0cm]{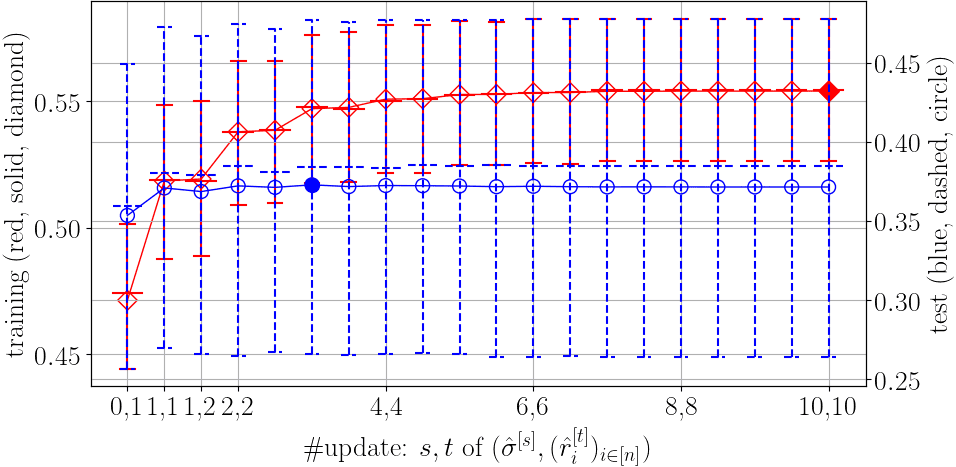}}&
\CF{\includegraphics[width=2.0cm]{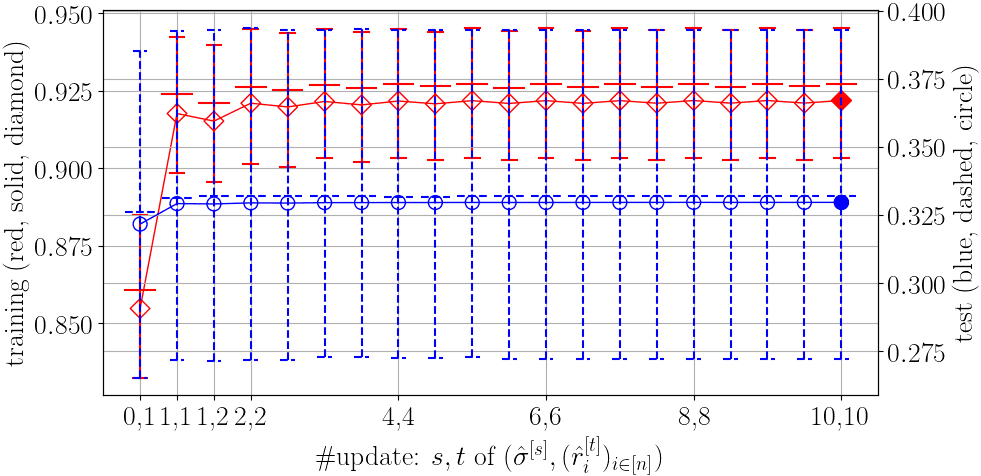}}&
\CF{\includegraphics[width=2.0cm]{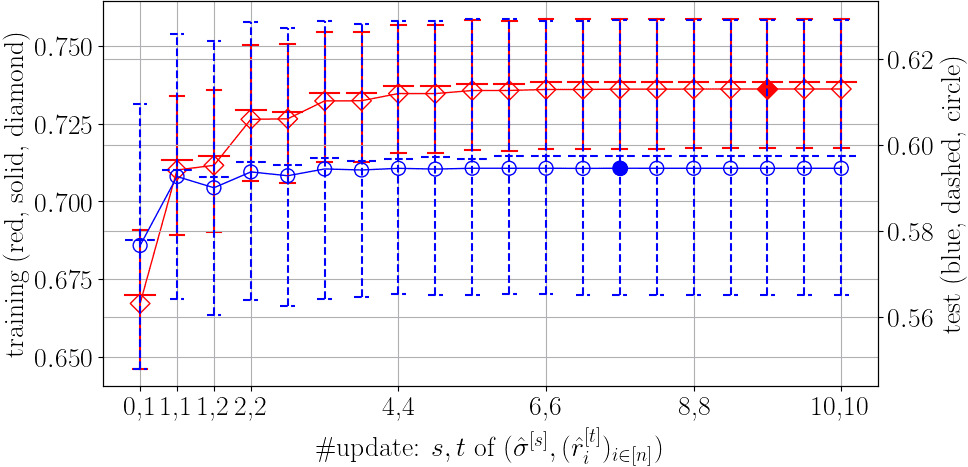}}&
\CF{\includegraphics[width=2.0cm]{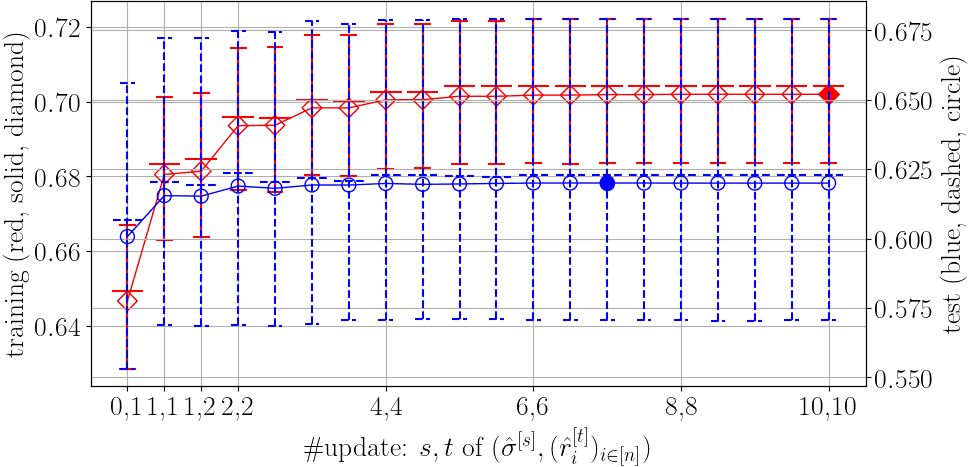}}&
\CF{\includegraphics[width=2.0cm]{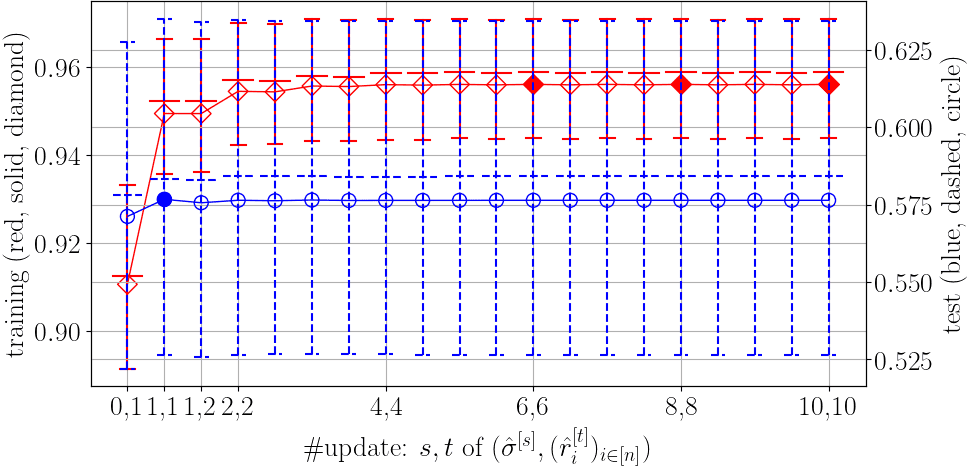}}&
\CF{\includegraphics[width=2.0cm]{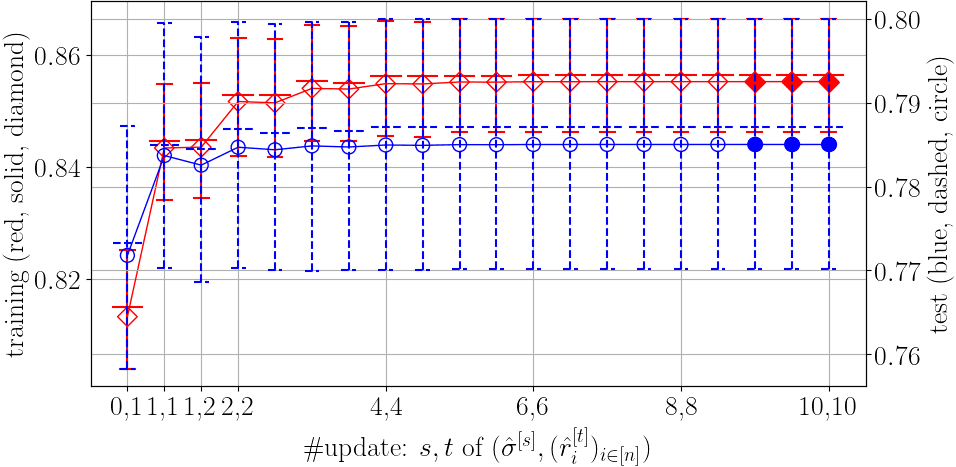}}&
\CF{\includegraphics[width=2.0cm]{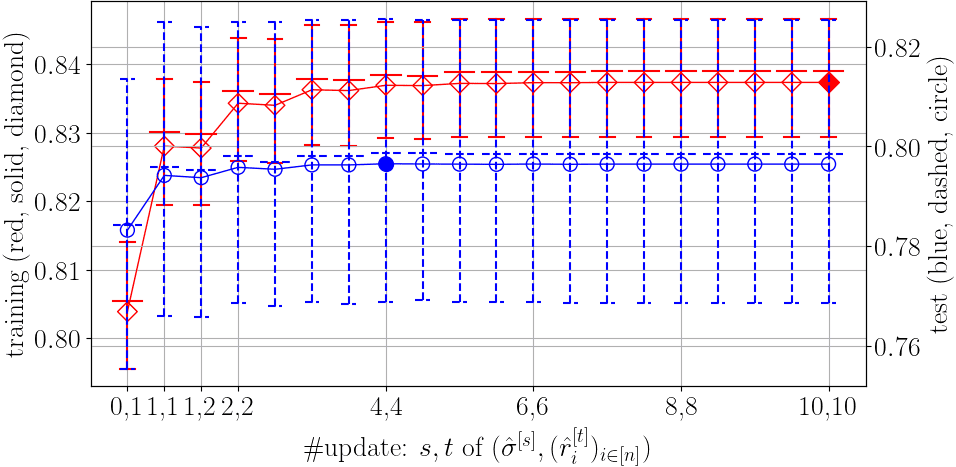}}\\
&\rotatebox{90}{\tiny\,~~\,$100$}&
\CF{\includegraphics[width=2.0cm]{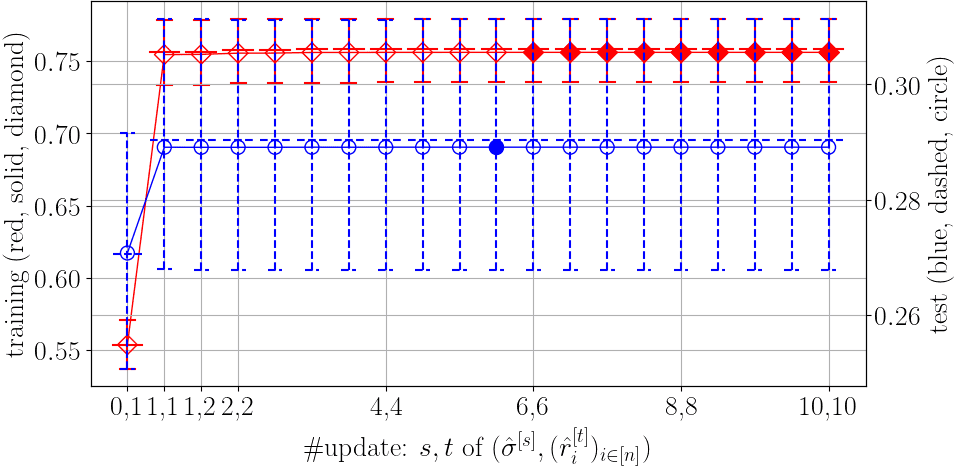}}&
\CF{\includegraphics[width=2.0cm]{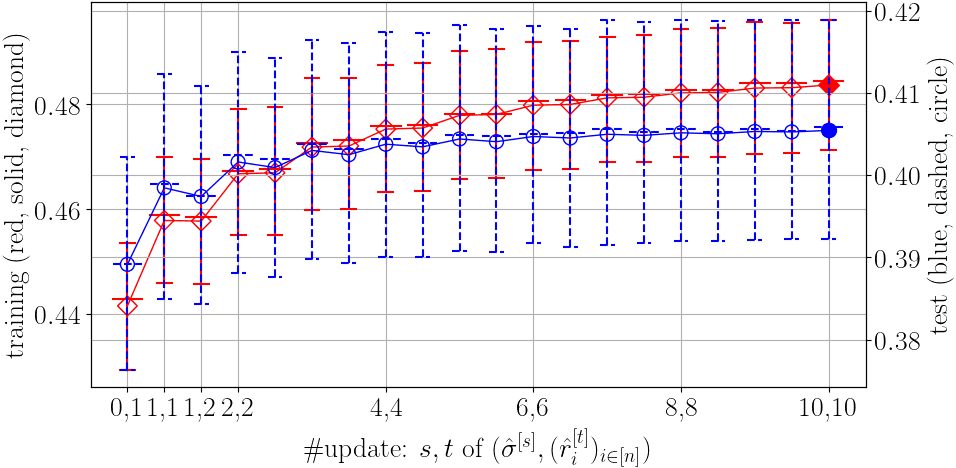}}&
\CF{\includegraphics[width=2.0cm]{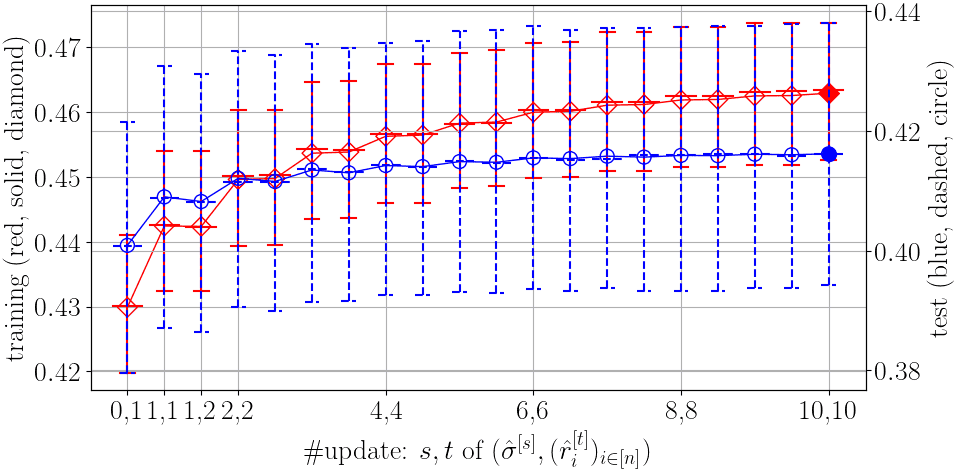}}&
\CF{\includegraphics[width=2.0cm]{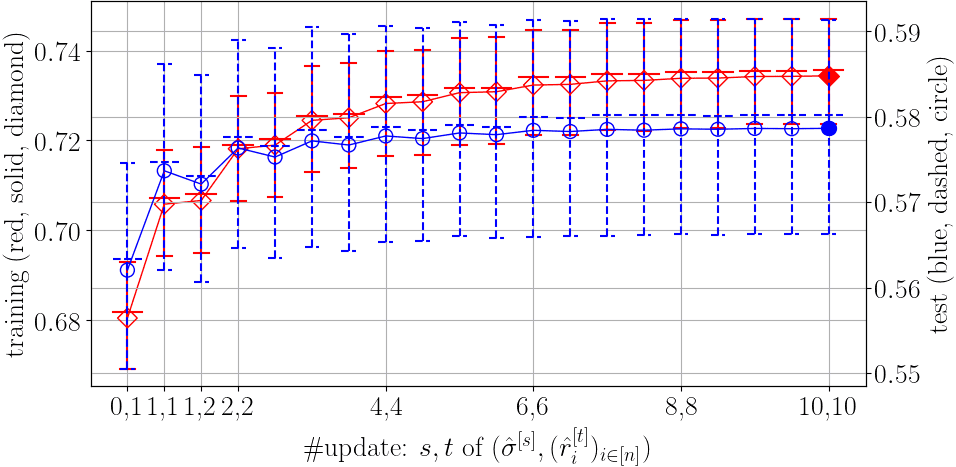}}&
\CF{\includegraphics[width=2.0cm]{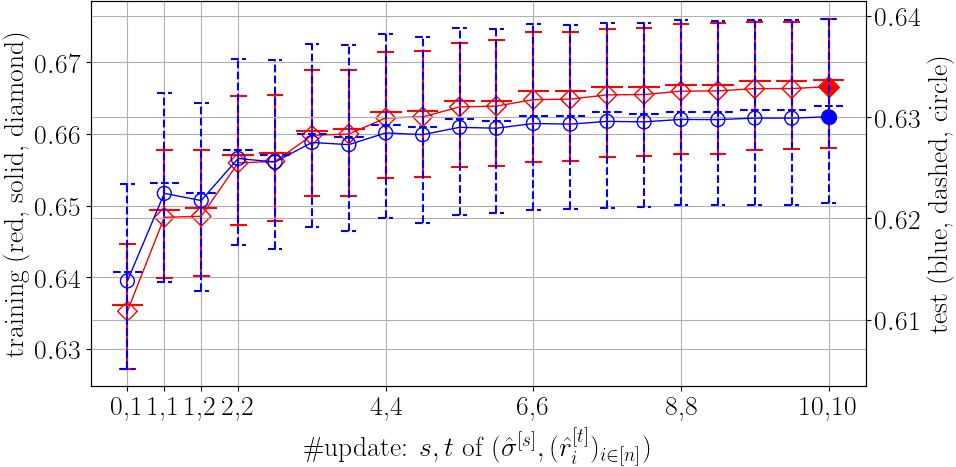}}&
\CF{\includegraphics[width=2.0cm]{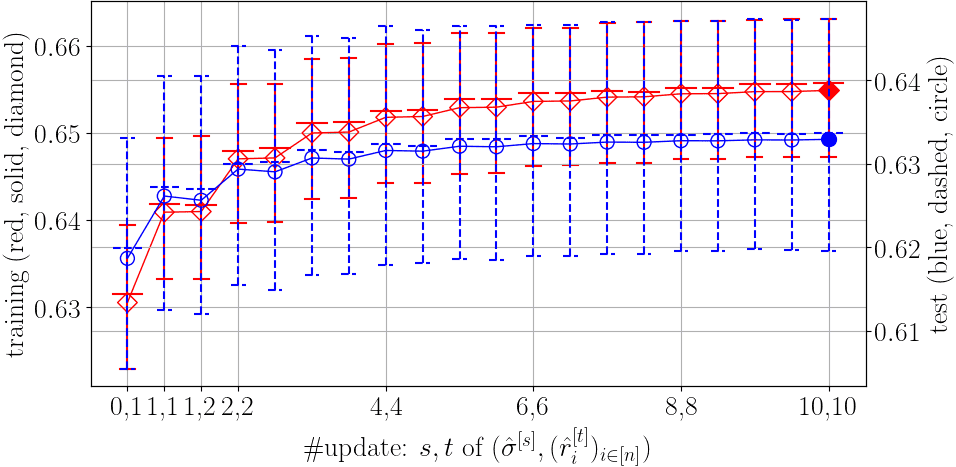}}&
\CF{\includegraphics[width=2.0cm]{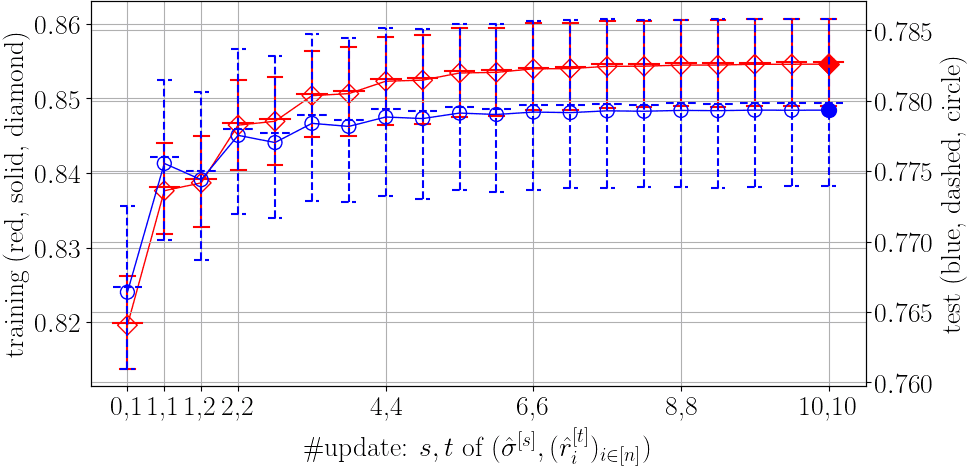}}&
\CF{\includegraphics[width=2.0cm]{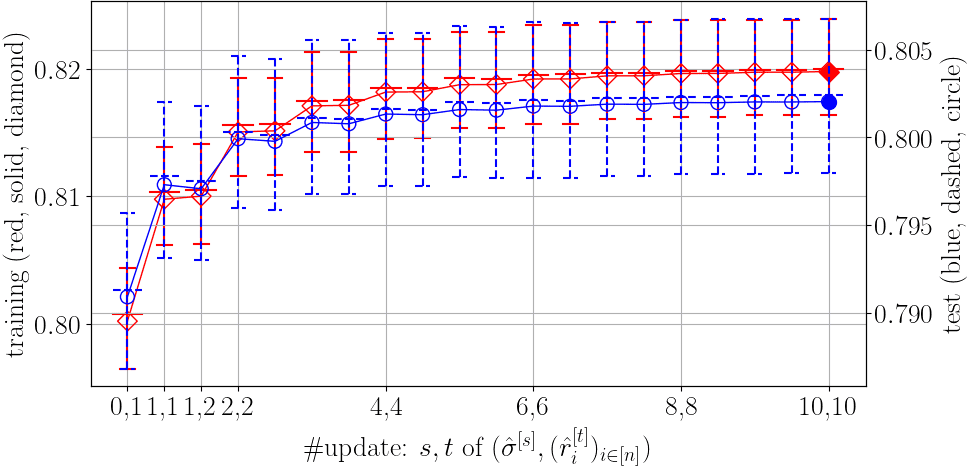}}&
\CF{\includegraphics[width=2.0cm]{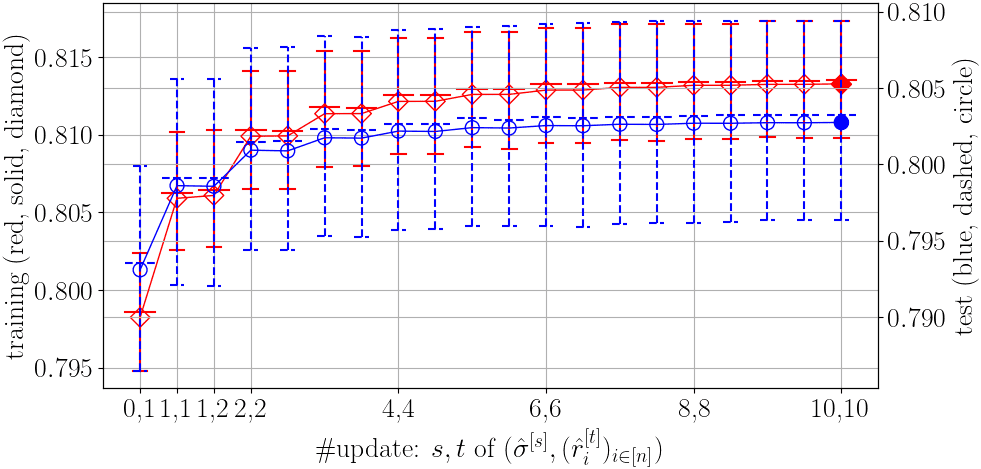}}\\
&\rotatebox{90}{\tiny\,~~\,$400$}&
\CF{\includegraphics[width=2.0cm]{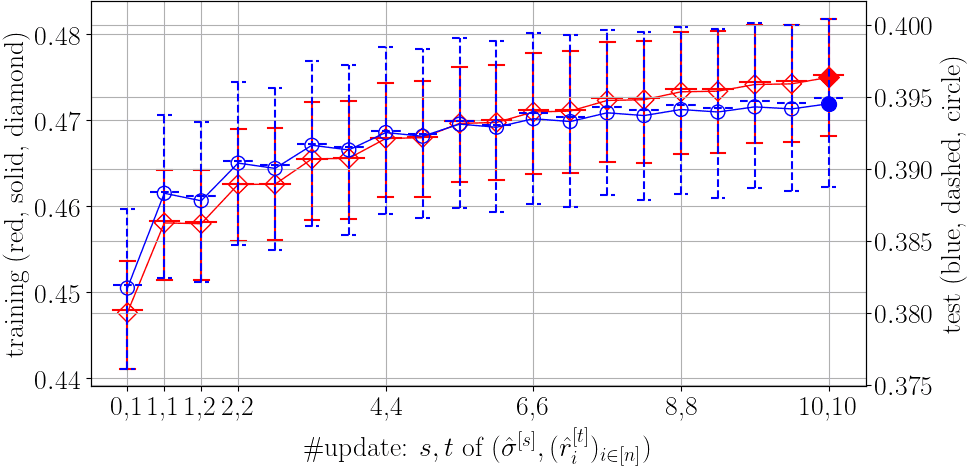}}&
\CF{\includegraphics[width=2.0cm]{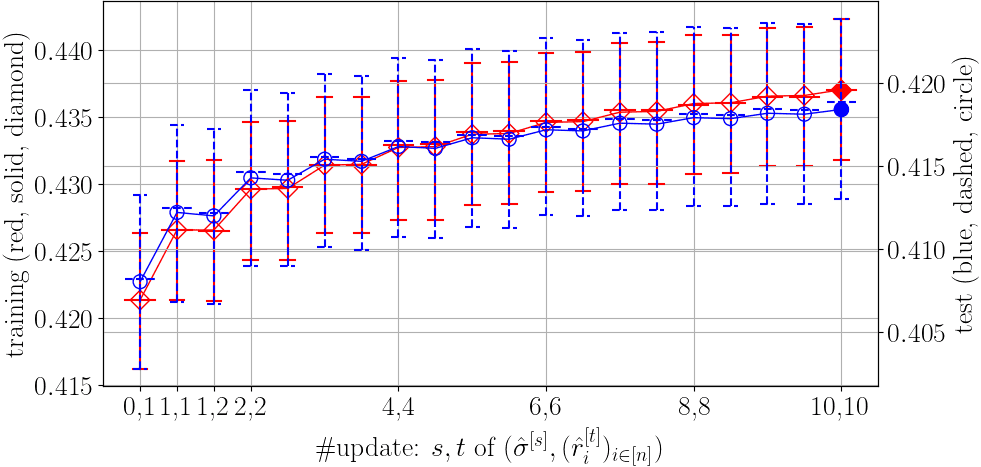}}&
\CF{\includegraphics[width=2.0cm]{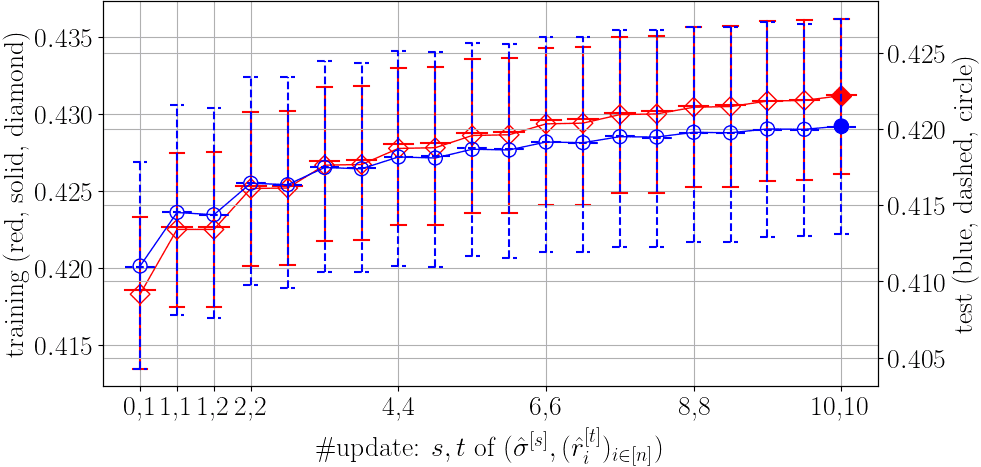}}&
\CF{\includegraphics[width=2.0cm]{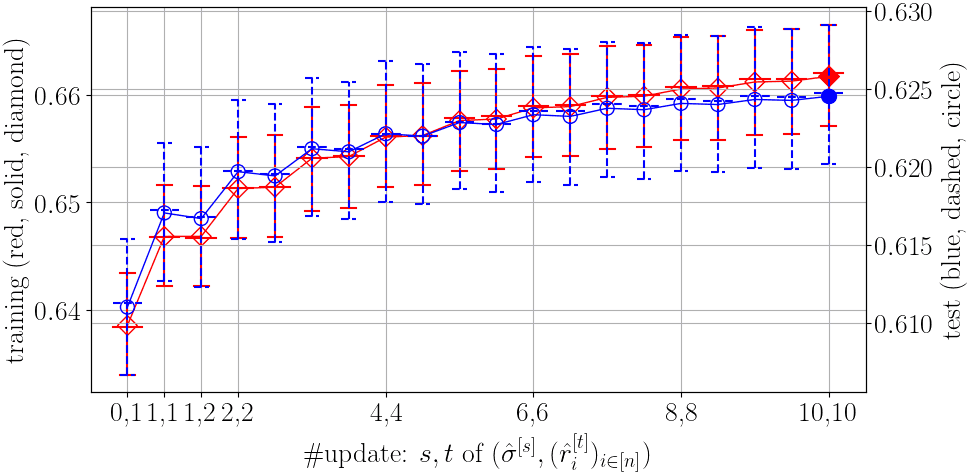}}&
\CF{\includegraphics[width=2.0cm]{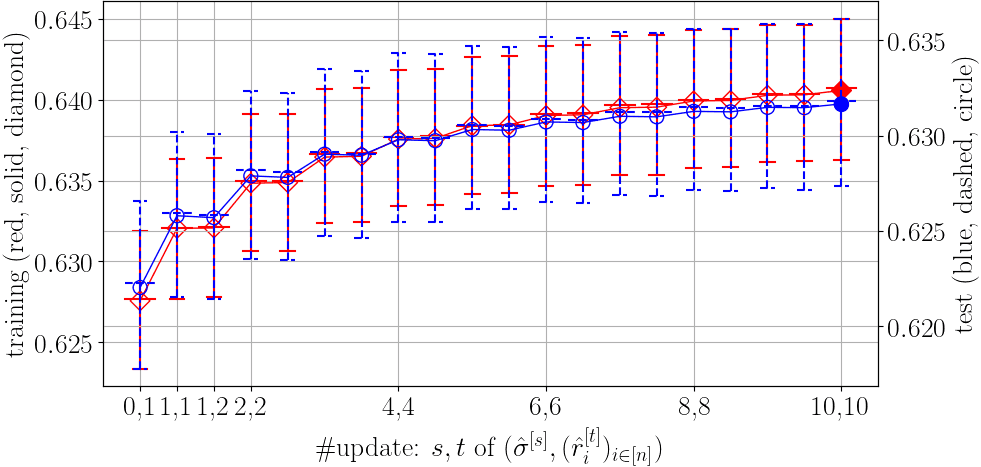}}&
\CF{\includegraphics[width=2.0cm]{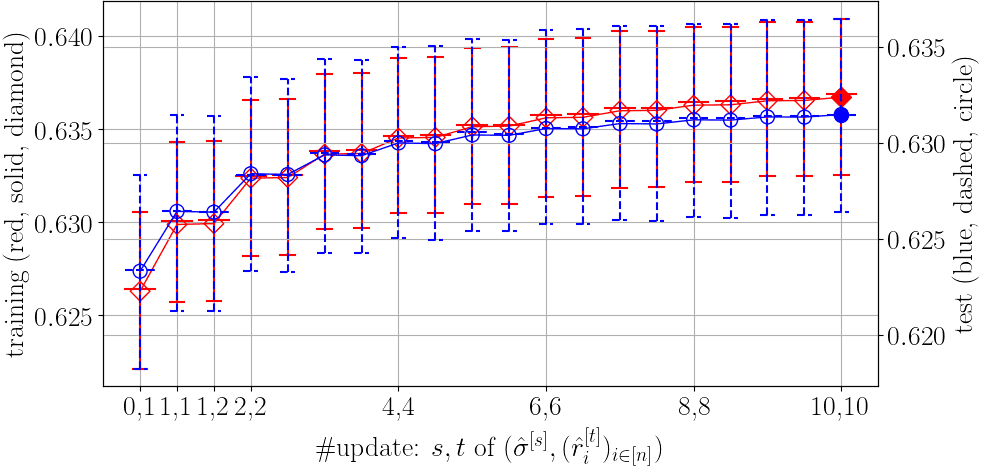}}&
\CF{\includegraphics[width=2.0cm]{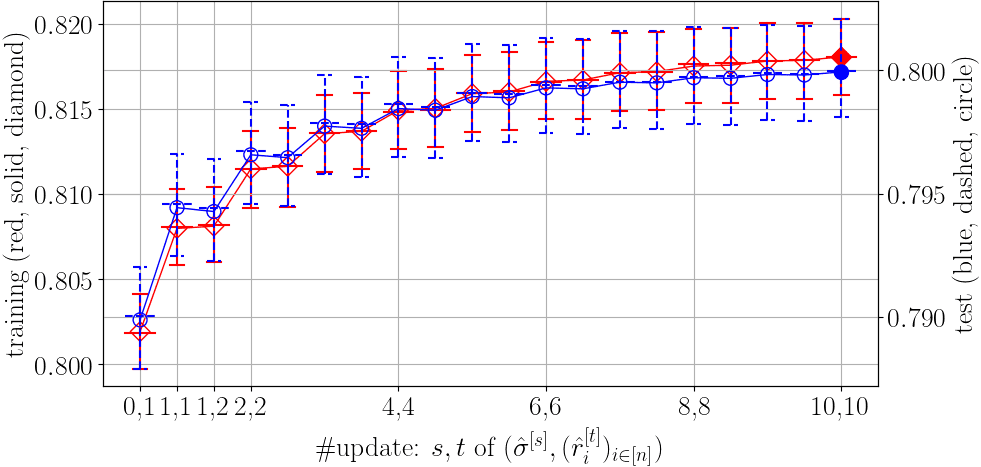}}&
\CF{\includegraphics[width=2.0cm]{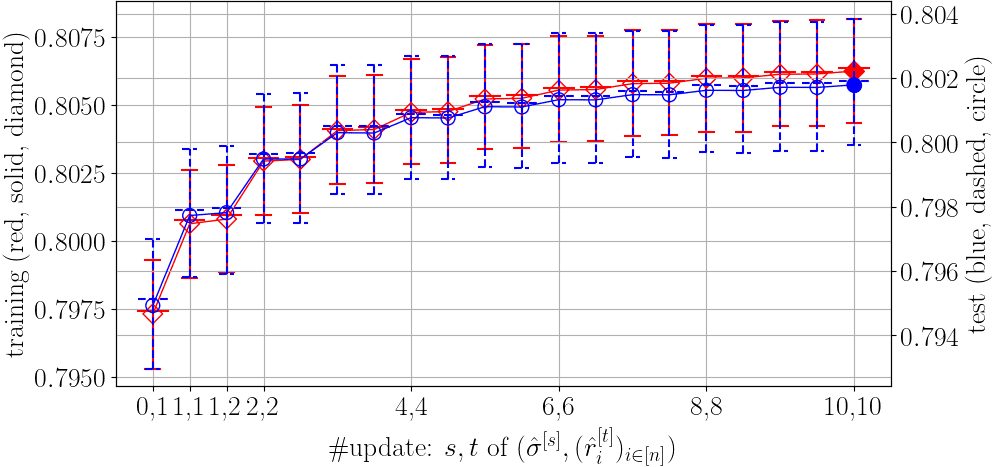}}&
\CF{\includegraphics[width=2.0cm]{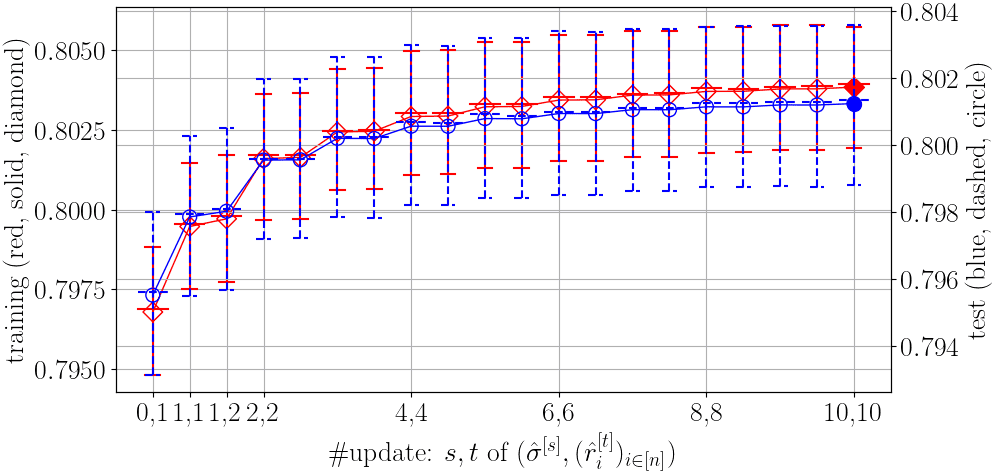}}
\\\midrule
\multirow{3}{*}[-2.5mm]{\rotatebox{90}{\tiny\eqref{eq:TIE}, $n=$}}
&\rotatebox{90}{\tiny\,~~~\,$25$}&
{\includegraphics[width=2.0cm]{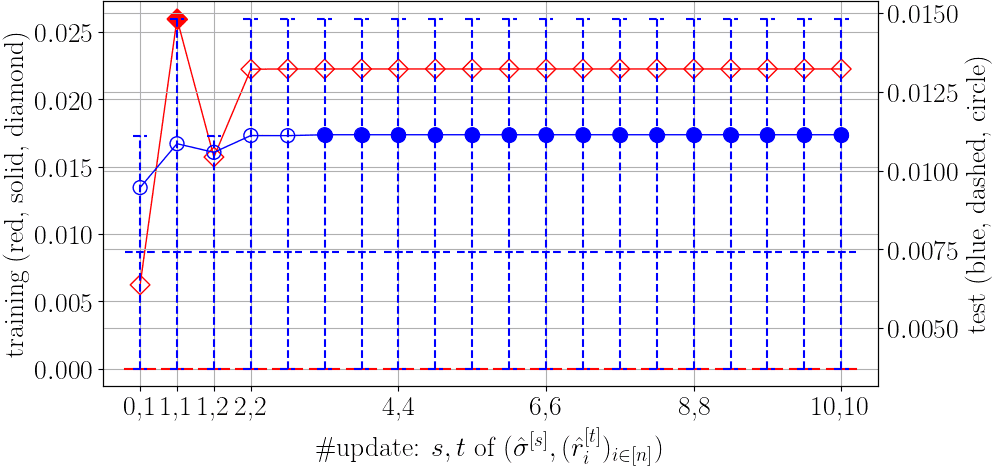}}&
{\includegraphics[width=2.0cm]{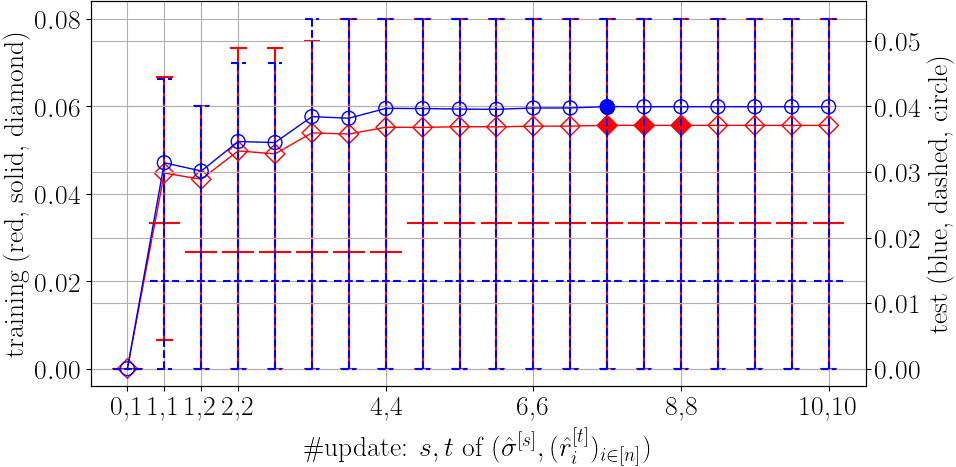}}&
{\includegraphics[width=2.0cm]{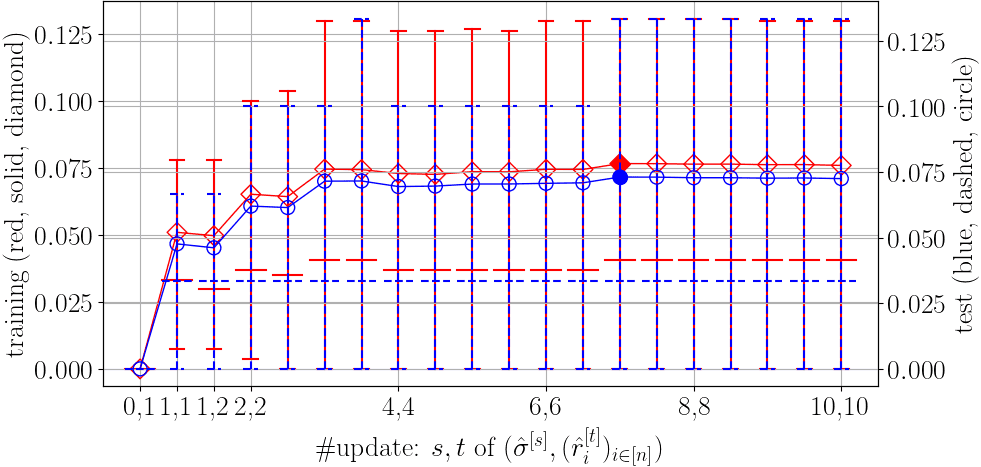}}&
{\includegraphics[width=2.0cm]{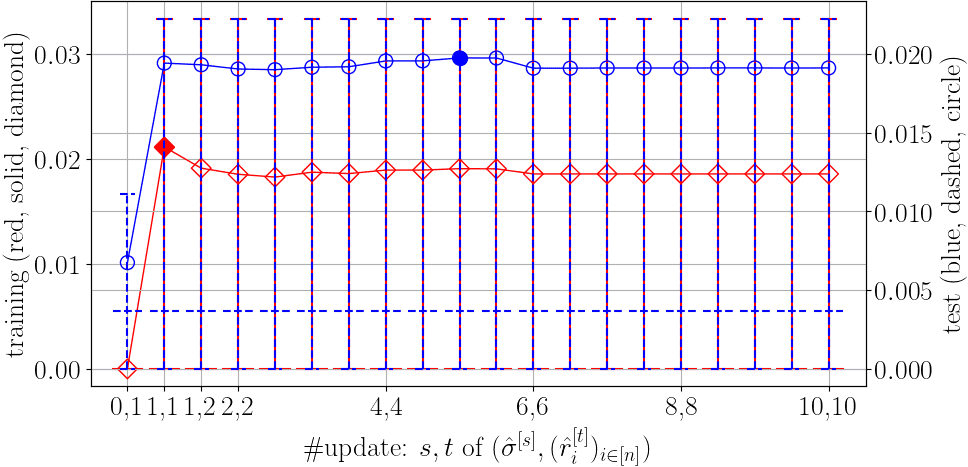}}&
{\includegraphics[width=2.0cm]{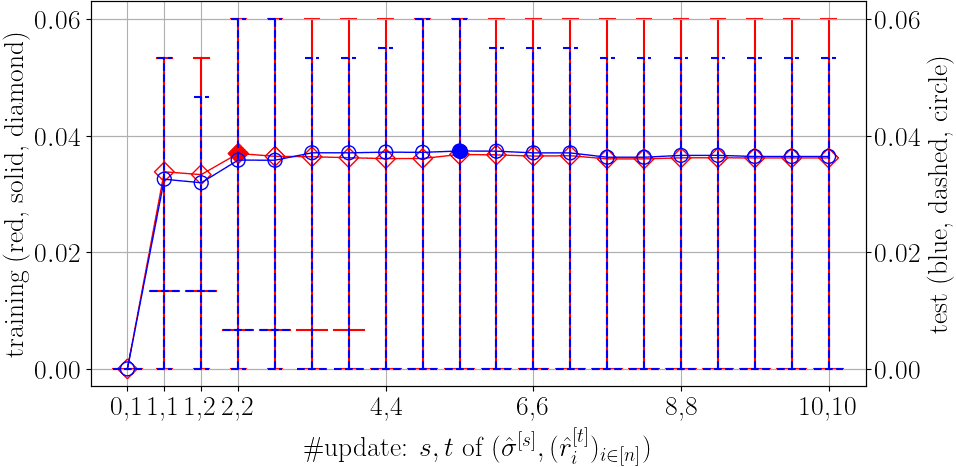}}&
{\includegraphics[width=2.0cm]{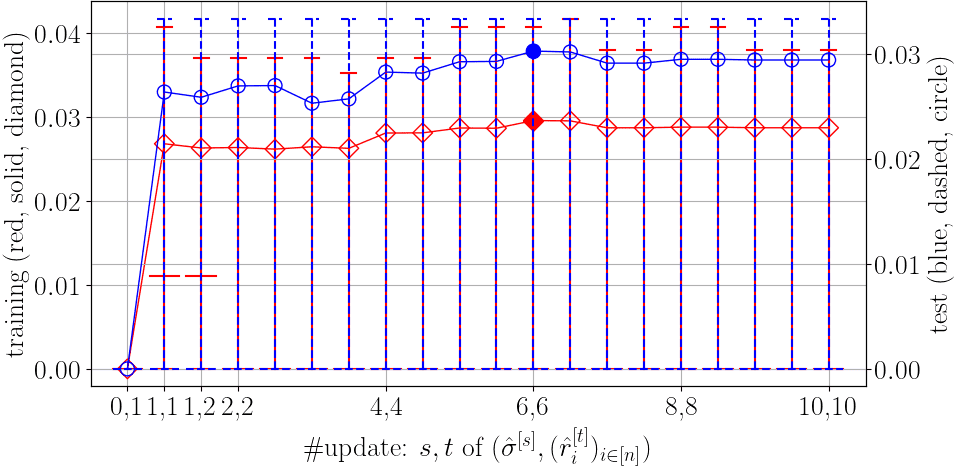}}&
{\includegraphics[width=2.0cm]{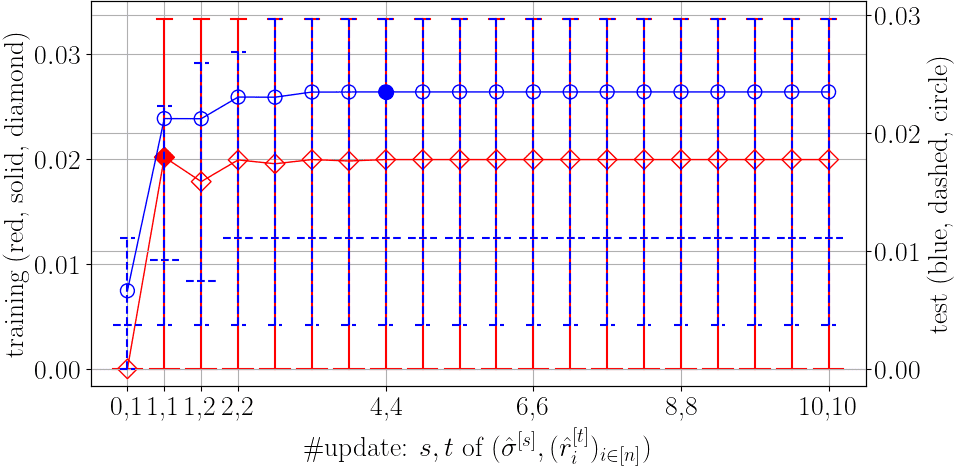}}&
{\includegraphics[width=2.0cm]{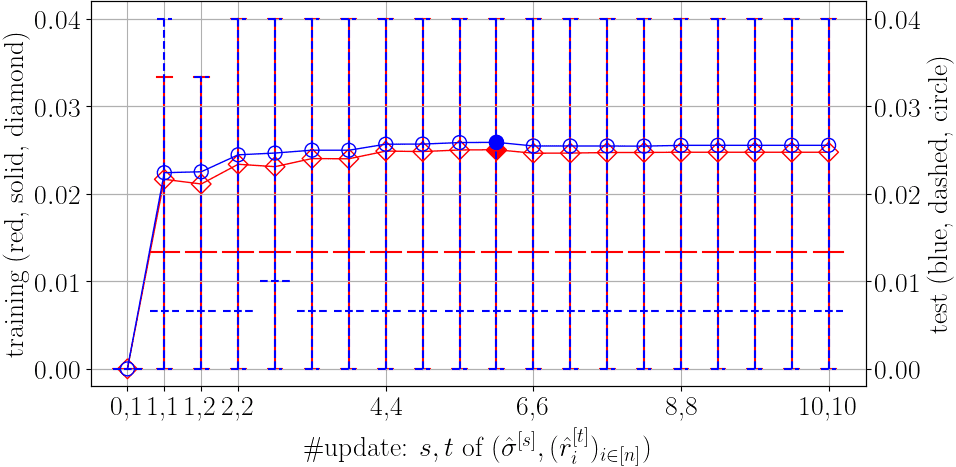}}&
{\includegraphics[width=2.0cm]{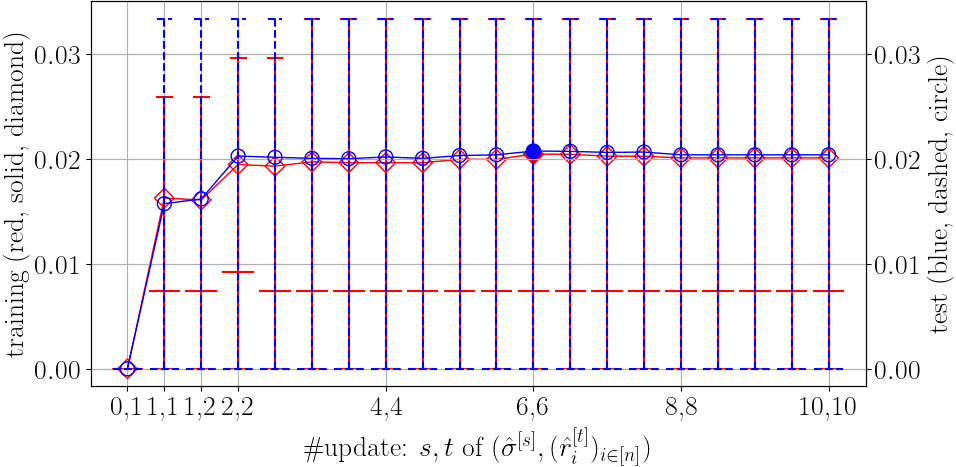}}\\
&\rotatebox{90}{\tiny\,~~\,$100$}&
{\includegraphics[width=2.0cm]{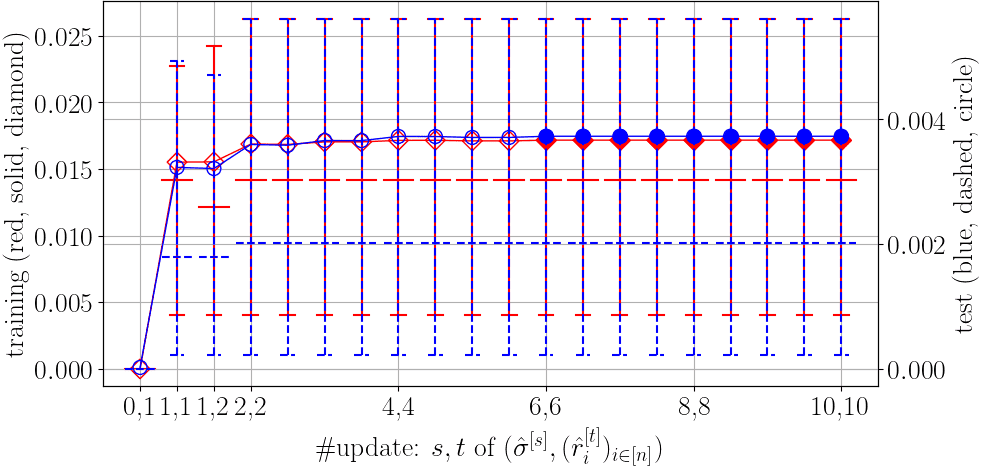}}&
{\includegraphics[width=2.0cm]{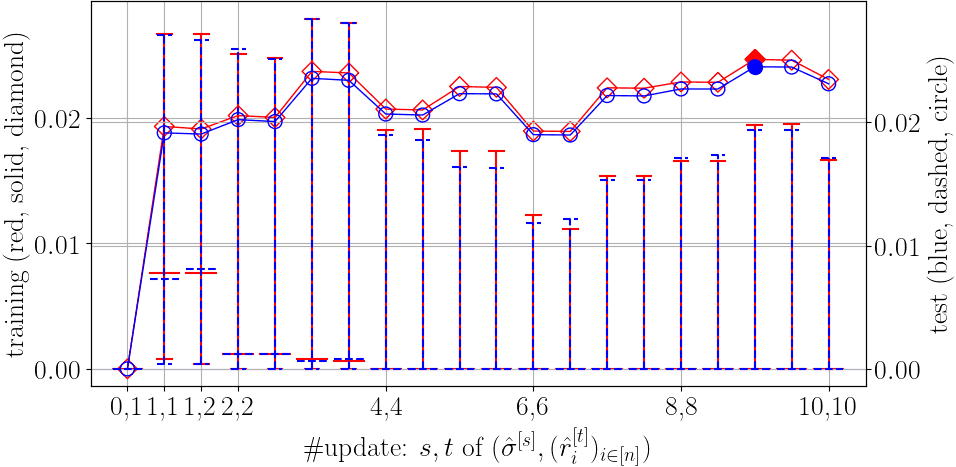}}&
{\includegraphics[width=2.0cm]{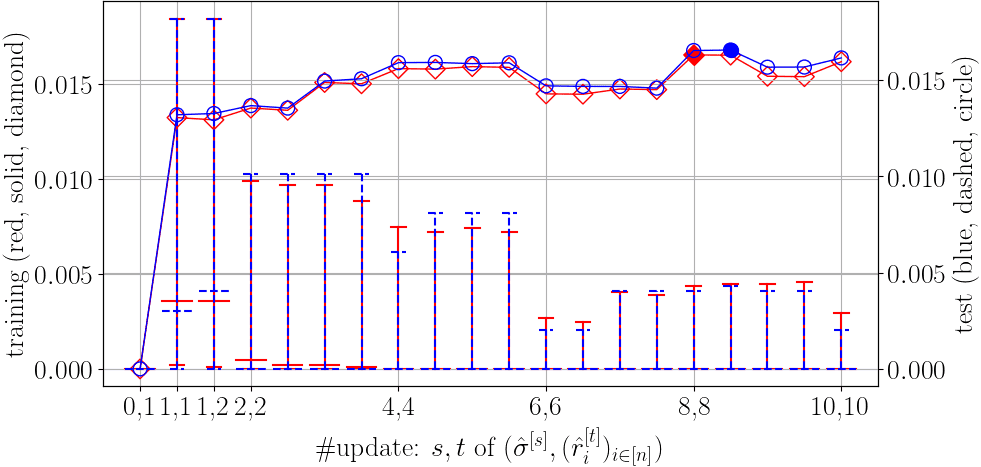}}&
{\includegraphics[width=2.0cm]{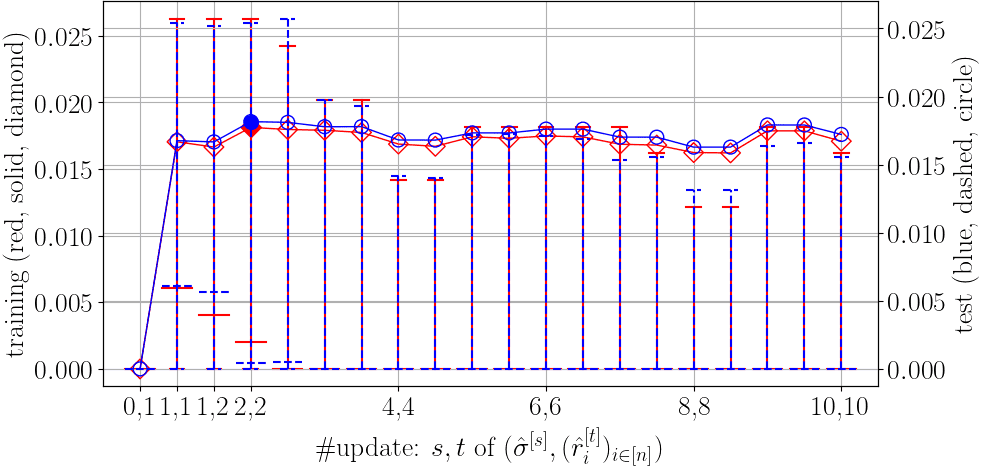}}&
{\includegraphics[width=2.0cm]{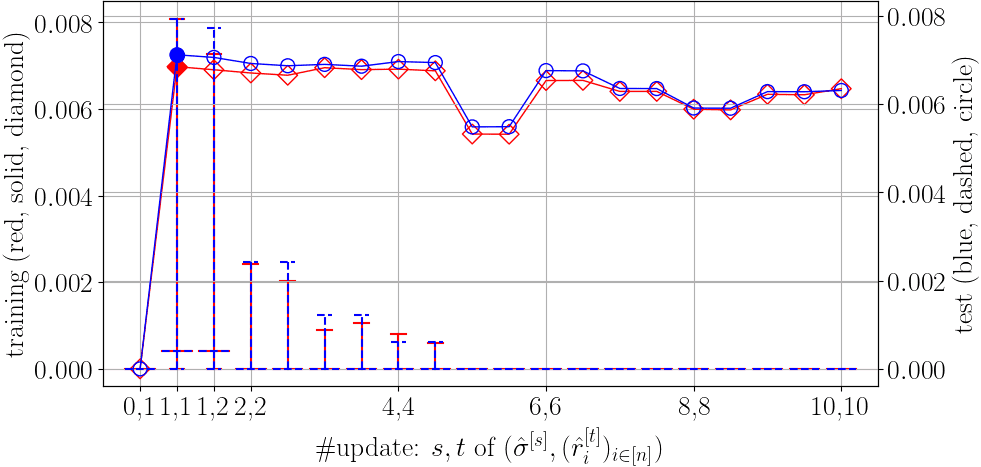}}&
{\includegraphics[width=2.0cm]{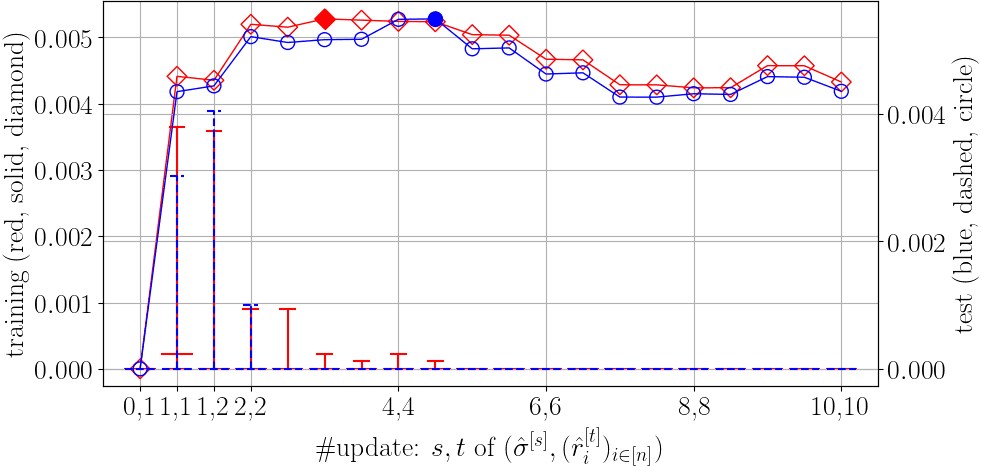}}&
{\includegraphics[width=2.0cm]{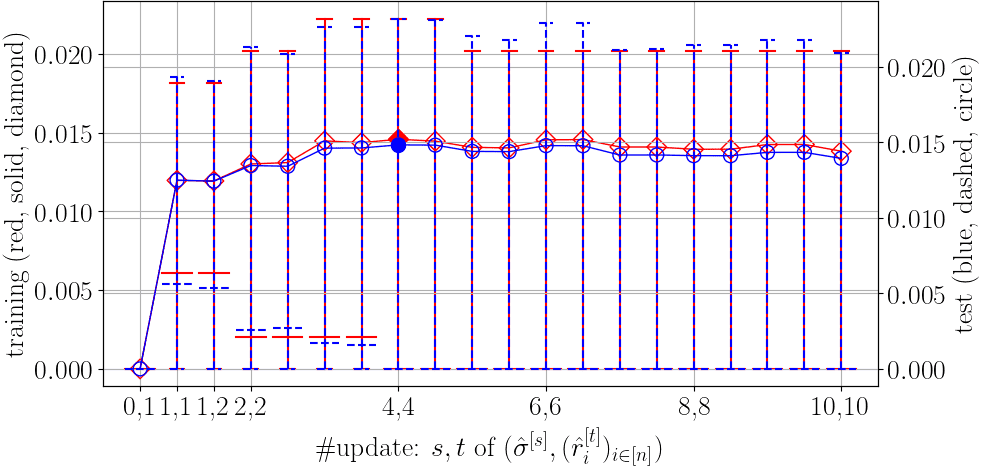}}&
{\includegraphics[width=2.0cm]{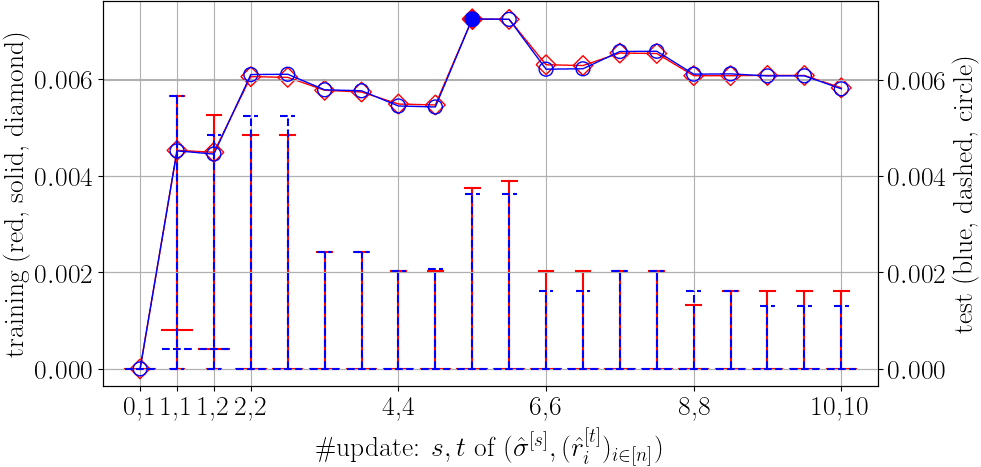}}&
{\includegraphics[width=2.0cm]{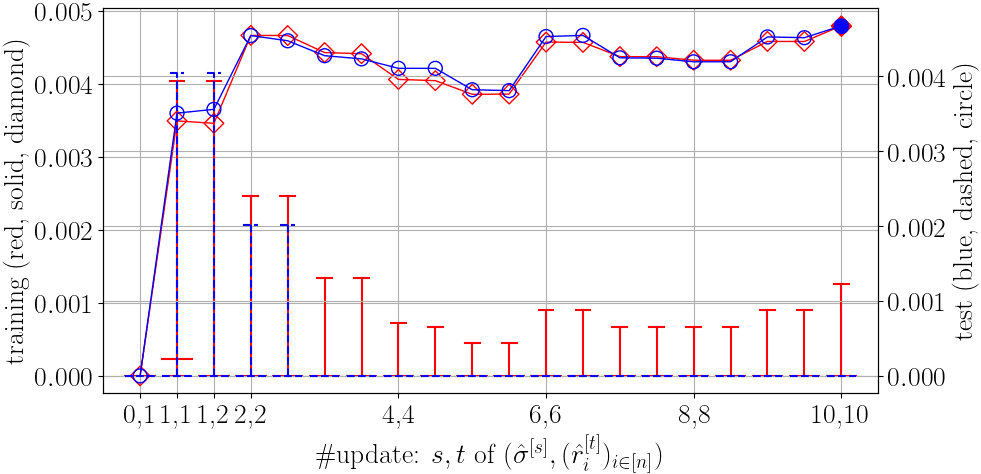}}\\
&\rotatebox{90}{\tiny\,~~\,$400$}&
{\includegraphics[width=2.0cm]{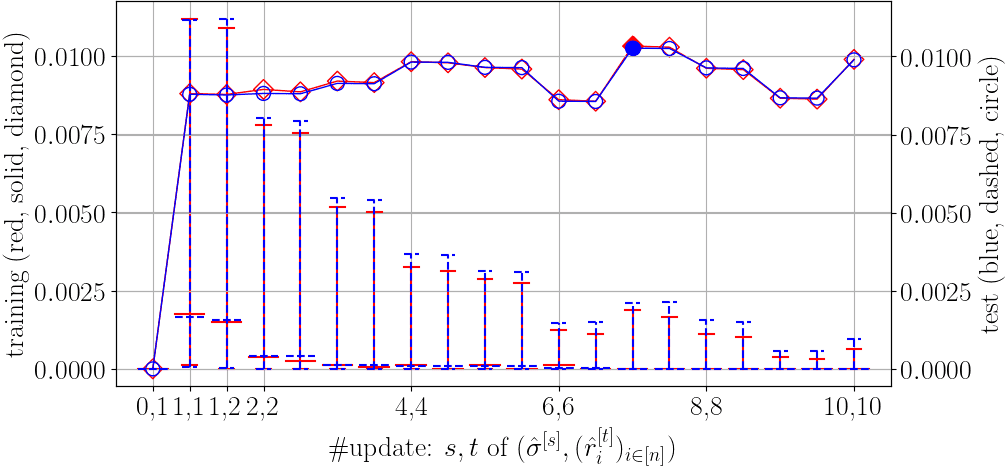}}&
{\includegraphics[width=2.0cm]{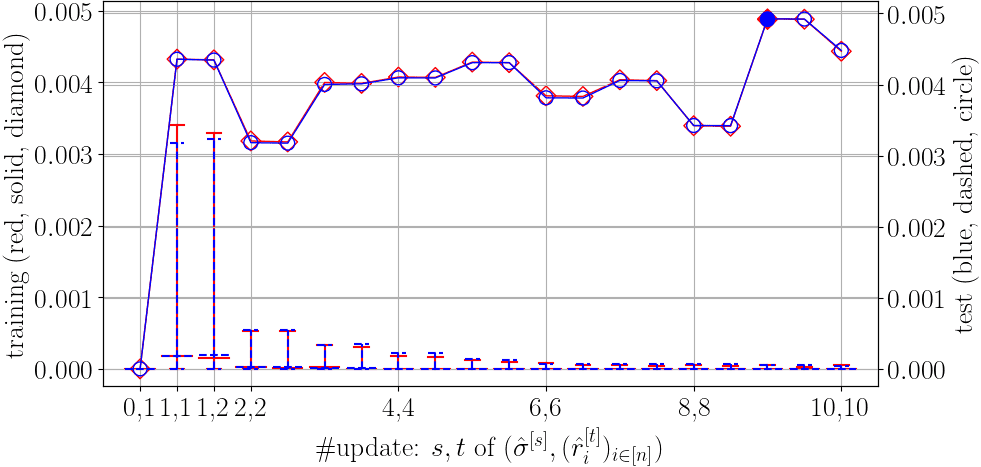}}&
{\includegraphics[width=2.0cm]{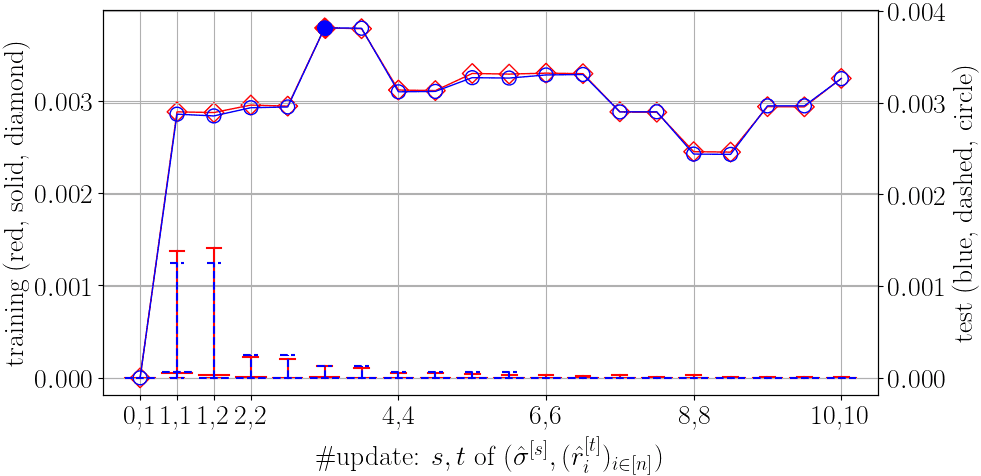}}&
{\includegraphics[width=2.0cm]{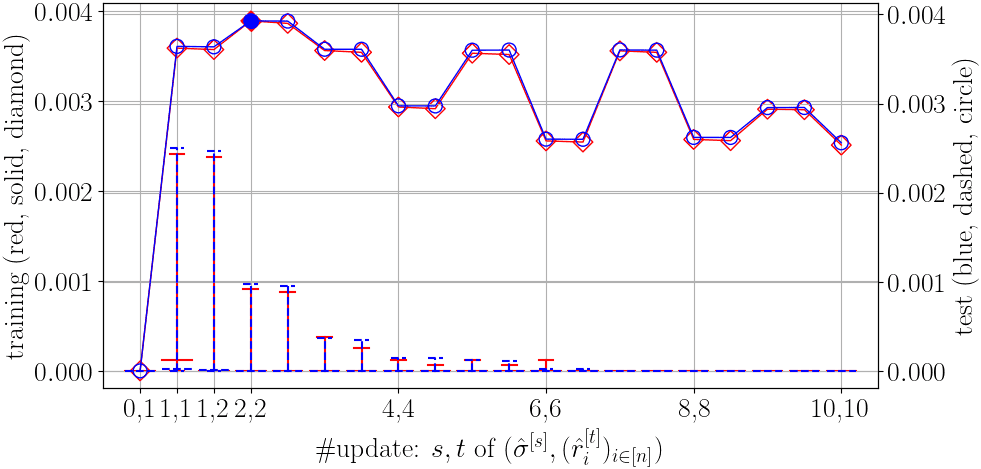}}&
{\includegraphics[width=2.0cm]{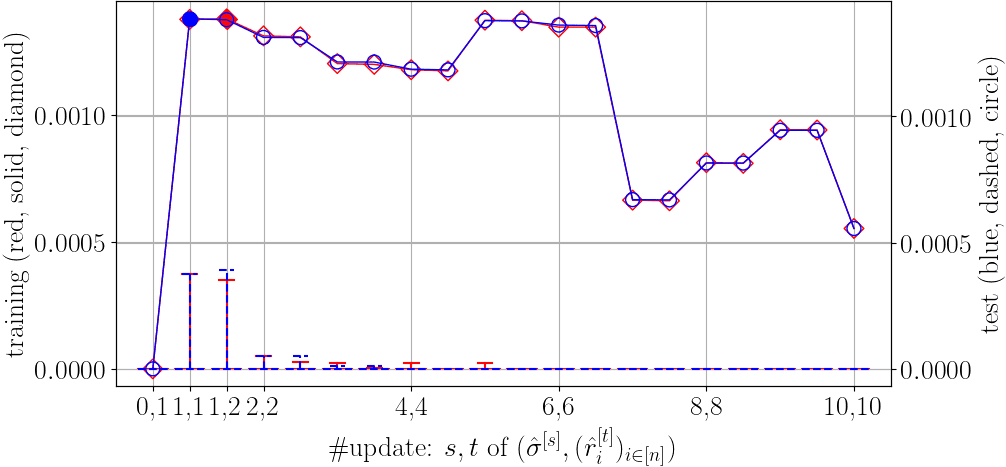}}&
{\includegraphics[width=2.0cm]{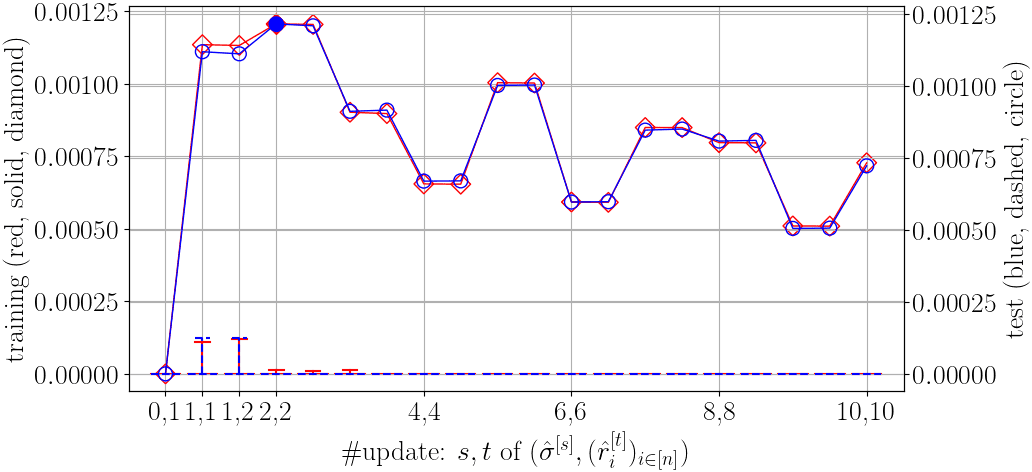}}&
{\includegraphics[width=2.0cm]{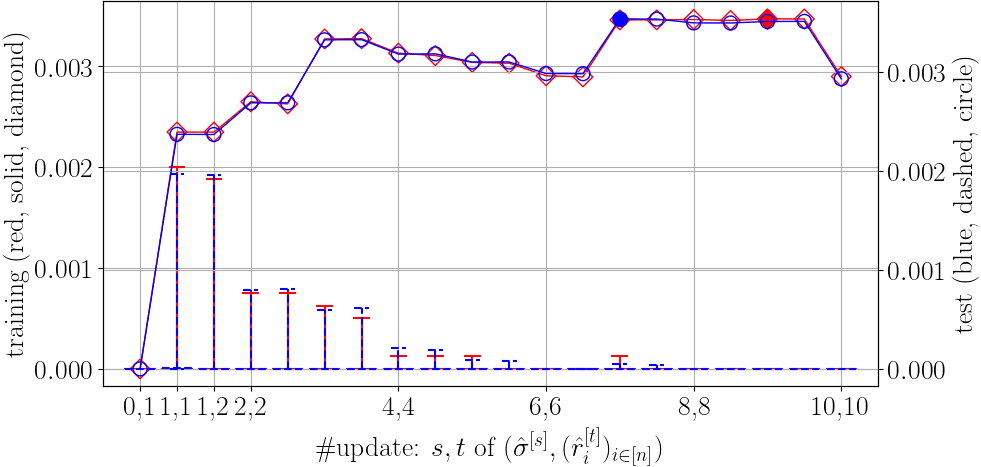}}&
{\includegraphics[width=2.0cm]{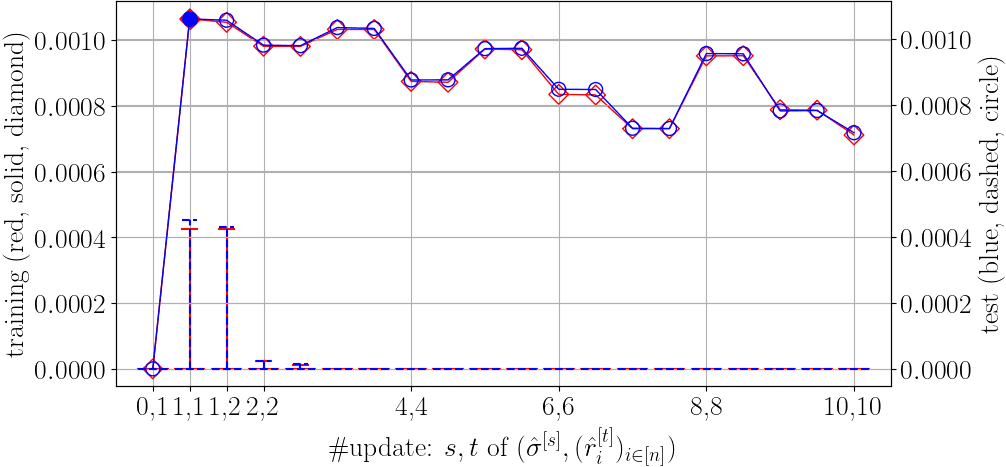}}&
{\includegraphics[width=2.0cm]{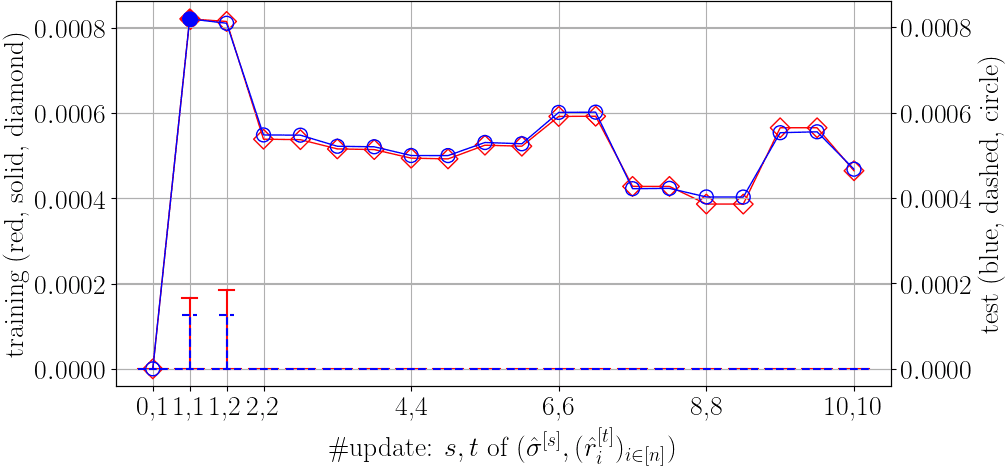}}
\end{tabular}
\caption{%
Part of results of synthetic data experiments, Procedure 1 in Section~\ref{sec:Synthetic}:
For Cauchy-$N$ synthetic data with $N=1,5,25$ (left to right),
mean (marker) and 0.25, 0.5, and 0.75 quantiles (lower, middle, and upper bars) of 
1000 trial training (red, solid, diamond) and test (blue, dashed, circle) evaluation of 
WPP error \eqref{eq:WPE} with the squared loss $\phi=\phi_\sq$, 
Kendall's Tau \eqref{eq:Kendall}, and tie rate \eqref{eq:TIE} (top to bottom)
for the isotonic Bradley-Terry model learned with the squared loss $\phi=\phi_\sq$.
Smaller \eqref{eq:WPE}, or larger \eqref{eq:Kendall} indicates a better model.
The marker for the best model and model with the most ties was filled in, 
and the outer frame of the figure was highlighted in gray
if the best model was significantly better than the Bradley-Terry model
with respect to Mann-Whitney U test of the significance level 0.05.}
\label{fig:Cauchy-SQ}
\end{sidewaysfigure}
\begin{sidewaysfigure}
\centering%
\renewcommand{\arraystretch}{0.5}%
\renewcommand{\tabcolsep}{0.5pt}%
\begin{tabular}{cc|ccc|ccc|ccc}%
&&\multicolumn{3}{c|}{\tiny$N=1$, $|D_\tra|:|D_\tes|=$}&\multicolumn{3}{c|}{\tiny$N=5$, $|D_\tra|:|D_\tes|=$}&\multicolumn{3}{c}{\tiny$N=25$, $|D_\tra|:|D_\tes|=$}\\
&&{\tiny$1:9$}&{\tiny$5:5$}&{\tiny$9:1$}&{\tiny$1:9$}&{\tiny$5:5$}&{\tiny$9:1$}&{\tiny$1:9$}&{\tiny$5:5$}&{\tiny$9:1$}\\
\midrule
\multirow{3}{*}[-3.4mm]{\rotatebox{90}{\tiny $n=$}}
&\rotatebox{90}{\tiny\,~~~~\,$25$}&
{\includegraphics[width=2.0cm]{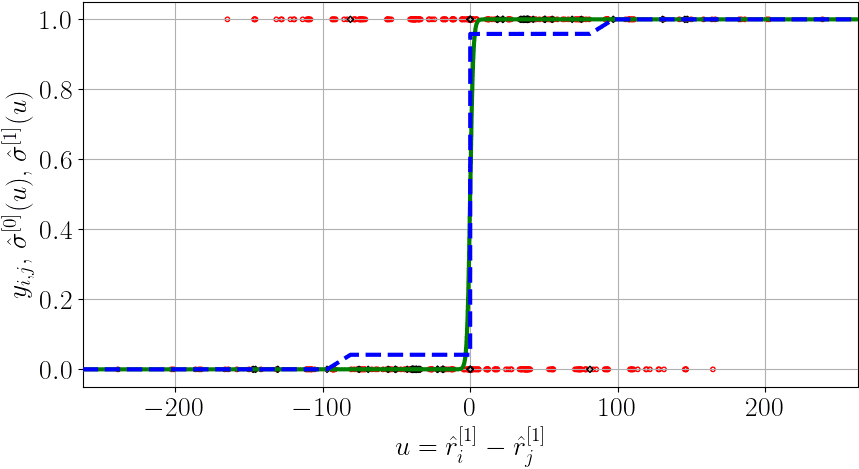}}&
{\includegraphics[width=2.0cm]{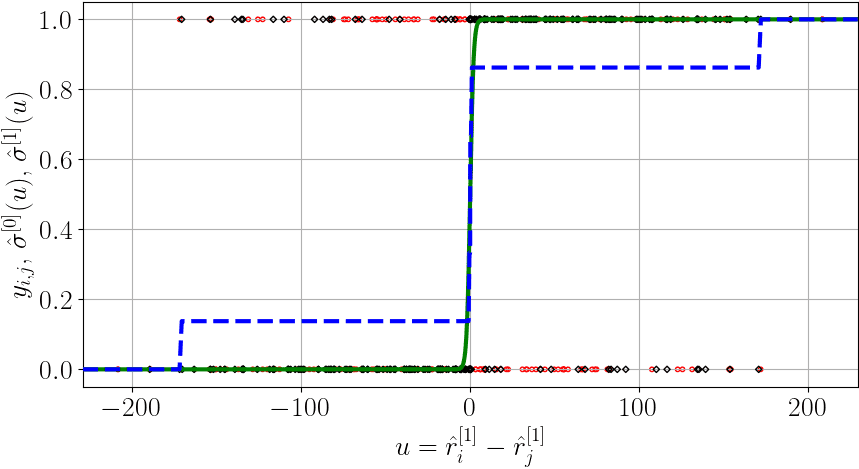}}&
{\includegraphics[width=2.0cm]{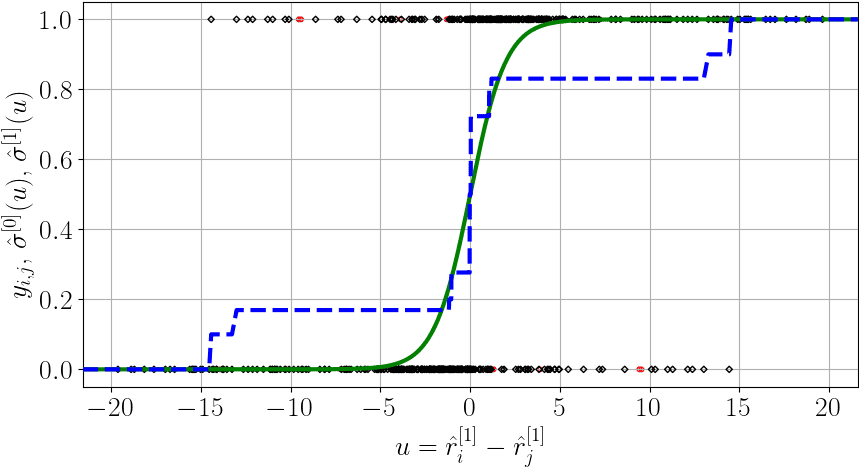}}&
{\includegraphics[width=2.0cm]{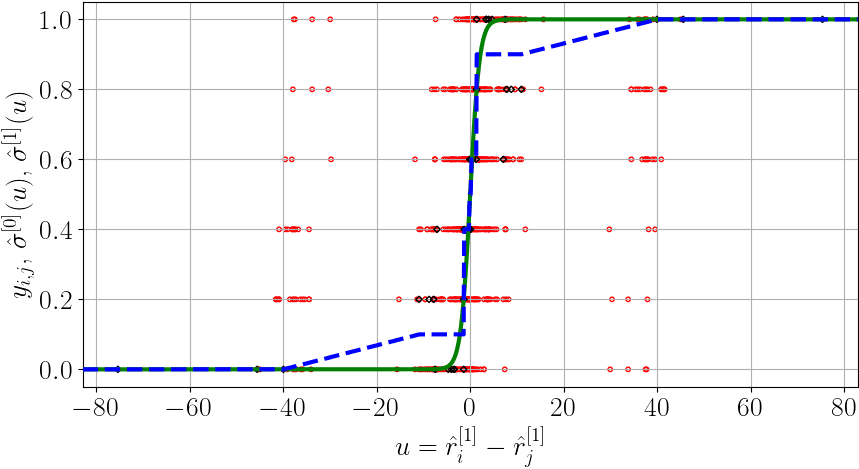}}&
{\includegraphics[width=2.0cm]{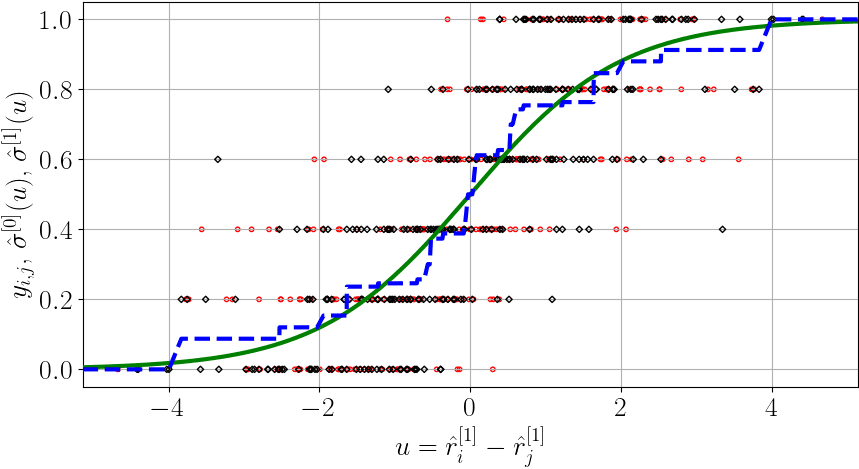}}&
{\includegraphics[width=2.0cm]{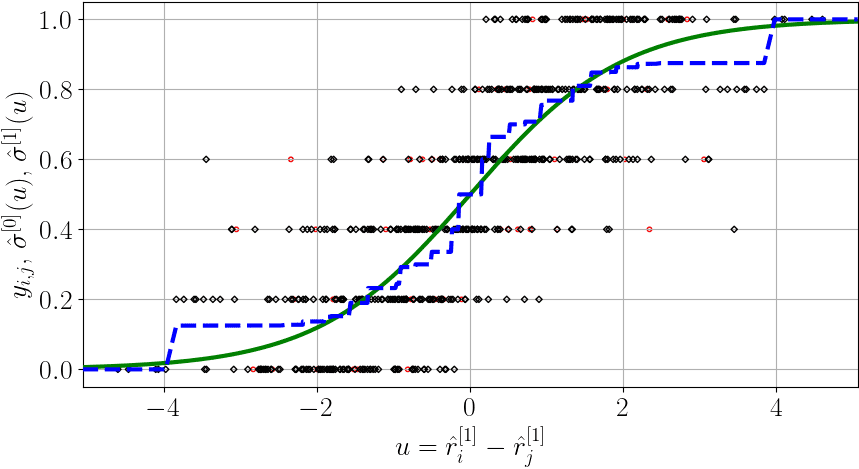}}&
{\includegraphics[width=2.0cm]{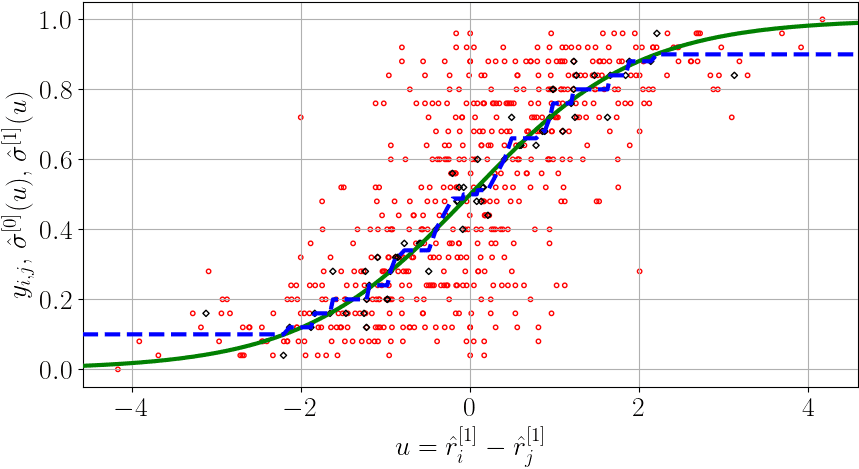}}&
{\includegraphics[width=2.0cm]{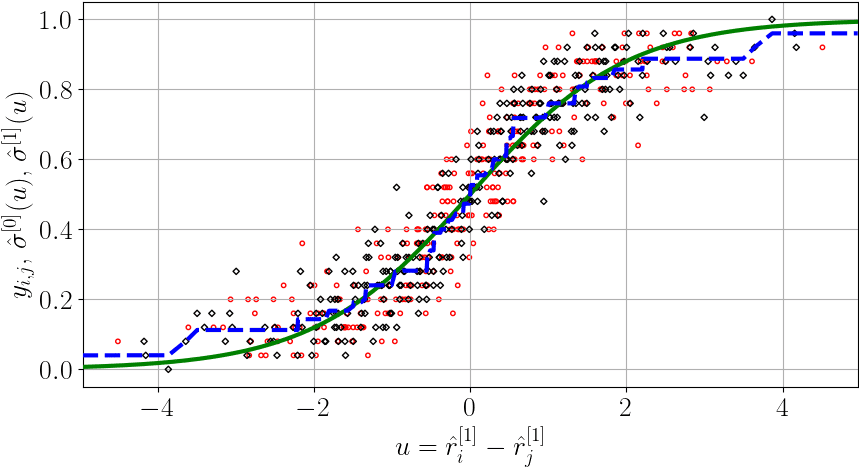}}&
{\includegraphics[width=2.0cm]{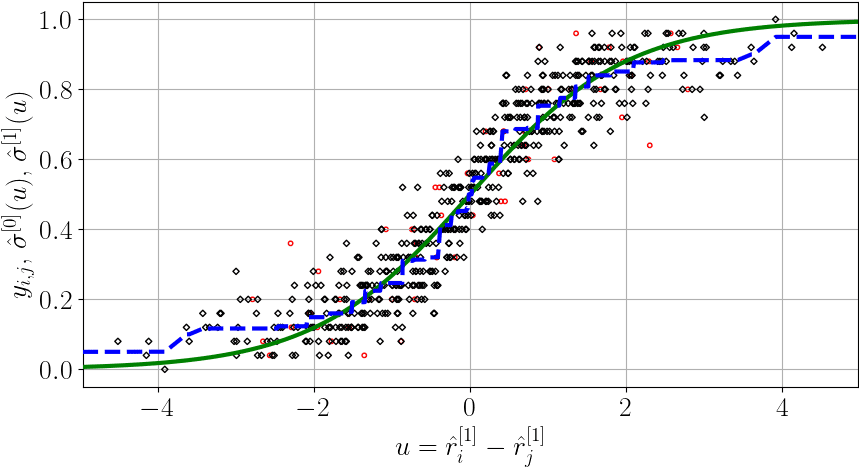}}\\
&\rotatebox{90}{\tiny\,~~~\,$100$}&
{\includegraphics[width=2.0cm]{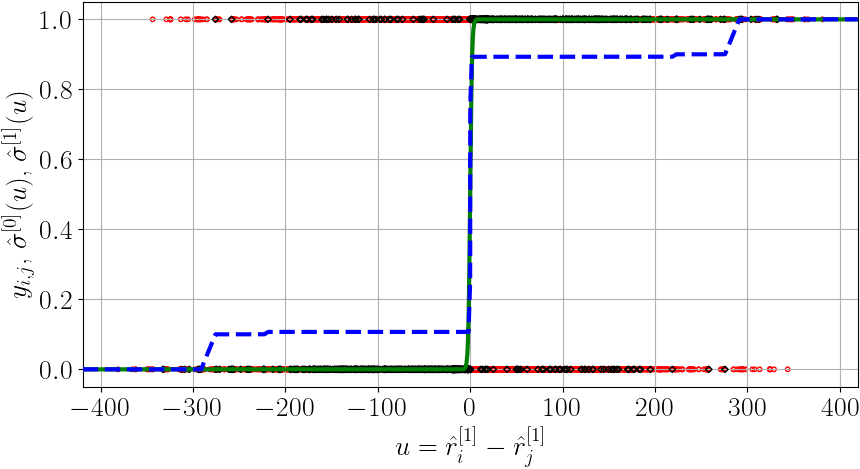}}&
{\includegraphics[width=2.0cm]{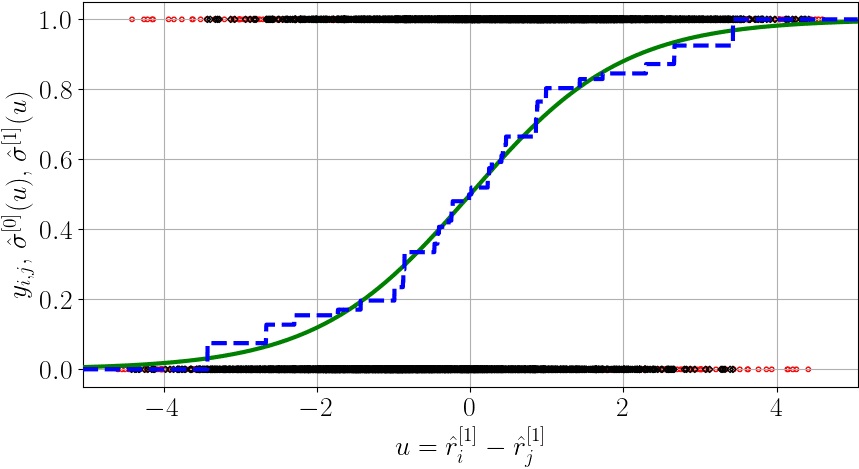}}&
{\includegraphics[width=2.0cm]{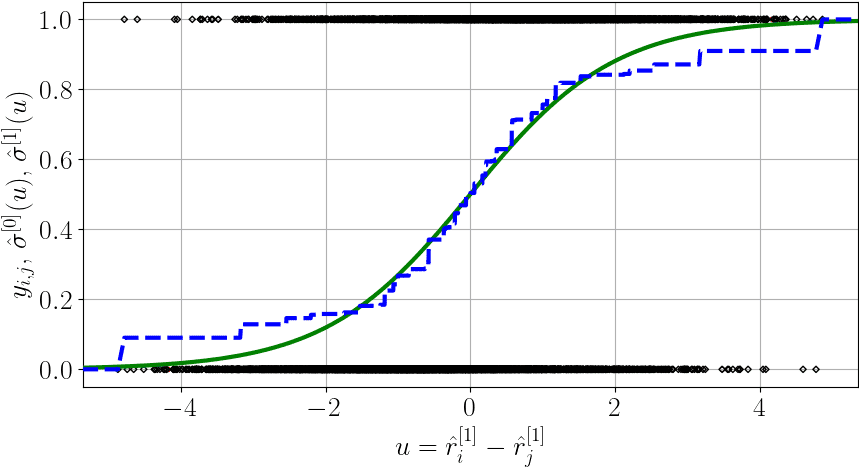}}&
{\includegraphics[width=2.0cm]{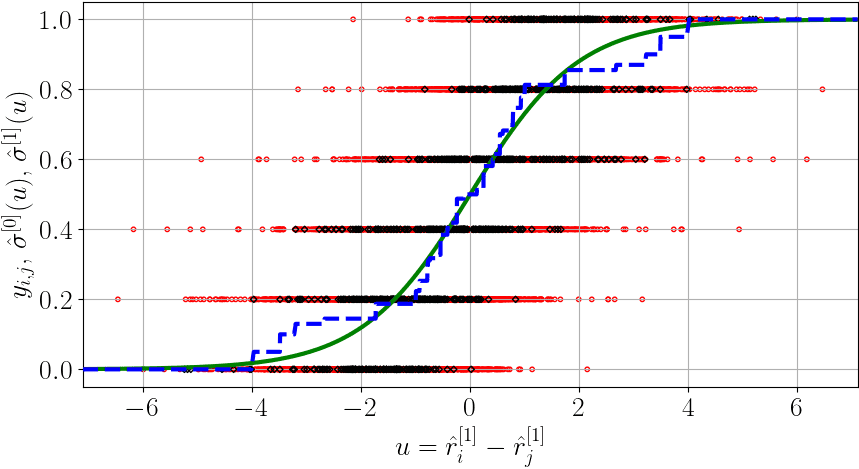}}&
{\includegraphics[width=2.0cm]{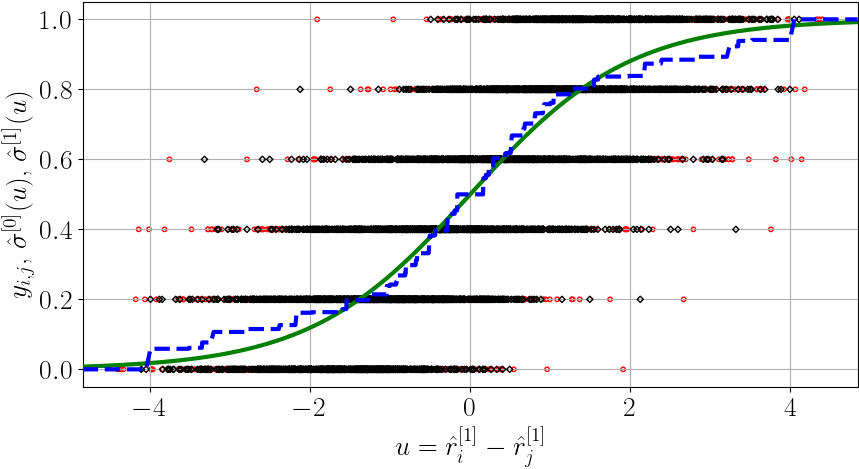}}&
{\includegraphics[width=2.0cm]{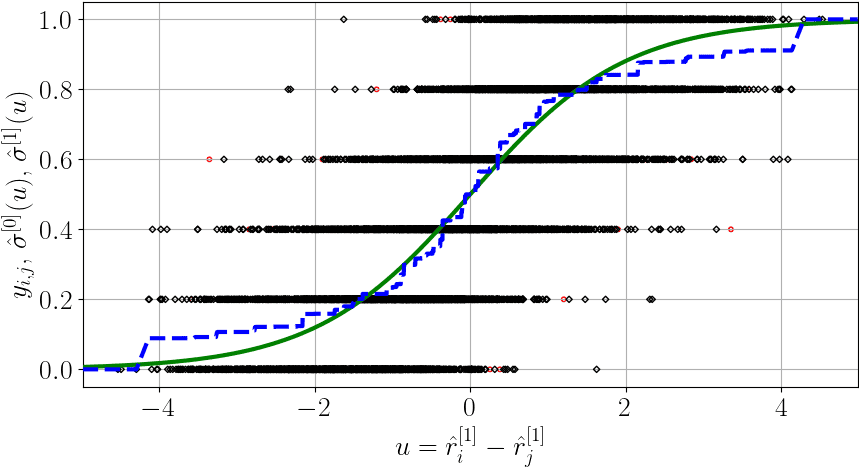}}&
{\includegraphics[width=2.0cm]{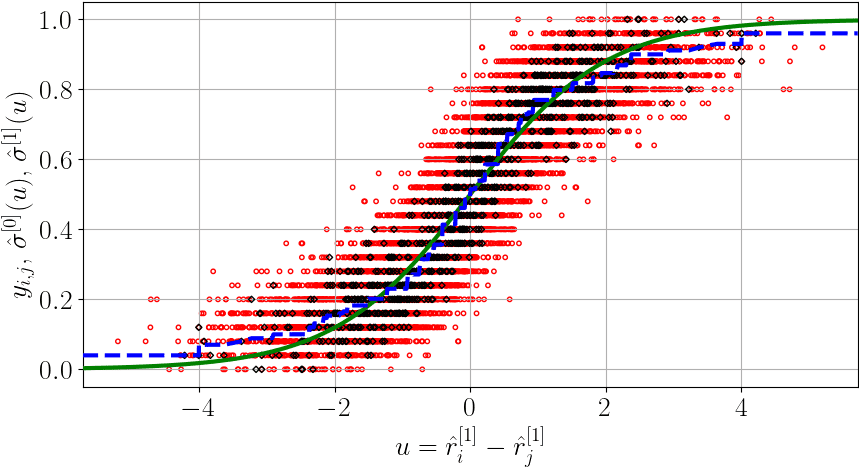}}&
{\includegraphics[width=2.0cm]{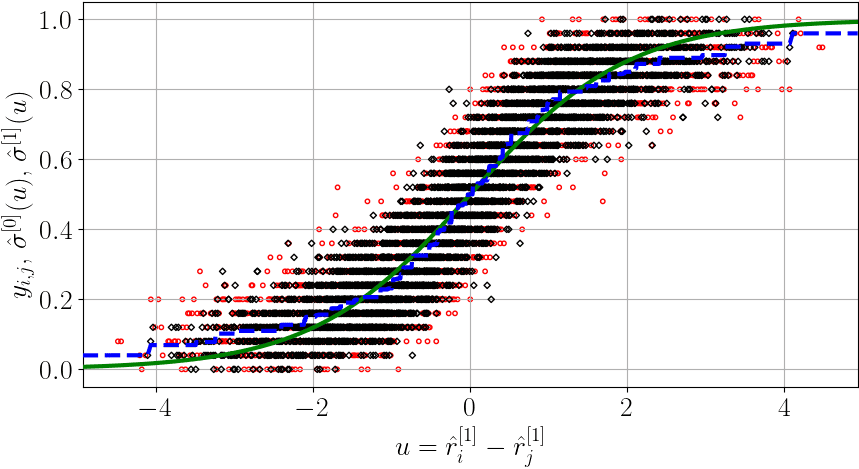}}&
{\includegraphics[width=2.0cm]{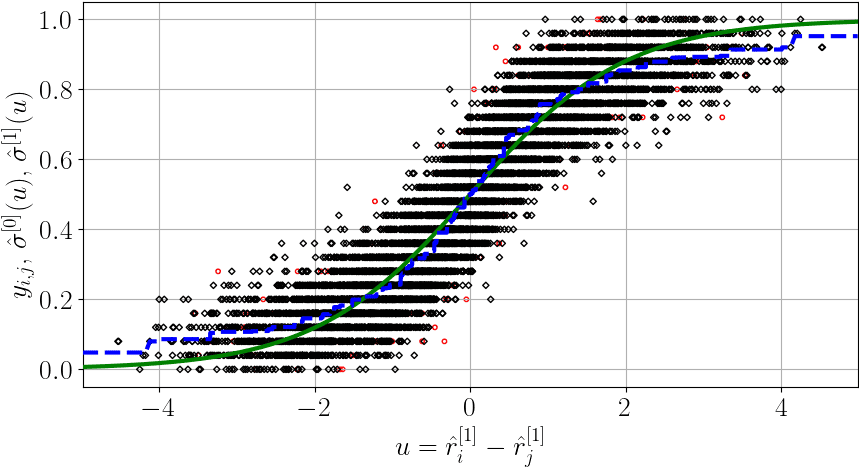}}\\
&\rotatebox{90}{\tiny\,~~~\,$400$}&
{\includegraphics[width=2.0cm]{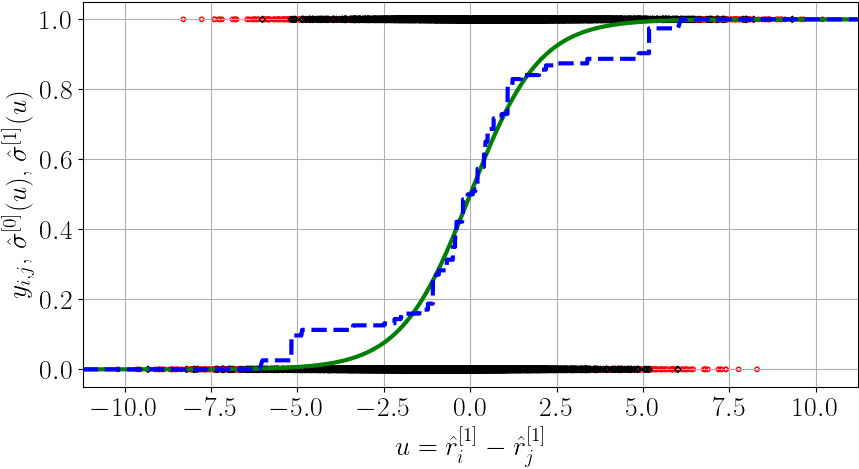}}&
{\includegraphics[width=2.0cm]{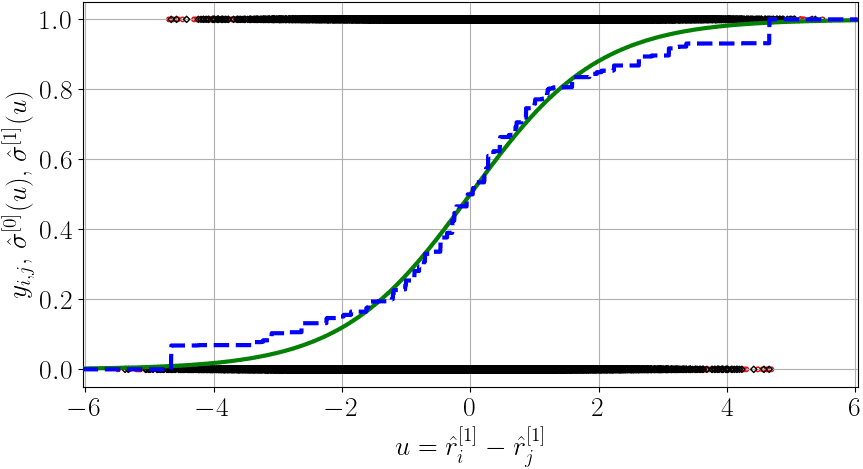}}&
{\includegraphics[width=2.0cm]{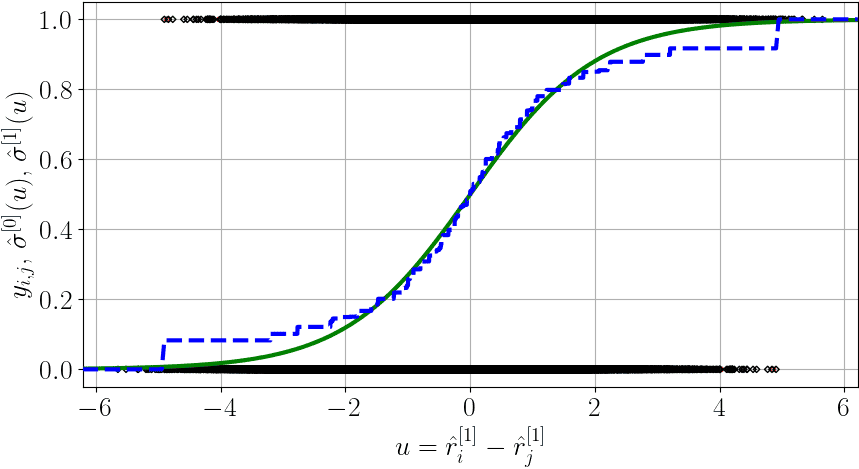}}&
{\includegraphics[width=2.0cm]{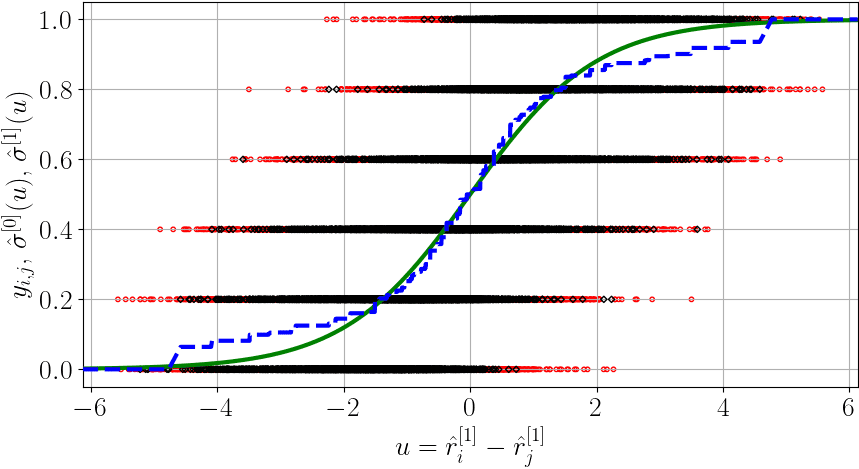}}&
{\includegraphics[width=2.0cm]{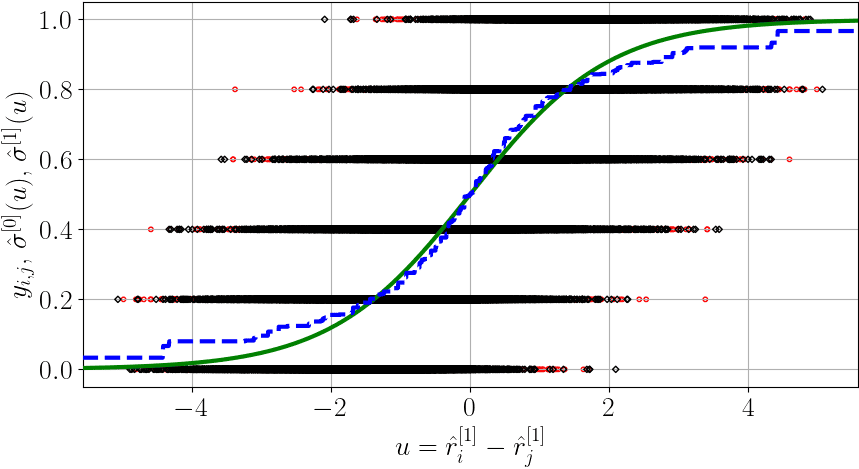}}&
{\includegraphics[width=2.0cm]{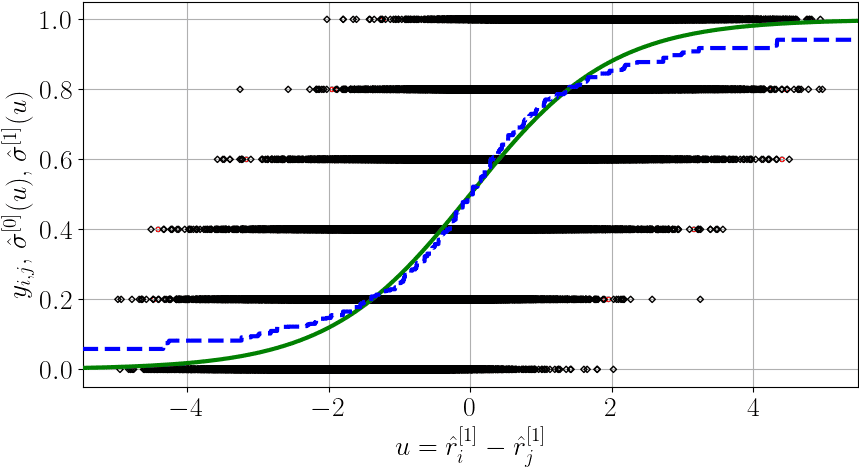}}&
{\includegraphics[width=2.0cm]{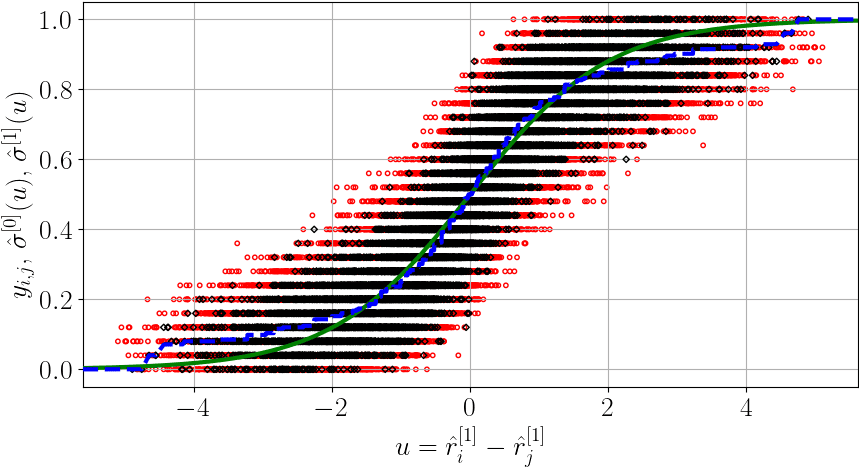}}&
{\includegraphics[width=2.0cm]{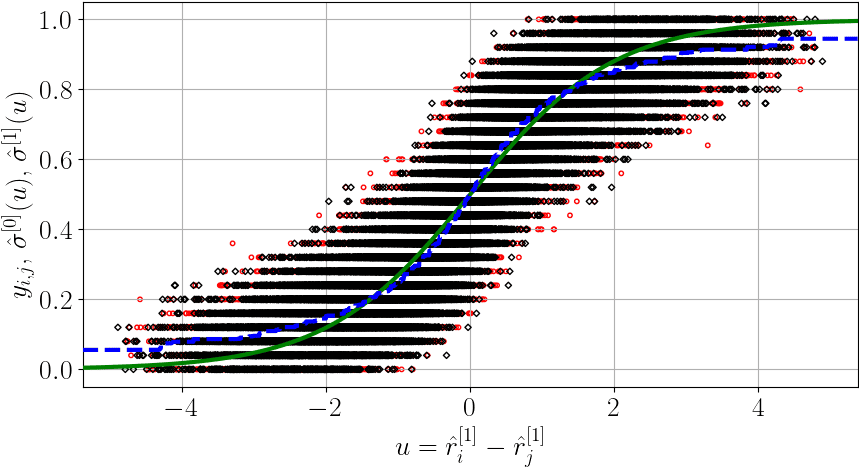}}&
{\includegraphics[width=2.0cm]{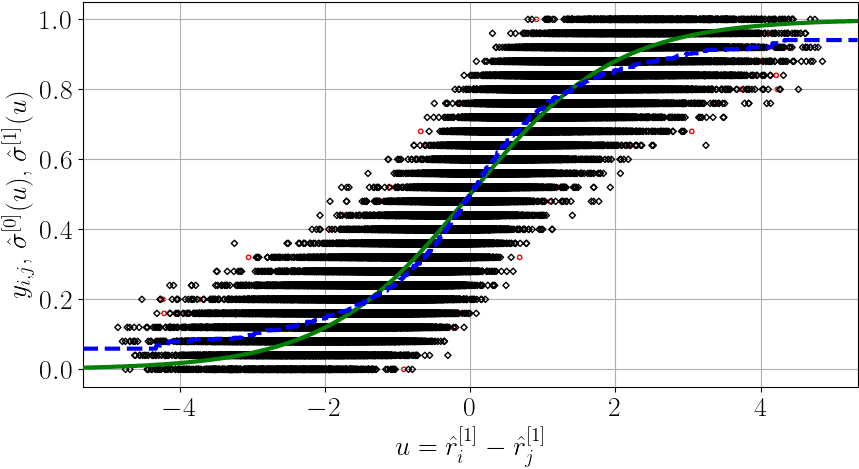}}
\end{tabular}
\begin{tabular}{ccccc}%
\multicolumn{5}{c}{\tiny Rescaled version: $n$, $N$, $|D_\tra|:|D_\tes|=$}\\
{\tiny25, 1, 1:9}&{\tiny25, 1, 5:5}&{\tiny25, 1, 9:1}&{\tiny25, 5, 1:9}&{\tiny100, 1, 1:9}\\
\midrule
{\includegraphics[width=2.0cm]{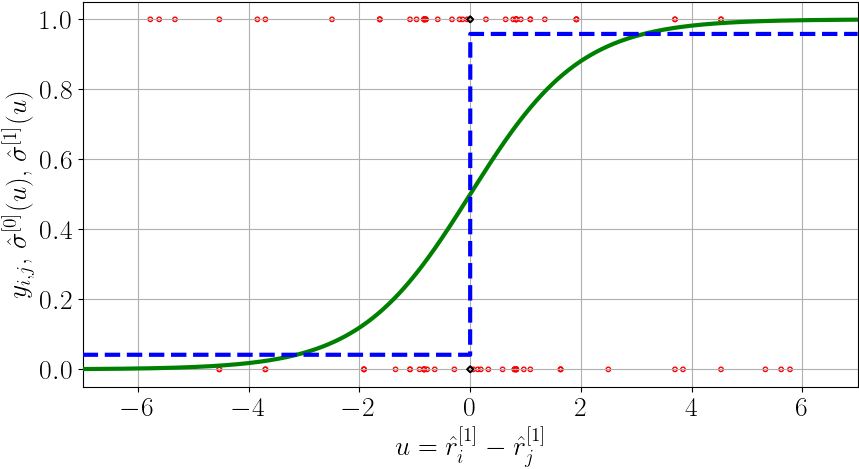}}&
{\includegraphics[width=2.0cm]{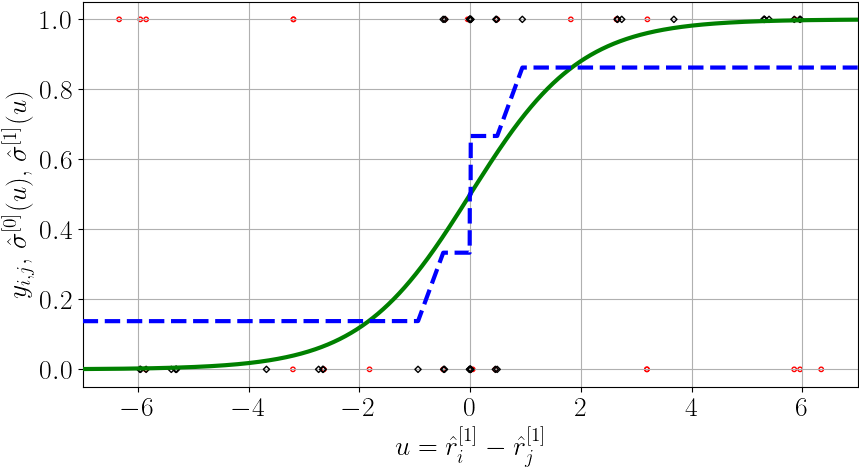}}&
{\includegraphics[width=2.0cm]{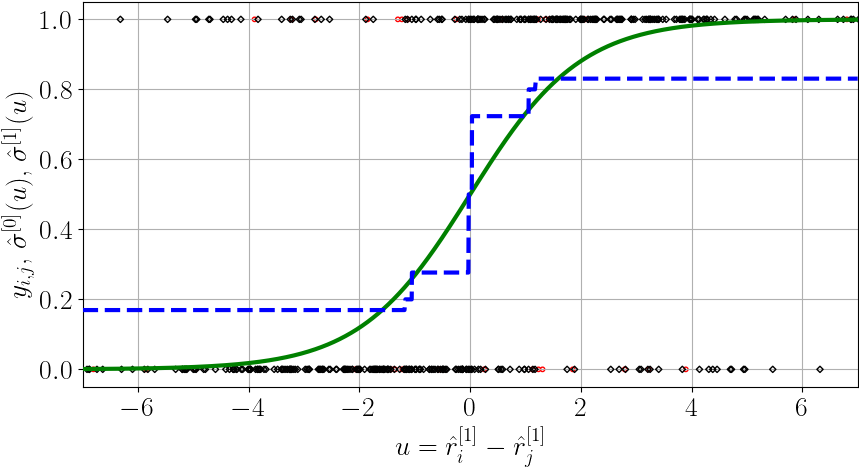}}&
{\includegraphics[width=2.0cm]{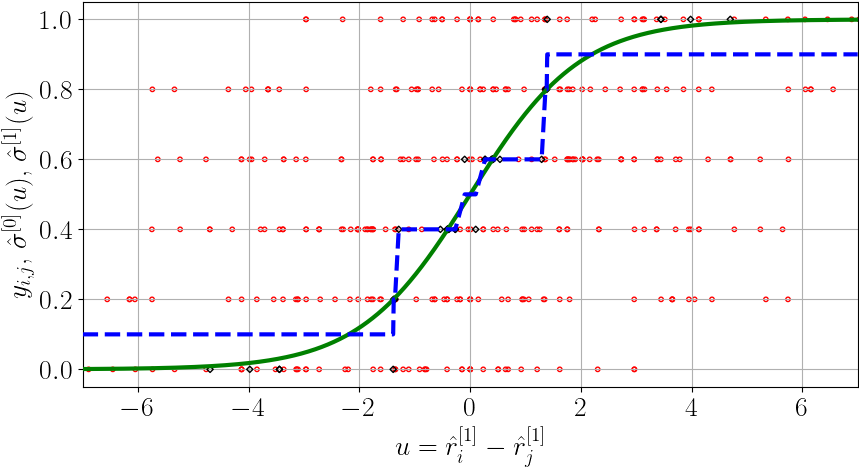}}&
{\includegraphics[width=2.0cm]{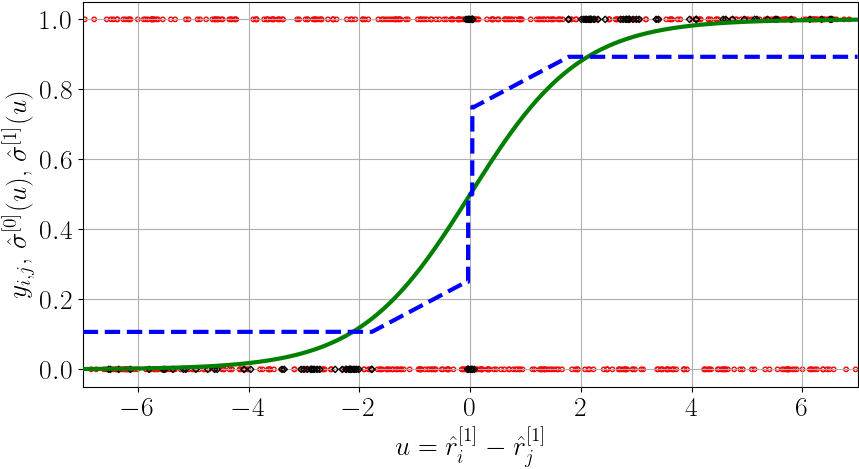}}
\end{tabular}
\caption{%
Part of results of synthetic data experiments, Procedure 1 in Section~\ref{sec:Synthetic}:
For Cauchy-$N$ synthetic data with $N=1,5,25$ (left to right),
black diamonds and red circles are training and test data $y_{i,j}$, 
and a green curve and a blue polyline are
the Bradley-Terry model $\hat{\sigma}^{[0]}(\hat{r}_i^{[1]}-\hat{r}_j^{[1]})$ and
isotonic Bradley-Terry model $\hat{\sigma}^{[1]}(\hat{r}_i^{[1]}-\hat{r}_j^{[1]})$
learned with the squared loss $\phi=\phi_\sq$ in a certain trial,
over the range $[-1.1\cdot\max_{i,j}|\hat{r}_i^{[1]}-\hat{r}_j^{[1]}|,1.1\cdot\max_{i,j}|\hat{r}_i^{[1]}-\hat{r}_j^{[1]}|]$
or $[-7,7]$ in the rescaled version.}
\label{fig:Res-Cauchy-SQ}
\end{sidewaysfigure}
\begin{sidewaystable}
\centering%
\renewcommand{\arraystretch}{0.5}%
\renewcommand{\tabcolsep}{0.5pt}%
\caption{%
Results of synthetic data experiments, Procedure 2 in Section~\ref{sec:Synthetic}:
For Cauchy-$N$ synthetic data with $N=1,5,25$ (left to right),
mean and standard deviation (std) ($\text{mean}_{\text{std}}$) of 1000 trial test evaluation of 
WPP error \eqref{eq:WPE} with the squared loss $\phi=\phi_\sq$
and Kendall's Tau \eqref{eq:Kendall}
for the Bradley-Terry model (upper) and model-selected isotonic 
Bradley-Terry model learned with the squared loss $\phi=\phi_\sq$ (lower).
Smaller \eqref{eq:WPE}, or larger \eqref{eq:Kendall} indicates a better model.
It also includes $p$-value for the Mann-Whitney U test in the parentheses:
A value below the significance level 0.05 implies 
that the isotonic Bradley-Terry model performs significantly better
and is highlighted in red bold.}
\label{tab:Cauchy-SQ}
\scalebox{0.9}{\begin{minipage}{25cm}
\begin{tabular}{cc|ccccc|ccccc|ccccc}%
&&\multicolumn{5}{c|}{\tiny $N=1$, $|D_\tra|:|D_\tes|=$}&\multicolumn{5}{c|}{\tiny $N=5$, $|D_\tra|:|D_\tes|=$}&\multicolumn{5}{c}{\tiny $N=25$, $|D_\tra|:|D_\tes|=$}\\
&&{\tiny$1:9$}&{\tiny$3:7$}&{\tiny$5:5$}&{\tiny$7:3$}&{\tiny$9:1$}&{\tiny$1:9$}&{\tiny$3:7$}&{\tiny$5:5$}&{\tiny$7:3$}&{\tiny$9:1$}&{\tiny$1:9$}&{\tiny$3:7$}&{\tiny$5:5$}&{\tiny$7:3$}&{\tiny$9:1$}
\\\midrule
\multirow{15}{*}[-6.75mm]{\rotatebox{90}{\tiny\eqref{eq:WPE}, $n=$}}
&\multirow{3}{*}{\rotatebox{90}{\tiny\,$25$~~}}
&\tcr{$\bf.3956_{.0474}$}&\tcr{$\bf.3531_{.0481}$}&\tcr{$\bf.2921_{.0540}$}&\tcr{$\bf.2361_{.0445}$}&$.2157_{.0531}$&\tcr{$\bf.1371_{.0350}$}&$.0563_{.0095}$&$.0465_{.0065}$&$.0435_{.0072}$&$.0419_{.0115}$&$.0361_{.0140}$&$.0115_{.0018}$&$.0098_{.0013}$&$.0092_{.0015}$&$.0089_{.0025}$\\&
&\tcr{$\bf.3689_{.0489}$}&\tcr{$\bf.3065_{.0393}$}&\tcr{$\bf.2576_{.0355}$}&\tcr{$\bf.2276_{.0356}$}&$.2147_{.0511}$&\tcr{$\bf.1332_{.0328}$}&$.0568_{.0095}$&$.0469_{.0067}$&$.0438_{.0074}$&$.0422_{.0116}$&$.0361_{.0138}$&$.0116_{.0019}$&$.0097_{.0014}$&$.0091_{.0016}$&$.0087_{.0025}$\\&
&(\tcr{$\bf.0000$})&(\tcr{$\bf.0000$})&(\tcr{$\bf.0000$})&(\tcr{$\bf.0004$})&($.4096$)&(\tcr{$\bf.0089$})&($.8939$)&($.9159$)&($.8419$)&($.7239$)&($.5418$)&($.9035$)&($.2069$)&($.1547$)&($.0842$)\\
&\multirow{3}{*}{\rotatebox{90}{\tiny\,$50$~~}}
&\tcr{$\bf.3830_{.0289}$}&\tcr{$\bf.2756_{.0402}$}&$.2104_{.0143}$&$.2010_{.0139}$&$.1968_{.0206}$&\tcr{$\bf.0774_{.0124}$}&$.0448_{.0029}$&$.0415_{.0028}$&$.0403_{.0032}$&$.0395_{.0050}$&$.0155_{.0026}$&\tcr{$\bf.0094_{.0006}$}&\tcr{$\bf.0088_{.0006}$}&\tcr{$\bf.0085_{.0007}$}&\tcr{$\bf.0084_{.0011}$}\\&
&\tcr{$\bf.3501_{.0265}$}&\tcr{$\bf.2459_{.0202}$}&$.2098_{.0133}$&$.2015_{.0138}$&$.1971_{.0208}$&\tcr{$\bf.0761_{.0110}$}&$.0450_{.0031}$&$.0416_{.0028}$&$.0403_{.0032}$&$.0395_{.0050}$&$.0155_{.0027}$&\tcr{$\bf.0092_{.0006}$}&\tcr{$\bf.0085_{.0006}$}&\tcr{$\bf.0081_{.0007}$}&\tcr{$\bf.0080_{.0011}$}\\&
&(\tcr{$\bf.0000$})&(\tcr{$\bf.0000$})&($.2574$)&($.7773$)&($.6132$)&(\tcr{$\bf.0423$})&($.9571$)&($.7823$)&($.5899$)&($.3755$)&($.6927$)&(\tcr{$\bf.0000$})&(\tcr{$\bf.0000$})&(\tcr{$\bf.0000$})&(\tcr{$\bf.0000$})\\
&\multirow{3}{*}{\rotatebox{90}{\tiny\,$100$~~}}
&\tcr{$\bf.3497_{.0163}$}&$.2042_{.0068}$&$.1952_{.0063}$&$.1921_{.0073}$&$.1904_{.0105}$&$.0499_{.0027}$&\tcr{$\bf.0407_{.0014}$}&\tcr{$\bf.0394_{.0014}$}&\tcr{$\bf.0388_{.0016}$}&\tcr{$\bf.0385_{.0026}$}&\tcr{$\bf.0104_{.0005}$}&\tcr{$\bf.0086_{.0003}$}&\tcr{$\bf.0084_{.0003}$}&\tcr{$\bf.0082_{.0003}$}&\tcr{$\bf.0082_{.0005}$}\\&
&\tcr{$\bf.2941_{.0123}$}&$.2038_{.0065}$&$.1953_{.0063}$&$.1922_{.0073}$&$.1904_{.0105}$&$.0500_{.0026}$&\tcr{$\bf.0406_{.0014}$}&\tcr{$\bf.0391_{.0015}$}&\tcr{$\bf.0385_{.0017}$}&\tcr{$\bf.0381_{.0026}$}&\tcr{$\bf.0101_{.0006}$}&\tcr{$\bf.0082_{.0003}$}&\tcr{$\bf.0079_{.0003}$}&\tcr{$\bf.0077_{.0004}$}&\tcr{$\bf.0077_{.0005}$}\\&
&(\tcr{$\bf.0000$})&($.1584$)&($.6476$)&($.5822$)&($.4895$)&($.7431$)&(\tcr{$\bf.0349$})&(\tcr{$\bf.0000$})&(\tcr{$\bf.0000$})&(\tcr{$\bf.0006$})&(\tcr{$\bf.0000$})&(\tcr{$\bf.0000$})&(\tcr{$\bf.0000$})&(\tcr{$\bf.0000$})&(\tcr{$\bf.0000$})\\
&\multirow{3}{*}{\rotatebox{90}{\tiny\,$200$~~}}
&\tcr{$\bf.2289_{.0133}$}&$.1934_{.0037}$&$.1901_{.0038}$&$.1887_{.0042}$&$.1879_{.0057}$&\tcr{$\bf.0426_{.0009}$}&\tcr{$\bf.0391_{.0008}$}&\tcr{$\bf.0385_{.0008}$}&\tcr{$\bf.0382_{.0009}$}&\tcr{$\bf.0381_{.0014}$}&\tcr{$\bf.0090_{.0002}$}&\tcr{$\bf.0083_{.0001}$}&\tcr{$\bf.0082_{.0001}$}&\tcr{$\bf.0081_{.0002}$}&\tcr{$\bf.0081_{.0003}$}\\&
&\tcr{$\bf.2202_{.0067}$}&$.1934_{.0037}$&$.1900_{.0039}$&$.1885_{.0042}$&$.1877_{.0058}$&\tcr{$\bf.0424_{.0010}$}&\tcr{$\bf.0387_{.0008}$}&\tcr{$\bf.0380_{.0009}$}&\tcr{$\bf.0377_{.0010}$}&\tcr{$\bf.0376_{.0014}$}&\tcr{$\bf.0086_{.0002}$}&\tcr{$\bf.0078_{.0002}$}&\tcr{$\bf.0076_{.0002}$}&\tcr{$\bf.0076_{.0002}$}&\tcr{$\bf.0075_{.0003}$}\\&
&(\tcr{$\bf.0000$})&($.4075$)&($.1961$)&($.1280$)&($.1423$)&(\tcr{$\bf.0000$})&(\tcr{$\bf.0000$})&(\tcr{$\bf.0000$})&(\tcr{$\bf.0000$})&(\tcr{$\bf.0000$})&(\tcr{$\bf.0000$})&(\tcr{$\bf.0000$})&(\tcr{$\bf.0000$})&(\tcr{$\bf.0000$})&(\tcr{$\bf.0000$})\\
&\multirow{3}{*}{\rotatebox{90}{\tiny\,$400$~~}}
&\tcr{$\bf.1984_{.0025}$}&\tcr{$\bf.1894_{.0024}$}&\tcr{$\bf.1879_{.0025}$}&\tcr{$\bf.1872_{.0026}$}&\tcr{$\bf.1869_{.0033}$}&\tcr{$\bf.0399_{.0005}$}&\tcr{$\bf.0383_{.0005}$}&\tcr{$\bf.0380_{.0005}$}&\tcr{$\bf.0379_{.0005}$}&\tcr{$\bf.0378_{.0008}$}&\tcr{$\bf.0085_{.0001}$}&\tcr{$\bf.0081_{.0001}$}&\tcr{$\bf.0081_{.0001}$}&\tcr{$\bf.0081_{.0001}$}&\tcr{$\bf.0080_{.0001}$}\\&
&\tcr{$\bf.1980_{.0026}$}&\tcr{$\bf.1890_{.0025}$}&\tcr{$\bf.1875_{.0025}$}&\tcr{$\bf.1868_{.0026}$}&\tcr{$\bf.1864_{.0033}$}&\tcr{$\bf.0395_{.0005}$}&\tcr{$\bf.0378_{.0005}$}&\tcr{$\bf.0375_{.0005}$}&\tcr{$\bf.0374_{.0006}$}&\tcr{$\bf.0373_{.0008}$}&\tcr{$\bf.0079_{.0001}$}&\tcr{$\bf.0076_{.0001}$}&\tcr{$\bf.0075_{.0001}$}&\tcr{$\bf.0075_{.0001}$}&\tcr{$\bf.0075_{.0002}$}\\&
&(\tcr{$\bf.0010$})&(\tcr{$\bf.0019$})&(\tcr{$\bf.0002$})&(\tcr{$\bf.0001$})&(\tcr{$\bf.0013$})&(\tcr{$\bf.0000$})&(\tcr{$\bf.0000$})&(\tcr{$\bf.0000$})&(\tcr{$\bf.0000$})&(\tcr{$\bf.0000$})&(\tcr{$\bf.0000$})&(\tcr{$\bf.0000$})&(\tcr{$\bf.0000$})&(\tcr{$\bf.0000$})&(\tcr{$\bf.0000$})\\
\midrule
\multirow{15}{*}[-6.75mm]{\rotatebox{90}{\tiny\eqref{eq:Kendall}, $n=$}}
&\multirow{3}{*}{\rotatebox{90}{\tiny\,$25$~~}}
&$.1684_{.0862}$&\tcr{$\bf.2556_{.0824}$}&\tcr{$\bf.2989_{.0775}$}&\tcr{$\bf.3345_{.0865}$}&\tcr{$\bf.3536_{.1330}$}&$.3217_{.0879}$&\tcr{$\bf.5378_{.0498}$}&\tcr{$\bf.5766_{.0454}$}&\tcr{$\bf.5912_{.0515}$}&\tcr{$\bf.6007_{.0780}$}&$.5712_{.0784}$&\tcr{$\bf.7535_{.0246}$}&\tcr{$\bf.7718_{.0219}$}&\tcr{$\bf.7788_{.0254}$}&\tcr{$\bf.7832_{.0415}$}\\&
&$.1707_{.0883}$&\tcr{$\bf.2653_{.0873}$}&\tcr{$\bf.3154_{.0842}$}&\tcr{$\bf.3534_{.0933}$}&\tcr{$\bf.3713_{.1445}$}&$.3270_{.0895}$&\tcr{$\bf.5530_{.0522}$}&\tcr{$\bf.5937_{.0473}$}&\tcr{$\bf.6082_{.0532}$}&\tcr{$\bf.6184_{.0800}$}&$.5751_{.0790}$&\tcr{$\bf.7652_{.0251}$}&\tcr{$\bf.7844_{.0223}$}&\tcr{$\bf.7913_{.0258}$}&\tcr{$\bf.7957_{.0413}$}\\&
&($.3049$)&(\tcr{$\bf.0051$})&(\tcr{$\bf.0000$})&(\tcr{$\bf.0000$})&(\tcr{$\bf.0020$})&($.0757$)&(\tcr{$\bf.0000$})&(\tcr{$\bf.0000$})&(\tcr{$\bf.0000$})&(\tcr{$\bf.0000$})&($.1161$)&(\tcr{$\bf.0000$})&(\tcr{$\bf.0000$})&(\tcr{$\bf.0000$})&(\tcr{$\bf.0000$})\\
&\multirow{3}{*}{\rotatebox{90}{\tiny\,$50$~~}}
&\tcr{$\bf.2147_{.0533}$}&\tcr{$\bf.3084_{.0393}$}&\tcr{$\bf.3602_{.0366}$}&\tcr{$\bf.3760_{.0405}$}&\tcr{$\bf.3851_{.0609}$}&\tcr{$\bf.4677_{.0431}$}&\tcr{$\bf.5844_{.0243}$}&\tcr{$\bf.6010_{.0235}$}&\tcr{$\bf.6076_{.0257}$}&\tcr{$\bf.6119_{.0368}$}&\tcr{$\bf.7166_{.0252}$}&\tcr{$\bf.7768_{.0115}$}&\tcr{$\bf.7848_{.0113}$}&\tcr{$\bf.7880_{.0125}$}&\tcr{$\bf.7900_{.0182}$}\\&
&\tcr{$\bf.2223_{.0555}$}&\tcr{$\bf.3277_{.0434}$}&\tcr{$\bf.3760_{.0393}$}&\tcr{$\bf.3918_{.0437}$}&\tcr{$\bf.4012_{.0659}$}&\tcr{$\bf.4798_{.0447}$}&\tcr{$\bf.6013_{.0259}$}&\tcr{$\bf.6184_{.0247}$}&\tcr{$\bf.6245_{.0265}$}&\tcr{$\bf.6284_{.0378}$}&\tcr{$\bf.7271_{.0256}$}&\tcr{$\bf.7894_{.0117}$}&\tcr{$\bf.7973_{.0114}$}&\tcr{$\bf.8000_{.0126}$}&\tcr{$\bf.8017_{.0181}$}\\&
&(\tcr{$\bf.0010$})&(\tcr{$\bf.0000$})&(\tcr{$\bf.0000$})&(\tcr{$\bf.0000$})&(\tcr{$\bf.0000$})&(\tcr{$\bf.0000$})&(\tcr{$\bf.0000$})&(\tcr{$\bf.0000$})&(\tcr{$\bf.0000$})&(\tcr{$\bf.0000$})&(\tcr{$\bf.0000$})&(\tcr{$\bf.0000$})&(\tcr{$\bf.0000$})&(\tcr{$\bf.0000$})&(\tcr{$\bf.0000$})\\
&\multirow{3}{*}{\rotatebox{90}{\tiny\,$100$~~}}
&\tcr{$\bf.2707_{.0302}$}&\tcr{$\bf.3710_{.0193}$}&\tcr{$\bf.3892_{.0196}$}&\tcr{$\bf.3965_{.0226}$}&\tcr{$\bf.4009_{.0323}$}&\tcr{$\bf.5620_{.0176}$}&\tcr{$\bf.6063_{.0138}$}&\tcr{$\bf.6138_{.0139}$}&\tcr{$\bf.6169_{.0151}$}&\tcr{$\bf.6187_{.0207}$}&\tcr{$\bf.7664_{.0085}$}&\tcr{$\bf.7873_{.0064}$}&\tcr{$\bf.7909_{.0066}$}&\tcr{$\bf.7923_{.0072}$}&\tcr{$\bf.7931_{.0099}$}\\&
&\tcr{$\bf.2854_{.0325}$}&\tcr{$\bf.3862_{.0208}$}&\tcr{$\bf.4046_{.0208}$}&\tcr{$\bf.4118_{.0238}$}&\tcr{$\bf.4158_{.0336}$}&\tcr{$\bf.5776_{.0186}$}&\tcr{$\bf.6229_{.0142}$}&\tcr{$\bf.6297_{.0141}$}&\tcr{$\bf.6319_{.0156}$}&\tcr{$\bf.6328_{.0212}$}&\tcr{$\bf.7786_{.0088}$}&\tcr{$\bf.7994_{.0065}$}&\tcr{$\bf.8019_{.0065}$}&\tcr{$\bf.8025_{.0072}$}&\tcr{$\bf.8027_{.0099}$}\\&
&(\tcr{$\bf.0000$})&(\tcr{$\bf.0000$})&(\tcr{$\bf.0000$})&(\tcr{$\bf.0000$})&(\tcr{$\bf.0000$})&(\tcr{$\bf.0000$})&(\tcr{$\bf.0000$})&(\tcr{$\bf.0000$})&(\tcr{$\bf.0000$})&(\tcr{$\bf.0000$})&(\tcr{$\bf.0000$})&(\tcr{$\bf.0000$})&(\tcr{$\bf.0000$})&(\tcr{$\bf.0000$})&(\tcr{$\bf.0000$})\\
&\multirow{3}{*}{\rotatebox{90}{\tiny\,$200$~~}}
&\tcr{$\bf.3366_{.0149}$}&\tcr{$\bf.3932_{.0120}$}&\tcr{$\bf.4017_{.0122}$}&\tcr{$\bf.4056_{.0132}$}&\tcr{$\bf.4077_{.0179}$}&\tcr{$\bf.5961_{.0100}$}&\tcr{$\bf.6158_{.0091}$}&\tcr{$\bf.6194_{.0092}$}&\tcr{$\bf.6210_{.0097}$}&\tcr{$\bf.6218_{.0119}$}&\tcr{$\bf.7828_{.0047}$}&\tcr{$\bf.7920_{.0043}$}&\tcr{$\bf.7937_{.0044}$}&\tcr{$\bf.7944_{.0046}$}&\tcr{$\bf.7948_{.0056}$}\\&
&\tcr{$\bf.3520_{.0154}$}&\tcr{$\bf.4072_{.0127}$}&\tcr{$\bf.4149_{.0126}$}&\tcr{$\bf.4183_{.0137}$}&\tcr{$\bf.4201_{.0184}$}&\tcr{$\bf.6118_{.0104}$}&\tcr{$\bf.6300_{.0092}$}&\tcr{$\bf.6322_{.0092}$}&\tcr{$\bf.6327_{.0098}$}&\tcr{$\bf.6329_{.0121}$}&\tcr{$\bf.7948_{.0048}$}&\tcr{$\bf.8021_{.0043}$}&\tcr{$\bf.8026_{.0044}$}&\tcr{$\bf.8025_{.0046}$}&\tcr{$\bf.8023_{.0056}$}\\&
&(\tcr{$\bf.0000$})&(\tcr{$\bf.0000$})&(\tcr{$\bf.0000$})&(\tcr{$\bf.0000$})&(\tcr{$\bf.0000$})&(\tcr{$\bf.0000$})&(\tcr{$\bf.0000$})&(\tcr{$\bf.0000$})&(\tcr{$\bf.0000$})&(\tcr{$\bf.0000$})&(\tcr{$\bf.0000$})&(\tcr{$\bf.0000$})&(\tcr{$\bf.0000$})&(\tcr{$\bf.0000$})&(\tcr{$\bf.0000$})\\
&\multirow{3}{*}{\rotatebox{90}{\tiny\,$400$~~}}
&\tcr{$\bf.3817_{.0083}$}&\tcr{$\bf.4038_{.0078}$}&\tcr{$\bf.4080_{.0079}$}&\tcr{$\bf.4099_{.0083}$}&\tcr{$\bf.4110_{.0104}$}&\tcr{$\bf.6110_{.0065}$}&\tcr{$\bf.6203_{.0063}$}&\tcr{$\bf.6220_{.0063}$}&\tcr{$\bf.6229_{.0064}$}&\tcr{$\bf.6233_{.0074}$}&\tcr{$\bf.7899_{.0030}$}&\tcr{$\bf.7941_{.0029}$}&\tcr{$\bf.7949_{.0029}$}&\tcr{$\bf.7953_{.0030}$}&\tcr{$\bf.7955_{.0035}$}\\&
&\tcr{$\bf.3942_{.0088}$}&\tcr{$\bf.4150_{.0080}$}&\tcr{$\bf.4183_{.0081}$}&\tcr{$\bf.4196_{.0085}$}&\tcr{$\bf.4202_{.0106}$}&\tcr{$\bf.6244_{.0066}$}&\tcr{$\bf.6311_{.0063}$}&\tcr{$\bf.6316_{.0063}$}&\tcr{$\bf.6316_{.0064}$}&\tcr{$\bf.6315_{.0074}$}&\tcr{$\bf.7999_{.0030}$}&\tcr{$\bf.8020_{.0029}$}&\tcr{$\bf.8018_{.0029}$}&\tcr{$\bf.8015_{.0029}$}&\tcr{$\bf.8012_{.0034}$}\\&
&(\tcr{$\bf.0000$})&(\tcr{$\bf.0000$})&(\tcr{$\bf.0000$})&(\tcr{$\bf.0000$})&(\tcr{$\bf.0000$})&(\tcr{$\bf.0000$})&(\tcr{$\bf.0000$})&(\tcr{$\bf.0000$})&(\tcr{$\bf.0000$})&(\tcr{$\bf.0000$})&(\tcr{$\bf.0000$})&(\tcr{$\bf.0000$})&(\tcr{$\bf.0000$})&(\tcr{$\bf.0000$})&(\tcr{$\bf.0000$})\\
\end{tabular}\end{minipage}}
\end{sidewaystable}

\begin{sidewaysfigure}
\centering%
\renewcommand{\arraystretch}{0.5}%
\renewcommand{\tabcolsep}{0.5pt}%
\begin{tabular}{c|ccc|ccc|ccc}%
&\multicolumn{3}{c|}{\tiny PL, $|D_\tra|:|D_\tes|=$}&\multicolumn{3}{c|}{\tiny MLB, $|D_\tra|:|D_\tes|=$}&\multicolumn{3}{c}{\tiny ATP, $|D_\tra|:|D_\tes|=$}\\
&{\tiny$1:9$}&{\tiny$5:5$}&{\tiny$9:1$}&{\tiny$1:9$}&{\tiny$5:5$}&{\tiny$9:1$}&{\tiny$1:9$}&{\tiny$5:5$}&{\tiny$9:1$}
\\\midrule
\raisebox{1.75ex}{\rotatebox{90}{\tiny\eqref{eq:WPE}}}&
\CF{\includegraphics[width=2.0cm]{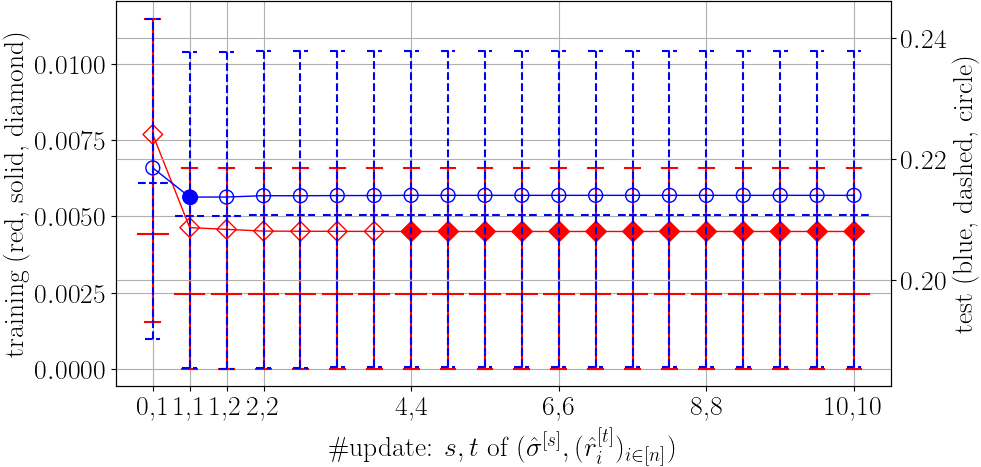}}&
{\includegraphics[width=2.0cm]{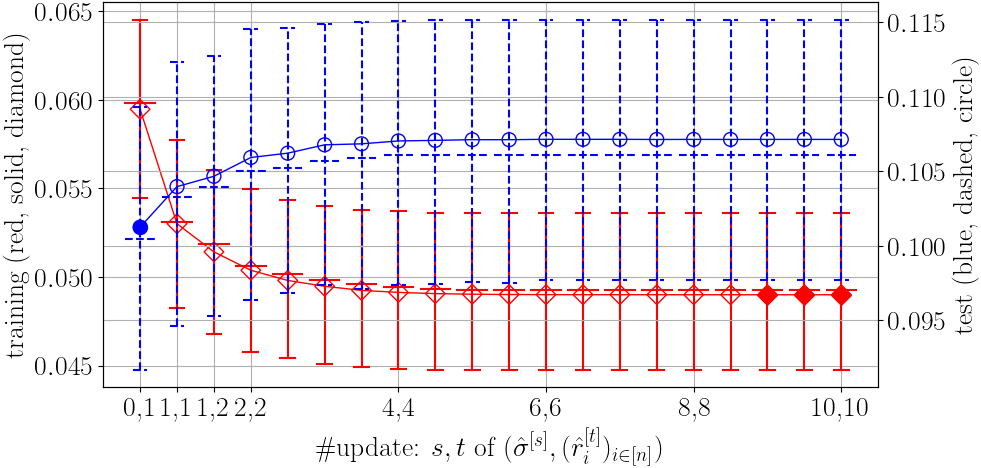}}&
{\includegraphics[width=2.0cm]{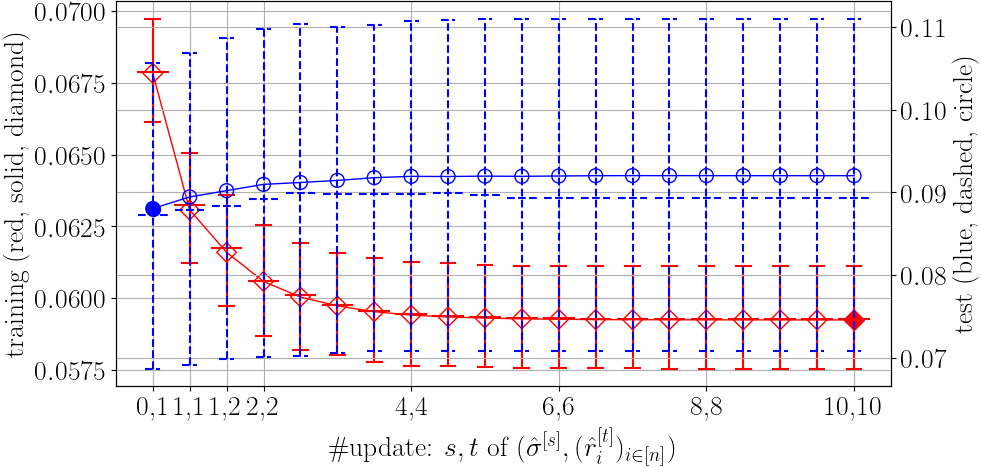}}&
\CF{\includegraphics[width=2.0cm]{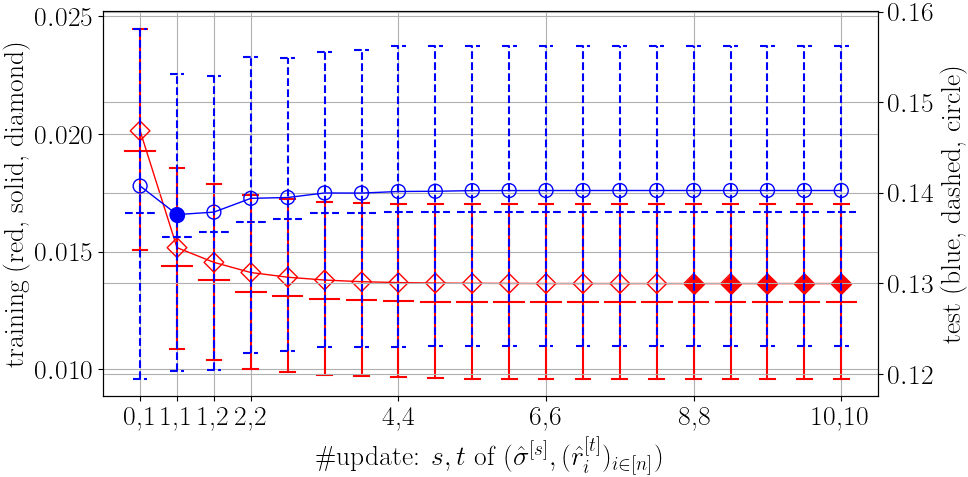}}&
{\includegraphics[width=2.0cm]{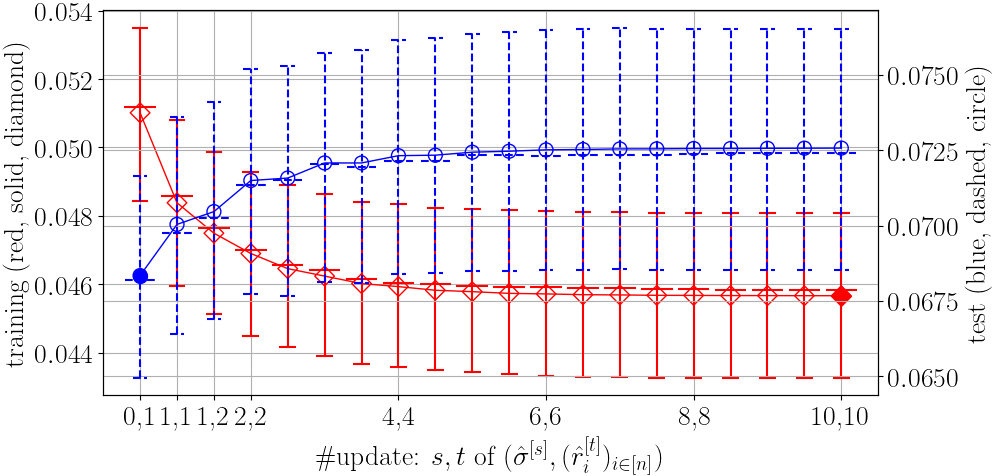}}&
{\includegraphics[width=2.0cm]{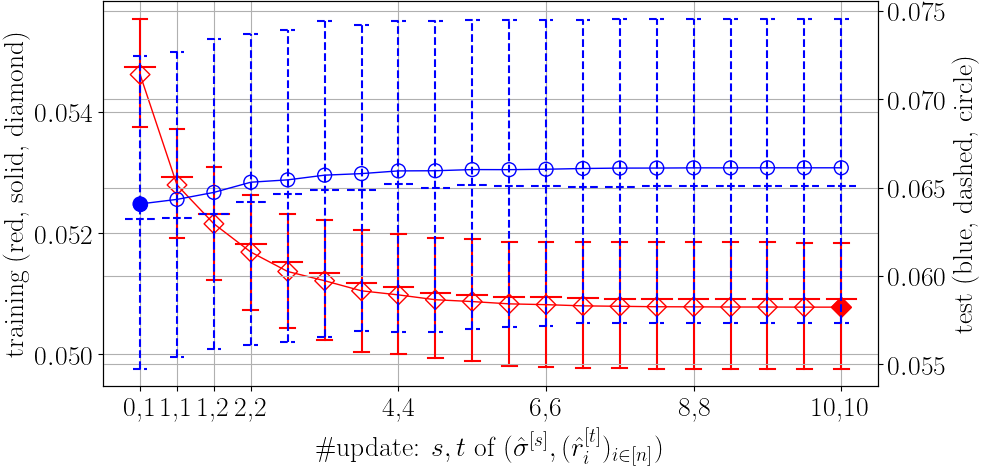}}&
\CF{\includegraphics[width=2.0cm]{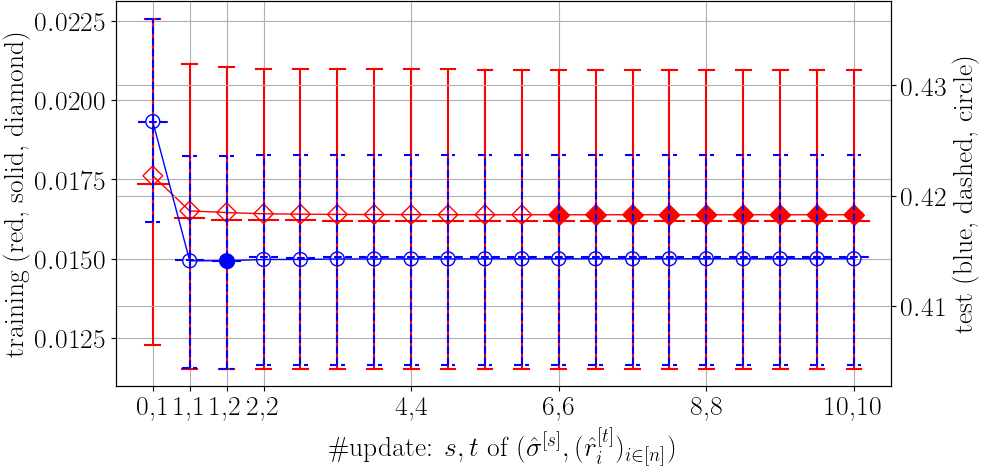}}&
\CF{\includegraphics[width=2.0cm]{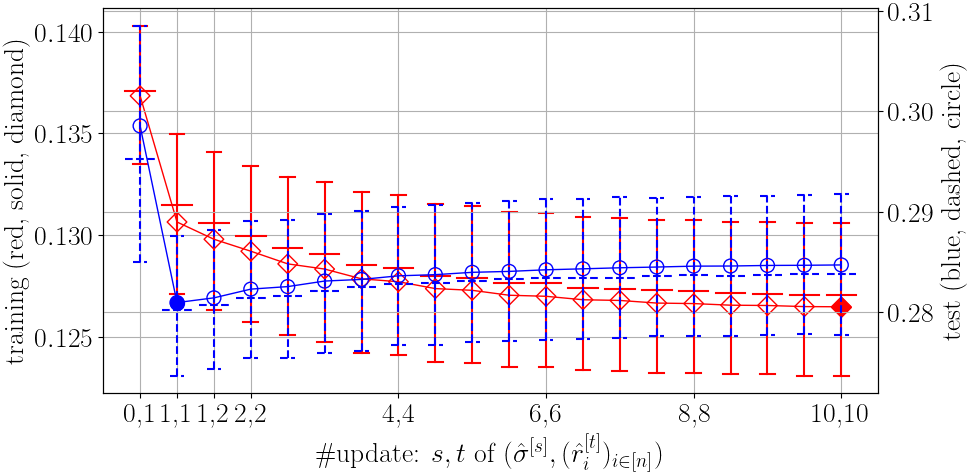}}&
\CF{\includegraphics[width=2.0cm]{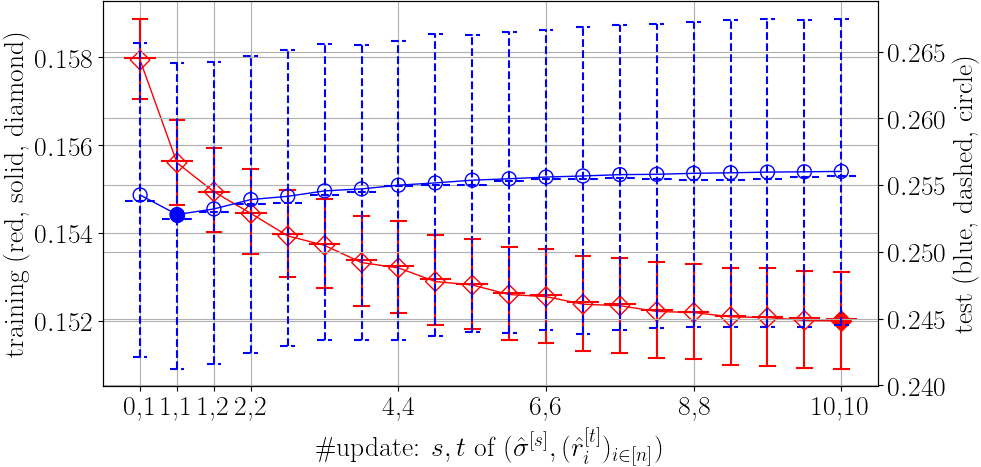}}
\\\midrule
\raisebox{1.75ex}{\rotatebox{90}{\tiny\eqref{eq:Kendall}}}&
{\includegraphics[width=2.0cm]{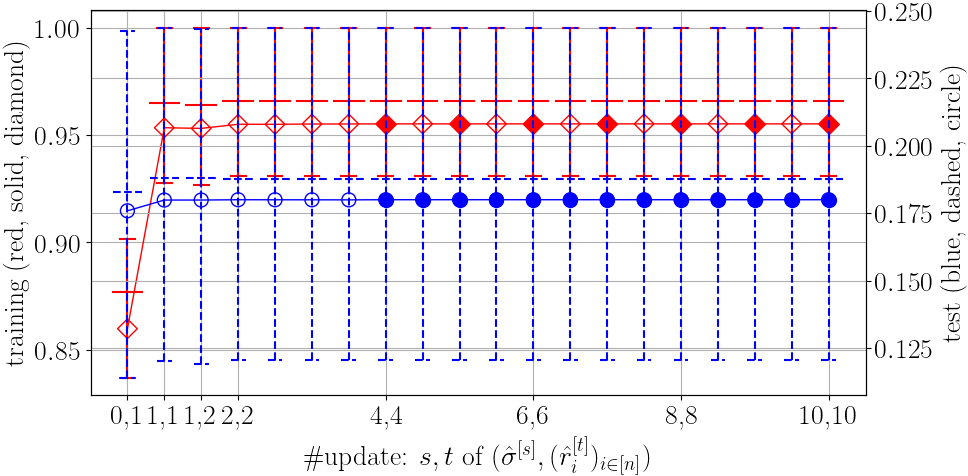}}&
\CF{\includegraphics[width=2.0cm]{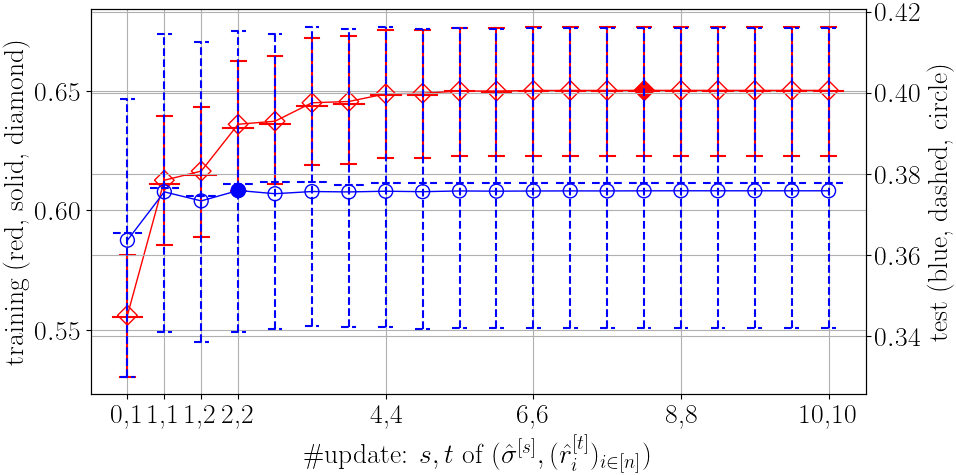}}&
\CF{\includegraphics[width=2.0cm]{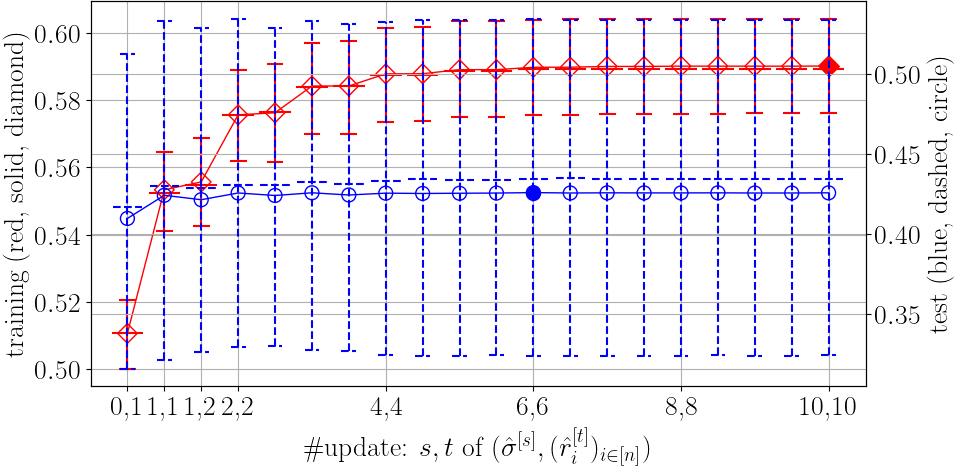}}&
{\includegraphics[width=2.0cm]{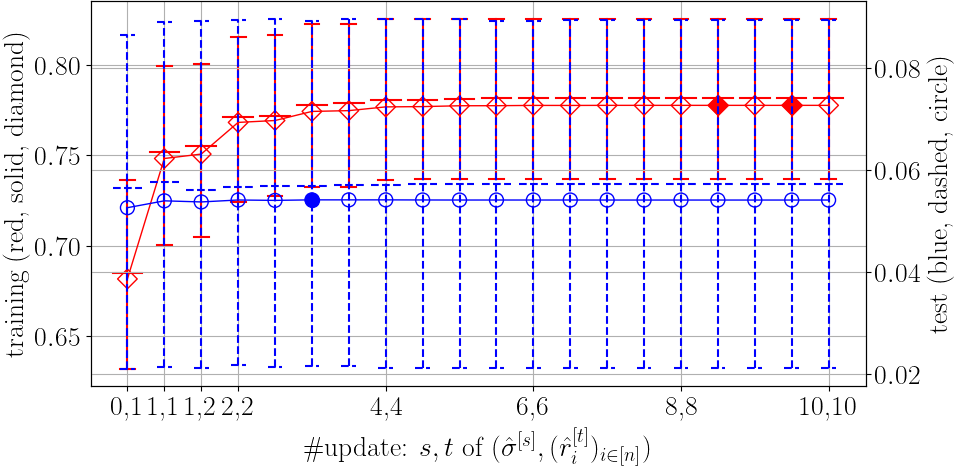}}&
\CF{\includegraphics[width=2.0cm]{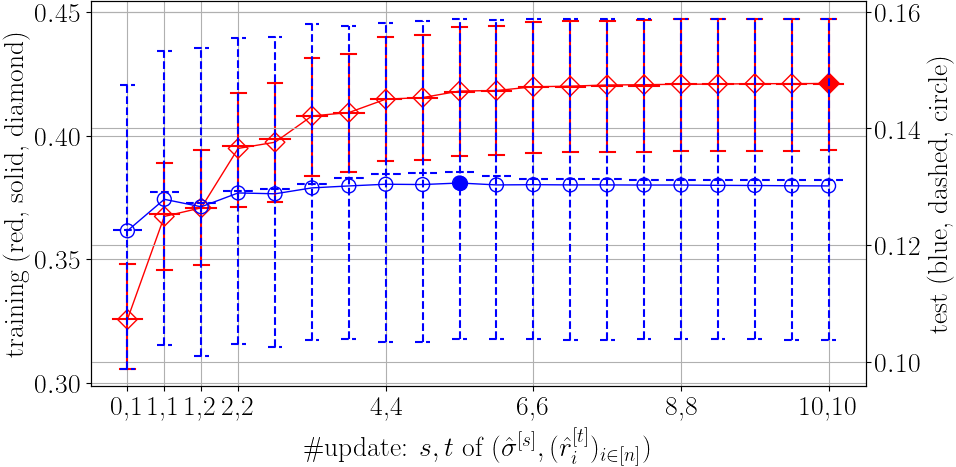}}&
\CF{\includegraphics[width=2.0cm]{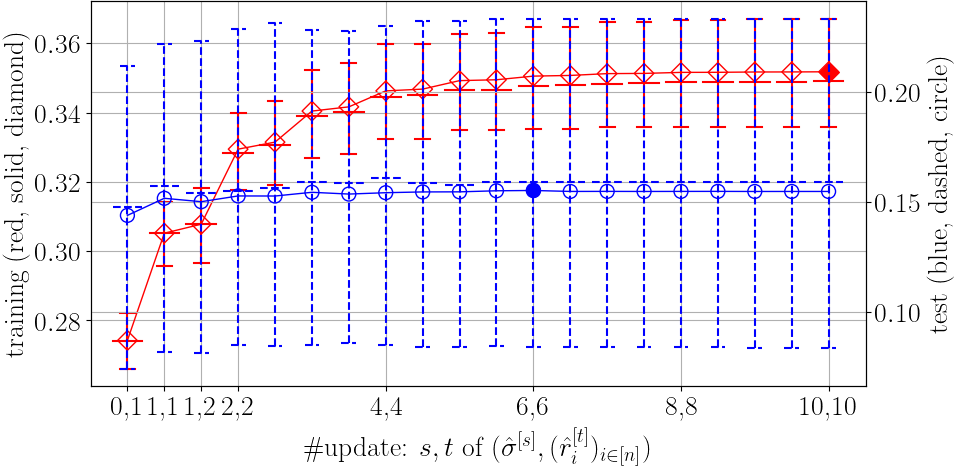}}&
\CF{\includegraphics[width=2.0cm]{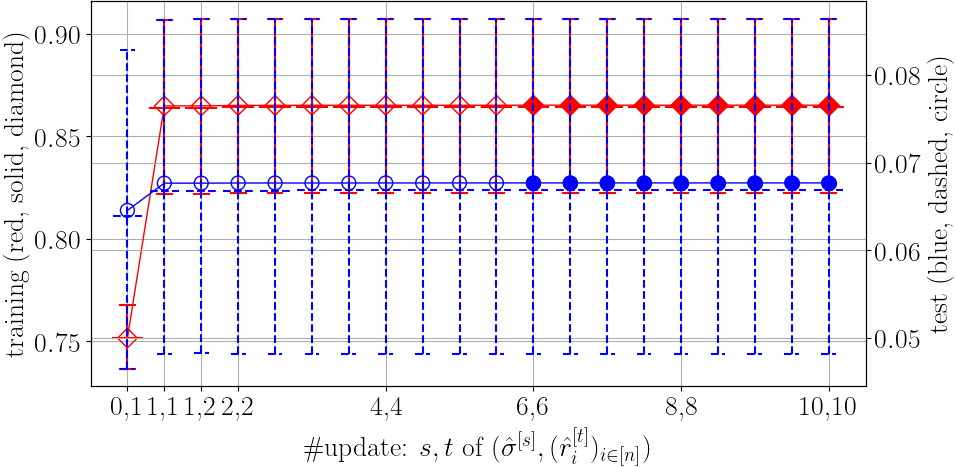}}&
\CF{\includegraphics[width=2.0cm]{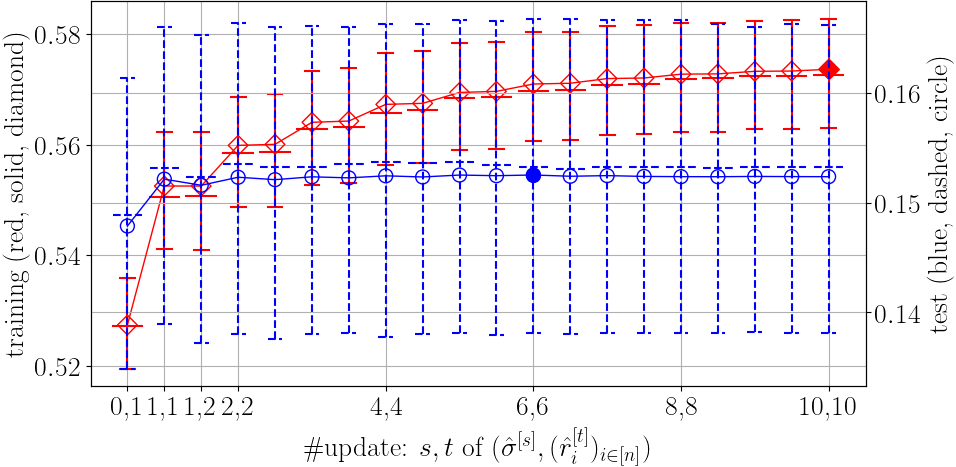}}&
\CF{\includegraphics[width=2.0cm]{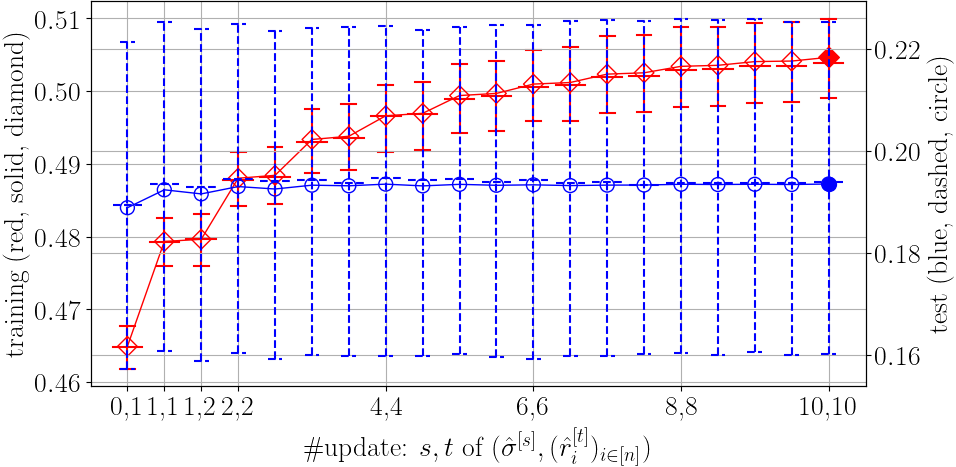}}
\\\midrule
\raisebox{1.75ex}{\rotatebox{90}{\tiny\eqref{eq:TIE}}}&
{\includegraphics[width=2.0cm]{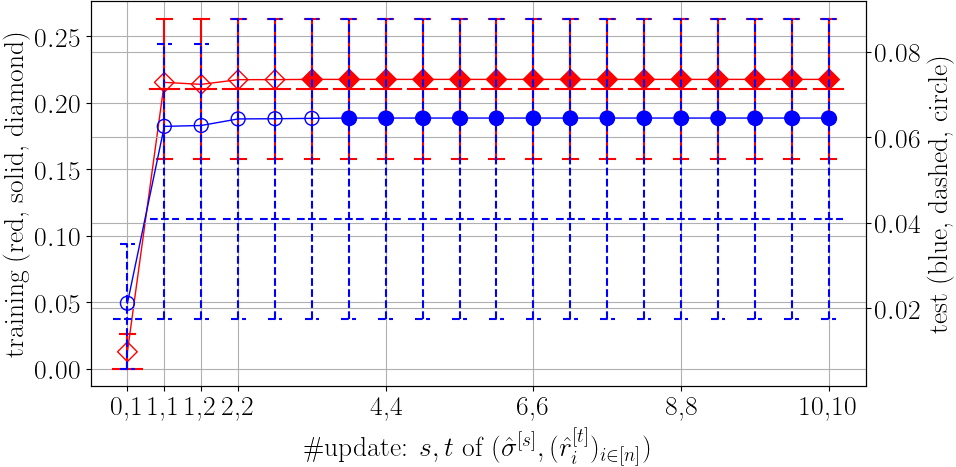}}&
{\includegraphics[width=2.0cm]{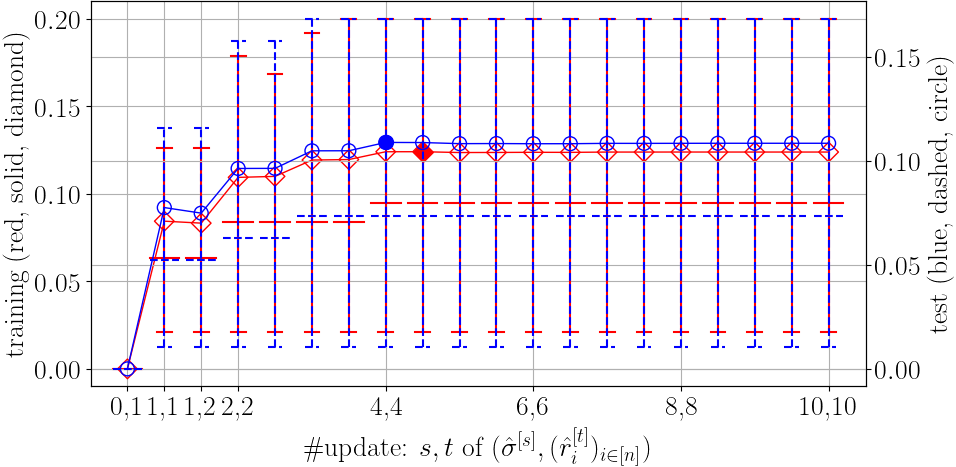}}&
{\includegraphics[width=2.0cm]{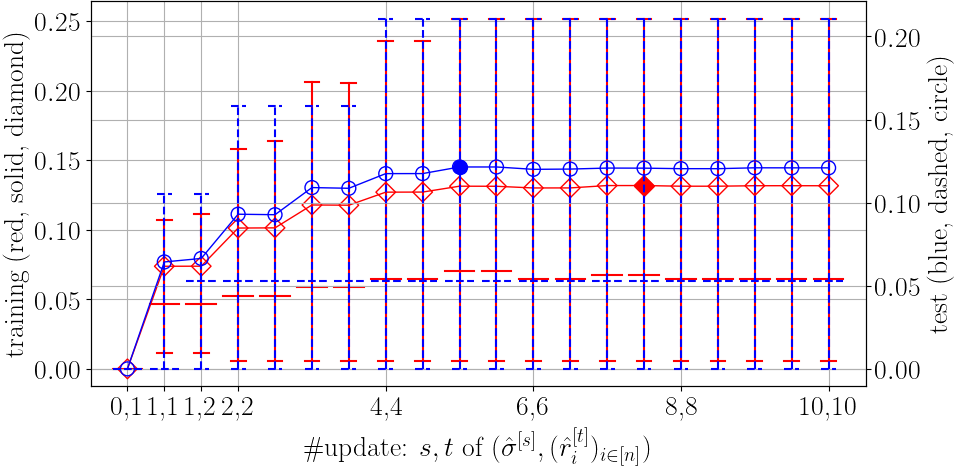}}&
{\includegraphics[width=2.0cm]{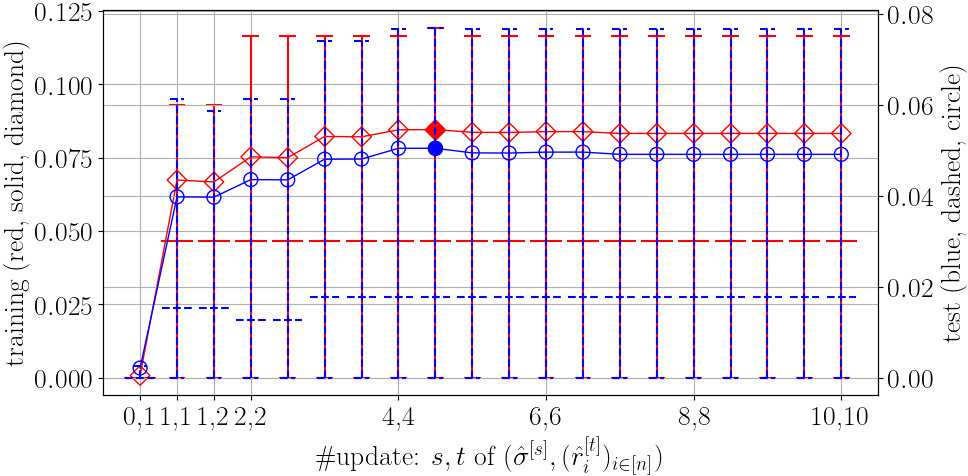}}&
{\includegraphics[width=2.0cm]{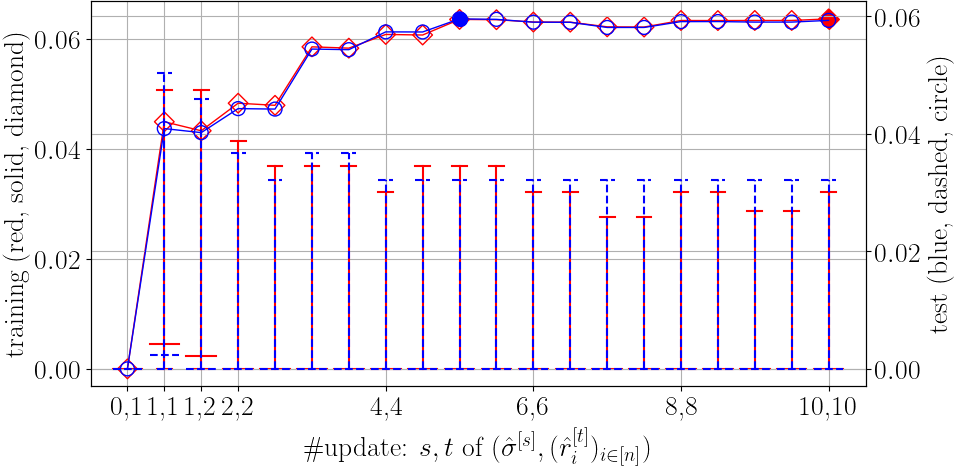}}&
{\includegraphics[width=2.0cm]{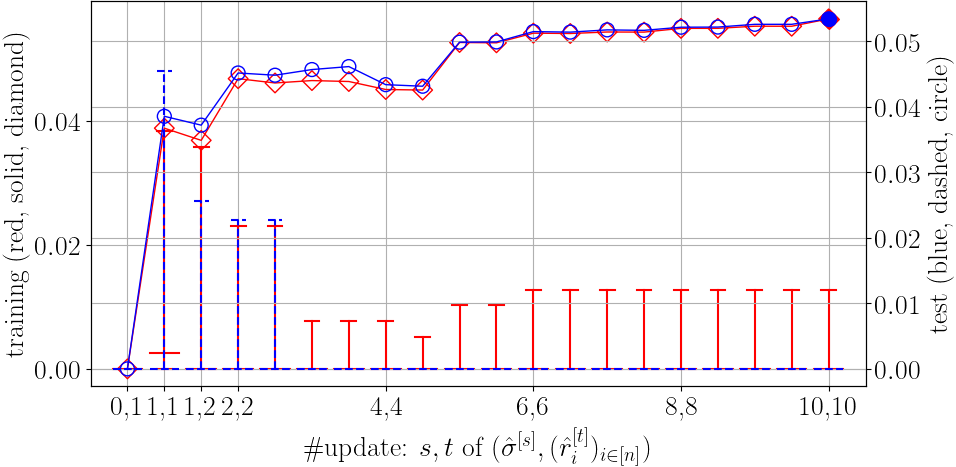}}&
{\includegraphics[width=2.0cm]{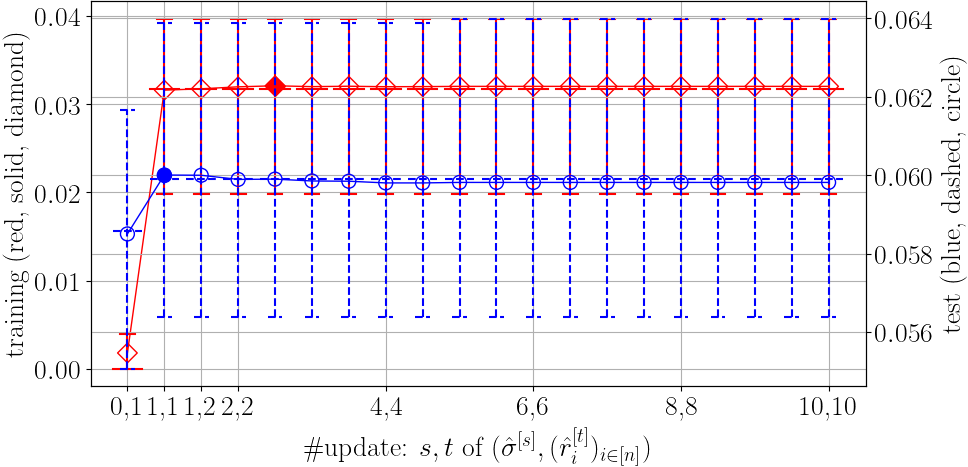}}&
{\includegraphics[width=2.0cm]{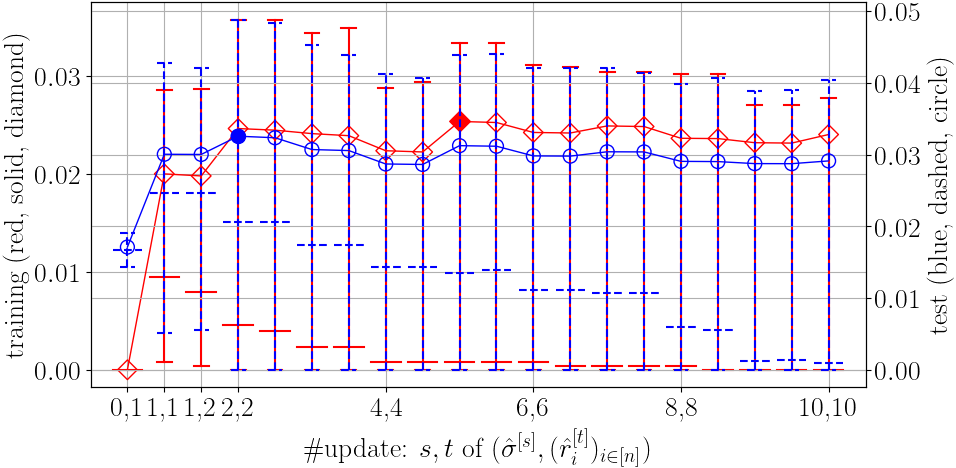}}&
{\includegraphics[width=2.0cm]{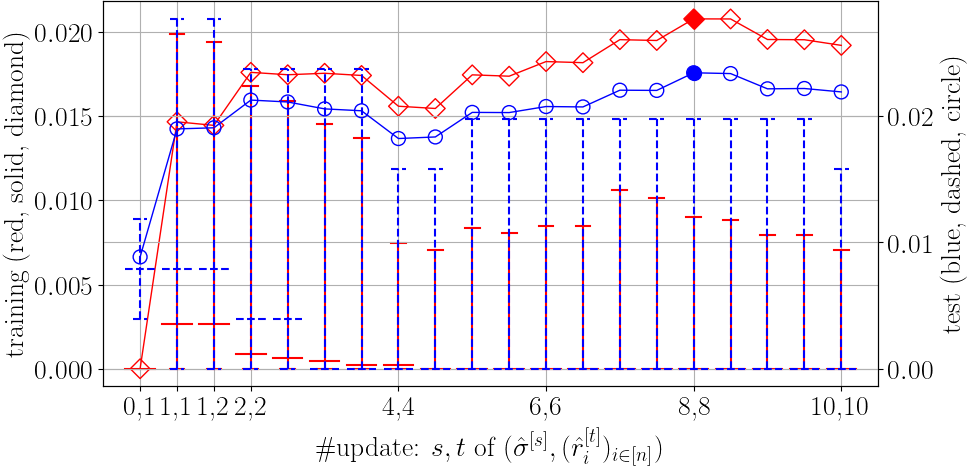}}
\end{tabular}
\caption{%
Part of results of real-world data experiments, Procedure 1 in Section~\ref{sec:RealWorld}:
For PL, MLB, and ATP data (left to right),
mean (marker) and 0.25, 0.5, and 0.75 quantiles (lower, middle, and upper bars) of 
1000 trial training (red, solid, diamond) and test (blue, dashed, circle) evaluation of 
WPP error \eqref{eq:WPE} with the squared loss $\phi=\phi_\sq$, 
Kendall's Tau \eqref{eq:Kendall}, and tie rate \eqref{eq:TIE} (top to bottom)
for the isotonic Bradley-Terry model learned with the squared loss $\phi=\phi_\sq$.
Smaller \eqref{eq:WPE}, or larger \eqref{eq:Kendall} indicates a better model.
The marker for the best model and model with the most ties was filled in, 
and the outer frame of the figure was highlighted in gray
if the best model was significantly better than the Bradley-Terry model
with respect to Mann-Whitney U test of the significance level 0.05.}
\label{fig:Real-SQ}
\centering%
\renewcommand{\arraystretch}{0.5}%
\renewcommand{\tabcolsep}{0.5pt}%
\begin{tabular}{ccc|ccc|ccc}%
\multicolumn{9}{c}{~}\\
\multicolumn{9}{c}{~}\\
\multicolumn{3}{c|}{\tiny PL, $|D_\tra|:|D_\tes|=$}&\multicolumn{3}{c|}{\tiny MLB, $|D_\tra|:|D_\tes|=$}&\multicolumn{3}{c}{\tiny ATP, $|D_\tra|:|D_\tes|=$}\\
{\tiny$1:9$}&{\tiny$5:5$}&{\tiny$9:1$}&{\tiny$1:9$}&{\tiny$5:5$}&{\tiny$9:1$}&{\tiny$1:9$}&{\tiny$5:5$}&{\tiny$9:1$}\\
\midrule
{\includegraphics[width=2.0cm]{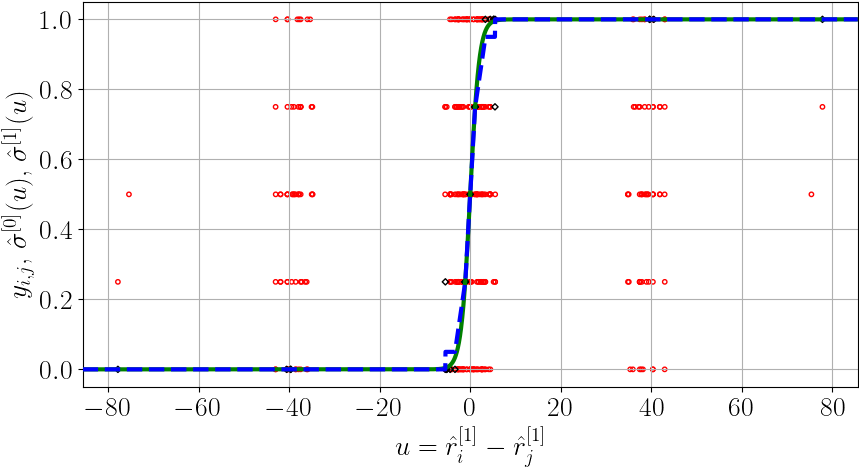}}&
{\includegraphics[width=2.0cm]{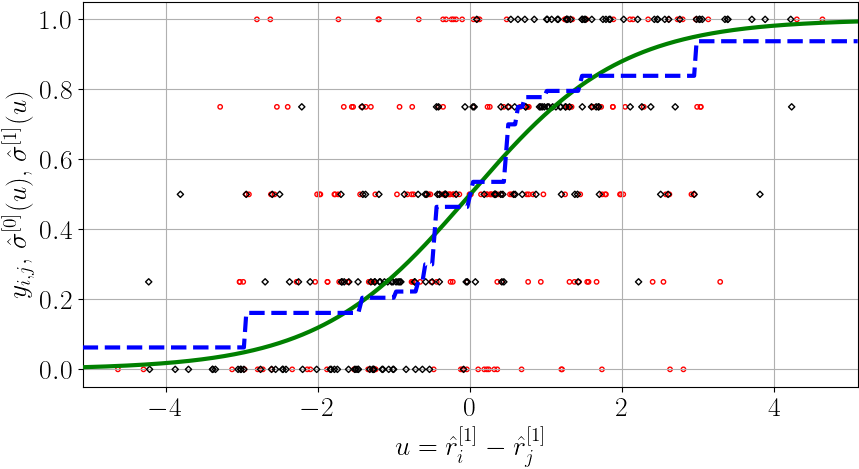}}&
{\includegraphics[width=2.0cm]{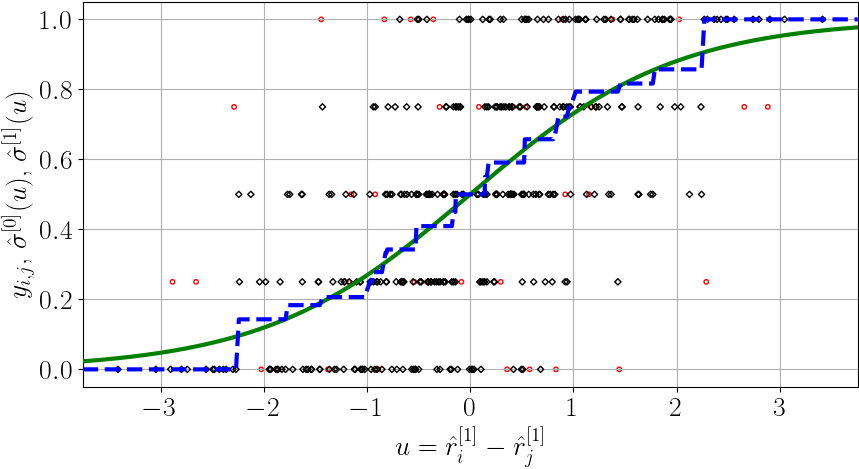}}&
{\includegraphics[width=2.0cm]{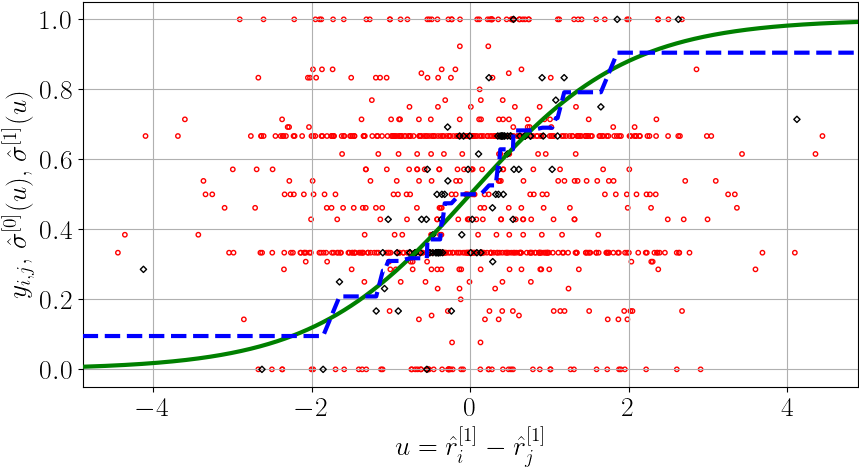}}&
{\includegraphics[width=2.0cm]{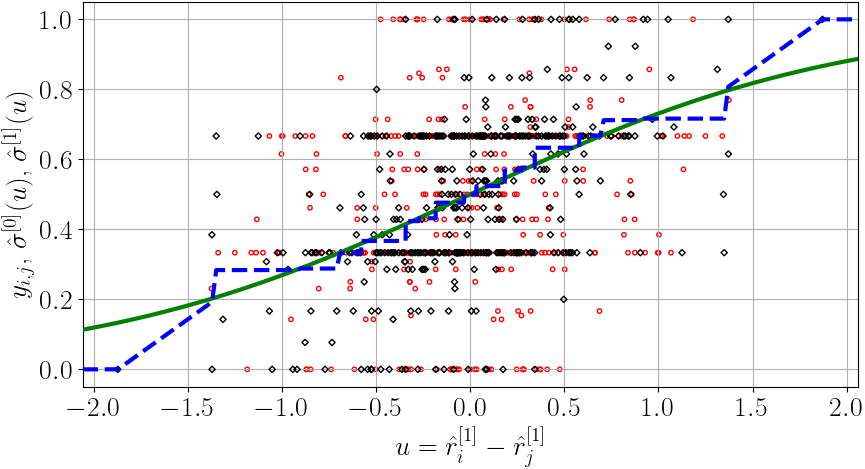}}&
{\includegraphics[width=2.0cm]{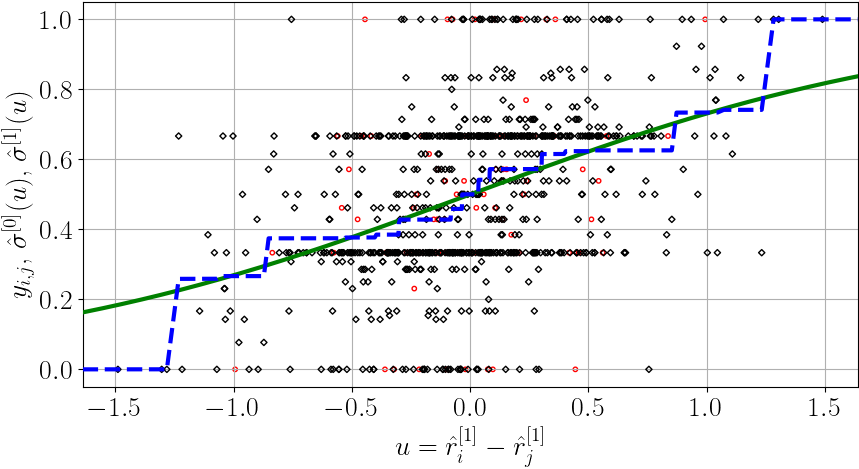}}&
{\includegraphics[width=2.0cm]{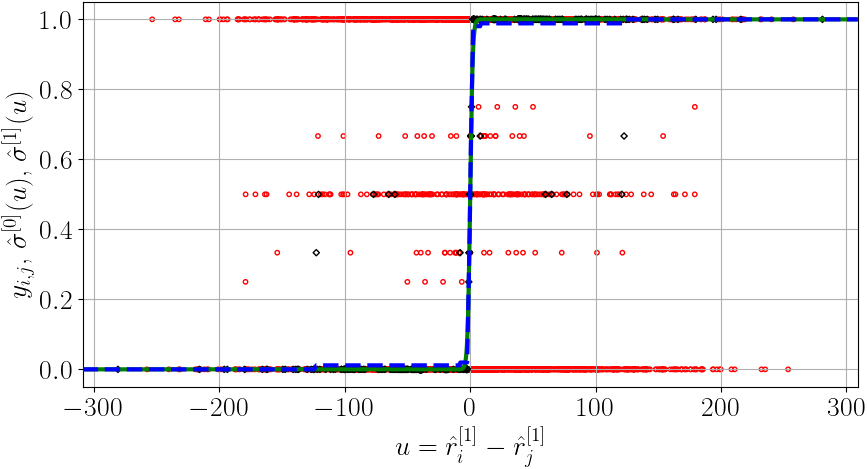}}&
{\includegraphics[width=2.0cm]{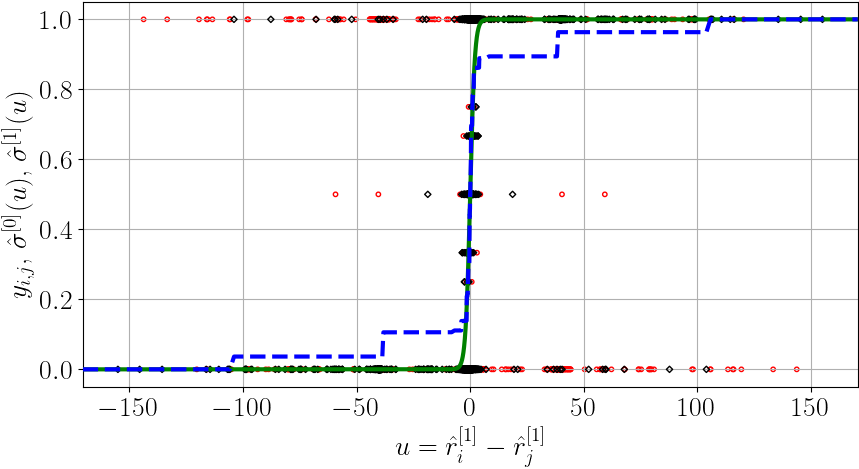}}&
{\includegraphics[width=2.0cm]{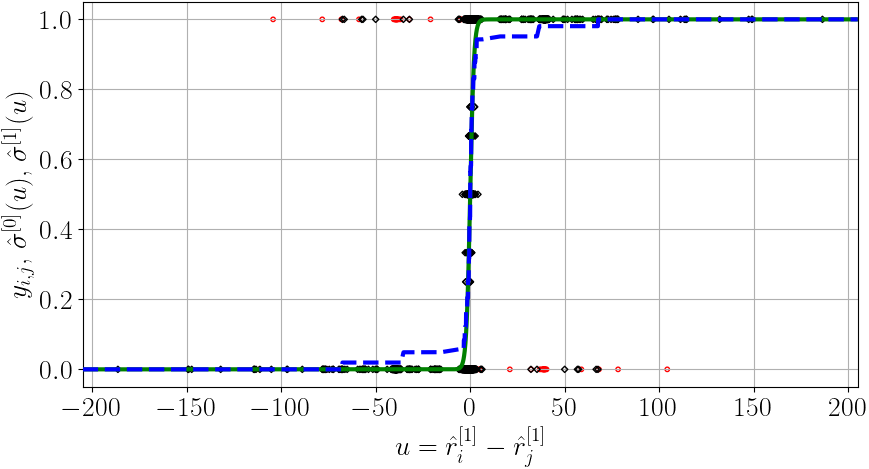}}
\end{tabular}
\begin{tabular}{cccc}%
\multicolumn{4}{c}{\tiny Rescaled version: Data, $|D_\tra|:|D_\tes|=$}\\
{\tiny PL, 1:9}&{\tiny ATP, 1:9}&{\tiny ATP, 5:5}&{\tiny ATP, 9:1}\\
\midrule
{\includegraphics[width=2.0cm]{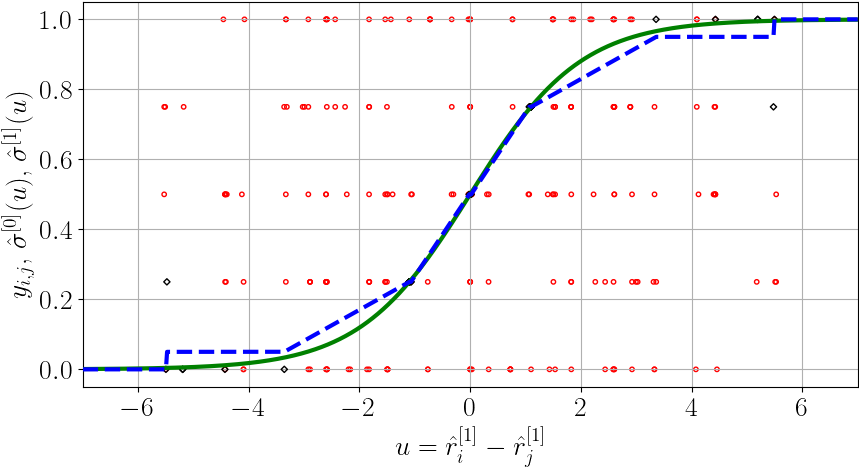}}&
{\includegraphics[width=2.0cm]{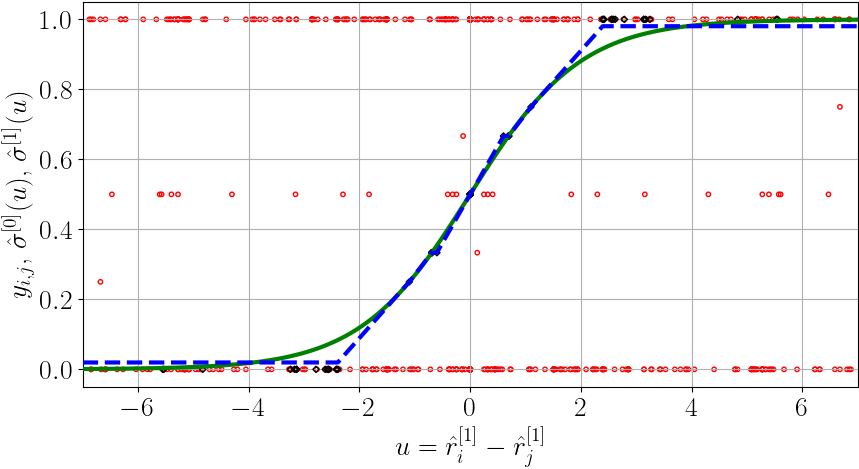}}&
{\includegraphics[width=2.0cm]{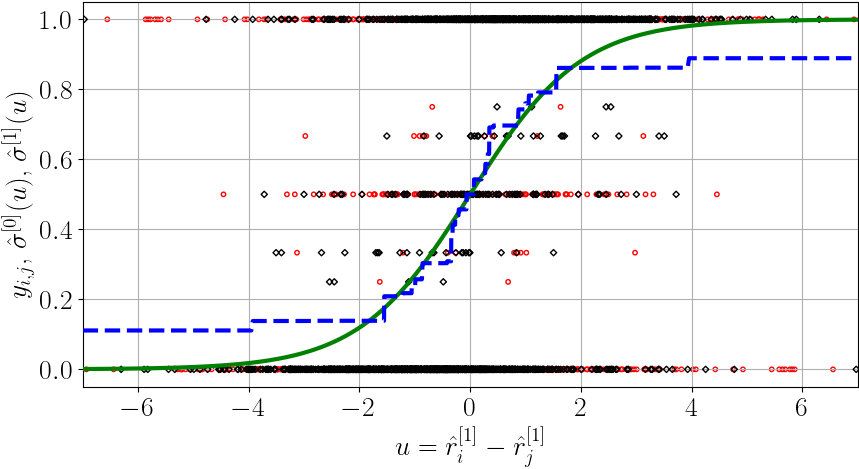}}&
{\includegraphics[width=2.0cm]{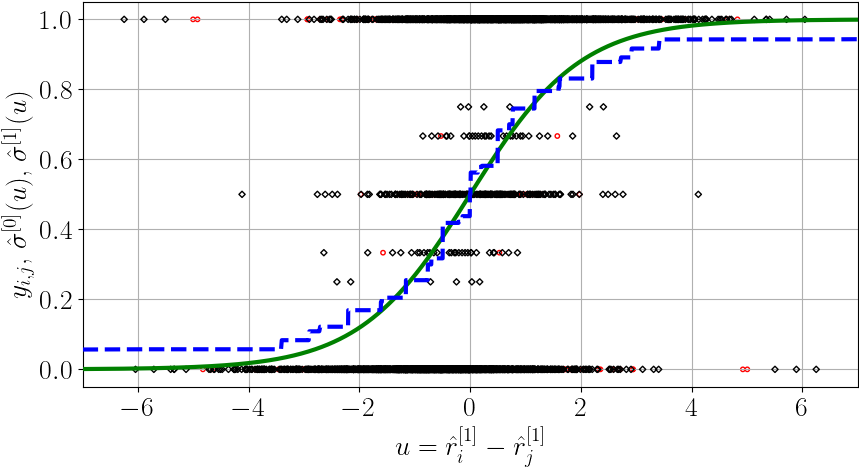}}
\end{tabular}
\caption{%
Part of results of real-world data experiments, Procedure 1 in Section~\ref{sec:RealWorld}:
For PL, MLB, and ATP data (left to right),
black diamonds and red circles are training and test data $y_{i,j}$, 
and a green curve and a blue polyline are
the Bradley-Terry model $\hat{\sigma}^{[0]}(\hat{r}_i^{[1]}-\hat{r}_j^{[1]})$ and
isotonic Bradley-Terry model $\hat{\sigma}^{[1]}(\hat{r}_i^{[1]}-\hat{r}_j^{[1]})$
learned with the squared loss $\phi=\phi_\sq$ in a certain trial.
over the range $[-1.1\cdot\max_{i,j}|\hat{r}_i^{[1]}-\hat{r}_j^{[1]}|,1.1\cdot\max_{i,j}|\hat{r}_i^{[1]}-\hat{r}_j^{[1]}|]$
or $[-7,7]$ in the rescaled version.}
\label{fig:Real-Res-Cauchy-SQ}
\end{sidewaysfigure}
\begin{sidewaystable}
\centering%
\renewcommand{\arraystretch}{0.5}%
\renewcommand{\tabcolsep}{0.5pt}%
\caption{%
Results of real-world data experiments, Procedure 2 in Section~\ref{sec:RealWorld}:
For PL, MLB, and ATP data (left to right),
mean and std ($\text{mean}_{\text{std}}$) of 1000 trial test evaluation of 
WPP error \eqref{eq:WPE} with the squared loss $\phi=\phi_\sq$
and Kendall's Tau \eqref{eq:Kendall}
for the Bradley-Terry model (upper) and model-selected isotonic 
Bradley-Terry model learned with the squared loss $\phi=\phi_\sq$ (lower).
Smaller \eqref{eq:WPE}, or larger \eqref{eq:Kendall} indicates a better model.
It also includes $p$-value for the Mann-Whitney U test in the parentheses:
A value below the significance level 0.05 implies 
that the isotonic Bradley-Terry model performs significantly better
and is highlighted in red bold.}
\label{tab:Real-SQ}
\scalebox{0.9}{\begin{minipage}{25cm}
\begin{tabular}{c|ccccc|ccccc|ccccc}%
&\multicolumn{5}{c|}{\tiny PL, $|D_\tra|:|D_\tes|=$}&\multicolumn{5}{c|}{\tiny MLB, $|D_\tra|:|D_\tes|=$}&\multicolumn{5}{c}{\tiny ATP, $|D_\tra|:|D_\tes|=$}\\
&{\tiny$1:9$}&{\tiny$3:7$}&{\tiny$5:5$}&{\tiny$7:3$}&{\tiny$9:1$}&{\tiny$1:9$}&{\tiny$3:7$}&{\tiny$5:5$}&{\tiny$7:3$}&{\tiny$9:1$}&{\tiny$1:9$}&{\tiny$3:7$}&{\tiny$5:5$}&{\tiny$7:3$}&{\tiny$9:1$}
\\\midrule
\multirow{3}{*}[-1.375mm]{\rotatebox{90}{\tiny\eqref{eq:WPE}}}
&\tcr{$\bf.2186_{.0420}$}&$.1336_{.0274}$&$.1012_{.0136}$&$.0932_{.0153}$&$.0881_{.0274}$
&\tcr{$\bf.1407_{.0285}$}&$.0765_{.0062}$&$.0683_{.0050}$&$.0654_{.0068}$&$.0641_{.0131}$
&\tcr{$\bf.4267_{.0139}$}&\tcr{$\bf.4015_{.0139}$}&\tcr{$\bf.2986_{.0201}$}&\tcr{$\bf.2673_{.0115}$}&$.2543_{.0177}$\\
&\tcr{$\bf.2125_{.0403}$}&$.1310_{.0224}$&$.1027_{.0138}$&$.0942_{.0156}$&$.0889_{.0276}$
&\tcr{$\bf.1359_{.0235}$}&$.0770_{.0064}$&$.0686_{.0051}$&$.0657_{.0069}$&$.0643_{.0131}$
&\tcr{$\bf.4141_{.0143}$}&\tcr{$\bf.3396_{.0115}$}&\tcr{$\bf.2813_{.0107}$}&\tcr{$\bf.2628_{.0099}$}&$.2529_{.0171}$\\
&(\tcr{$\bf.0004$})&($.1613$)&($.9892$)&($.9223$)&($.7480$)
&(\tcr{$\bf.0010$})&($.9485$)&($.9023$)&($.8122$)&($.6737$)
&(\tcr{$\bf.0000$})&(\tcr{$\bf.0000$})&(\tcr{$\bf.0000$})&(\tcr{$\bf.0000$})&($.0593$)\\
\midrule
\multirow{3}{*}[-1.375mm]{\rotatebox{90}{\tiny\eqref{eq:Kendall}}}
&$.1759_{.0914}$&\tcr{$\bf.3048_{.0643}$}&\tcr{$\bf.3637_{.0512}$}&\tcr{$\bf.3872_{.0701}$}&\tcr{$\bf.4096_{.1441}$}
&$.0526_{.0484}$&\tcr{$\bf.1017_{.0341}$}&\tcr{$\bf.1224_{.0356}$}&\tcr{$\bf.1358_{.0507}$}&\tcr{$\bf.1438_{.0991}$}
&$.0646_{.0266}$&\tcr{$\bf.1095_{.0209}$}&\tcr{$\bf.1479_{.0204}$}&\tcr{$\bf.1720_{.0240}$}&\tcr{$\bf.1889_{.0470}$}\\
&$.1800_{.0939}$&\tcr{$\bf.3127_{.0671}$}&\tcr{$\bf.3745_{.0541}$}&\tcr{$\bf.3989_{.0740}$}&\tcr{$\bf.4243_{.1499}$}
&$.0537_{.0495}$&\tcr{$\bf.1057_{.0373}$}&\tcr{$\bf.1283_{.0396}$}&\tcr{$\bf.1426_{.0566}$}&\tcr{$\bf.1527_{.1093}$}
&$.0664_{.0273}$&\tcr{$\bf.1152_{.0226}$}&\tcr{$\bf.1521_{.0209}$}&\tcr{$\bf.1760_{.0250}$}&\tcr{$\bf.1932_{.0489}$}\\
&($.1398$)&(\tcr{$\bf.0016$})&(\tcr{$\bf.0000$})&(\tcr{$\bf.0002$})&(\tcr{$\bf.0149$})
&($.3086$)&(\tcr{$\bf.0049$})&(\tcr{$\bf.0003$})&(\tcr{$\bf.0015$})&(\tcr{$\bf.0182$})
&($.0697$)&(\tcr{$\bf.0000$})&(\tcr{$\bf.0000$})&(\tcr{$\bf.0002$})&(\tcr{$\bf.0190$})\\
\end{tabular}\end{minipage}}
\end{sidewaystable}

\section{Conclusion and Future Works}
\label{sec:Conclusion}
In this study, because conventional usages of generalized Bradley-Terry model 
may suffer from model misspecification 
owing to the selection of a fixed inverse link function,
we proposed the isotonic Bradley-Terry model that 
learns the rate parameters by a (sub-)gradient method and 
the inverse link function by an isotonic regression technique alternately.
Through numerical experiments, we confirmed that 
the proposed model could improve the win probability prediction and ranking.

Generalized Bradley-Terry model is also a foundation for analysis of 
a variety of advanced prediction problems for paired comparison data.
When we have access to covariates that characterize each player, 
we can model the rate parameters in terms of those covariates
\citep{springall1973response,de1993thurstonian}.
We can consider home-field (or first-move) advantage by using different 
rate parameters for home and away settings \citep{david1963method}.
When paired comparisons are performed in prolonged time periods, 
it may be better to incorporate time-dependence 
into the model of the rate parameters;
for example, Elo rating system \citep{elo1978rating,glickman1995comprehensive}
is a representative time-dependent extension of the Bradley-Terry model.
A multi-stage expansion \citep{glenn1960ties,rao1967ties,davidson1970extending}, 
for example, for distinguishing draws, and Plackett-Luce model 
\citep{plackett1975analysis,luce1959individual,marden1996analyzing} for 
multi-player matches (e.g., horse racing) with 3 or more players are also closely related.
It is one promising direction for future research to apply the idea of 
the isotonic Bradley-Terry model that also learns the inverse link function 
for these advanced problems.

\backmatter
\bmhead{Acknowledgements}
This work was supported by JSPS KAKENHI Grant Number JP24K23856 and 
ISM Cooperative Research Program 2025-ISMCRP-1001 and 2026-ISMCRP-1006.
The author would like to express gratitude to the providers of the open-access datasets used in this study. 
The football Premier League data were sourced from football-data.co.uk maintained by Joseph Buchdahl, 
the baseball MLB data were from retrosheet.org by Tom Thress, and
the tennis ATP tour data were from the Github repository by Jeff Sackmann 
(all data were accessed on 14 May 2026).

\begin{appendices}
\section{Experiments with Specified Model}
\label{sec:Specified}
We here show the experimental results with specified model:
We generated synthetic data (Logistic-$N$) consisted of $n=25,50,100,200,400$ players and
win probabilities $y_{i,j}=1-y_{j,i}\sim\Binomial(N,\sigma_\Logistic(\tilde{r}_i-\tilde{r}_j))/N$
with the number of matches $N=1,5,25$, 
and underlying players'\;strengths $\tilde{r}_i\sim\Normal(0,1)$.
We performed additional experiments keeping all other settings identical to 
Procedures 1 and 2 of the synthetic data experiments in Section~\ref{sec:Synthetic}.
Figures~\ref{fig:Logistic-SQ} and \ref{fig:Res-Logistic-SQ}
and Table~\ref{tab:Logistic-SQ} are specified-model versions of 
Figures~\ref{fig:Cauchy-SQ} and \ref{fig:Res-Cauchy-SQ}
and Table~\ref{tab:Cauchy-SQ}.
The isotonic Bradley-Terry model improved 
the test WPP error with $\phi=\phi_\sq$ 
when $n$, $N$, and the ratio of $|D_\tra|$ were small
(not when $n$ or $N$ was large unlike cases with model misspecification),
and improved Kendall's Tau in most cases.

\section{Experiments with NLL Loss}
\label{sec:NLL}
We here show the experimental results with NLL loss $\phi=\phi_\nll$.
Figures~\ref{fig:Cauchy-NLL}--\ref{fig:Real-Res-Cauchy-NLL}
and Tables~\ref{tab:Cauchy-NLL} and Table~\ref{tab:Real-NLL} 
are NLL versions of 
Figures~\ref{fig:Cauchy-SQ}--\ref{fig:Real-Res-Cauchy-SQ}
and Tables~\ref{tab:Cauchy-SQ} and Table~\ref{tab:Real-SQ}.
The test WPP error with NLL loss $\phi=\phi_\nll$ frequently yields NaN values. 
Thus, even if rank tests indicate statistical significance, 
the isotonic Bradley-Terry model would still be considered unsuitable 
for win probability prediction in this setting.
On the other hand, ranking performance could be calculated in all cases, 
and it was confirmed that that performance was improved 
even with the isotonic Bradley-Terry model using NLL loss.

\begin{sidewaysfigure}
\centering%
\renewcommand{\arraystretch}{0.5}%
\renewcommand{\tabcolsep}{0.5pt}%
\begin{tabular}{cc|ccc|ccc|ccc}%
&&\multicolumn{3}{c|}{\tiny$N=1$, $|D_\tra|:|D_\tes|=$}&\multicolumn{3}{c|}{\tiny$N=5$, $|D_\tra|:|D_\tes|=$}&\multicolumn{3}{c}{\tiny$N=25$, $|D_\tra|:|D_\tes|=$}\\
&&{\tiny$1:9$}&{\tiny$5:5$}&{\tiny$9:1$}&{\tiny$1:9$}&{\tiny$5:5$}&{\tiny$9:1$}&{\tiny$1:9$}&{\tiny$5:5$}&{\tiny$9:1$}\\
\midrule
\multirow{3}{*}[-2.5mm]{\rotatebox{90}{\tiny\eqref{eq:WPE}, $n=$}}
&\rotatebox{90}{\tiny\,~~~\,$25$}&
\CF{\includegraphics[width=2.0cm]{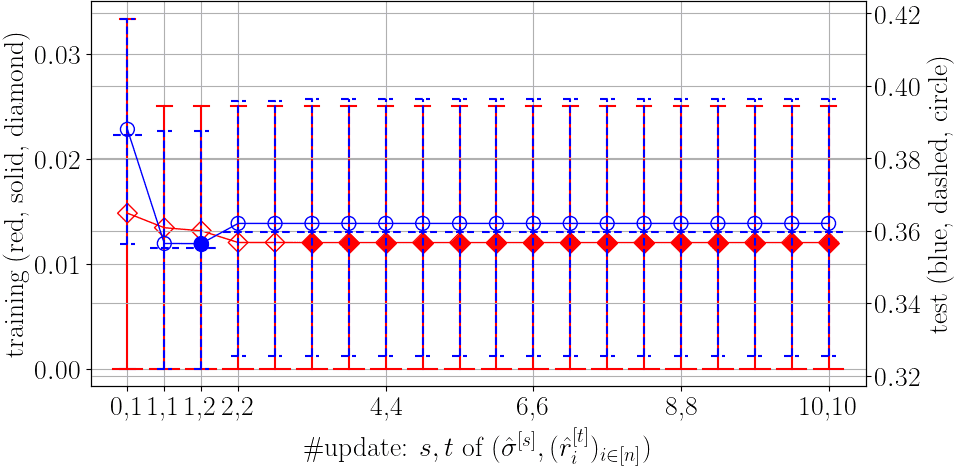}}&
\CF{\includegraphics[width=2.0cm]{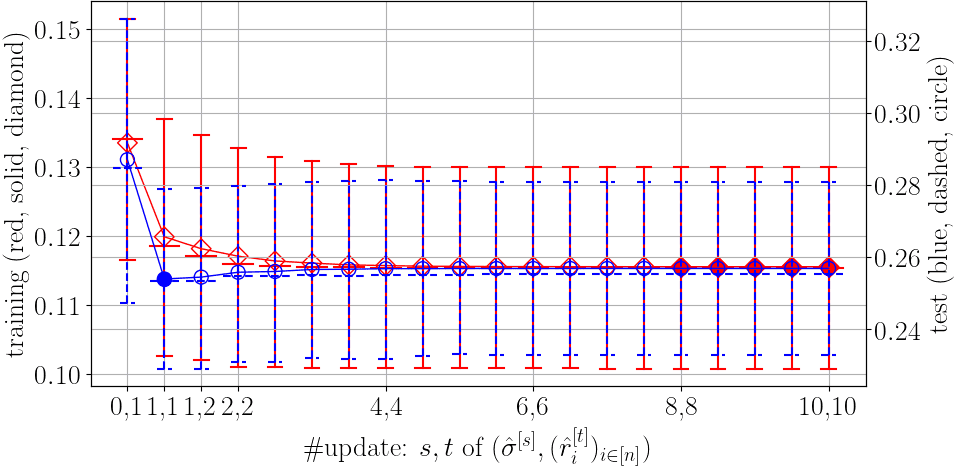}}&
{\includegraphics[width=2.0cm]{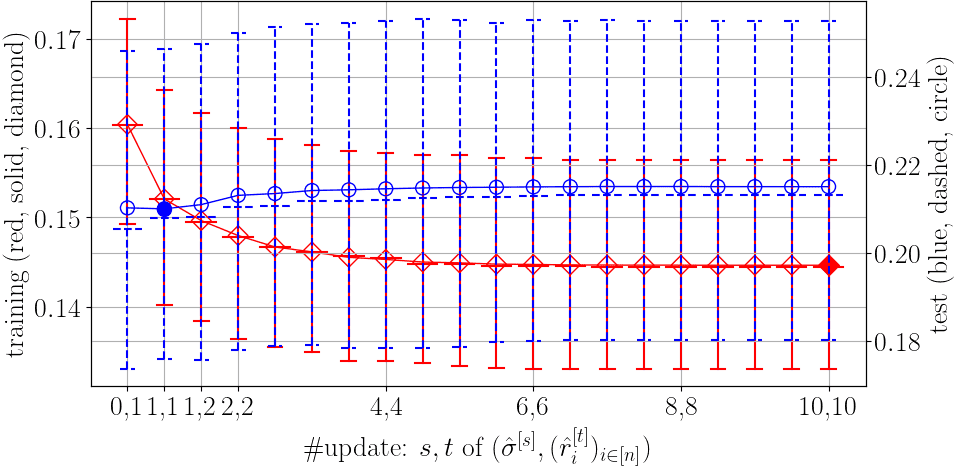}}&
{\includegraphics[width=2.0cm]{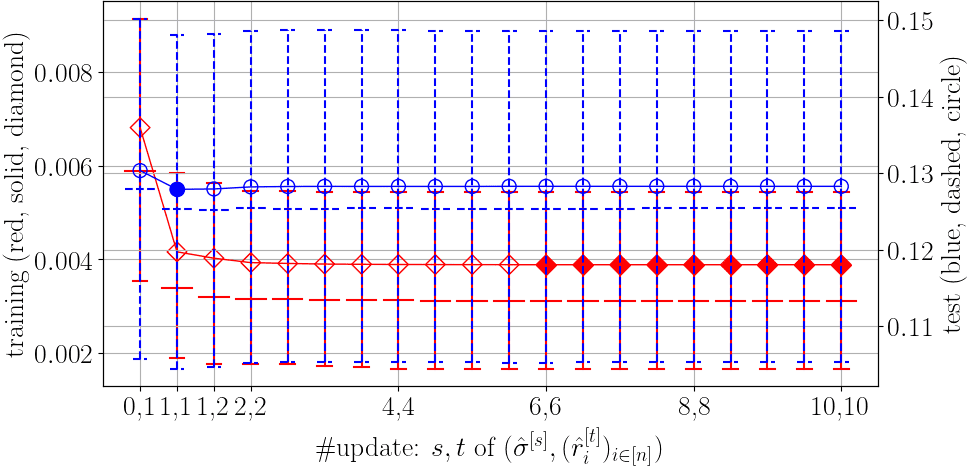}}&
{\includegraphics[width=2.0cm]{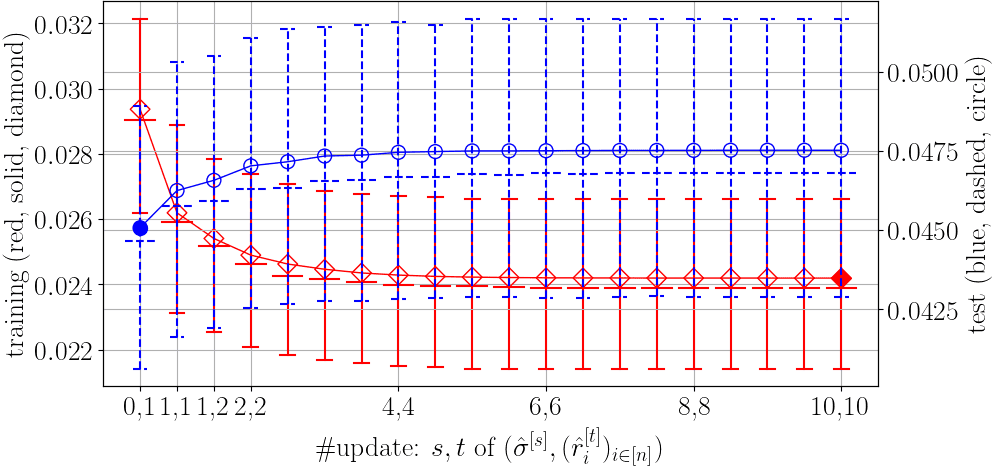}}&
{\includegraphics[width=2.0cm]{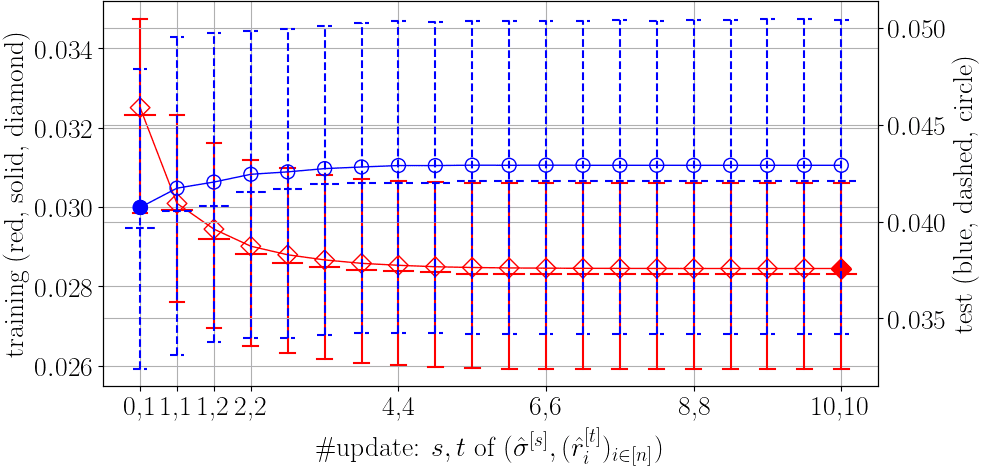}}&
{\includegraphics[width=2.0cm]{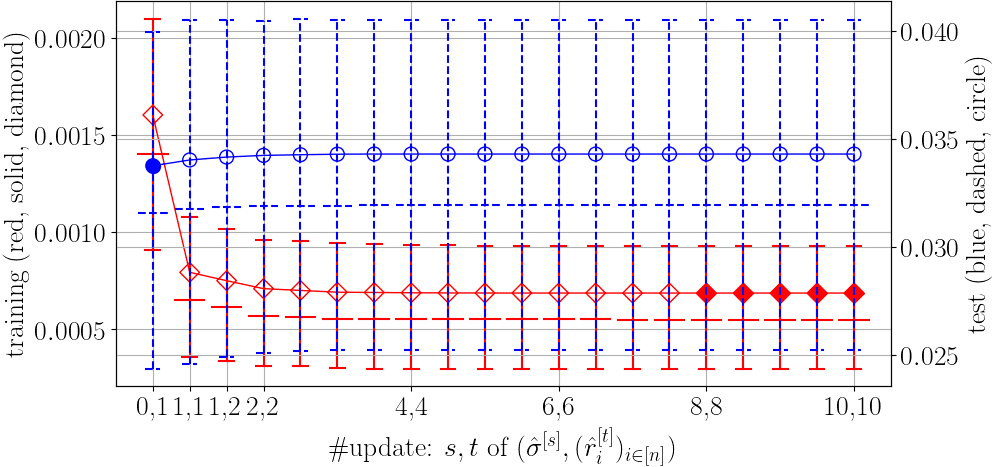}}&
{\includegraphics[width=2.0cm]{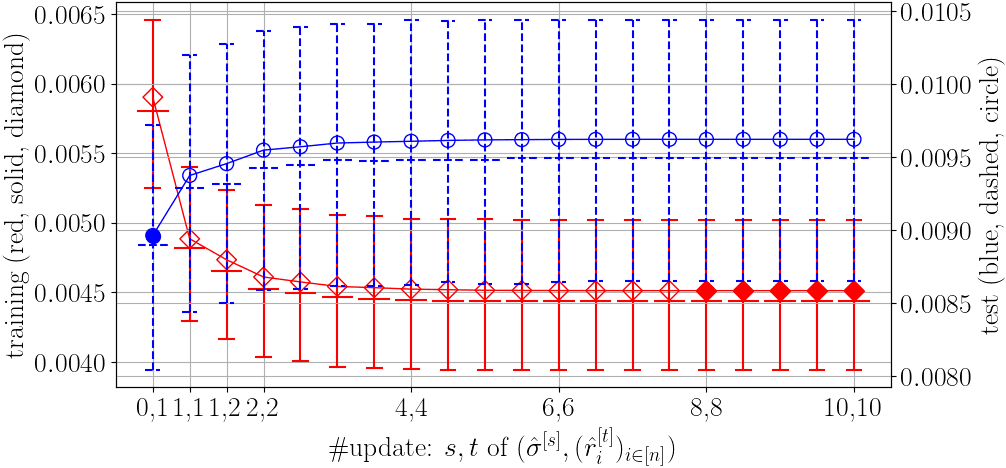}}&
{\includegraphics[width=2.0cm]{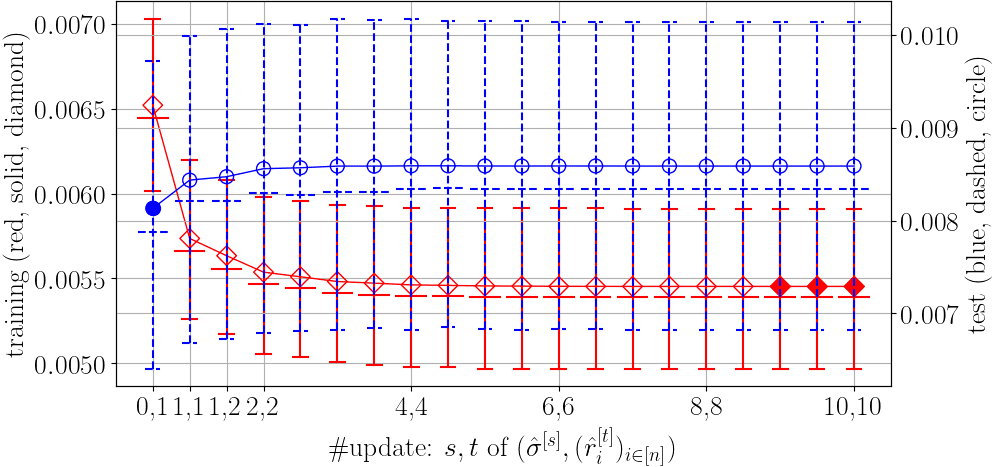}}\\
&\rotatebox{90}{\tiny\,~~\,$100$}&
\CF{\includegraphics[width=2.0cm]{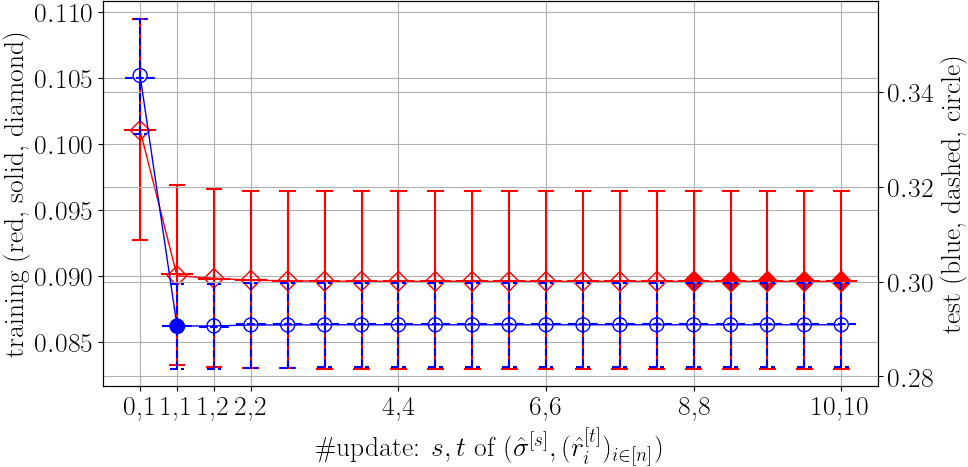}}&
{\includegraphics[width=2.0cm]{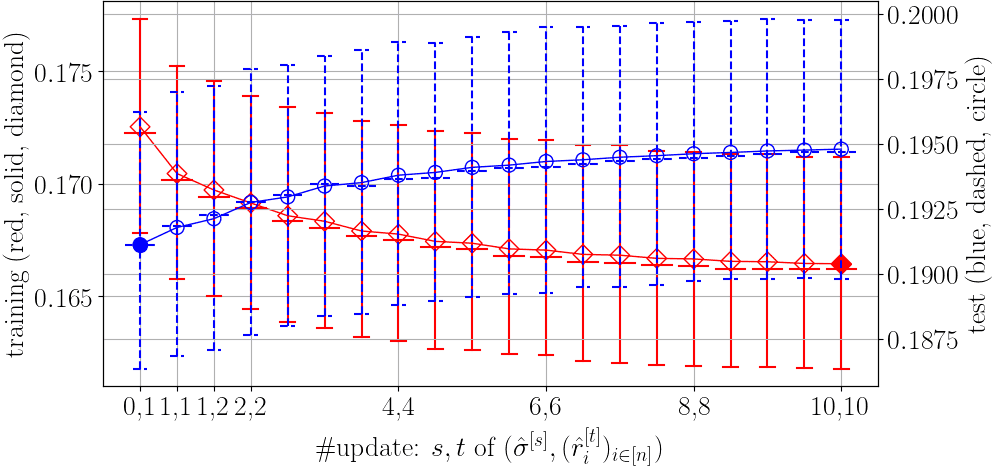}}&
{\includegraphics[width=2.0cm]{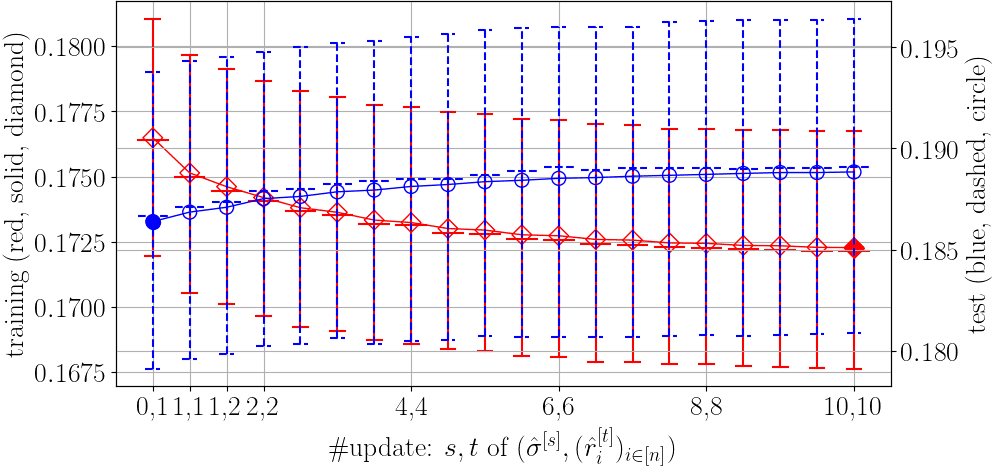}}&
{\includegraphics[width=2.0cm]{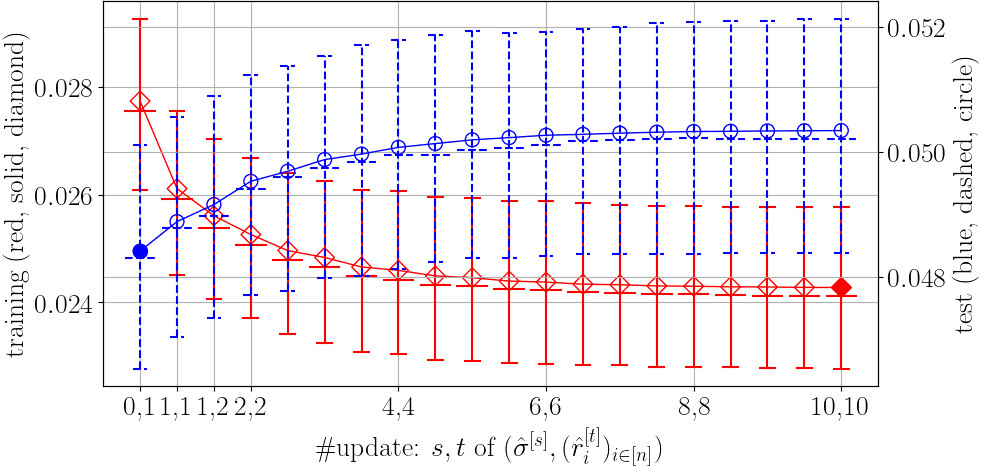}}&
{\includegraphics[width=2.0cm]{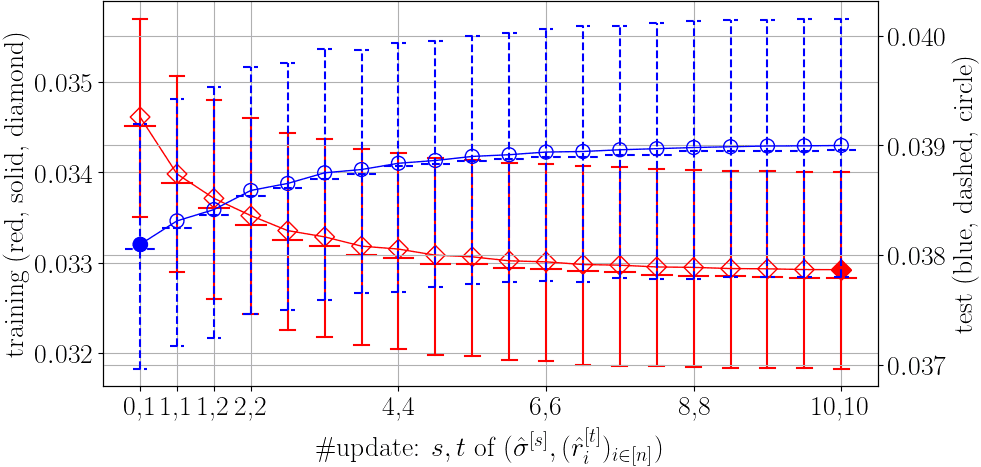}}&
{\includegraphics[width=2.0cm]{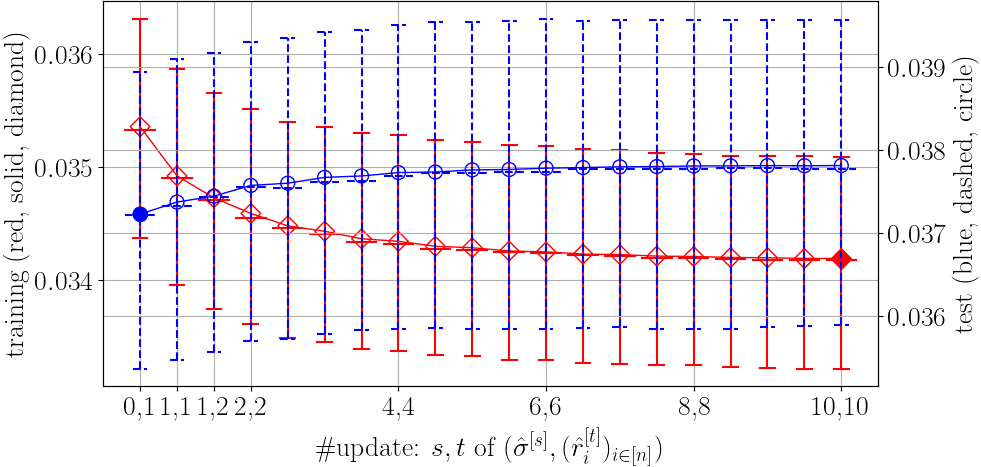}}&
{\includegraphics[width=2.0cm]{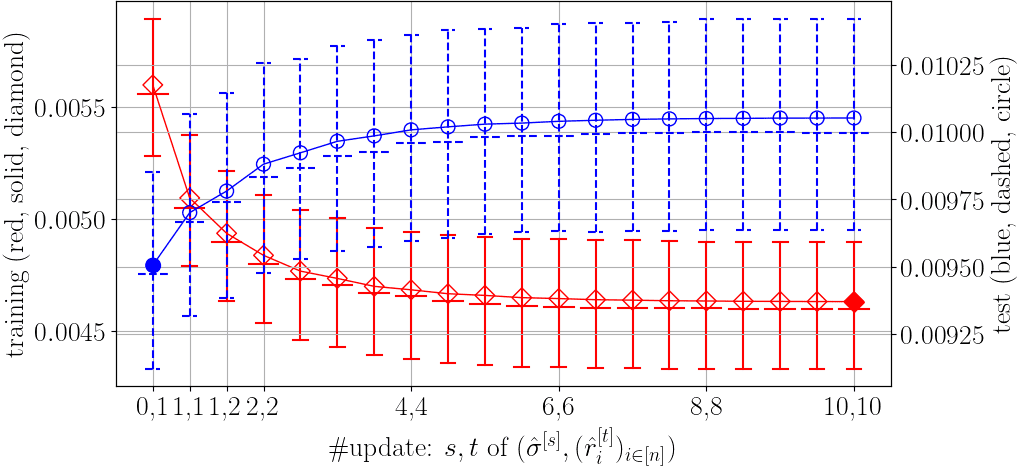}}&
{\includegraphics[width=2.0cm]{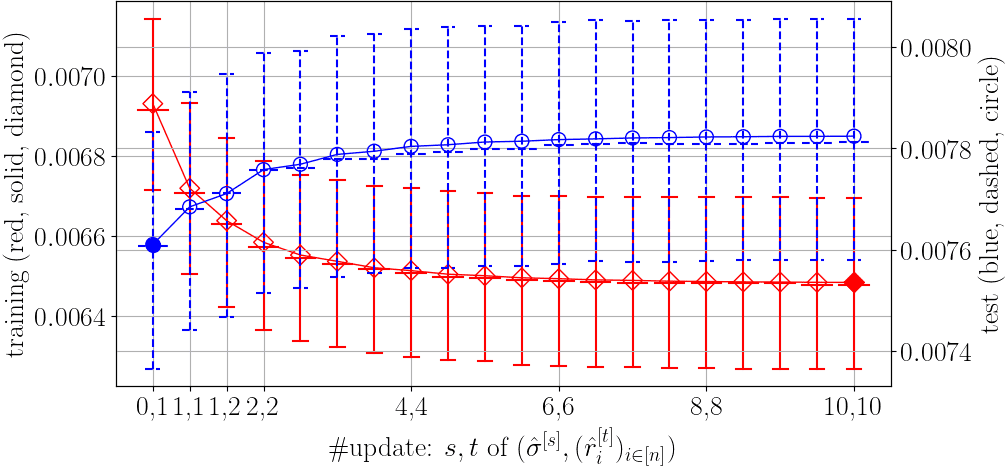}}&
{\includegraphics[width=2.0cm]{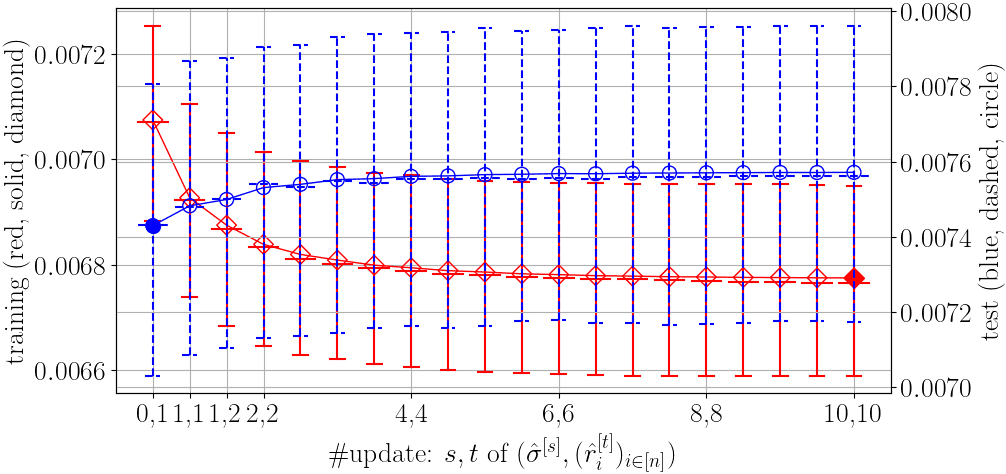}}\\
&\rotatebox{90}{\tiny\,~~\,$400$}&
{\includegraphics[width=2.0cm]{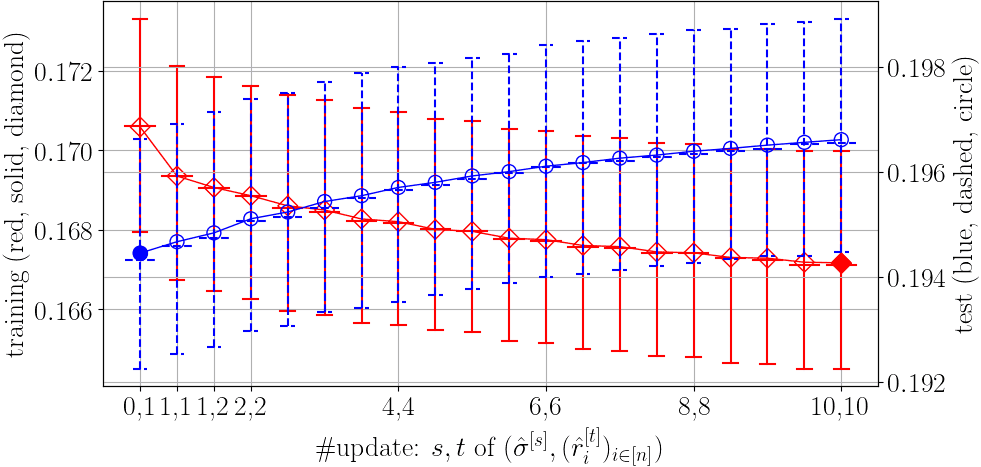}}&
{\includegraphics[width=2.0cm]{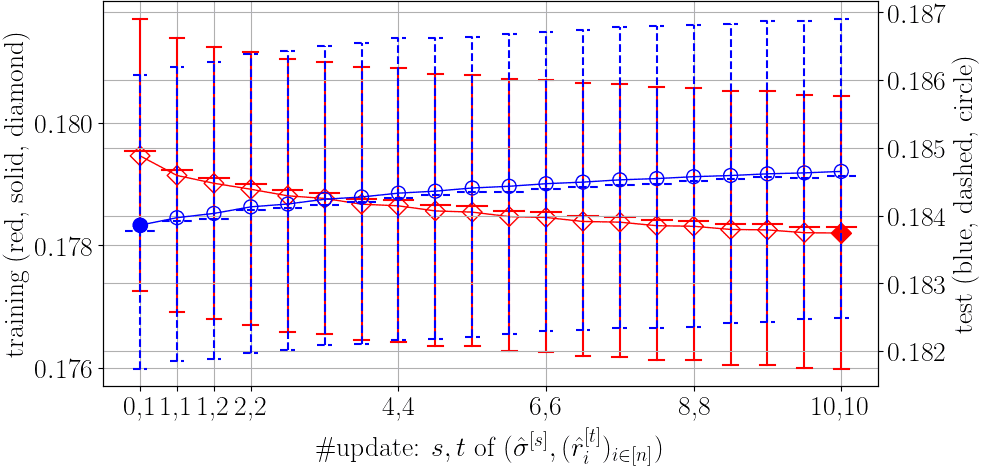}}&
{\includegraphics[width=2.0cm]{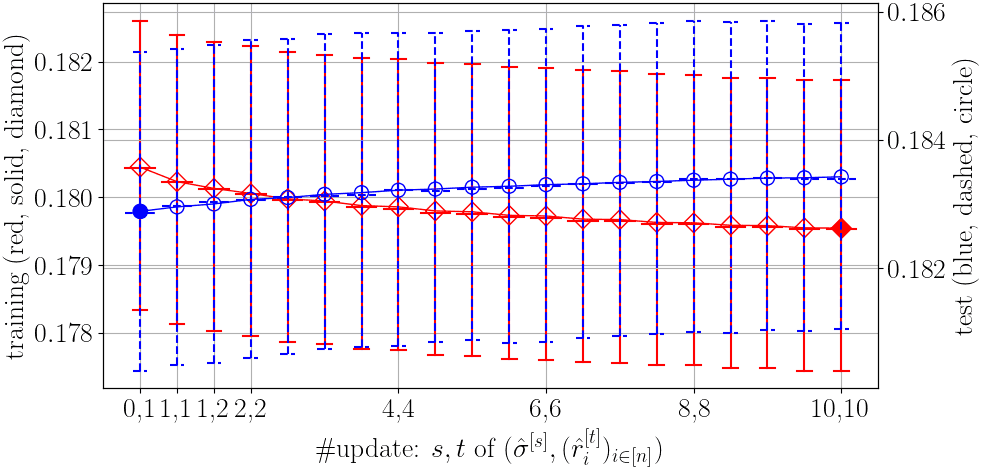}}&
{\includegraphics[width=2.0cm]{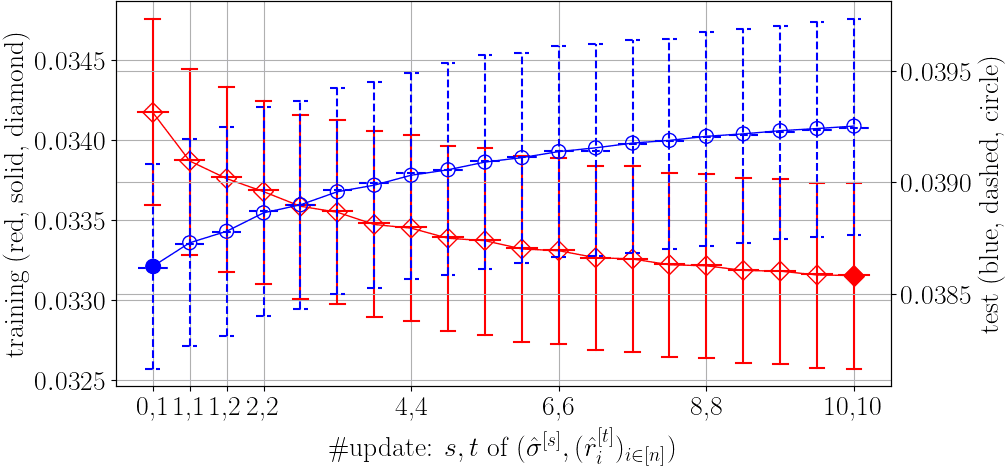}}&
{\includegraphics[width=2.0cm]{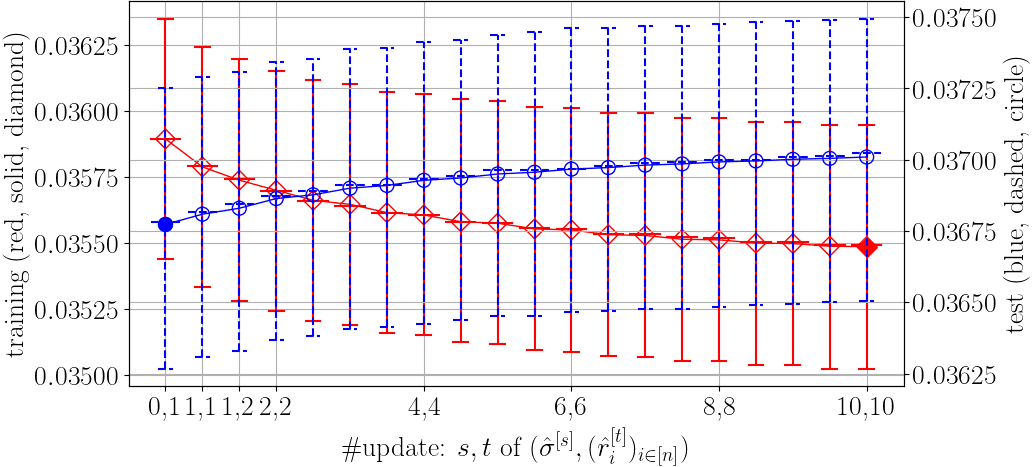}}&
{\includegraphics[width=2.0cm]{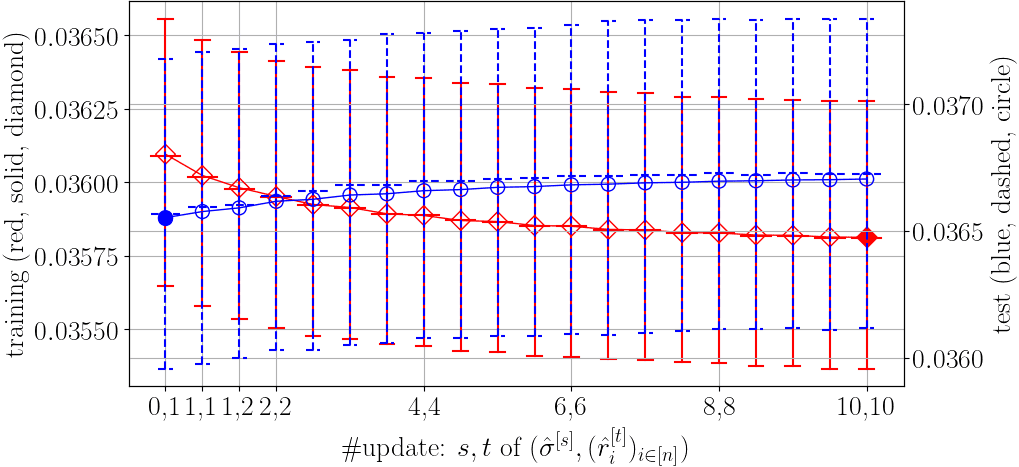}}&
{\includegraphics[width=2.0cm]{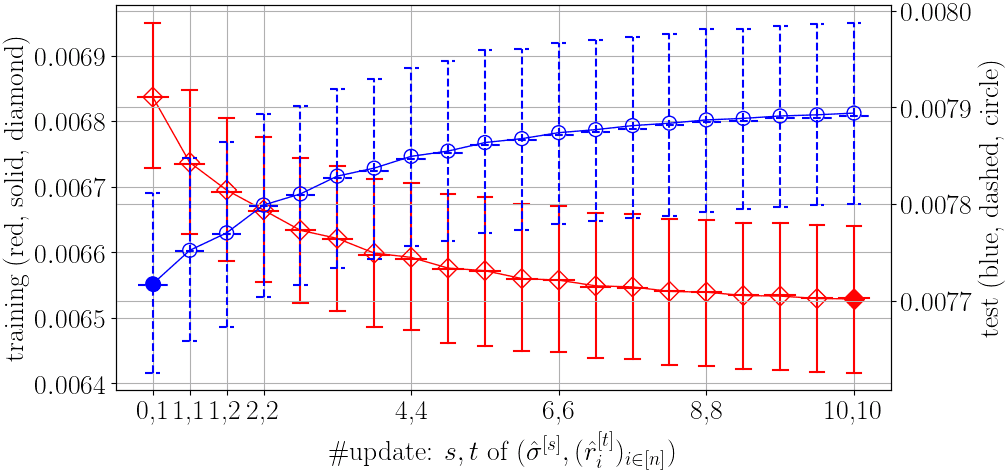}}&
{\includegraphics[width=2.0cm]{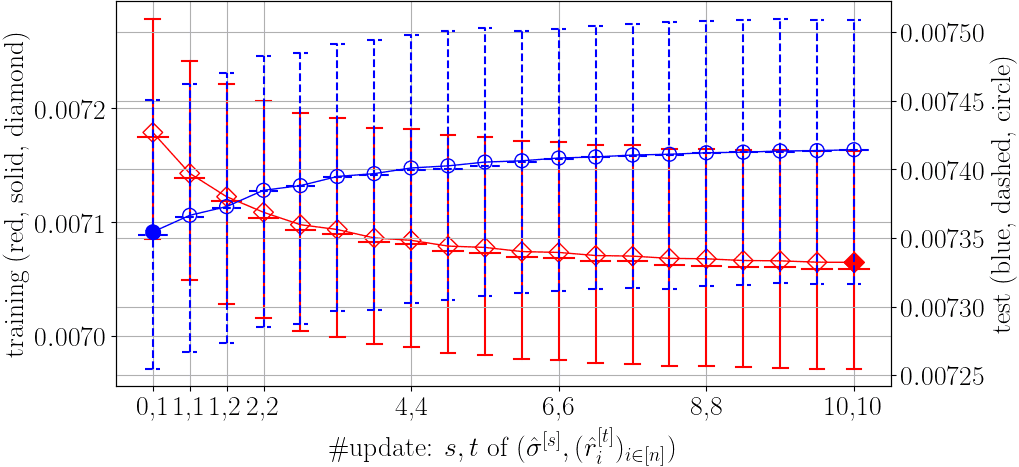}}&
{\includegraphics[width=2.0cm]{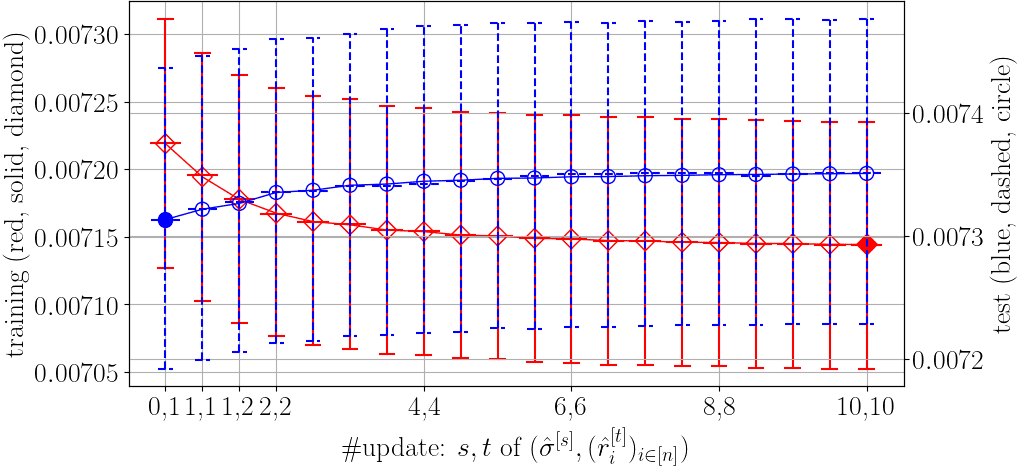}}
\\\midrule
\multirow{3}{*}[-2.5mm]{\rotatebox{90}{\tiny\eqref{eq:Kendall}, $n=$}}
&\rotatebox{90}{\tiny\,~~~\,$25$}&
{\includegraphics[width=2.0cm]{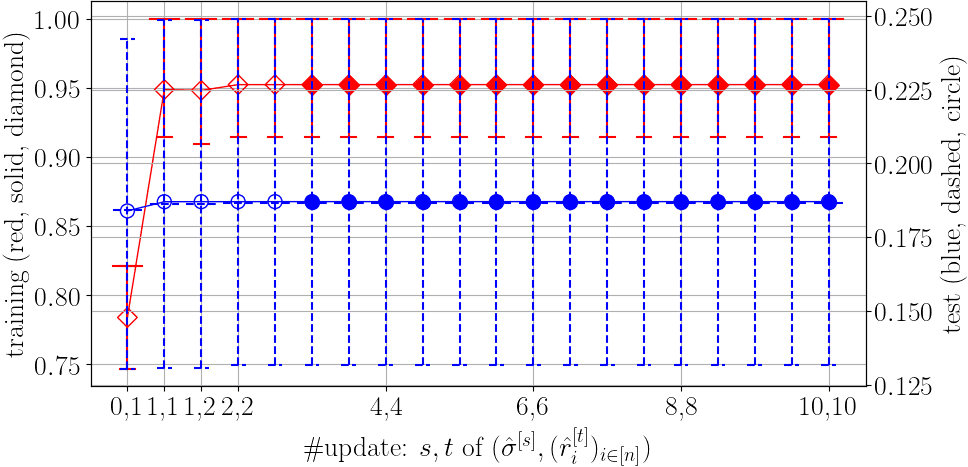}}&
\CF{\includegraphics[width=2.0cm]{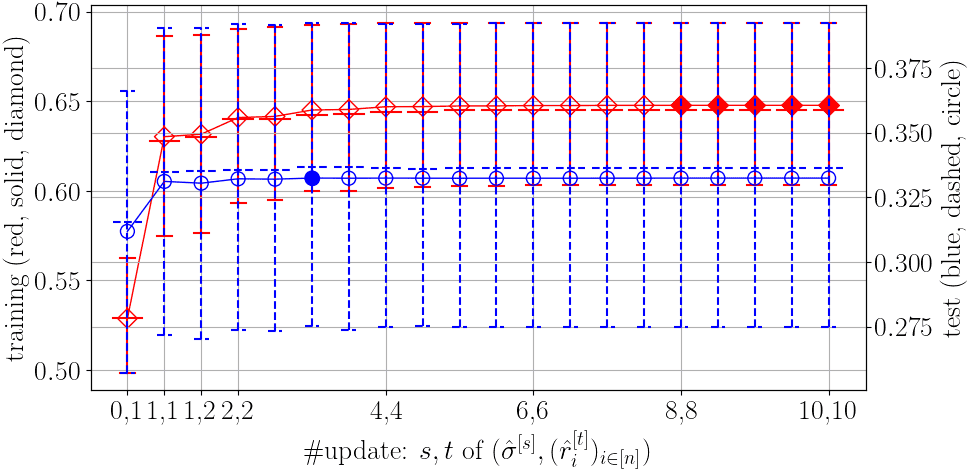}}&
\CF{\includegraphics[width=2.0cm]{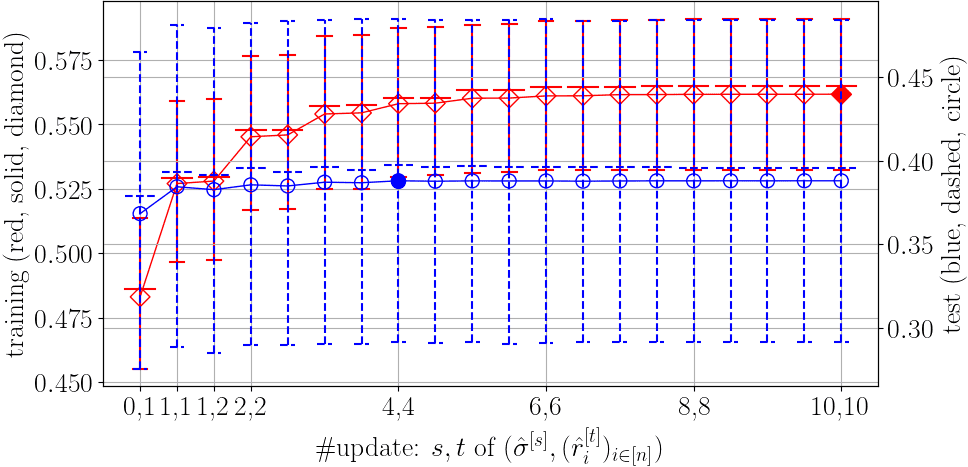}}&
\CF{\includegraphics[width=2.0cm]{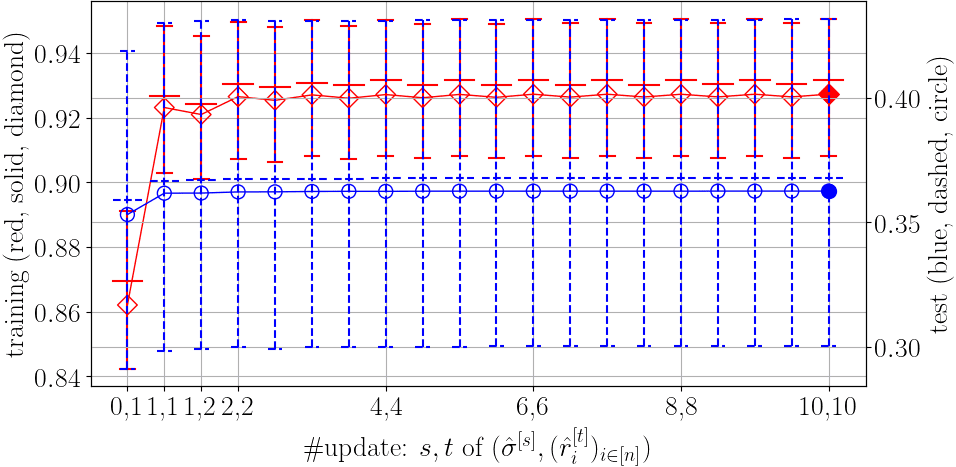}}&
\CF{\includegraphics[width=2.0cm]{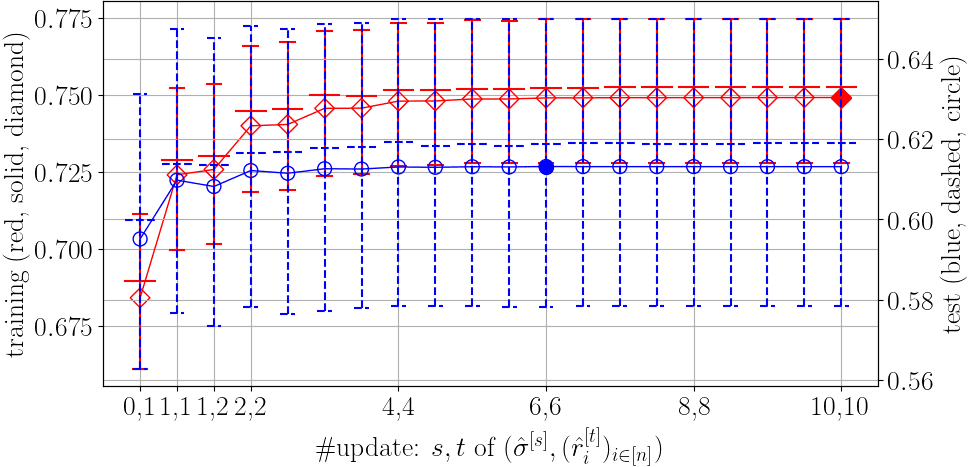}}&
\CF{\includegraphics[width=2.0cm]{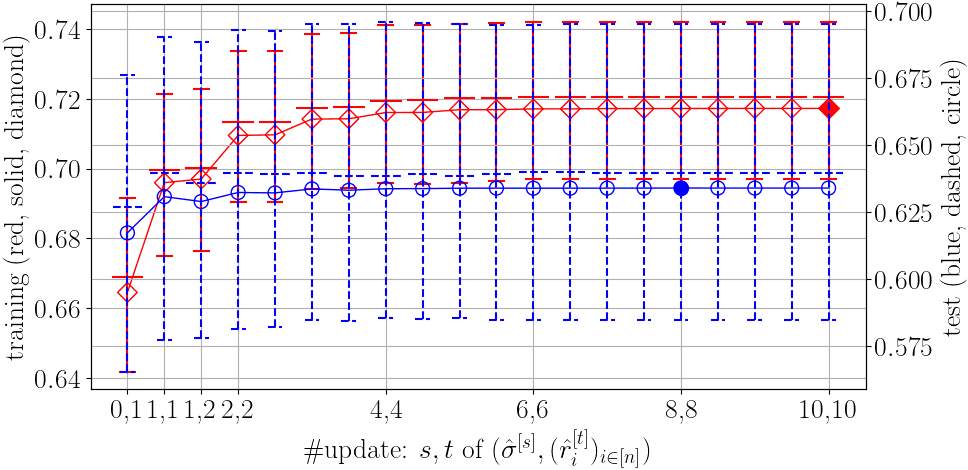}}&
{\includegraphics[width=2.0cm]{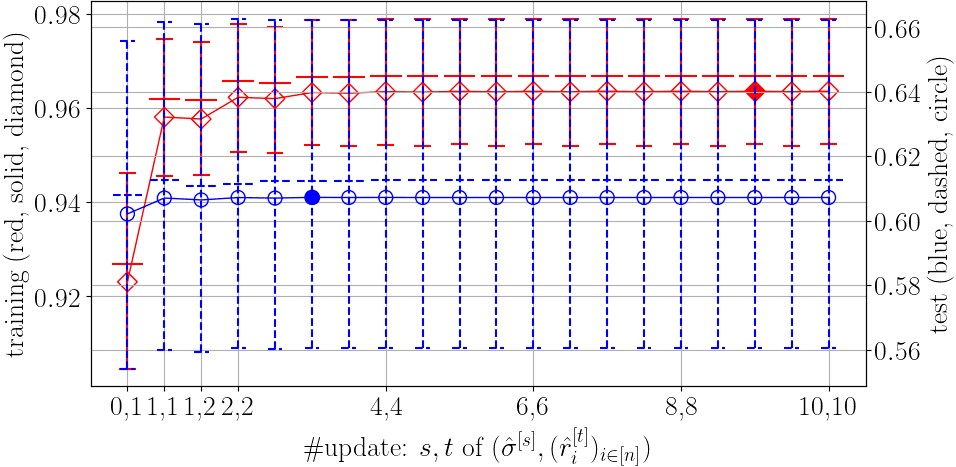}}&
\CF{\includegraphics[width=2.0cm]{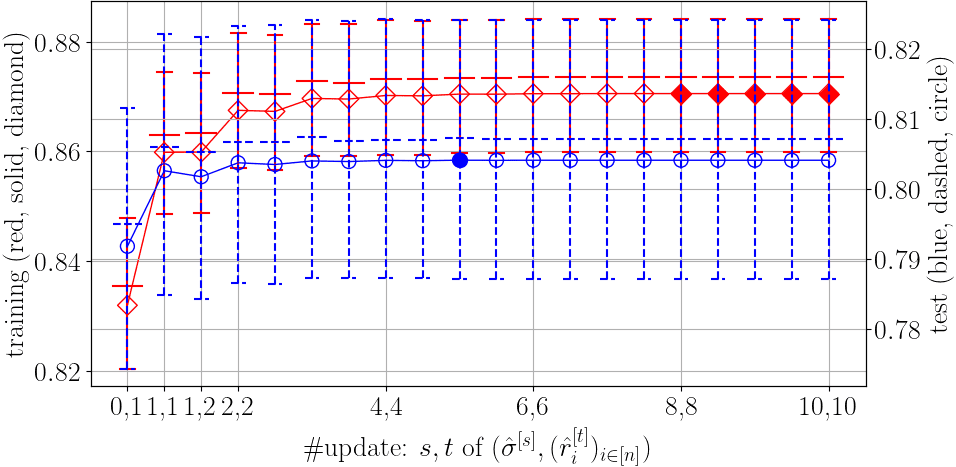}}&
\CF{\includegraphics[width=2.0cm]{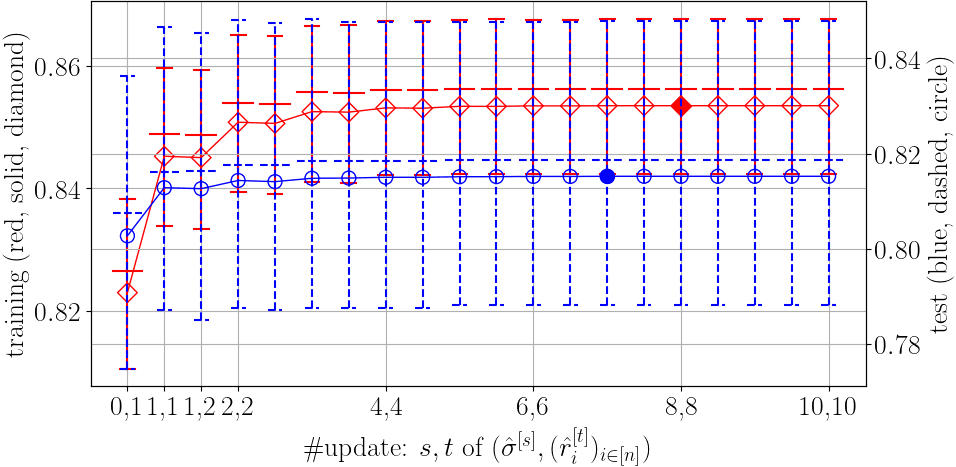}}\\
&\rotatebox{90}{\tiny\,~~\,$100$}&
\CF{\includegraphics[width=2.0cm]{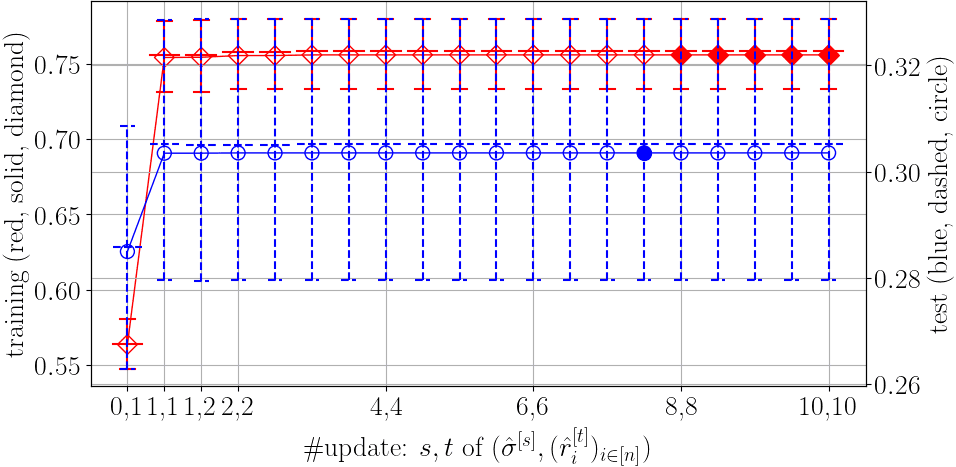}}&
\CF{\includegraphics[width=2.0cm]{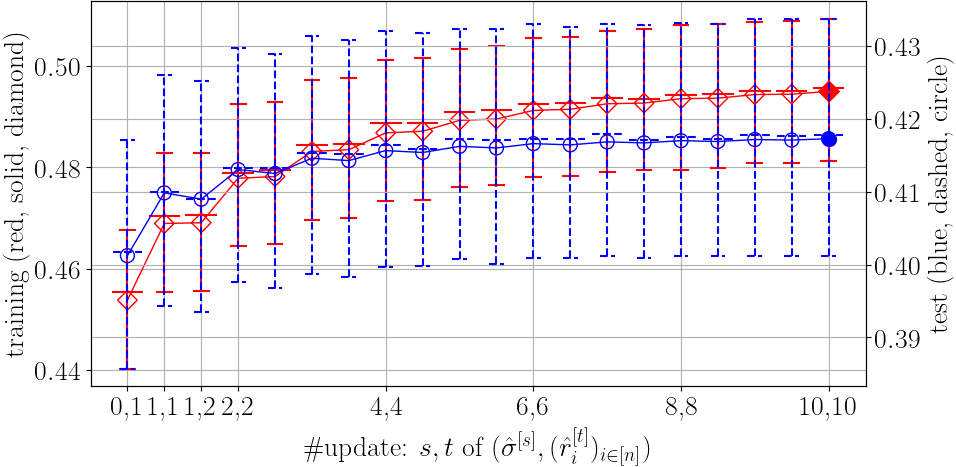}}&
\CF{\includegraphics[width=2.0cm]{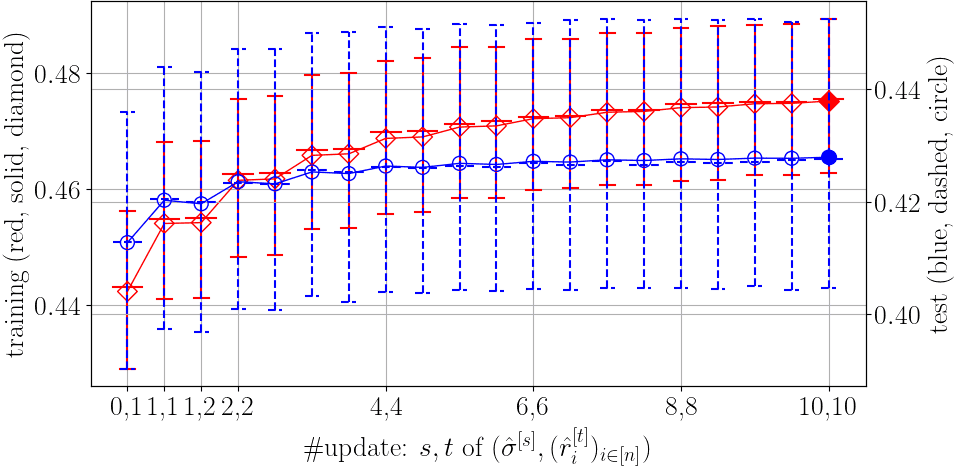}}&
\CF{\includegraphics[width=2.0cm]{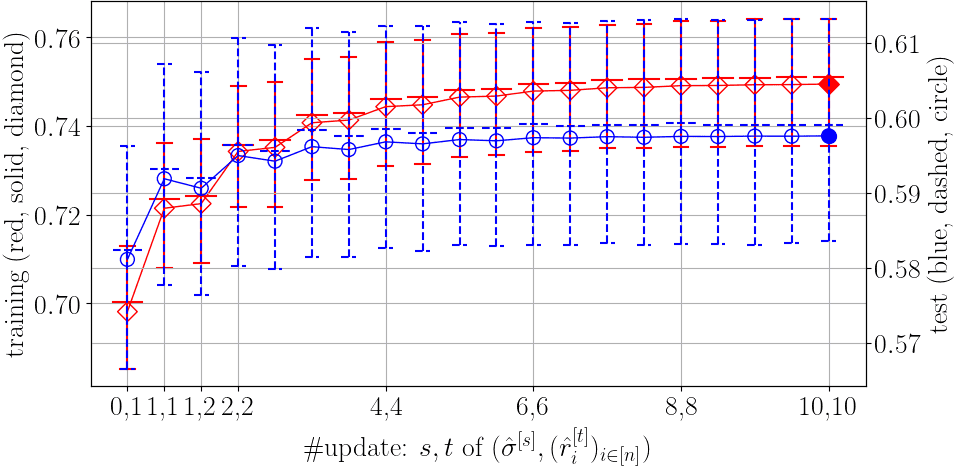}}&
\CF{\includegraphics[width=2.0cm]{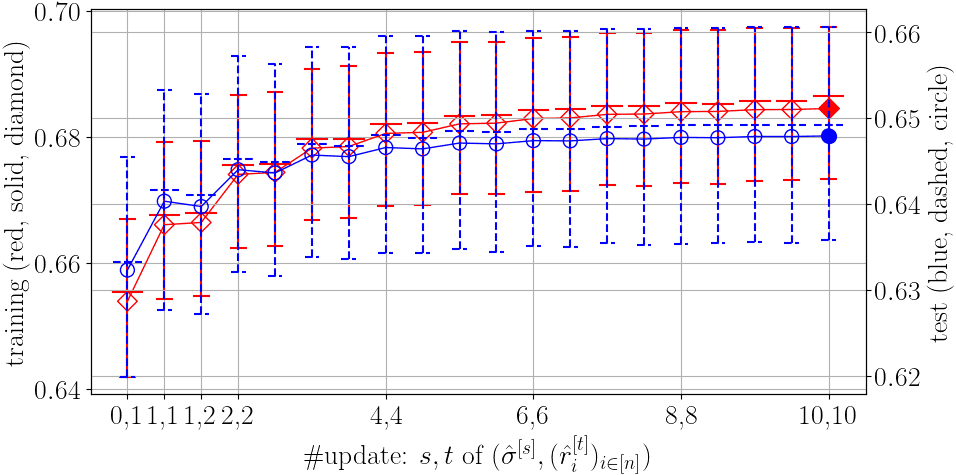}}&
\CF{\includegraphics[width=2.0cm]{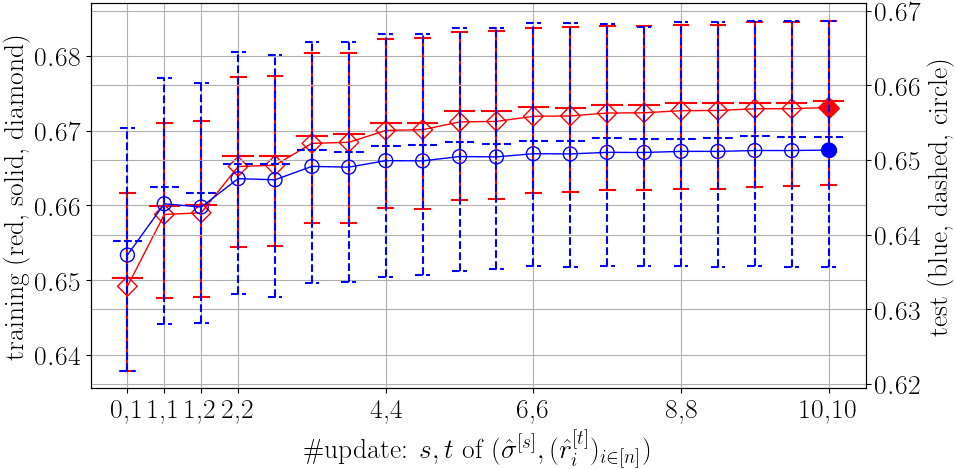}}&
\CF{\includegraphics[width=2.0cm]{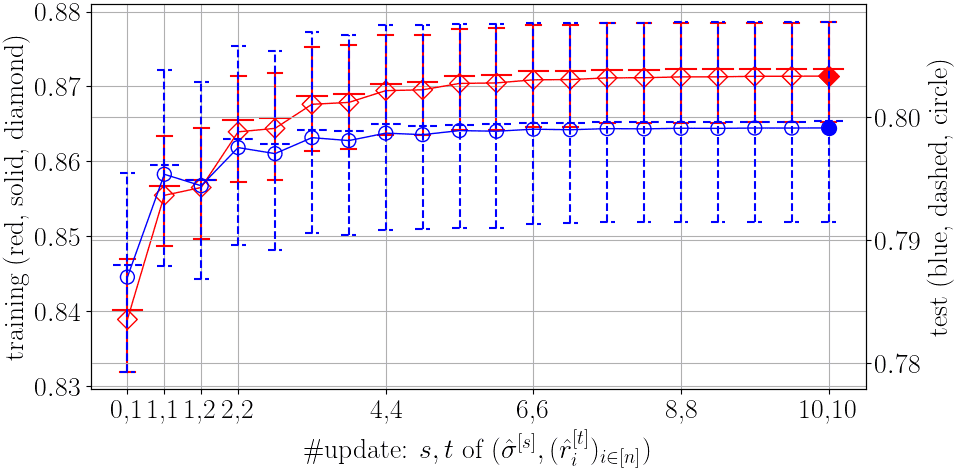}}&
\CF{\includegraphics[width=2.0cm]{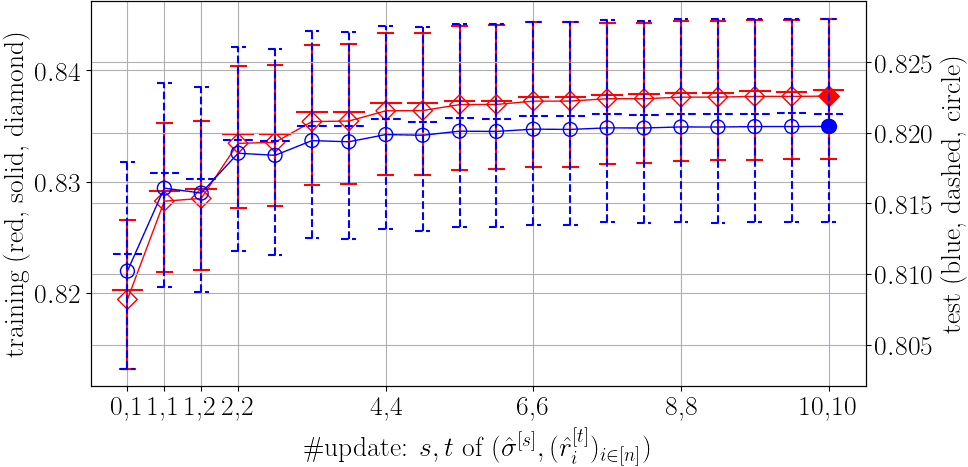}}&
\CF{\includegraphics[width=2.0cm]{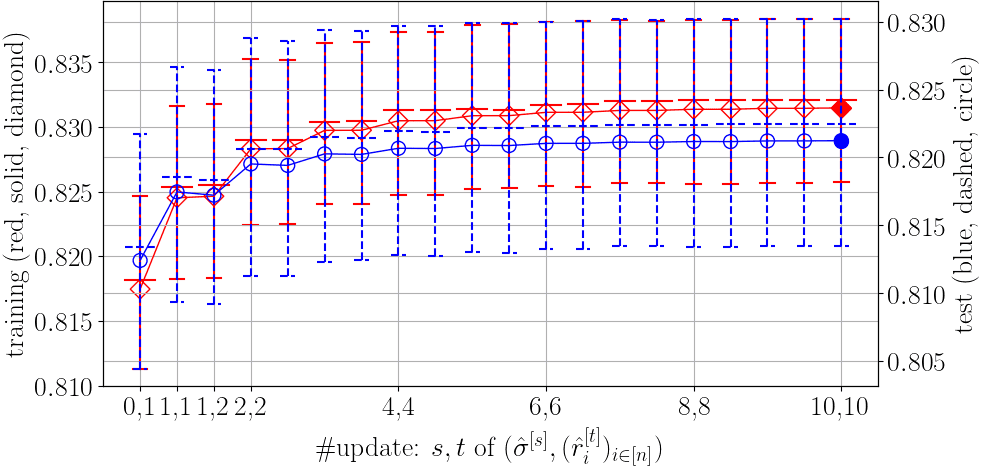}}\\
&\rotatebox{90}{\tiny\,~~\,$400$}&
\CF{\includegraphics[width=2.0cm]{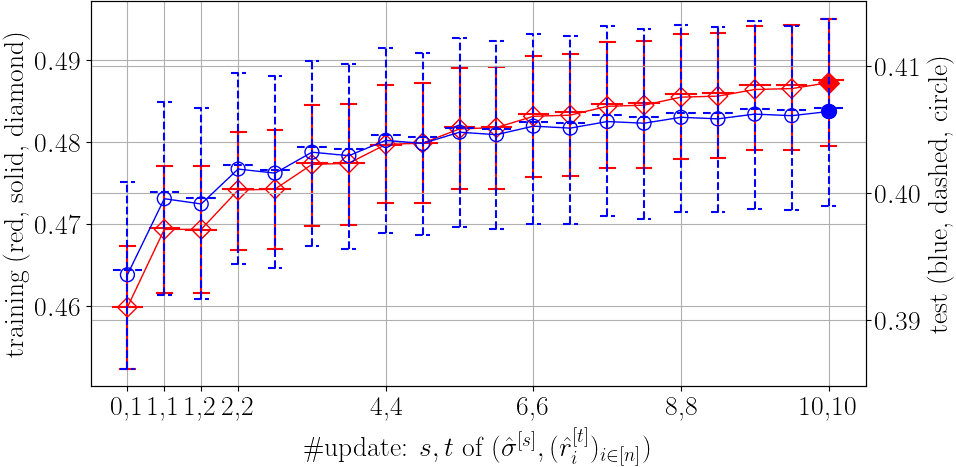}}&
\CF{\includegraphics[width=2.0cm]{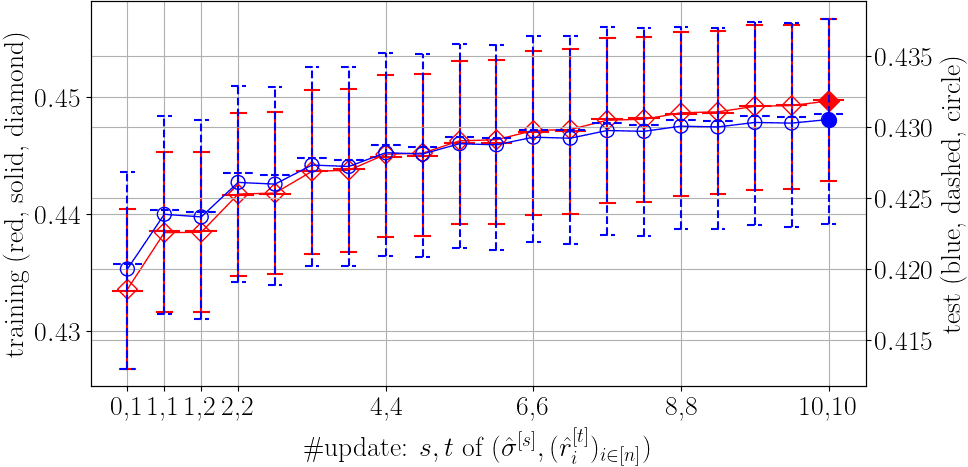}}&
\CF{\includegraphics[width=2.0cm]{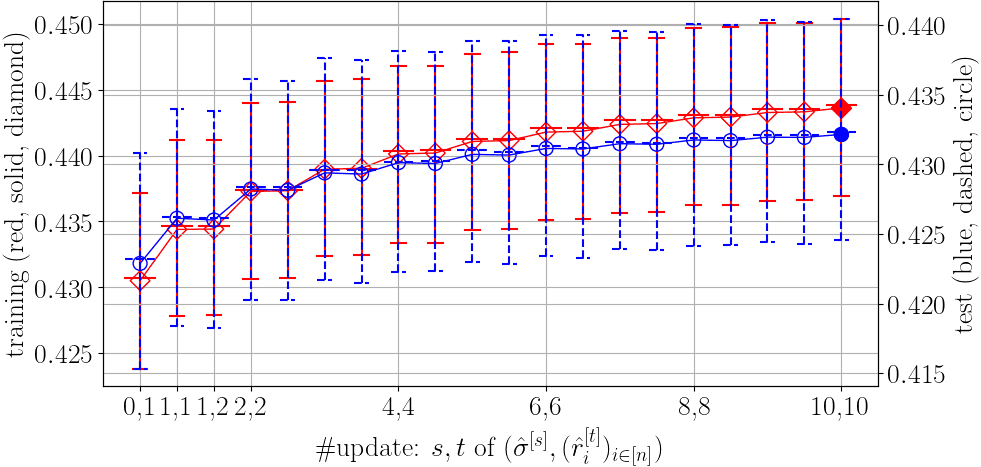}}&
\CF{\includegraphics[width=2.0cm]{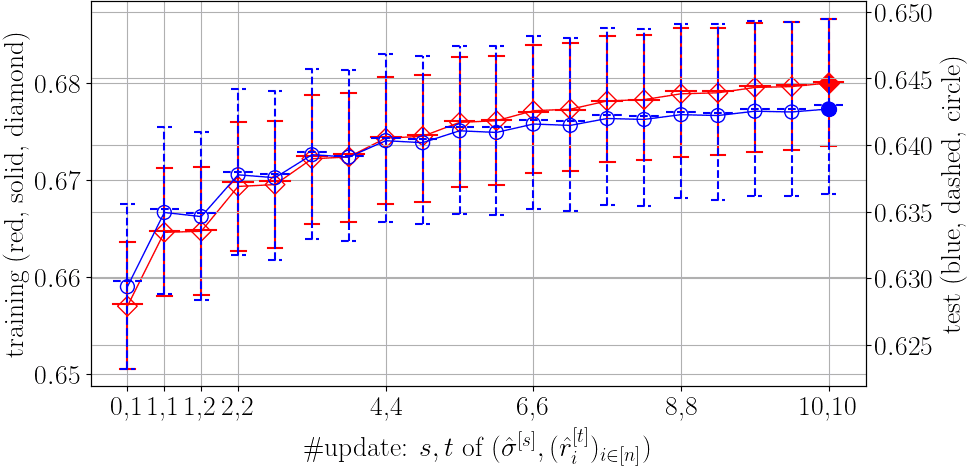}}&
\CF{\includegraphics[width=2.0cm]{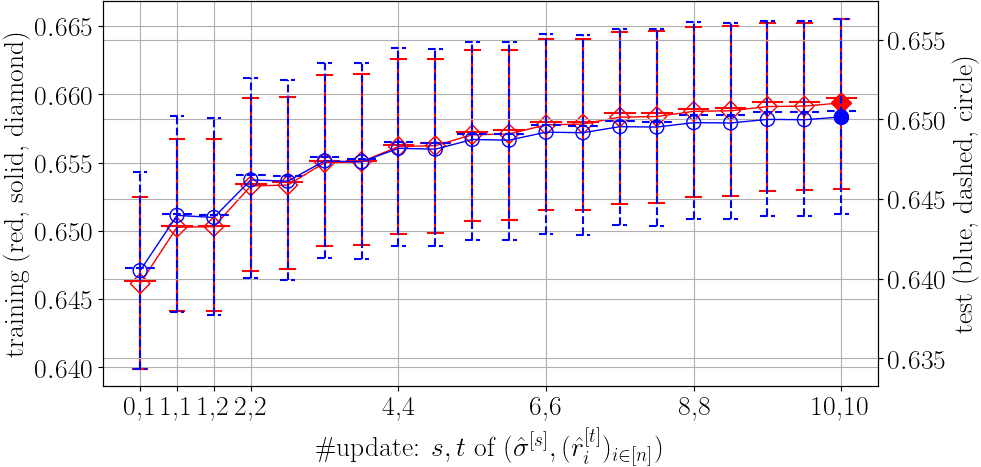}}&
\CF{\includegraphics[width=2.0cm]{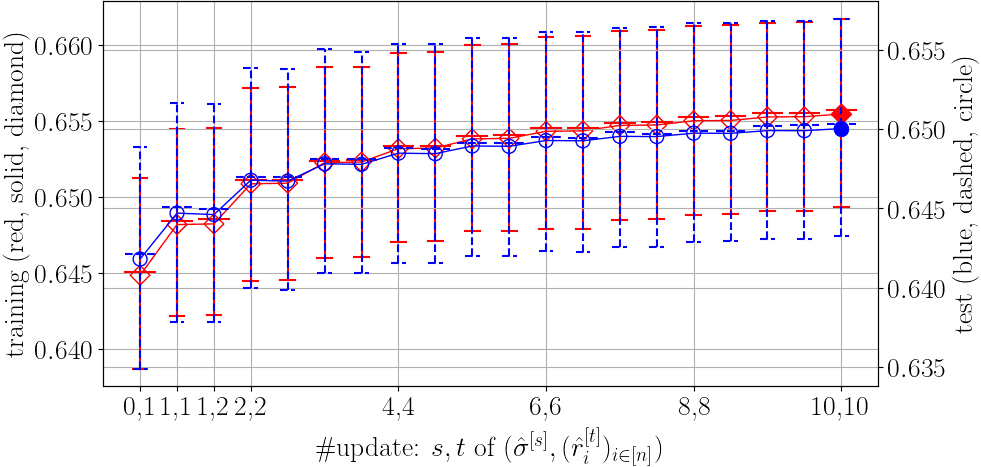}}&
\CF{\includegraphics[width=2.0cm]{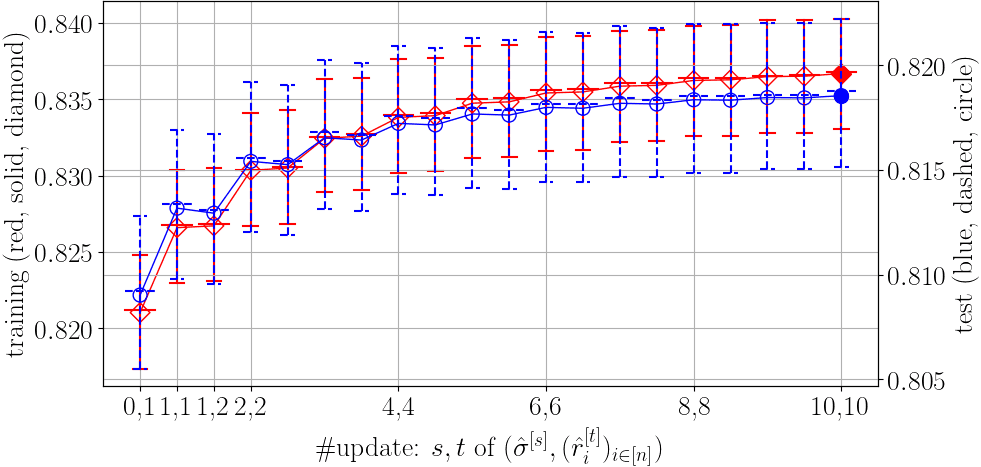}}&
\CF{\includegraphics[width=2.0cm]{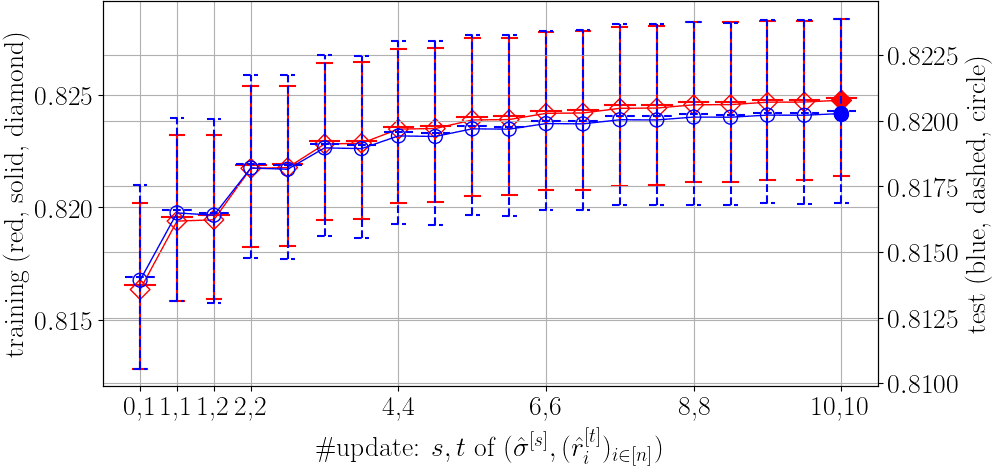}}&
\CF{\includegraphics[width=2.0cm]{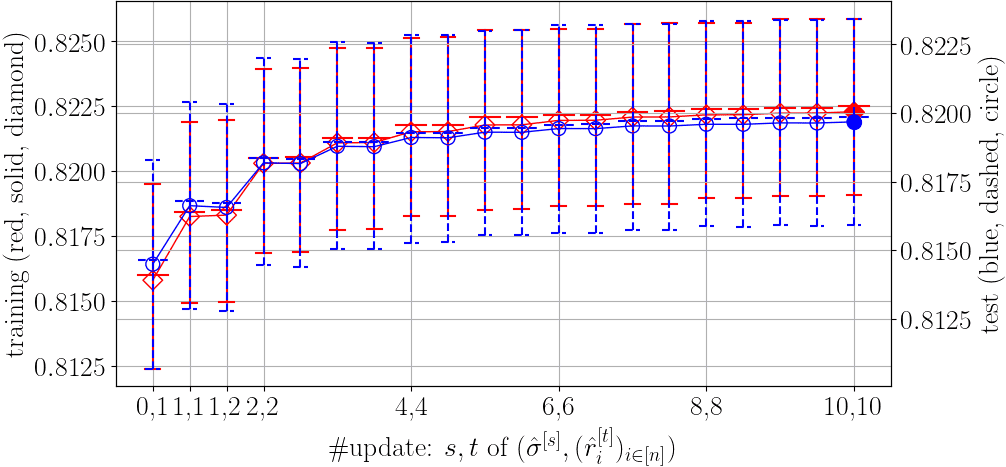}}
\\\midrule
\multirow{3}{*}[-2.5mm]{\rotatebox{90}{\tiny\eqref{eq:TIE}, $n=$}}
&\rotatebox{90}{\tiny\,~~~\,$25$}&
{\includegraphics[width=2.0cm]{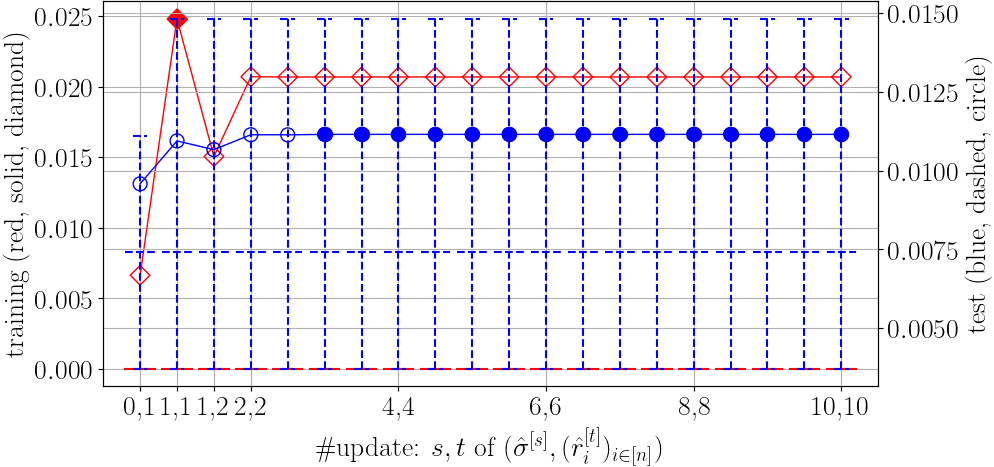}}&
{\includegraphics[width=2.0cm]{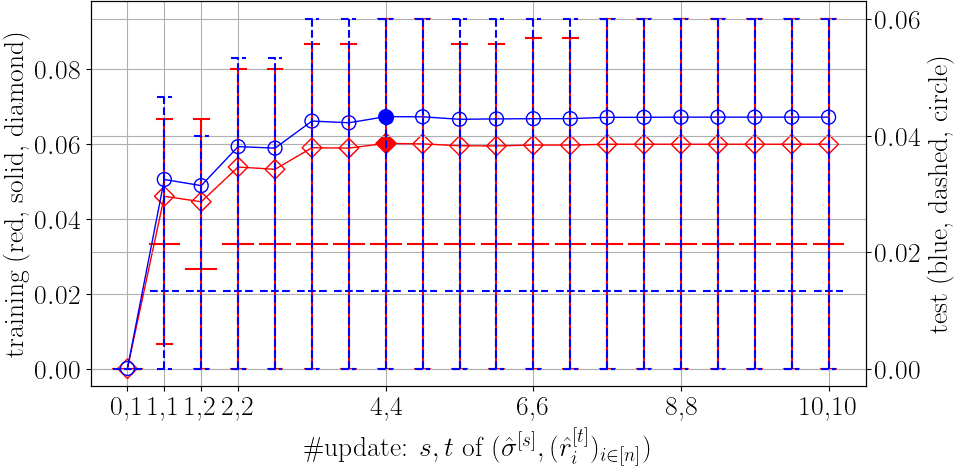}}&
{\includegraphics[width=2.0cm]{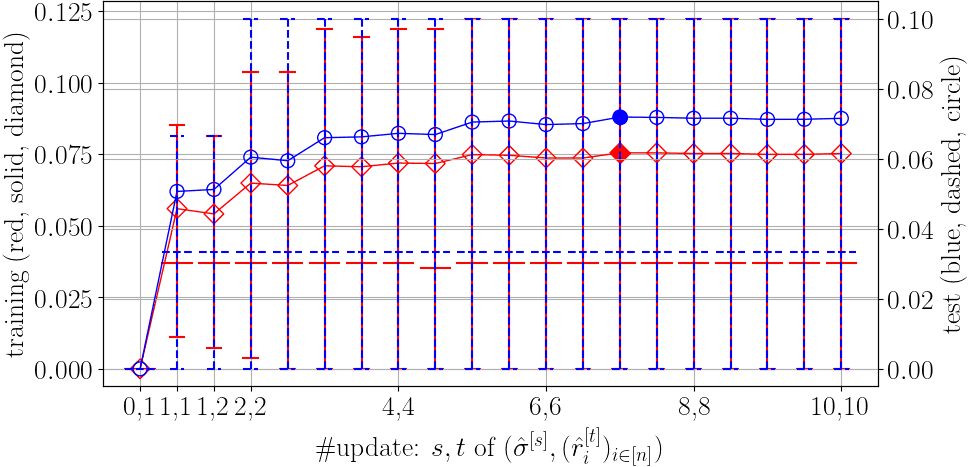}}&
{\includegraphics[width=2.0cm]{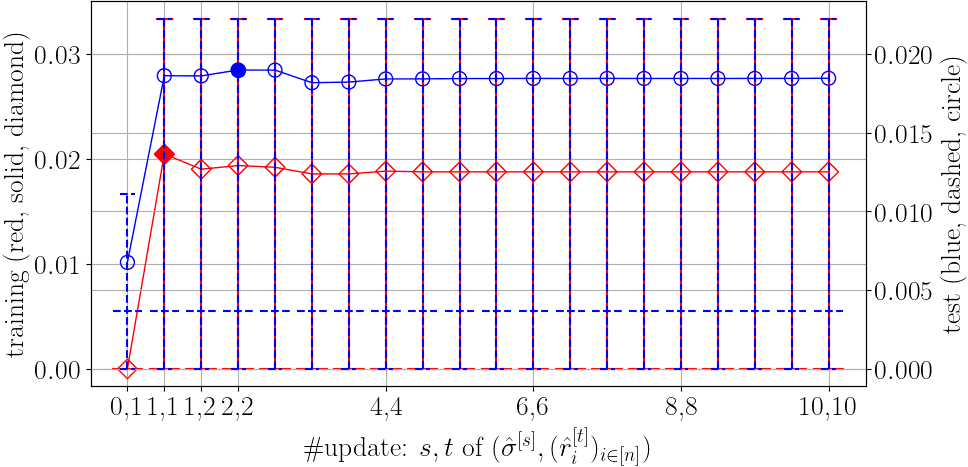}}&
{\includegraphics[width=2.0cm]{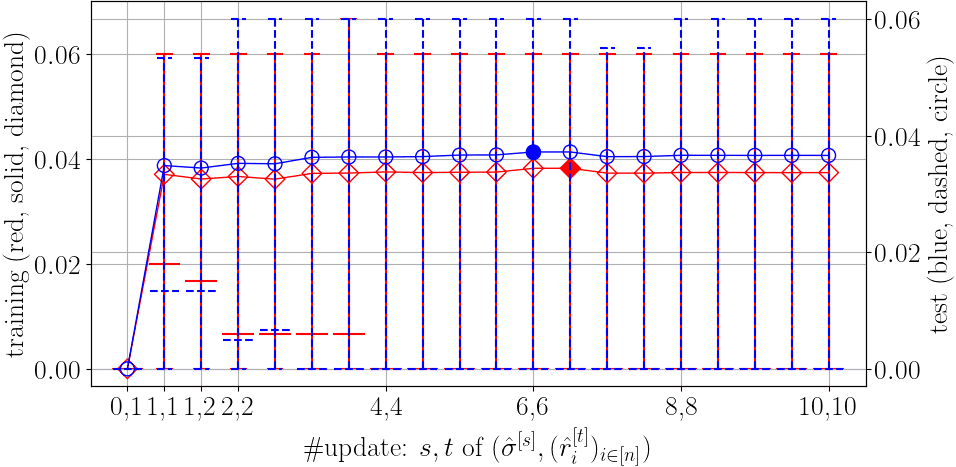}}&
{\includegraphics[width=2.0cm]{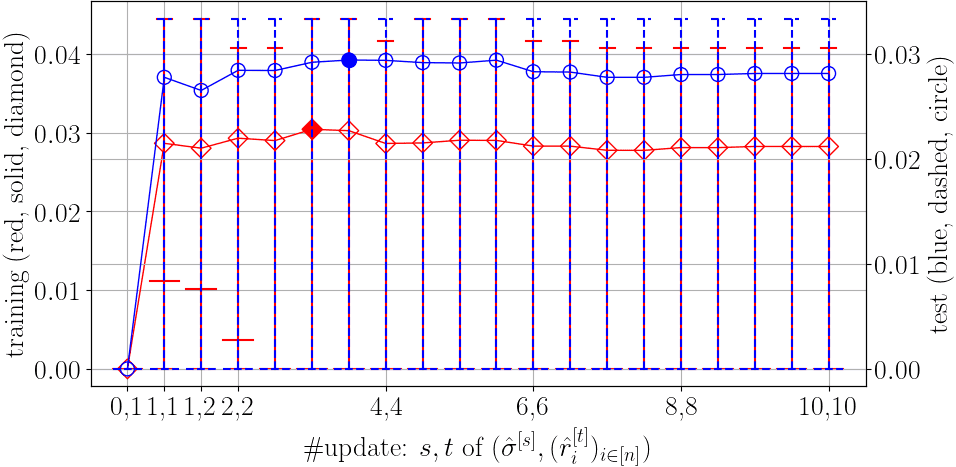}}&
{\includegraphics[width=2.0cm]{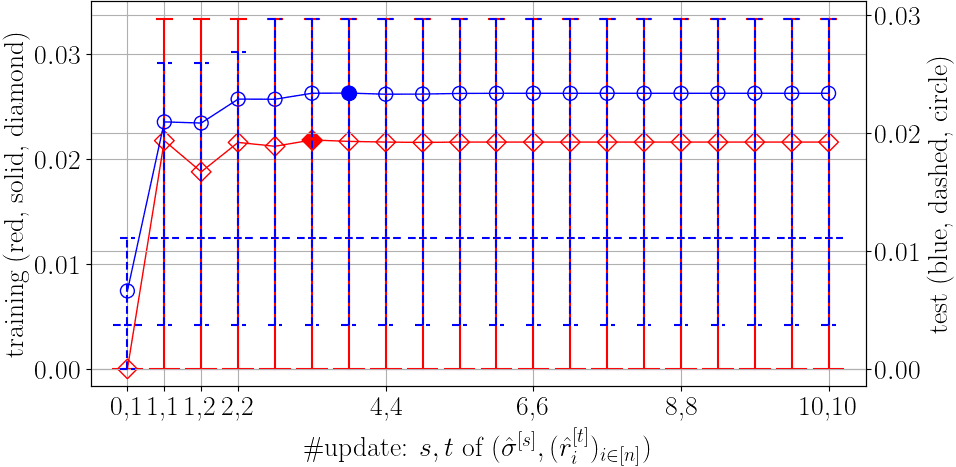}}&
{\includegraphics[width=2.0cm]{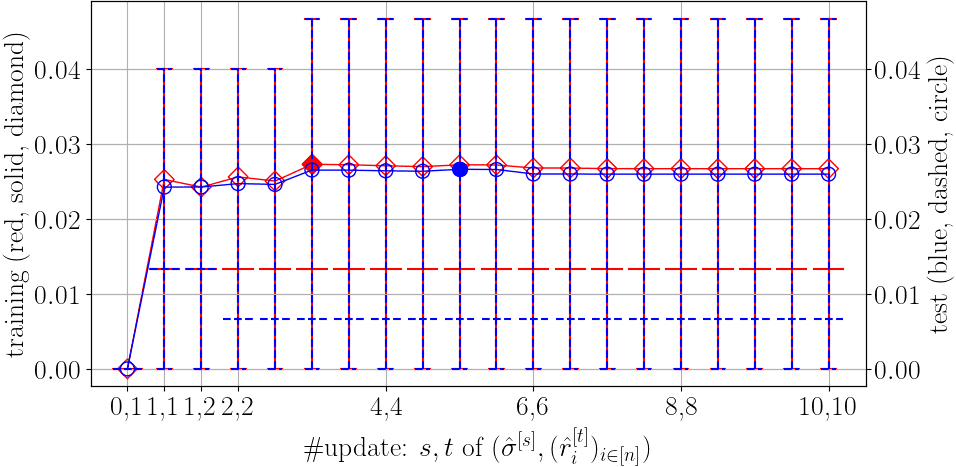}}&
{\includegraphics[width=2.0cm]{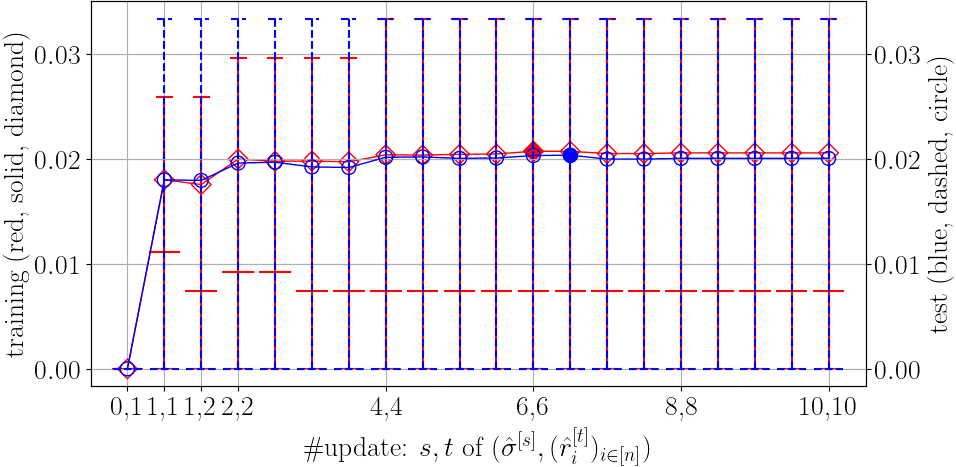}}\\
&\rotatebox{90}{\tiny\,~~\,$100$}&
{\includegraphics[width=2.0cm]{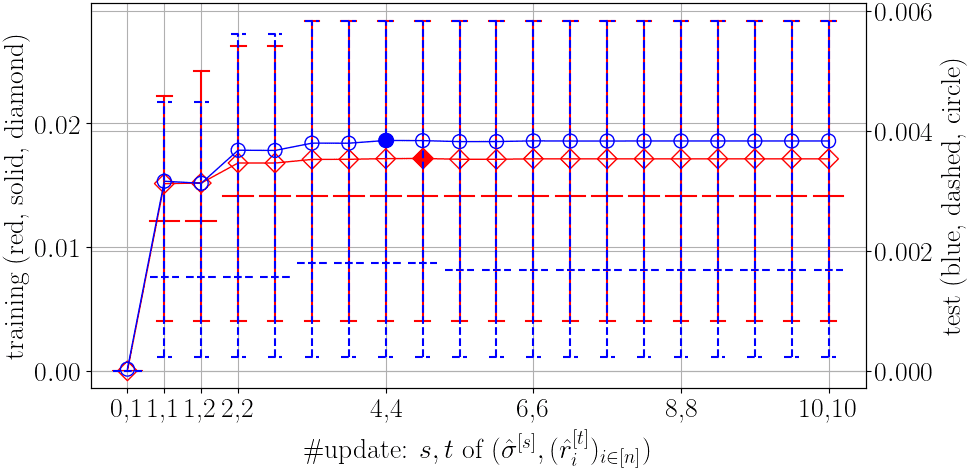}}&
{\includegraphics[width=2.0cm]{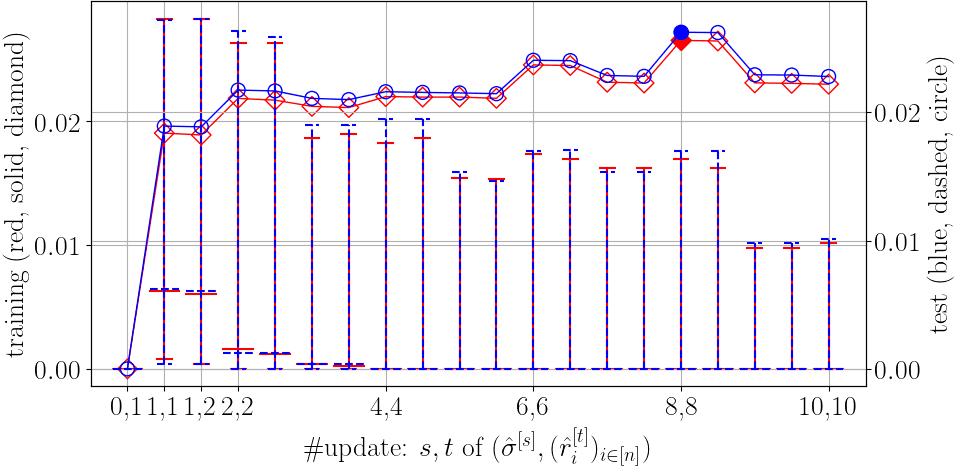}}&
{\includegraphics[width=2.0cm]{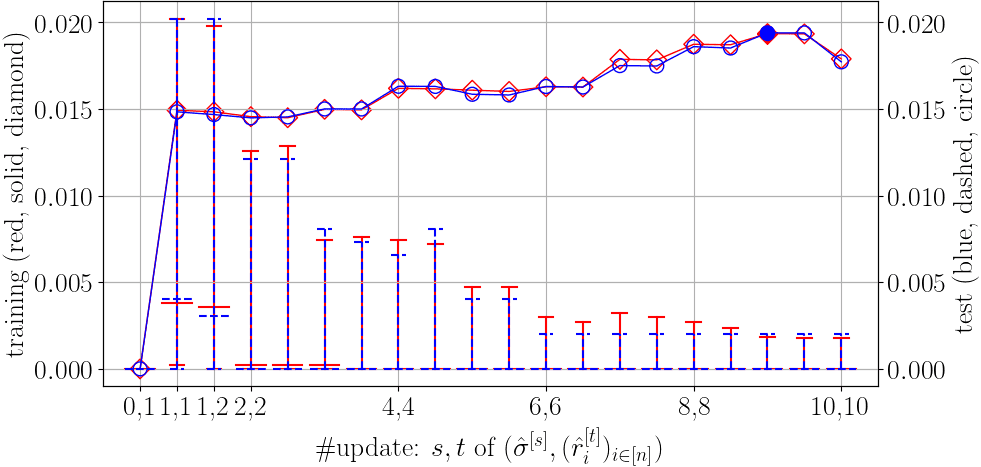}}&
{\includegraphics[width=2.0cm]{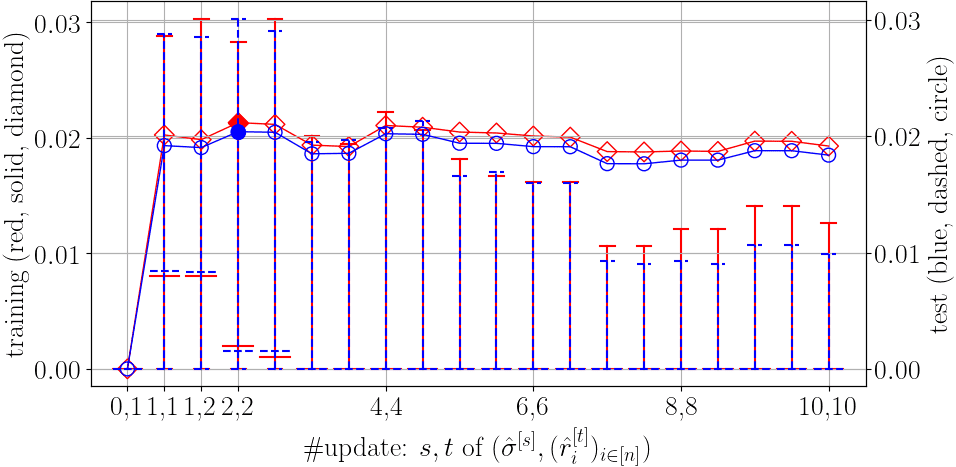}}&
{\includegraphics[width=2.0cm]{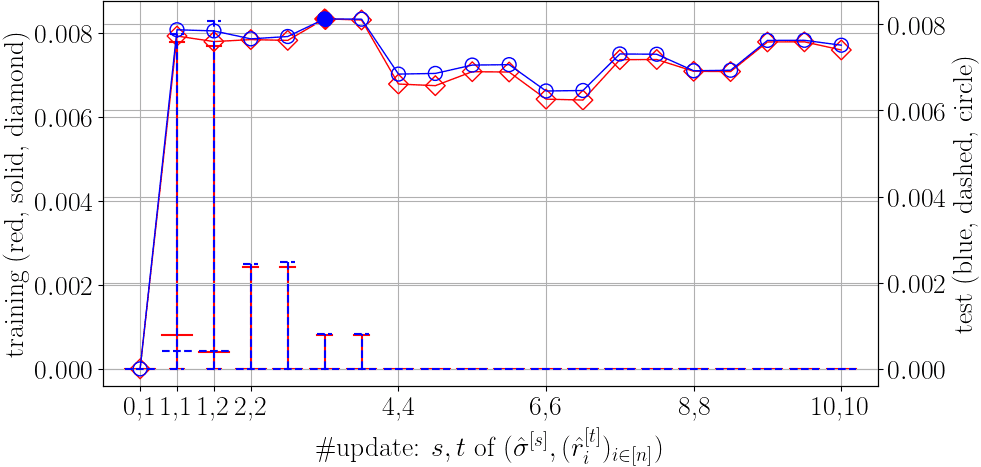}}&
{\includegraphics[width=2.0cm]{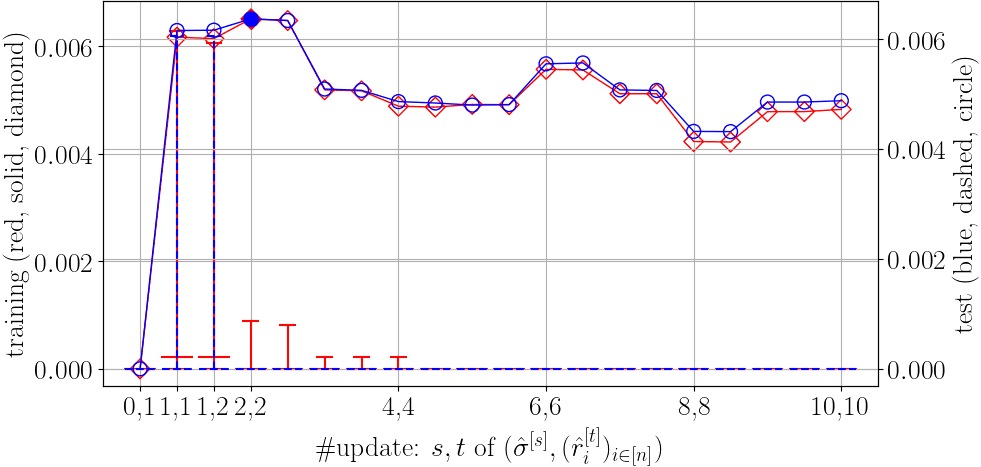}}&
{\includegraphics[width=2.0cm]{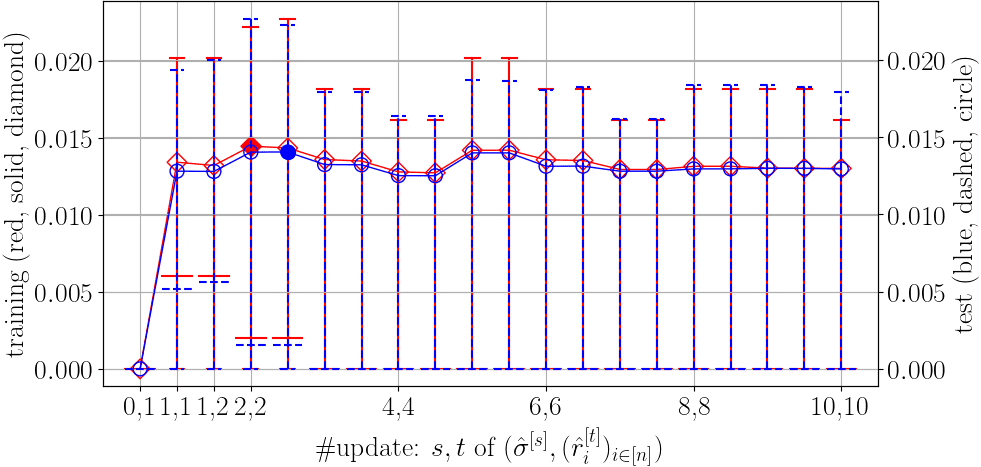}}&
{\includegraphics[width=2.0cm]{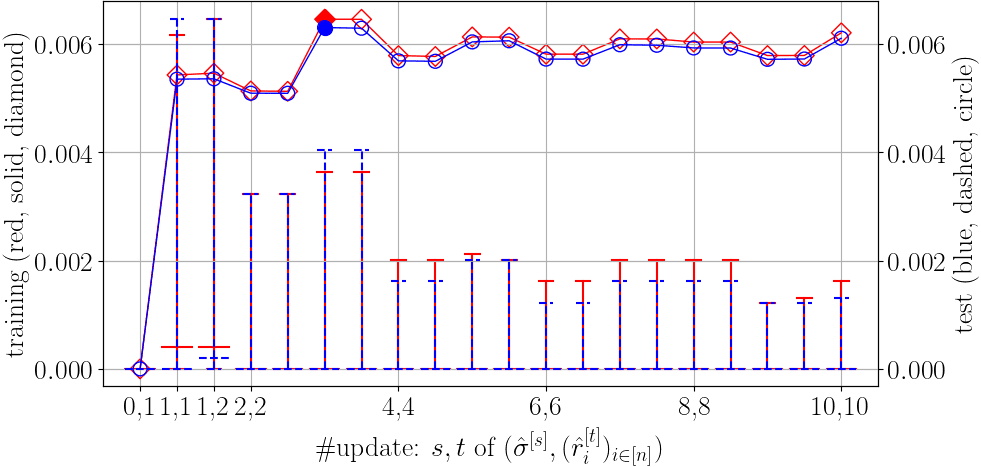}}&
{\includegraphics[width=2.0cm]{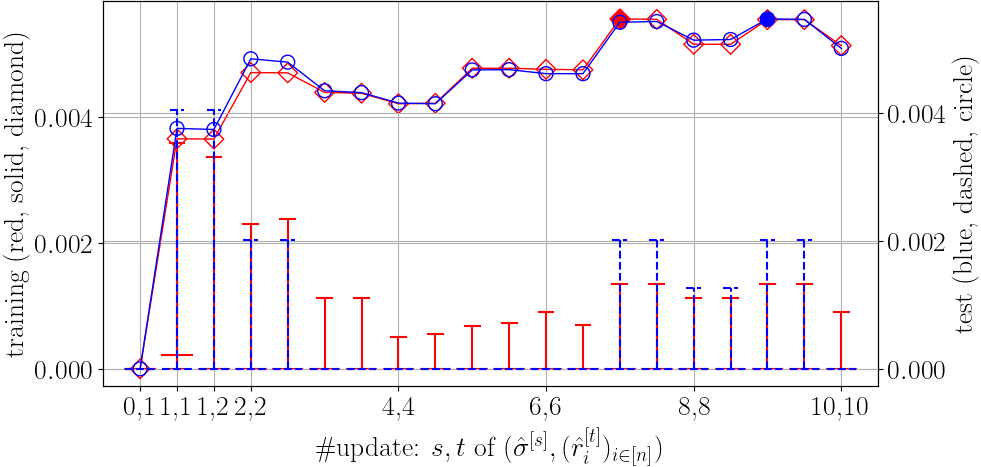}}\\
&\rotatebox{90}{\tiny\,~~\,$400$}&
{\includegraphics[width=2.0cm]{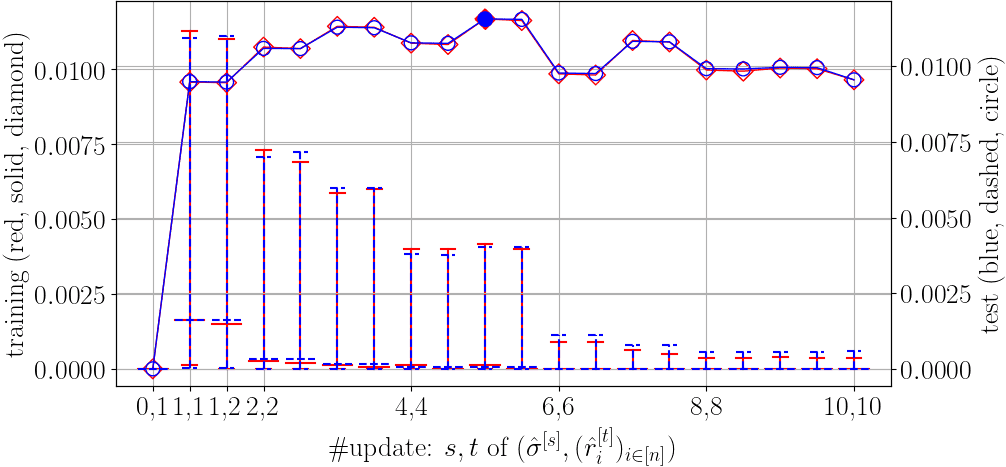}}&
{\includegraphics[width=2.0cm]{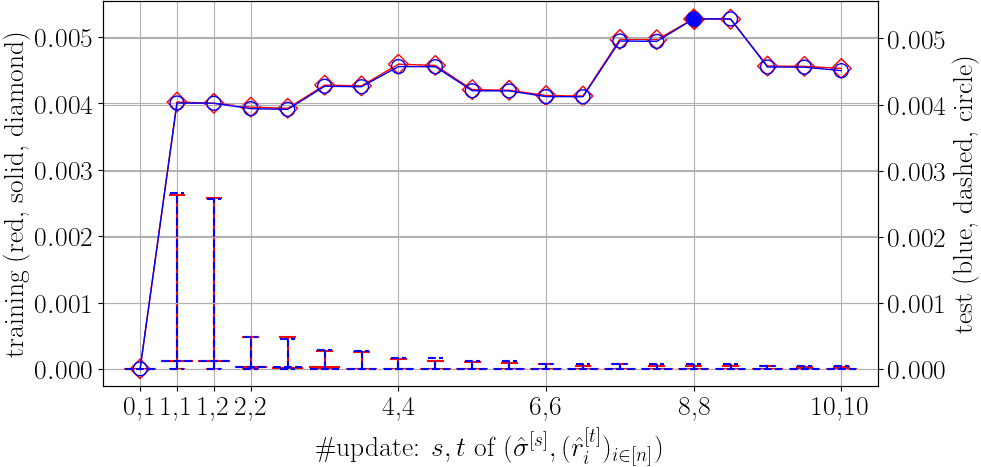}}&
{\includegraphics[width=2.0cm]{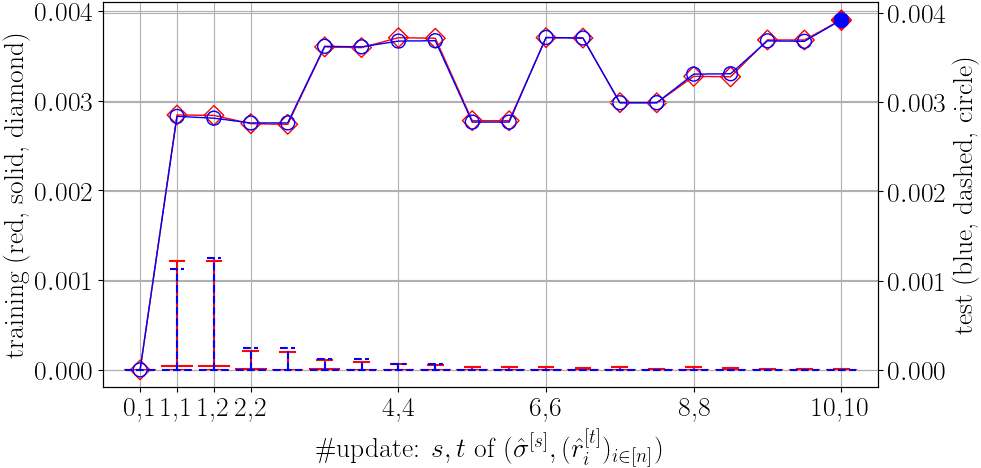}}&
{\includegraphics[width=2.0cm]{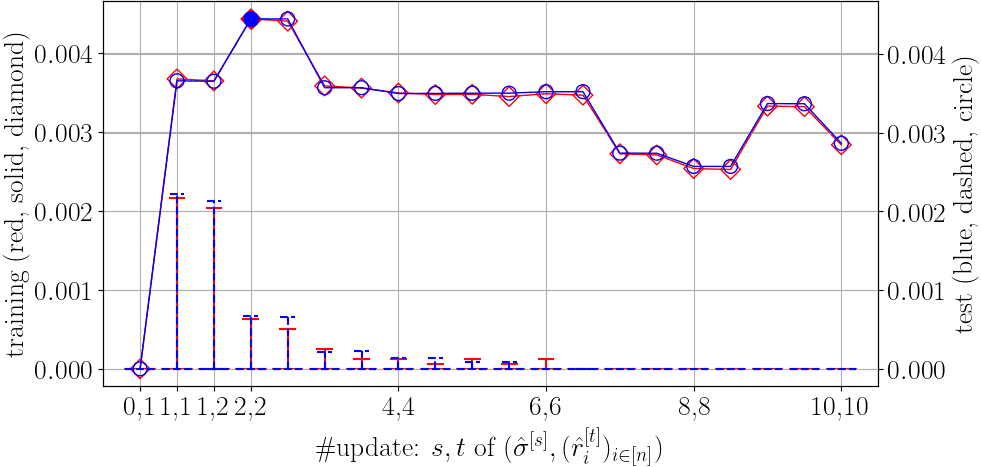}}&
{\includegraphics[width=2.0cm]{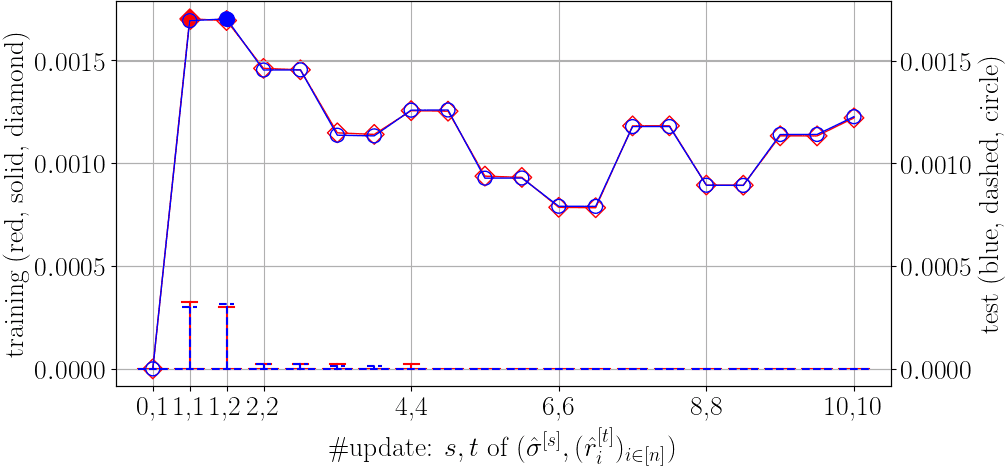}}&
{\includegraphics[width=2.0cm]{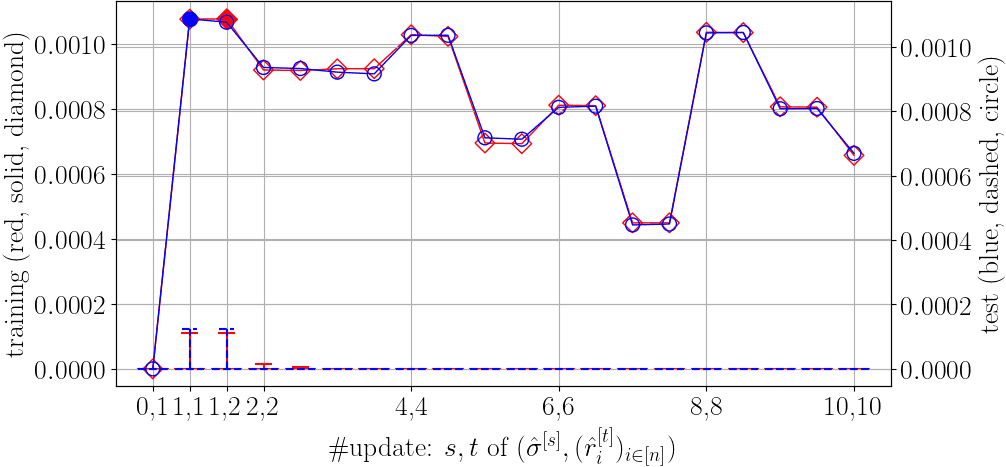}}&
{\includegraphics[width=2.0cm]{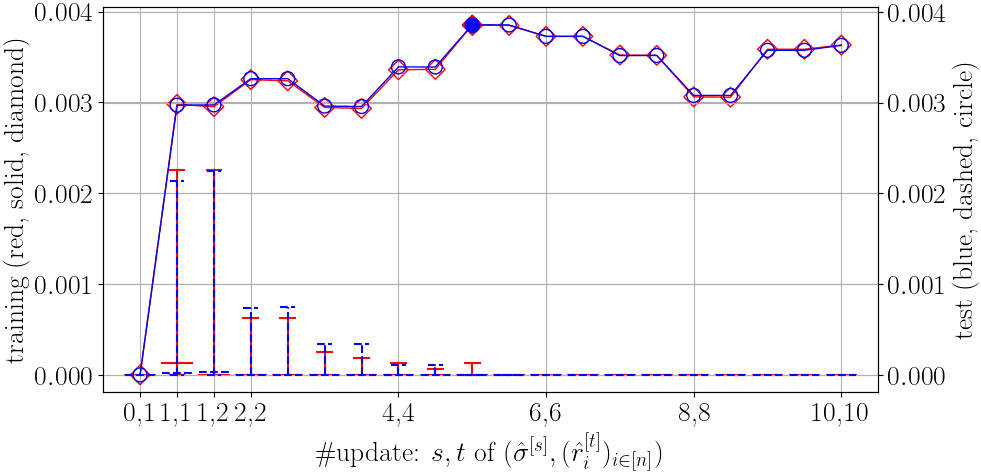}}&
{\includegraphics[width=2.0cm]{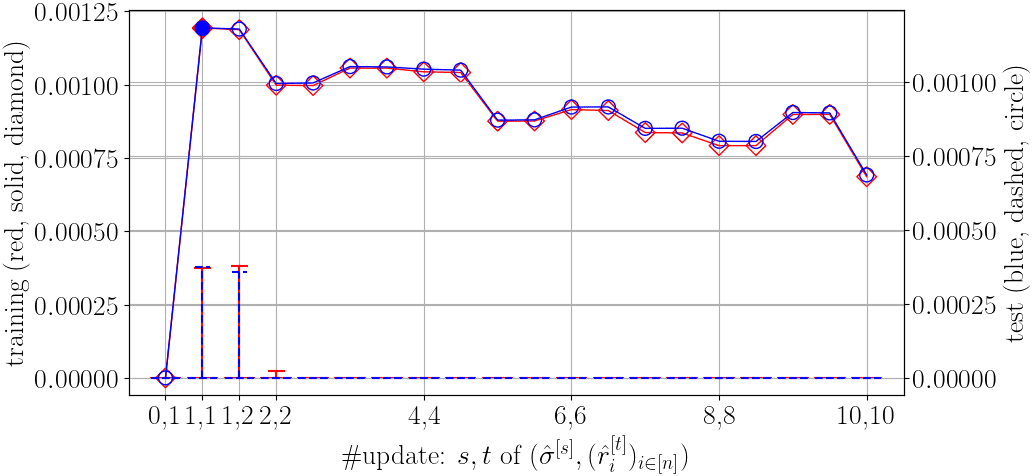}}&
{\includegraphics[width=2.0cm]{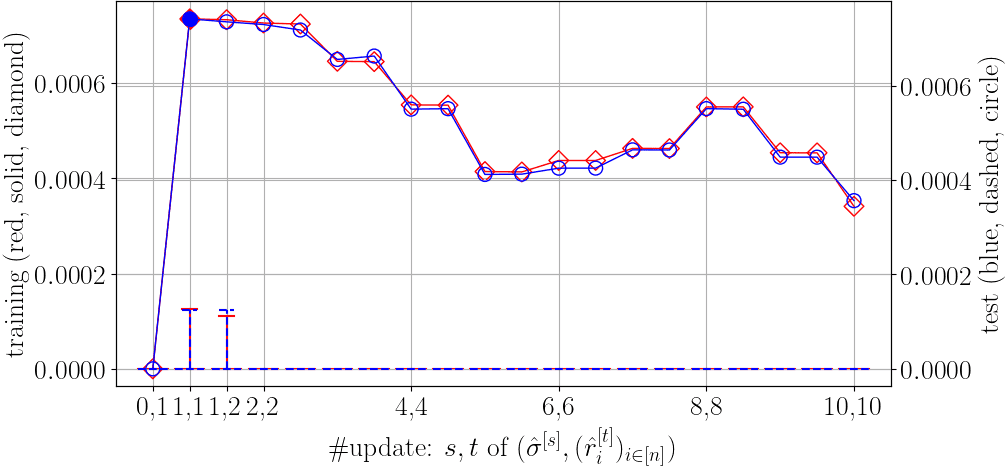}}
\end{tabular}
\caption{%
Part of results of synthetic data experiments, Procedure 1 in Section~\ref{sec:Synthetic} and Appendix~\ref{sec:Specified}:
For Logistic-$N$ synthetic data with $N=1,5,25$ (left to right),
mean (marker) and 0.25, 0.5, and 0.75 quantiles (lower, middle, and upper bars) of 
1000 trial training (red, solid, diamond) and test (blue, dashed, circle) evaluation of 
WPP error \eqref{eq:WPE} with the squared loss $\phi=\phi_\sq$, 
Kendall's Tau \eqref{eq:Kendall}, and tie rate \eqref{eq:TIE} (top to bottom)
for the isotonic Bradley-Terry model learned with the squared loss $\phi=\phi_\sq$.
Smaller \eqref{eq:WPE}, or larger \eqref{eq:Kendall} indicates a better model.
The marker for the best model and model with the most ties was filled in, 
and the outer frame of the figure was highlighted in gray
if the best model was significantly better than the Bradley-Terry model
with respect to Mann-Whitney U test of the significance level 0.05.}
\label{fig:Logistic-SQ}
\end{sidewaysfigure}
\begin{sidewaysfigure}
\centering%
\renewcommand{\arraystretch}{0.5}%
\renewcommand{\tabcolsep}{0.5pt}%
\begin{tabular}{cc|ccc|ccc|ccc}%
&&\multicolumn{3}{c|}{\tiny$N=1$, $|D_\tra|:|D_\tes|=$}&\multicolumn{3}{c|}{\tiny$N=5$, $|D_\tra|:|D_\tes|=$}&\multicolumn{3}{c}{\tiny$N=25$, $|D_\tra|:|D_\tes|=$}\\
&&{\tiny$1:9$}&{\tiny$5:5$}&{\tiny$9:1$}&{\tiny$1:9$}&{\tiny$5:5$}&{\tiny$9:1$}&{\tiny$1:9$}&{\tiny$5:5$}&{\tiny$9:1$}\\
\midrule
\multirow{3}{*}[-3.4mm]{\rotatebox{90}{\tiny $n=$}}
&\rotatebox{90}{\tiny\,~~~~\,$25$}&
{\includegraphics[width=2.0cm]{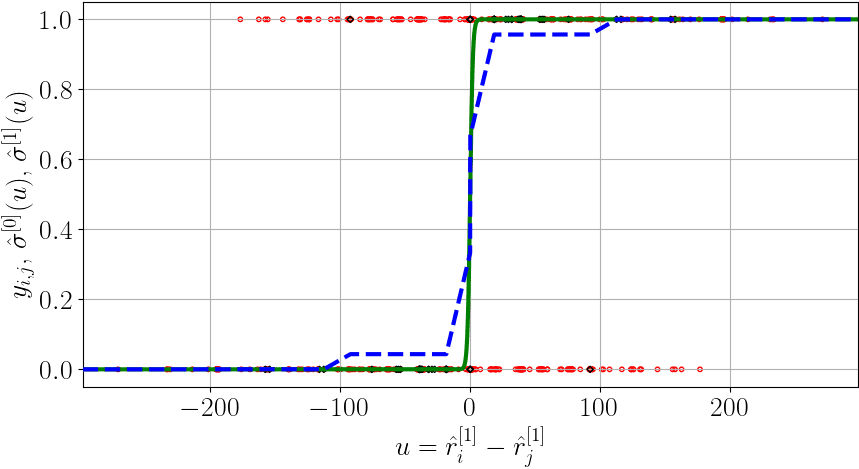}}&
{\includegraphics[width=2.0cm]{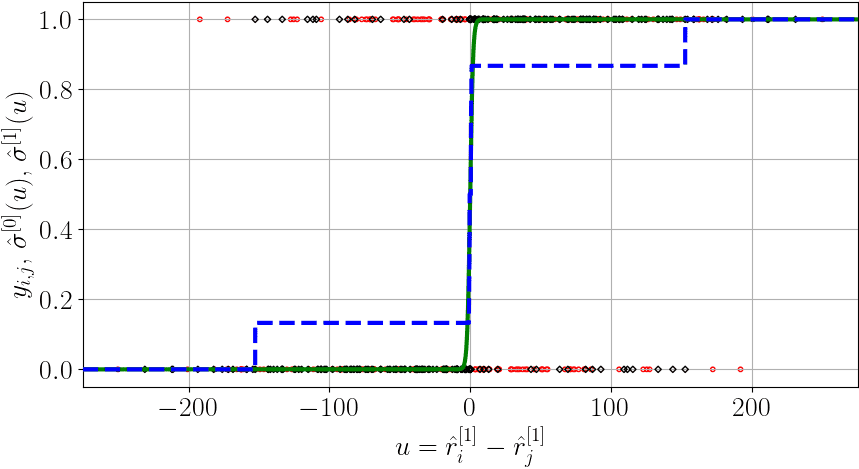}}&
{\includegraphics[width=2.0cm]{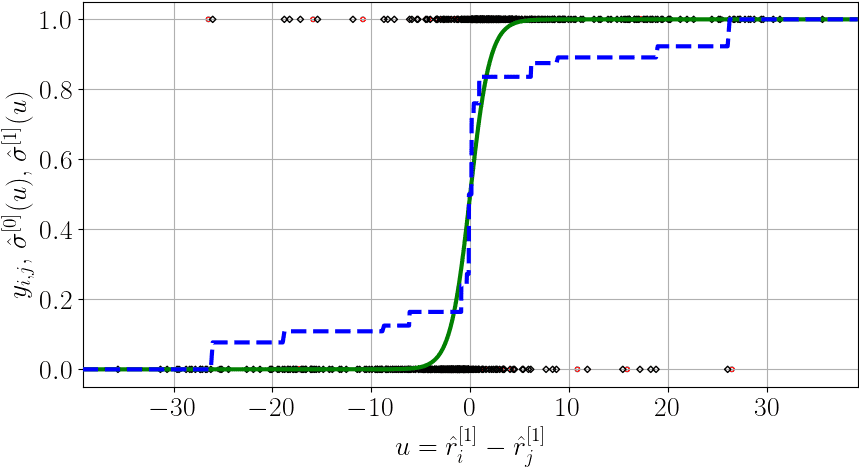}}&
{\includegraphics[width=2.0cm]{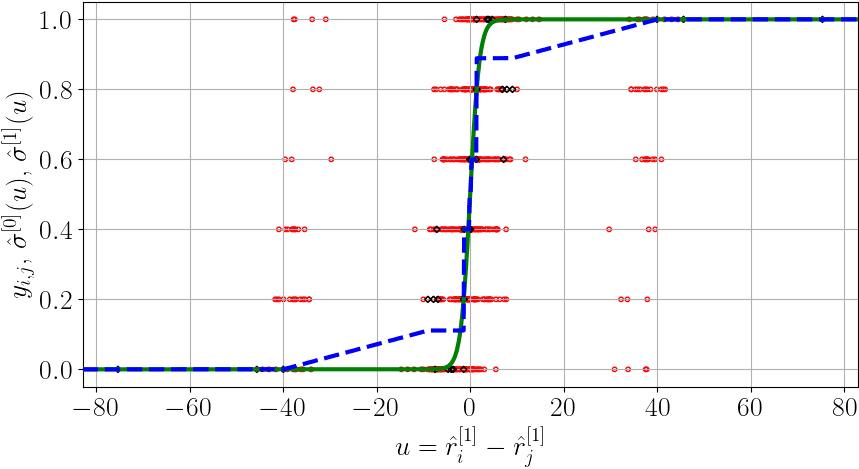}}&
{\includegraphics[width=2.0cm]{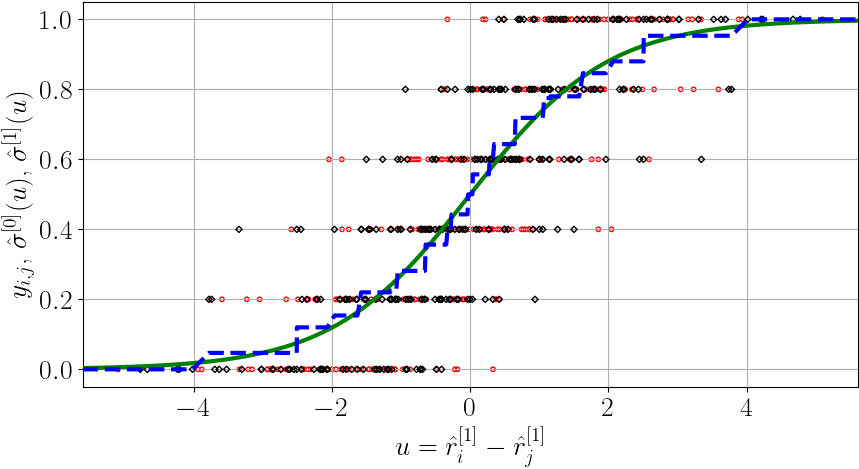}}&
{\includegraphics[width=2.0cm]{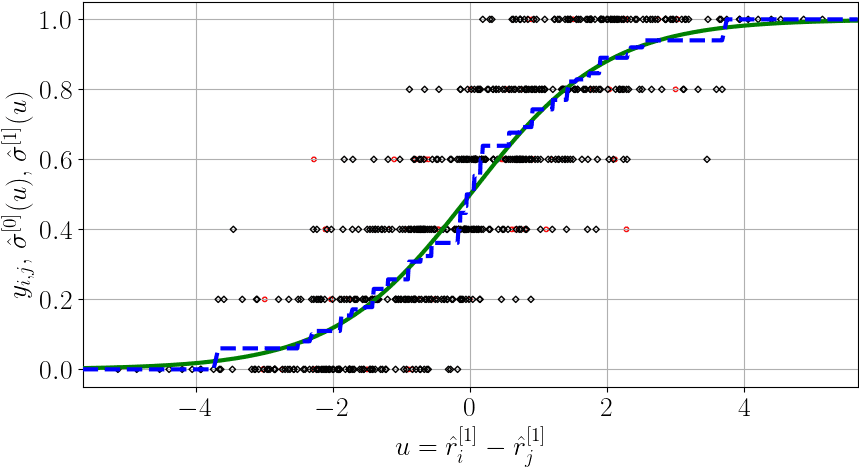}}&
{\includegraphics[width=2.0cm]{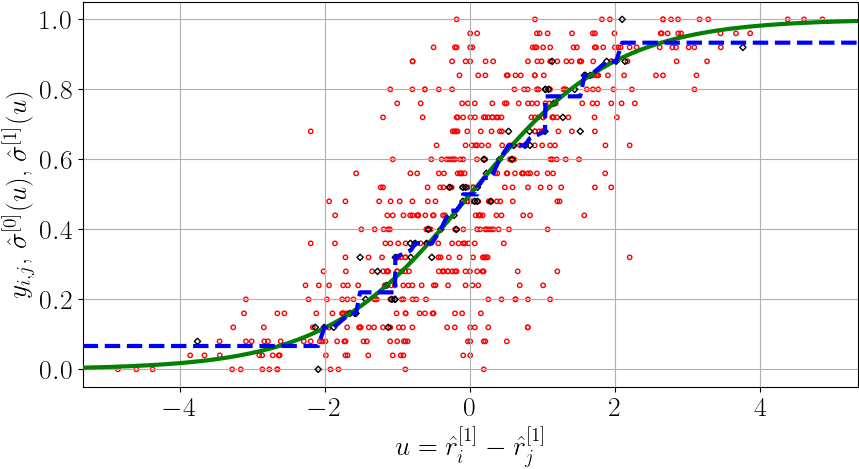}}&
{\includegraphics[width=2.0cm]{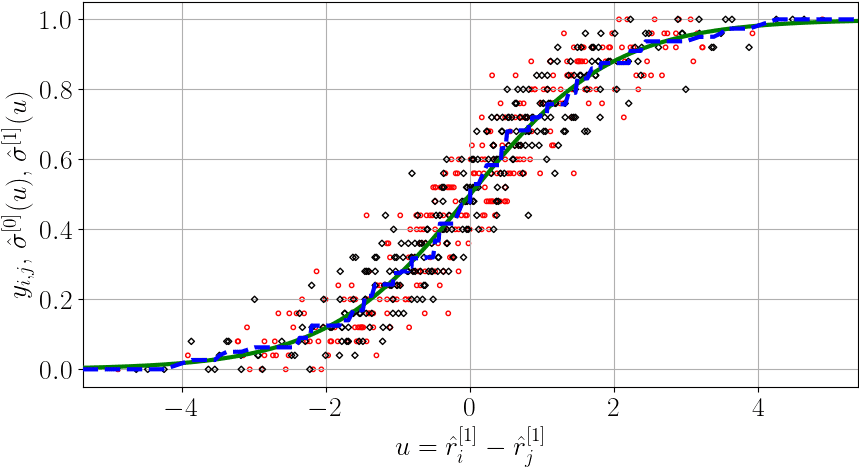}}&
{\includegraphics[width=2.0cm]{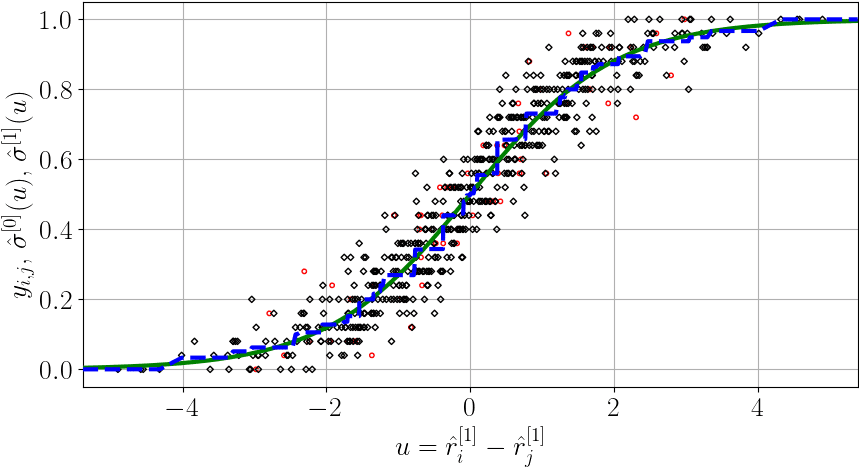}}\\
&\rotatebox{90}{\tiny\,~~~\,$100$}&
{\includegraphics[width=2.0cm]{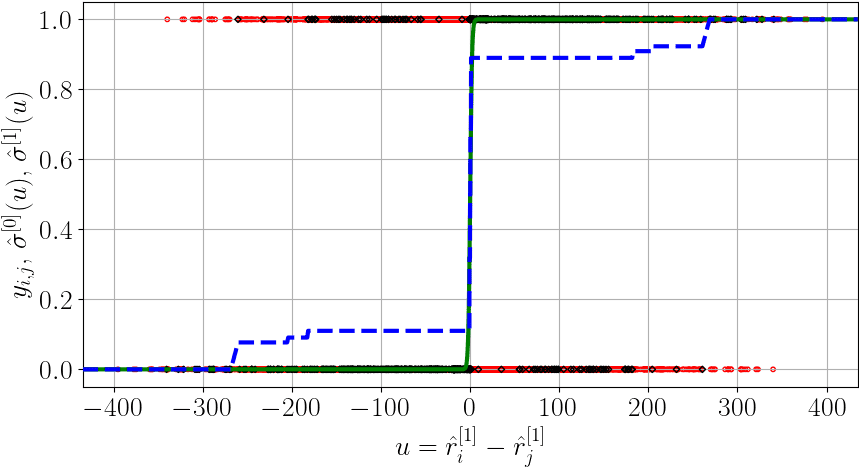}}&
{\includegraphics[width=2.0cm]{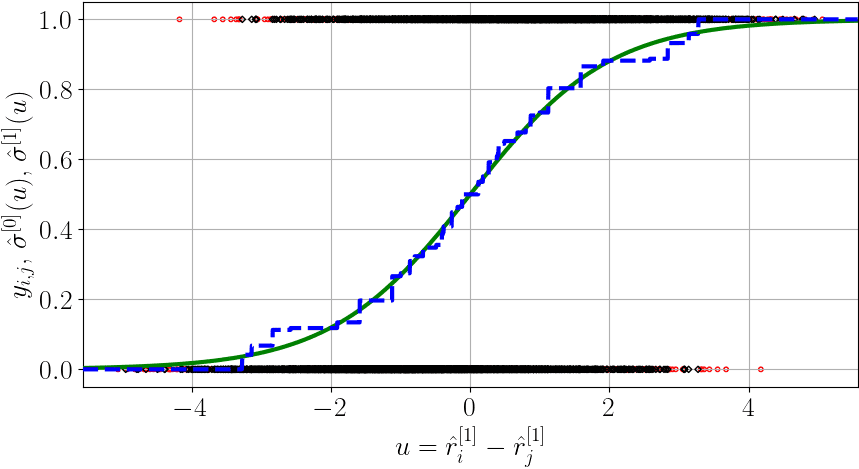}}&
{\includegraphics[width=2.0cm]{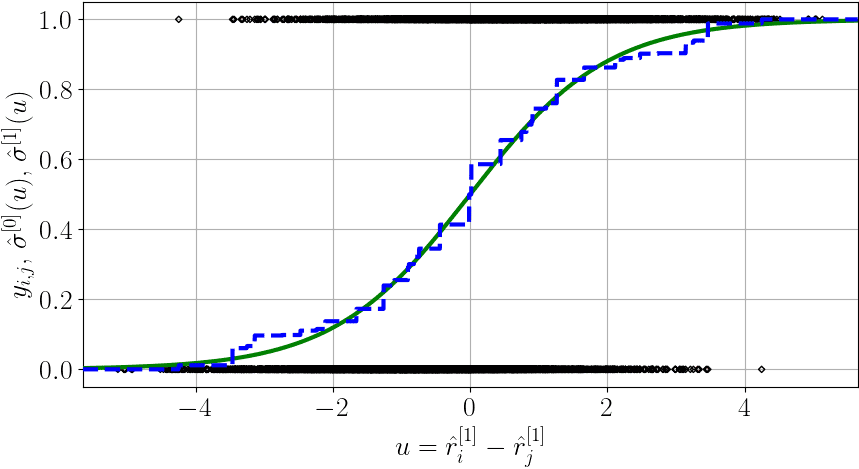}}&
{\includegraphics[width=2.0cm]{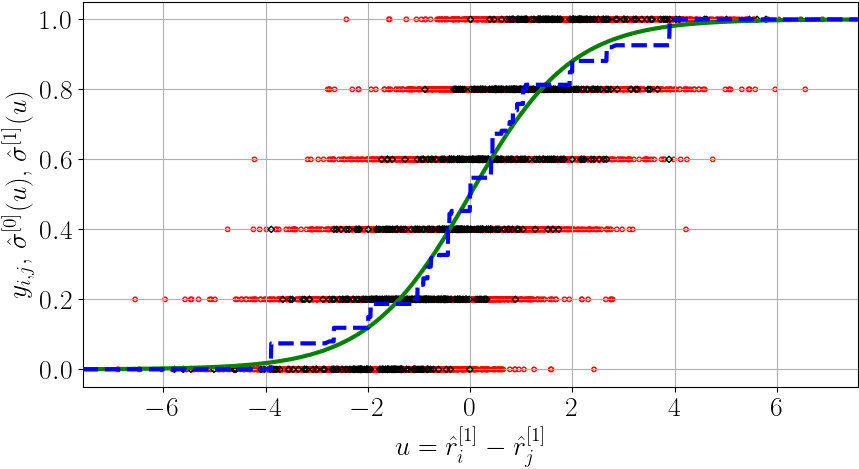}}&
{\includegraphics[width=2.0cm]{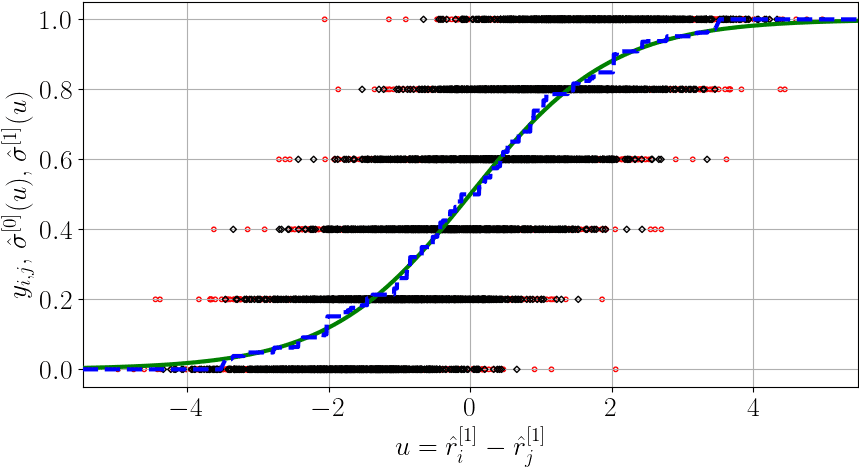}}&
{\includegraphics[width=2.0cm]{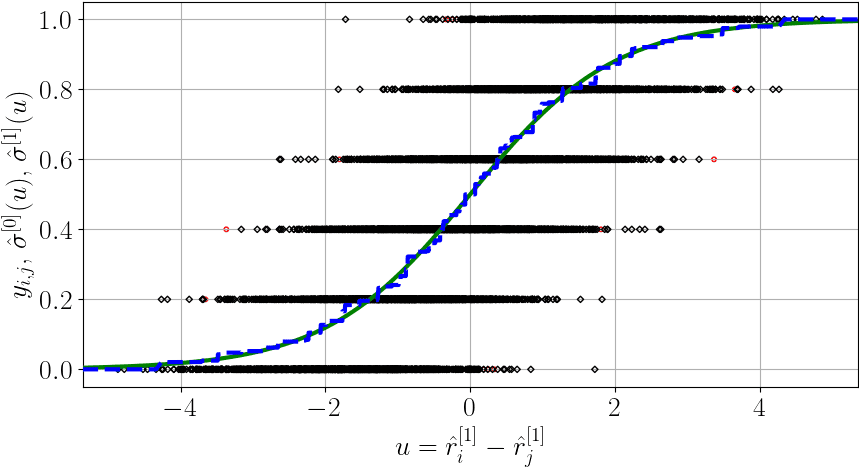}}&
{\includegraphics[width=2.0cm]{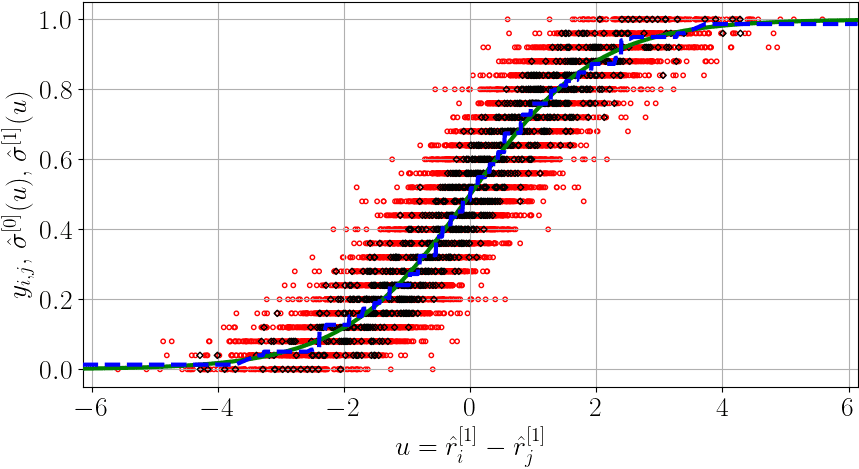}}&
{\includegraphics[width=2.0cm]{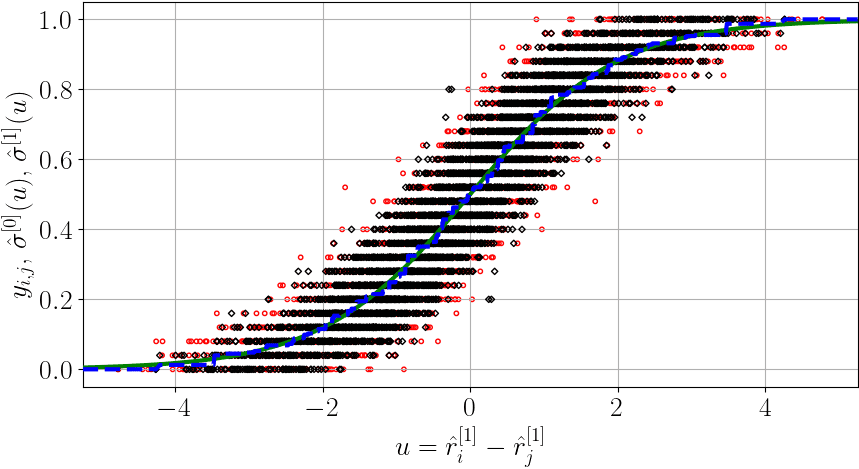}}&
{\includegraphics[width=2.0cm]{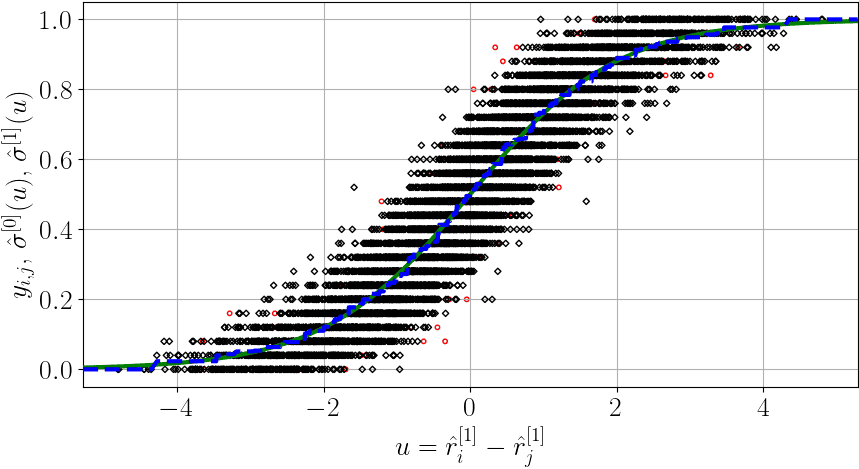}}\\
&\rotatebox{90}{\tiny\,~~~\,$400$}&
{\includegraphics[width=2.0cm]{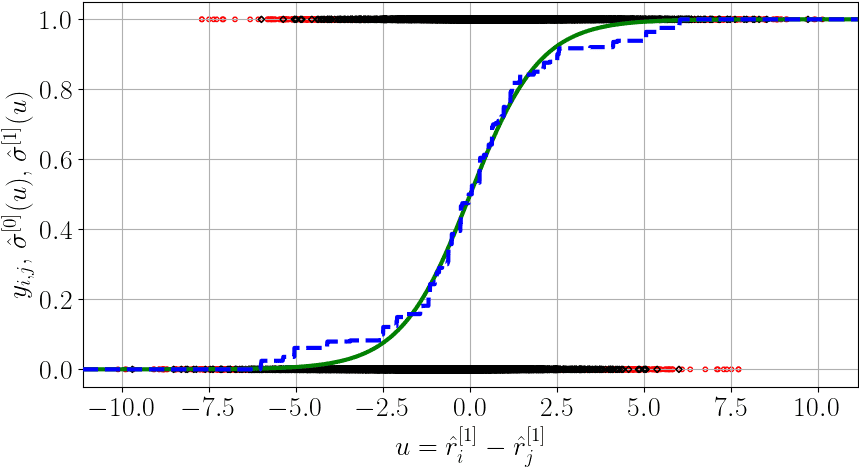}}&
{\includegraphics[width=2.0cm]{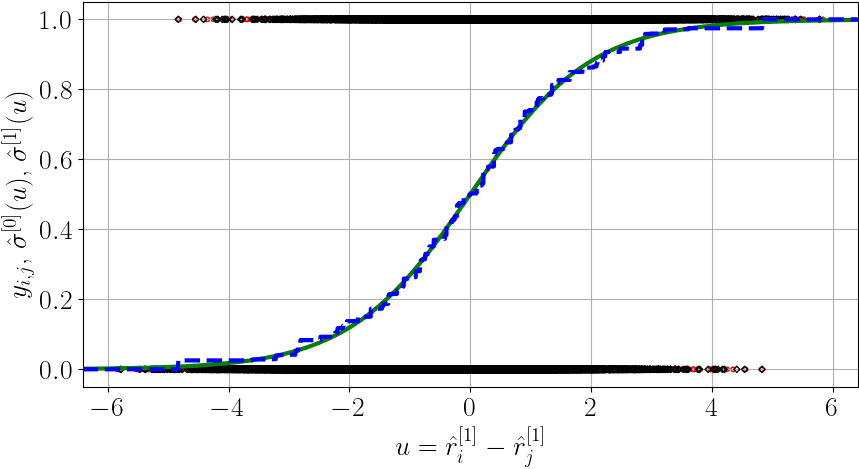}}&
{\includegraphics[width=2.0cm]{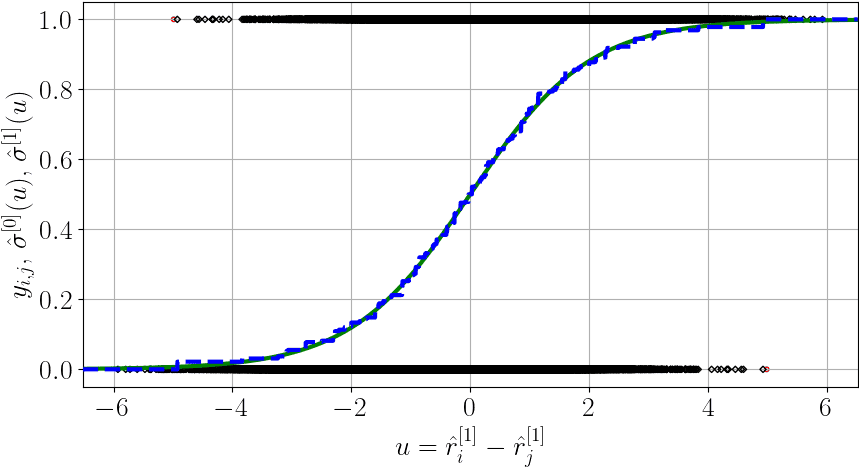}}&
{\includegraphics[width=2.0cm]{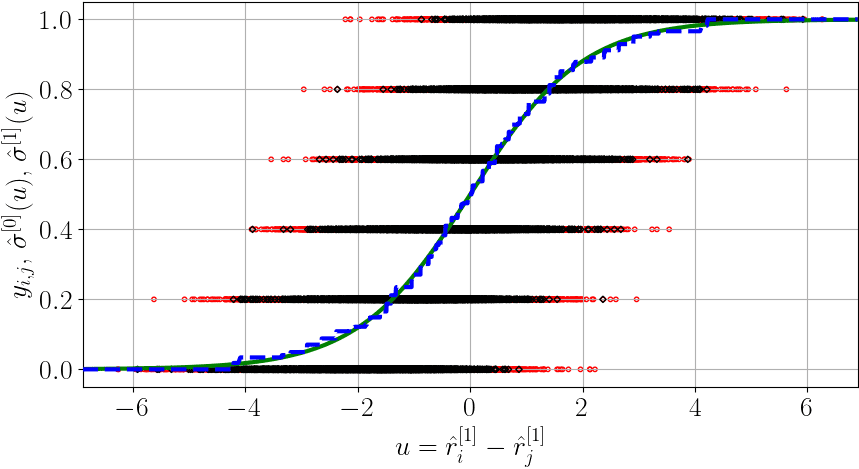}}&
{\includegraphics[width=2.0cm]{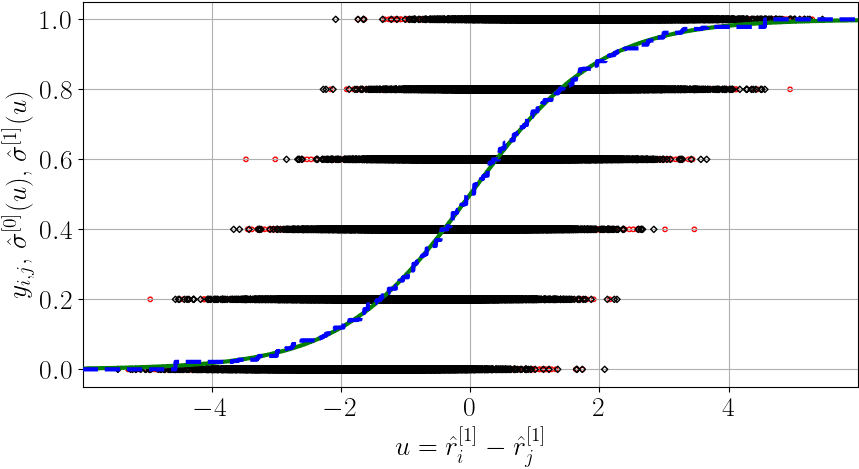}}&
{\includegraphics[width=2.0cm]{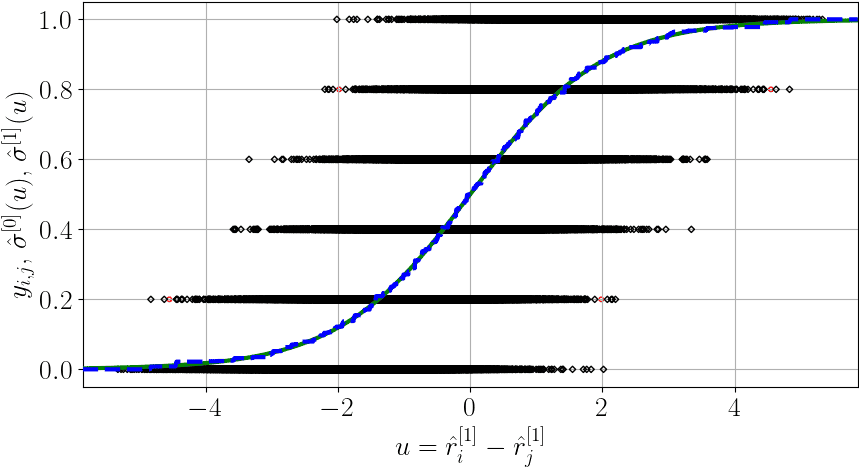}}&
{\includegraphics[width=2.0cm]{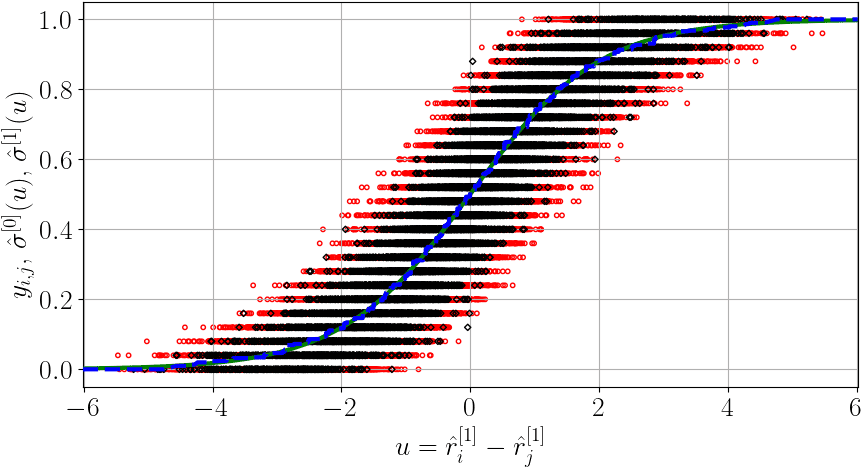}}&
{\includegraphics[width=2.0cm]{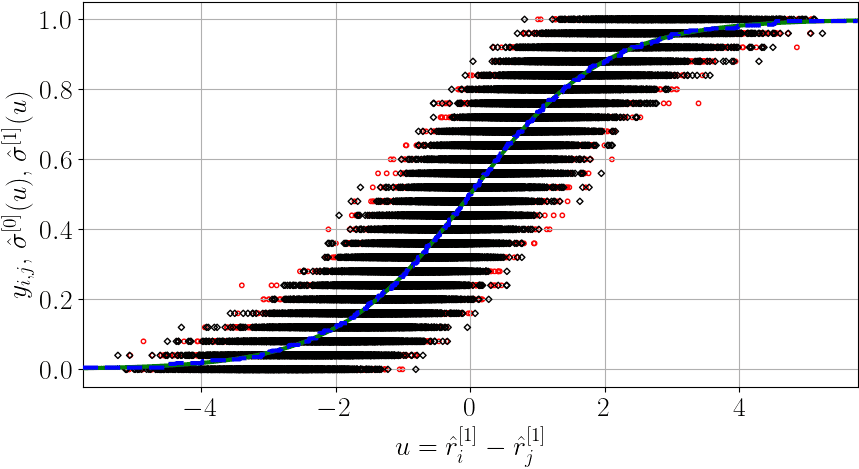}}&
{\includegraphics[width=2.0cm]{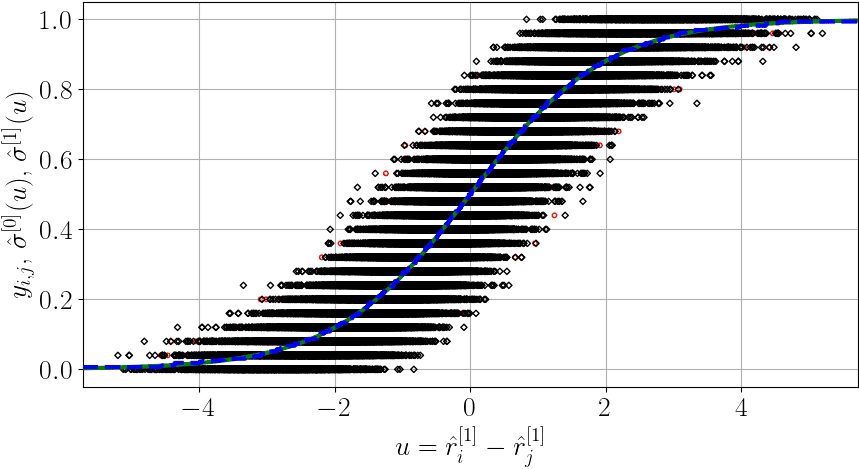}}
\end{tabular}
\begin{tabular}{ccccc}%
\multicolumn{5}{c}{\tiny Rescaled version: $n$, $N$, $|D_\tra|:|D_\tes|=$}\\
{\tiny25, 1, 1:9}&{\tiny25, 1, 5:5}&{\tiny25, 1, 9:1}&{\tiny25, 5, 1:9}&{\tiny100, 1, 1:9}\\
\midrule
{\includegraphics[width=2.0cm]{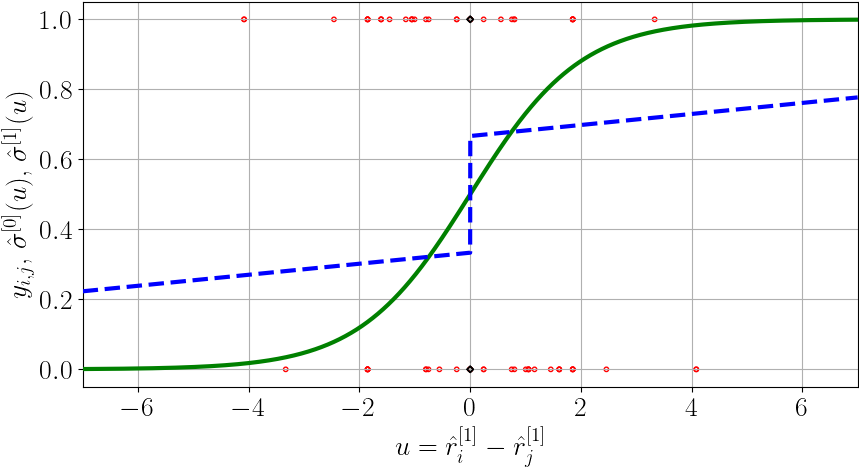}}&
{\includegraphics[width=2.0cm]{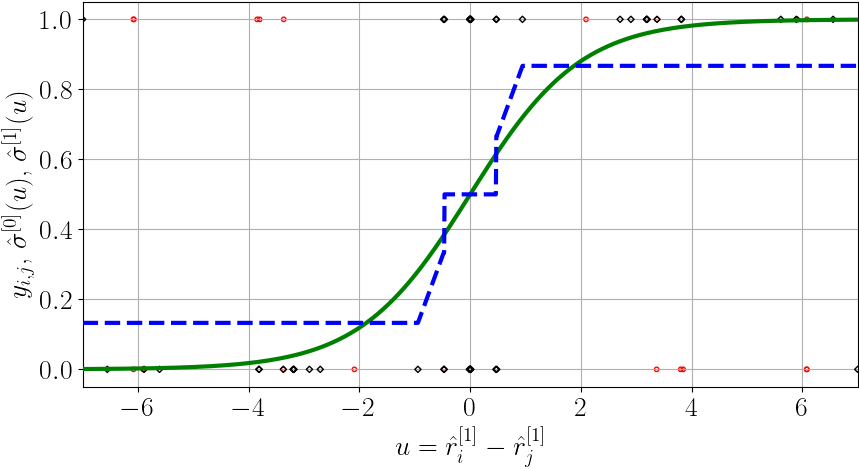}}&
{\includegraphics[width=2.0cm]{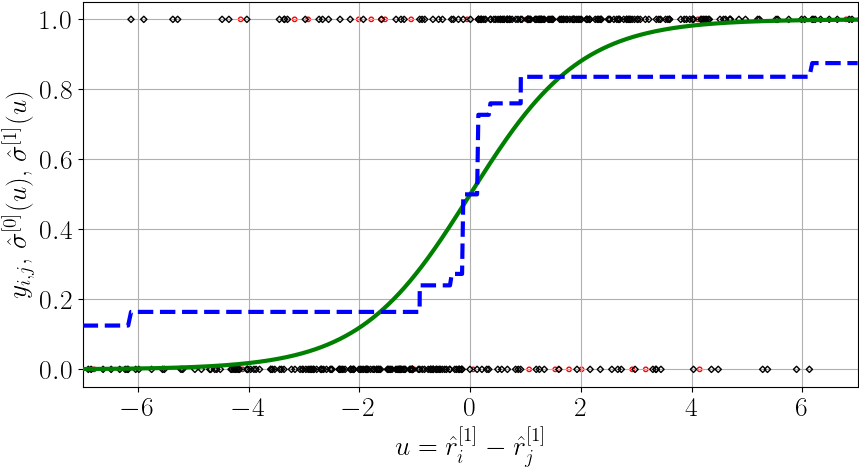}}&
{\includegraphics[width=2.0cm]{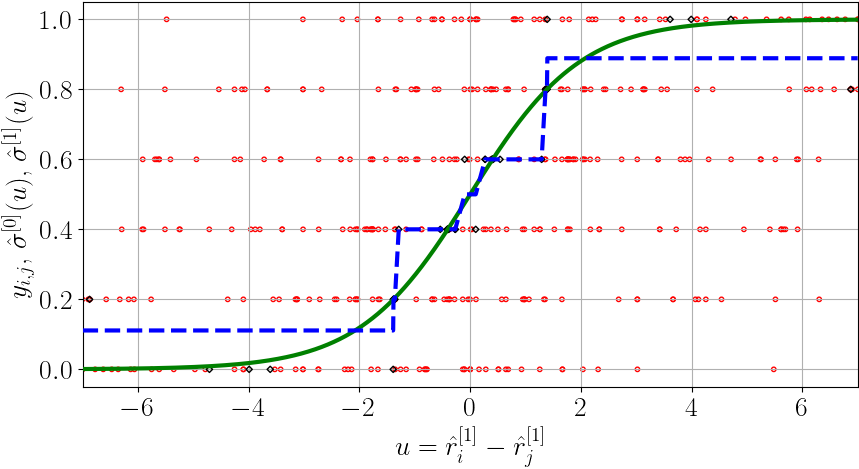}}&
{\includegraphics[width=2.0cm]{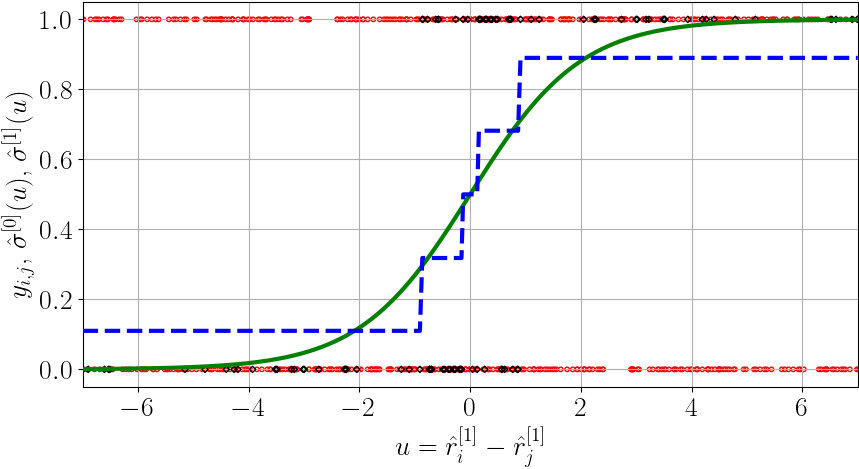}}
\end{tabular}
\caption{%
Part of results of synthetic data experiments, Procedure 1 in Section~\ref{sec:Synthetic} and Appendix~\ref{sec:Specified}:
For Logistic-$N$ synthetic data with $N=1,5,25$ (left to right),
black diamonds and red circles are training and test data $y_{i,j}$, 
and a green curve and a blue polyline are
the Bradley-Terry model $\hat{\sigma}^{[0]}(\hat{r}_i^{[1]}-\hat{r}_j^{[1]})$ and
isotonic Bradley-Terry model $\hat{\sigma}^{[1]}(\hat{r}_i^{[1]}-\hat{r}_j^{[1]})$
learned with the squared loss $\phi=\phi_\sq$ in a certain trial,
over the range $[-1.1\cdot\max_{i,j}|\hat{r}_i^{[1]}-\hat{r}_j^{[1]}|,1.1\cdot\max_{i,j}|\hat{r}_i^{[1]}-\hat{r}_j^{[1]}|]$
or $[-7,7]$ in the rescaled version.}
\label{fig:Res-Logistic-SQ}
\end{sidewaysfigure}
\begin{sidewaystable}
\centering%
\renewcommand{\arraystretch}{0.5}%
\renewcommand{\tabcolsep}{0.5pt}%
\caption{%
Results of real-world data experiments, Procedure 2 in Section~\ref{sec:Synthetic} and Appendix~\ref{sec:Specified}:
For Logistic-$N$ synthetic data with $N=1,5,25$ (left to right),
mean and std ($\text{mean}_{\text{std}}$) of 1000 trial test evaluation of 
WPP error \eqref{eq:WPE} with the squared loss $\phi=\phi_\sq$
and Kendall's Tau \eqref{eq:Kendall}
for the Bradley-Terry model (upper) and model-selected isotonic 
Bradley-Terry model learned with the squared loss $\phi=\phi_\sq$ (lower).
Smaller \eqref{eq:WPE}, or larger \eqref{eq:Kendall} indicates a better model.
It also includes $p$-value for the Mann-Whitney U test in the parentheses:
A value below the significance level 0.05 implies 
that the isotonic Bradley-Terry model performs significantly better
and is highlighted in red bold.}
\label{tab:Logistic-SQ}
\scalebox{0.9}{\begin{minipage}{25cm}
\begin{tabular}{cc|ccccc|ccccc|ccccc}%
&&\multicolumn{5}{c|}{\tiny $N=1$, $|D_\tra|:|D_\tes|=$}&\multicolumn{5}{c|}{\tiny $N=5$, $|D_\tra|:|D_\tes|=$}&\multicolumn{5}{c}{\tiny $N=25$, $|D_\tra|:|D_\tes|=$}\\
&&{\tiny$1:9$}&{\tiny$3:7$}&{\tiny$5:5$}&{\tiny$7:3$}&{\tiny$9:1$}&{\tiny$1:9$}&{\tiny$3:7$}&{\tiny$5:5$}&{\tiny$7:3$}&{\tiny$9:1$}&{\tiny$1:9$}&{\tiny$3:7$}&{\tiny$5:5$}&{\tiny$7:3$}&{\tiny$9:1$}
\\\midrule
\multirow{15}{*}[-6.75mm]{\rotatebox{90}{\tiny\eqref{eq:WPE}, $n=$}}
&\multirow{3}{*}{\rotatebox{90}{\tiny\,$25$~~}}
&\tcr{$\bf.3880_{.0477}$}&\tcr{$\bf.3464_{.0475}$}&\tcr{$\bf.2870_{.0541}$}&\tcr{$\bf.2315_{.0433}$}&$.2102_{.0519}$&\tcr{$\bf.1303_{.0331}$}&$.0544_{.0094}$&$.0451_{.0067}$&$.0422_{.0073}$&$.0407_{.0114}$&$.0337_{.0128}$&$.0105_{.0017}$&$.0090_{.0013}$&$.0084_{.0015}$&$.0081_{.0024}$\\&
&\tcr{$\bf.3611_{.0504}$}&\tcr{$\bf.3022_{.0392}$}&\tcr{$\bf.2545_{.0361}$}&\tcr{$\bf.2240_{.0356}$}&$.2100_{.0508}$&\tcr{$\bf.1274_{.0310}$}&$.0551_{.0093}$&$.0455_{.0068}$&$.0425_{.0075}$&$.0411_{.0115}$&$.0339_{.0127}$&$.0107_{.0017}$&$.0091_{.0014}$&$.0085_{.0015}$&$.0082_{.0024}$\\&
&(\tcr{$\bf.0000$})&(\tcr{$\bf.0000$})&(\tcr{$\bf.0000$})&(\tcr{$\bf.0010$})&($.5361$)&(\tcr{$\bf.0362$})&($.9767$)&($.9472$)&($.8325$)&($.7538$)&($.6652$)&($.9977$)&($.9642$)&($.8482$)&($.6980$)\\
&\multirow{3}{*}{\rotatebox{90}{\tiny\,$50$~~}}
&\tcr{$\bf.3752_{.0298}$}&\tcr{$\bf.2710_{.0398}$}&$.2068_{.0152}$&$.1970_{.0144}$&$.1928_{.0205}$&$.0750_{.0122}$&$.0434_{.0031}$&$.0402_{.0030}$&$.0390_{.0034}$&$.0383_{.0051}$&$.0141_{.0027}$&$.0086_{.0006}$&$.0080_{.0006}$&$.0078_{.0007}$&$.0076_{.0011}$\\&
&\tcr{$\bf.3450_{.0267}$}&\tcr{$\bf.2434_{.0204}$}&$.2068_{.0142}$&$.1976_{.0146}$&$.1930_{.0205}$&$.0745_{.0111}$&$.0437_{.0032}$&$.0403_{.0030}$&$.0391_{.0034}$&$.0383_{.0051}$&$.0143_{.0027}$&$.0087_{.0007}$&$.0080_{.0006}$&$.0078_{.0007}$&$.0077_{.0011}$\\&
&(\tcr{$\bf.0000$})&(\tcr{$\bf.0000$})&($.6370$)&($.7658$)&($.5799$)&($.3816$)&($.9603$)&($.8317$)&($.7519$)&($.6307$)&($.9572$)&($.9445$)&($.8182$)&($.6749$)&($.5883$)\\
&\multirow{3}{*}{\rotatebox{90}{\tiny\,$100$~~}}
&\tcr{$\bf.3435_{.0173}$}&$.2001_{.0077}$&$.1911_{.0074}$&$.1880_{.0082}$&$.1864_{.0109}$&$.0484_{.0027}$&$.0394_{.0016}$&$.0381_{.0016}$&$.0376_{.0019}$&$.0372_{.0027}$&$.0095_{.0005}$&$.0079_{.0003}$&$.0076_{.0003}$&$.0075_{.0004}$&$.0074_{.0006}$\\&
&\tcr{$\bf.2907_{.0126}$}&$.2003_{.0076}$&$.1913_{.0074}$&$.1881_{.0082}$&$.1865_{.0110}$&$.0486_{.0028}$&$.0395_{.0016}$&$.0381_{.0016}$&$.0376_{.0019}$&$.0372_{.0027}$&$.0095_{.0006}$&$.0079_{.0003}$&$.0076_{.0003}$&$.0075_{.0004}$&$.0074_{.0006}$\\&
&(\tcr{$\bf.0000$})&($.7224$)&($.7049$)&($.5934$)&($.5593$)&($.9628$)&($.6839$)&($.6029$)&($.5398$)&($.5284$)&($.8941$)&($.6309$)&($.5824$)&($.5240$)&($.4992$)\\
&\multirow{3}{*}{\rotatebox{90}{\tiny\,$200$~~}}
&\tcr{$\bf.2241_{.0122}$}&$.1895_{.0047}$&$.1861_{.0048}$&$.1847_{.0051}$&$.1840_{.0063}$&$.0413_{.0011}$&$.0378_{.0010}$&$.0372_{.0010}$&$.0369_{.0011}$&$.0368_{.0015}$&$.0082_{.0002}$&$.0076_{.0002}$&$.0074_{.0002}$&$.0074_{.0002}$&$.0074_{.0003}$\\&
&\tcr{$\bf.2176_{.0069}$}&$.1895_{.0047}$&$.1861_{.0048}$&$.1847_{.0051}$&$.1840_{.0063}$&$.0413_{.0011}$&$.0378_{.0010}$&$.0372_{.0010}$&$.0369_{.0011}$&$.0368_{.0015}$&$.0082_{.0002}$&$.0076_{.0002}$&$.0074_{.0002}$&$.0074_{.0002}$&$.0074_{.0003}$\\&
&(\tcr{$\bf.0000$})&($.5886$)&($.5340$)&($.5075$)&($.5082$)&($.6519$)&($.5377$)&($.5147$)&($.5053$)&($.5029$)&($.5746$)&($.5215$)&($.5021$)&($.5075$)&($.5000$)\\
&\multirow{3}{*}{\rotatebox{90}{\tiny\,$400$~~}}
&$.1945_{.0032}$&$.1854_{.0032}$&$.1839_{.0032}$&$.1832_{.0033}$&$.1829_{.0039}$&$.0386_{.0007}$&$.0371_{.0007}$&$.0368_{.0007}$&$.0366_{.0007}$&$.0366_{.0009}$&$.0077_{.0001}$&$.0074_{.0001}$&$.0074_{.0001}$&$.0073_{.0001}$&$.0073_{.0002}$\\&
&$.1945_{.0033}$&$.1854_{.0032}$&$.1839_{.0032}$&$.1832_{.0033}$&$.1829_{.0039}$&$.0386_{.0007}$&$.0371_{.0007}$&$.0368_{.0007}$&$.0366_{.0007}$&$.0366_{.0009}$&$.0077_{.0001}$&$.0074_{.0001}$&$.0074_{.0001}$&$.0073_{.0001}$&$.0073_{.0002}$\\&
&($.6775$)&($.5125$)&($.5060$)&($.5042$)&($.5016$)&($.5271$)&($.5066$)&($.5015$)&($.5000$)&($.5022$)&($.5110$)&($.5000$)&($.5000$)&($.5000$)&($.5000$)\\
\midrule
\multirow{15}{*}[-6.75mm]{\rotatebox{90}{\tiny\eqref{eq:Kendall}, $n=$}}
&\multirow{3}{*}{\rotatebox{90}{\tiny\,$25$~~}}
&$.1839_{.0879}$&\tcr{$\bf.2705_{.0834}$}&\tcr{$\bf.3118_{.0800}$}&\tcr{$\bf.3464_{.0870}$}&\tcr{$\bf.3683_{.1309}$}&$.3532_{.0893}$&\tcr{$\bf.5582_{.0553}$}&\tcr{$\bf.5951_{.0526}$}&\tcr{$\bf.6093_{.0579}$}&\tcr{$\bf.6173_{.0834}$}&$.6021_{.0770}$&\tcr{$\bf.7749_{.0303}$}&\tcr{$\bf.7918_{.0287}$}&\tcr{$\bf.7986_{.0314}$}&\tcr{$\bf.8027_{.0454}$}\\&
&$.1858_{.0898}$&\tcr{$\bf.2810_{.0867}$}&\tcr{$\bf.3287_{.0864}$}&\tcr{$\bf.3651_{.0936}$}&\tcr{$\bf.3861_{.1414}$}&$.3594_{.0913}$&\tcr{$\bf.5730_{.0570}$}&\tcr{$\bf.6115_{.0533}$}&\tcr{$\bf.6262_{.0589}$}&\tcr{$\bf.6328_{.0851}$}&$.6059_{.0771}$&\tcr{$\bf.7857_{.0300}$}&\tcr{$\bf.8033_{.0286}$}&\tcr{$\bf.8101_{.0310}$}&\tcr{$\bf.8147_{.0440}$}\\&
&($.3055$)&(\tcr{$\bf.0036$})&(\tcr{$\bf.0000$})&(\tcr{$\bf.0000$})&(\tcr{$\bf.0024$})&($.0595$)&(\tcr{$\bf.0000$})&(\tcr{$\bf.0000$})&(\tcr{$\bf.0000$})&(\tcr{$\bf.0000$})&($.1291$)&(\tcr{$\bf.0000$})&(\tcr{$\bf.0000$})&(\tcr{$\bf.0000$})&(\tcr{$\bf.0000$})\\
&\multirow{3}{*}{\rotatebox{90}{\tiny\,$50$~~}}
&\tcr{$\bf.2300_{.0545}$}&\tcr{$\bf.3201_{.0422}$}&\tcr{$\bf.3713_{.0400}$}&\tcr{$\bf.3878_{.0435}$}&\tcr{$\bf.3965_{.0613}$}&\tcr{$\bf.4892_{.0463}$}&\tcr{$\bf.6027_{.0312}$}&\tcr{$\bf.6191_{.0303}$}&\tcr{$\bf.6256_{.0320}$}&\tcr{$\bf.6297_{.0424}$}&\tcr{$\bf.7412_{.0287}$}&\tcr{$\bf.7962_{.0171}$}&\tcr{$\bf.8037_{.0167}$}&\tcr{$\bf.8068_{.0175}$}&\tcr{$\bf.8085_{.0225}$}\\&
&\tcr{$\bf.2386_{.0572}$}&\tcr{$\bf.3395_{.0453}$}&\tcr{$\bf.3869_{.0423}$}&\tcr{$\bf.4037_{.0464}$}&\tcr{$\bf.4128_{.0645}$}&\tcr{$\bf.5014_{.0475}$}&\tcr{$\bf.6193_{.0317}$}&\tcr{$\bf.6360_{.0309}$}&\tcr{$\bf.6418_{.0324}$}&\tcr{$\bf.6455_{.0432}$}&\tcr{$\bf.7512_{.0287}$}&\tcr{$\bf.8079_{.0170}$}&\tcr{$\bf.8151_{.0164}$}&\tcr{$\bf.8179_{.0171}$}&\tcr{$\bf.8189_{.0222}$}\\&
&(\tcr{$\bf.0004$})&(\tcr{$\bf.0000$})&(\tcr{$\bf.0000$})&(\tcr{$\bf.0000$})&(\tcr{$\bf.0000$})&(\tcr{$\bf.0000$})&(\tcr{$\bf.0000$})&(\tcr{$\bf.0000$})&(\tcr{$\bf.0000$})&(\tcr{$\bf.0000$})&(\tcr{$\bf.0000$})&(\tcr{$\bf.0000$})&(\tcr{$\bf.0000$})&(\tcr{$\bf.0000$})&(\tcr{$\bf.0000$})\\
&\multirow{3}{*}{\rotatebox{90}{\tiny\,$100$~~}}
&\tcr{$\bf.2849_{.0318}$}&\tcr{$\bf.3830_{.0230}$}&\tcr{$\bf.4012_{.0232}$}&\tcr{$\bf.4086_{.0256}$}&\tcr{$\bf.4127_{.0339}$}&\tcr{$\bf.5812_{.0219}$}&\tcr{$\bf.6246_{.0190}$}&\tcr{$\bf.6323_{.0191}$}&\tcr{$\bf.6353_{.0201}$}&\tcr{$\bf.6372_{.0250}$}&\tcr{$\bf.7870_{.0121}$}&\tcr{$\bf.8067_{.0107}$}&\tcr{$\bf.8102_{.0108}$}&\tcr{$\bf.8116_{.0113}$}&\tcr{$\bf.8124_{.0135}$}\\&
&\tcr{$\bf.3002_{.0348}$}&\tcr{$\bf.3979_{.0242}$}&\tcr{$\bf.4165_{.0242}$}&\tcr{$\bf.4237_{.0268}$}&\tcr{$\bf.4274_{.0355}$}&\tcr{$\bf.5966_{.0224}$}&\tcr{$\bf.6410_{.0191}$}&\tcr{$\bf.6476_{.0191}$}&\tcr{$\bf.6499_{.0201}$}&\tcr{$\bf.6511_{.0252}$}&\tcr{$\bf.7985_{.0121}$}&\tcr{$\bf.8179_{.0104}$}&\tcr{$\bf.8204_{.0105}$}&\tcr{$\bf.8209_{.0110}$}&\tcr{$\bf.8212_{.0133}$}\\&
&(\tcr{$\bf.0000$})&(\tcr{$\bf.0000$})&(\tcr{$\bf.0000$})&(\tcr{$\bf.0000$})&(\tcr{$\bf.0000$})&(\tcr{$\bf.0000$})&(\tcr{$\bf.0000$})&(\tcr{$\bf.0000$})&(\tcr{$\bf.0000$})&(\tcr{$\bf.0000$})&(\tcr{$\bf.0000$})&(\tcr{$\bf.0000$})&(\tcr{$\bf.0000$})&(\tcr{$\bf.0000$})&(\tcr{$\bf.0000$})\\
&\multirow{3}{*}{\rotatebox{90}{\tiny\,$200$~~}}
&\tcr{$\bf.3483_{.0165}$}&\tcr{$\bf.4050_{.0152}$}&\tcr{$\bf.4136_{.0154}$}&\tcr{$\bf.4174_{.0162}$}&\tcr{$\bf.4196_{.0198}$}&\tcr{$\bf.6145_{.0139}$}&\tcr{$\bf.6342_{.0130}$}&\tcr{$\bf.6378_{.0131}$}&\tcr{$\bf.6395_{.0135}$}&\tcr{$\bf.6405_{.0153}$}&\tcr{$\bf.8023_{.0078}$}&\tcr{$\bf.8111_{.0074}$}&\tcr{$\bf.8127_{.0075}$}&\tcr{$\bf.8135_{.0076}$}&\tcr{$\bf.8139_{.0084}$}\\&
&\tcr{$\bf.3626_{.0170}$}&\tcr{$\bf.4191_{.0157}$}&\tcr{$\bf.4269_{.0158}$}&\tcr{$\bf.4302_{.0164}$}&\tcr{$\bf.4318_{.0202}$}&\tcr{$\bf.6302_{.0142}$}&\tcr{$\bf.6481_{.0130}$}&\tcr{$\bf.6504_{.0131}$}&\tcr{$\bf.6511_{.0135}$}&\tcr{$\bf.6513_{.0153}$}&\tcr{$\bf.8135_{.0077}$}&\tcr{$\bf.8205_{.0072}$}&\tcr{$\bf.8210_{.0073}$}&\tcr{$\bf.8209_{.0075}$}&\tcr{$\bf.8207_{.0082}$}\\&
&(\tcr{$\bf.0000$})&(\tcr{$\bf.0000$})&(\tcr{$\bf.0000$})&(\tcr{$\bf.0000$})&(\tcr{$\bf.0000$})&(\tcr{$\bf.0000$})&(\tcr{$\bf.0000$})&(\tcr{$\bf.0000$})&(\tcr{$\bf.0000$})&(\tcr{$\bf.0000$})&(\tcr{$\bf.0000$})&(\tcr{$\bf.0000$})&(\tcr{$\bf.0000$})&(\tcr{$\bf.0000$})&(\tcr{$\bf.0000$})\\
&\multirow{3}{*}{\rotatebox{90}{\tiny\,$400$~~}}
&\tcr{$\bf.3935_{.0107}$}&\tcr{$\bf.4157_{.0103}$}&\tcr{$\bf.4200_{.0103}$}&\tcr{$\bf.4219_{.0106}$}&\tcr{$\bf.4229_{.0122}$}&\tcr{$\bf.6294_{.0094}$}&\tcr{$\bf.6388_{.0092}$}&\tcr{$\bf.6405_{.0092}$}&\tcr{$\bf.6413_{.0093}$}&\tcr{$\bf.6418_{.0101}$}&\tcr{$\bf.8090_{.0053}$}&\tcr{$\bf.8132_{.0052}$}&\tcr{$\bf.8139_{.0052}$}&\tcr{$\bf.8143_{.0053}$}&\tcr{$\bf.8145_{.0056}$}\\&
&\tcr{$\bf.4061_{.0111}$}&\tcr{$\bf.4272_{.0105}$}&\tcr{$\bf.4305_{.0105}$}&\tcr{$\bf.4317_{.0107}$}&\tcr{$\bf.4321_{.0124}$}&\tcr{$\bf.6426_{.0094}$}&\tcr{$\bf.6495_{.0091}$}&\tcr{$\bf.6501_{.0092}$}&\tcr{$\bf.6501_{.0092}$}&\tcr{$\bf.6500_{.0100}$}&\tcr{$\bf.8185_{.0052}$}&\tcr{$\bf.8206_{.0051}$}&\tcr{$\bf.8203_{.0051}$}&\tcr{$\bf.8199_{.0052}$}&\tcr{$\bf.8197_{.0056}$}\\&
&(\tcr{$\bf.0000$})&(\tcr{$\bf.0000$})&(\tcr{$\bf.0000$})&(\tcr{$\bf.0000$})&(\tcr{$\bf.0000$})&(\tcr{$\bf.0000$})&(\tcr{$\bf.0000$})&(\tcr{$\bf.0000$})&(\tcr{$\bf.0000$})&(\tcr{$\bf.0000$})&(\tcr{$\bf.0000$})&(\tcr{$\bf.0000$})&(\tcr{$\bf.0000$})&(\tcr{$\bf.0000$})&(\tcr{$\bf.0000$})\\
\end{tabular}\end{minipage}}
\end{sidewaystable}

\begin{sidewaysfigure}
\centering%
\renewcommand{\arraystretch}{0.5}%
\renewcommand{\tabcolsep}{0.5pt}%
\begin{tabular}{cc|ccc|ccc|ccc}%
&&\multicolumn{3}{c|}{\tiny$N=1$, $|D_\tra|:|D_\tes|=$}&\multicolumn{3}{c|}{\tiny$N=5$, $|D_\tra|:|D_\tes|=$}&\multicolumn{3}{c}{\tiny$N=25$, $|D_\tra|:|D_\tes|=$}\\
&&{\tiny$1:9$}&{\tiny$5:5$}&{\tiny$9:1$}&{\tiny$1:9$}&{\tiny$5:5$}&{\tiny$9:1$}&{\tiny$1:9$}&{\tiny$5:5$}&{\tiny$9:1$}\\
\midrule
\multirow{3}{*}[-2.5mm]{\rotatebox{90}{\tiny\eqref{eq:WPE}, $n=$}}
&\rotatebox{90}{\tiny\,~~~\,$25$}&
{\includegraphics[width=2.0cm]{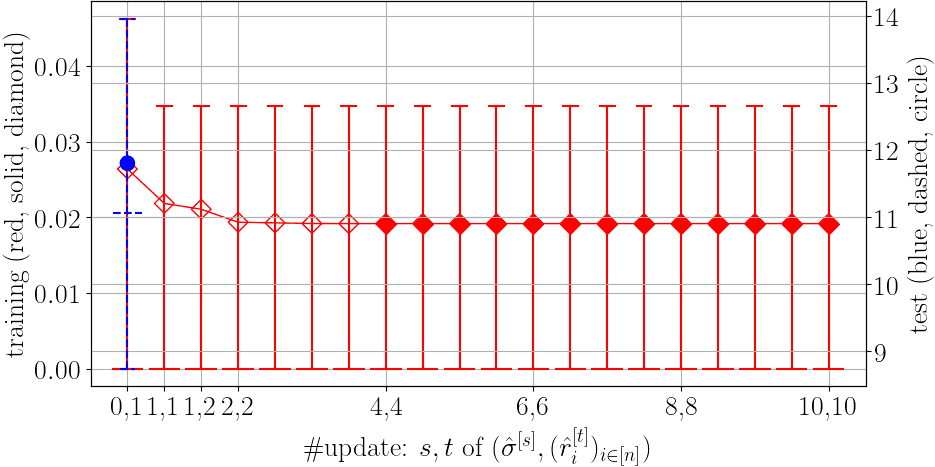}}&
{\includegraphics[width=2.0cm]{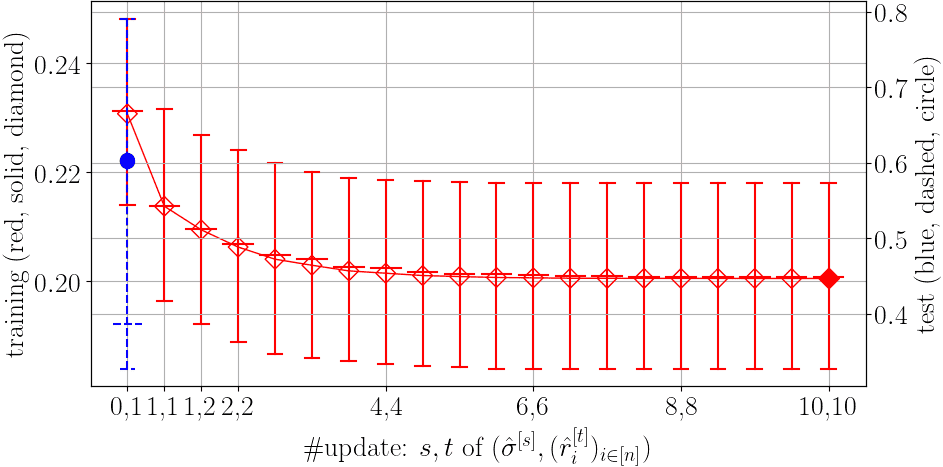}}&
{\includegraphics[width=2.0cm]{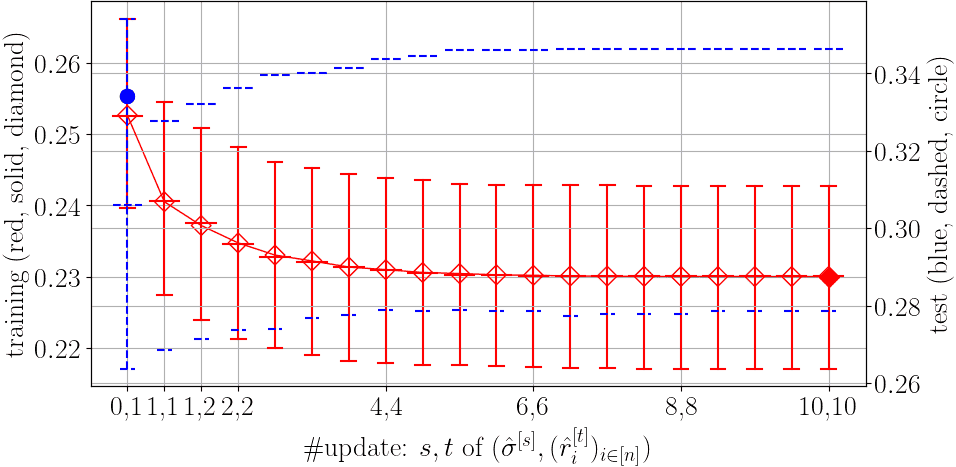}}&
{\includegraphics[width=2.0cm]{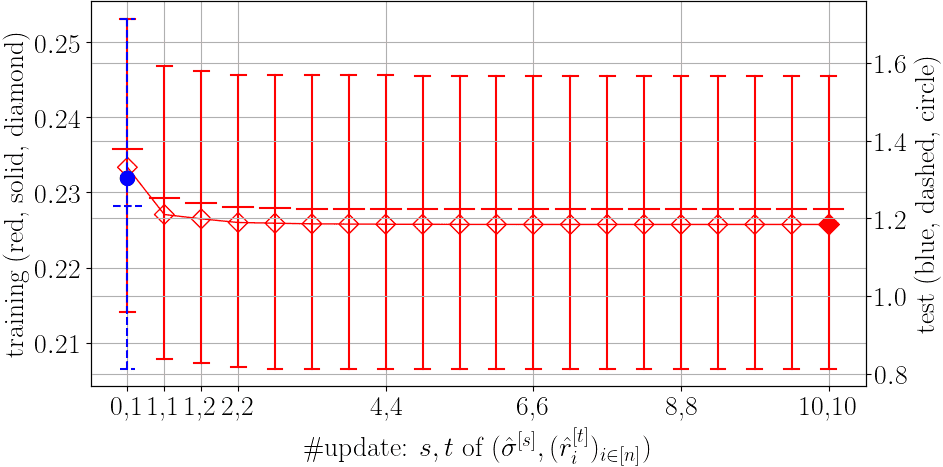}}&
{\includegraphics[width=2.0cm]{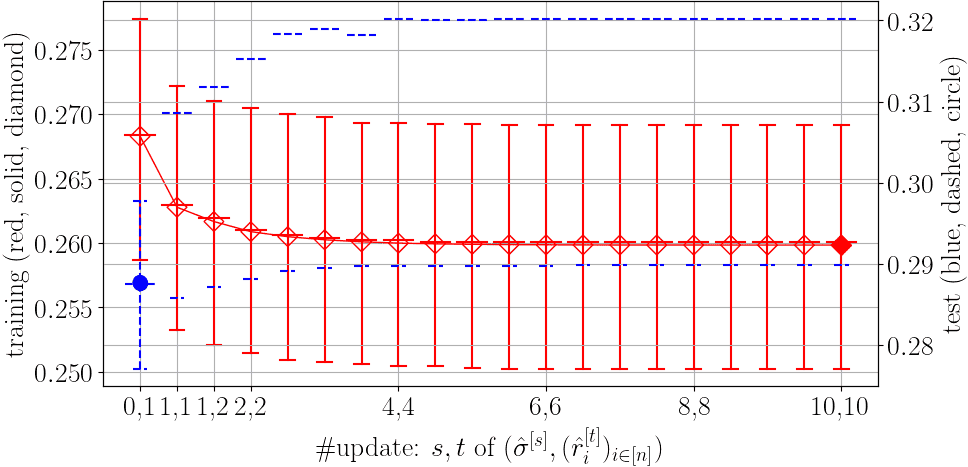}}&
{\includegraphics[width=2.0cm]{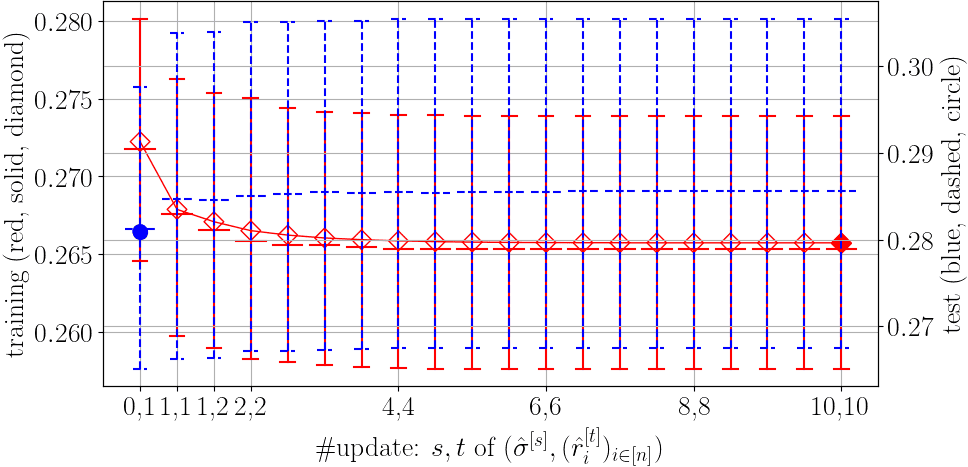}}&
{\includegraphics[width=2.0cm]{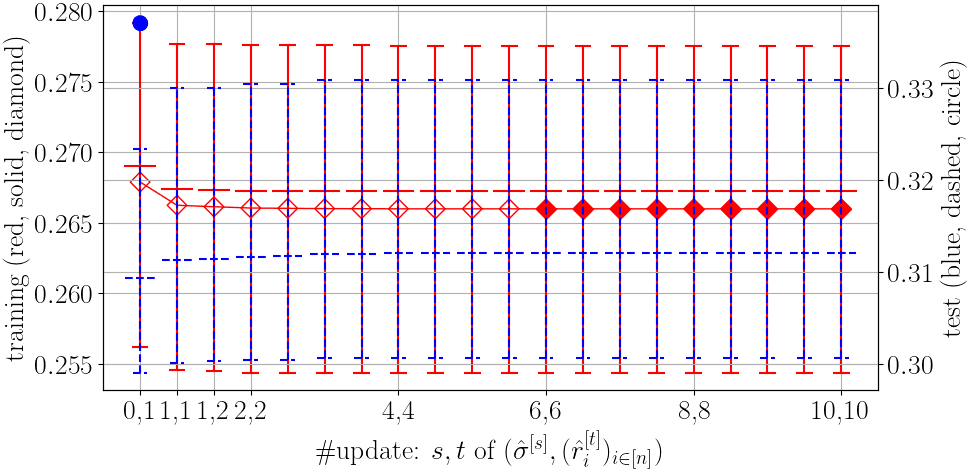}}&
{\includegraphics[width=2.0cm]{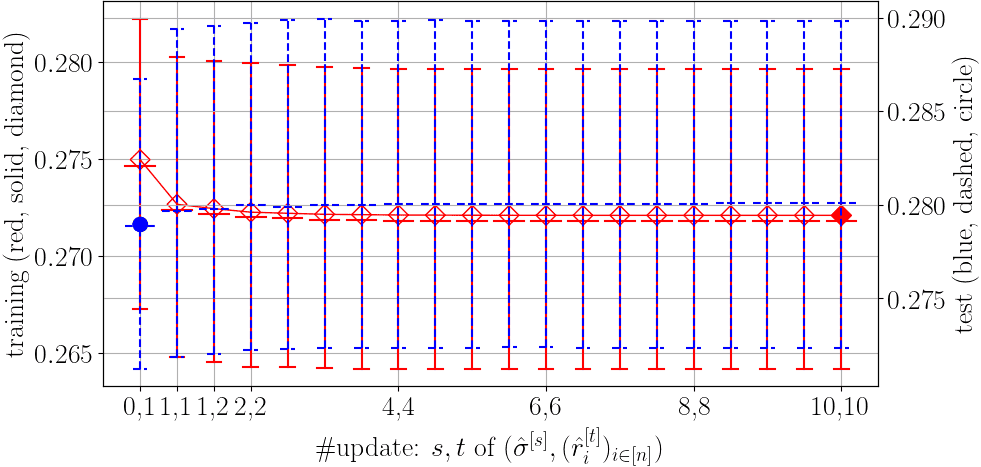}}&
{\includegraphics[width=2.0cm]{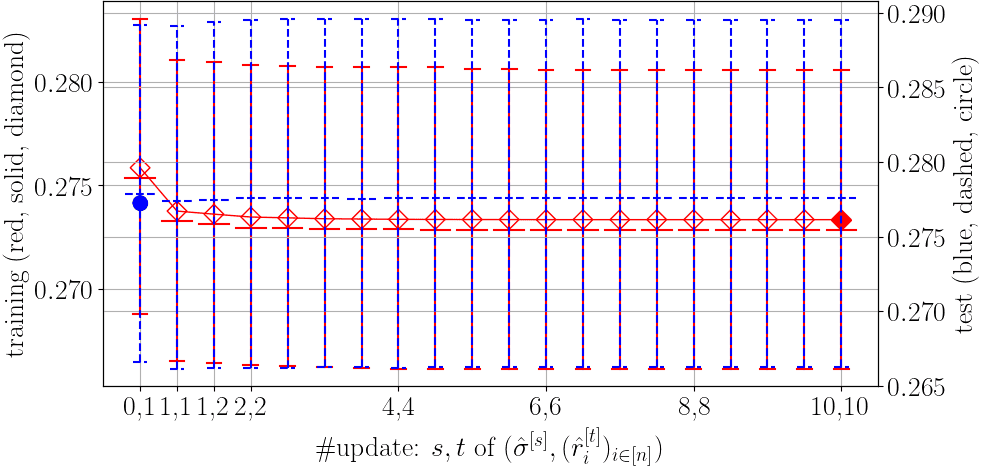}}\\
&\rotatebox{90}{\tiny\,~~\,$100$}&
{\includegraphics[width=2.0cm]{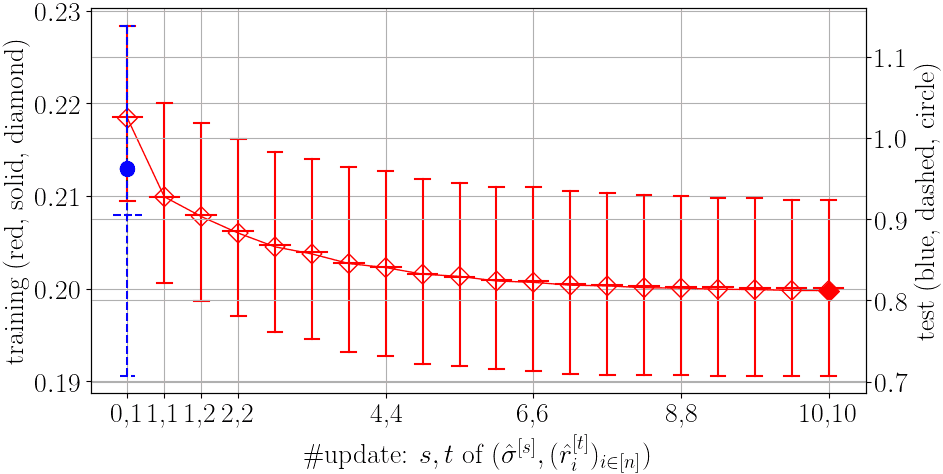}}&
{\includegraphics[width=2.0cm]{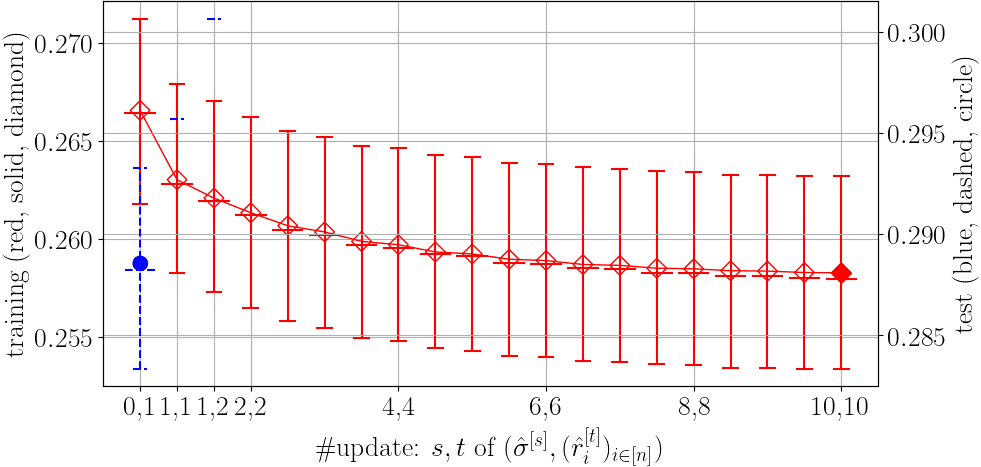}}&
{\includegraphics[width=2.0cm]{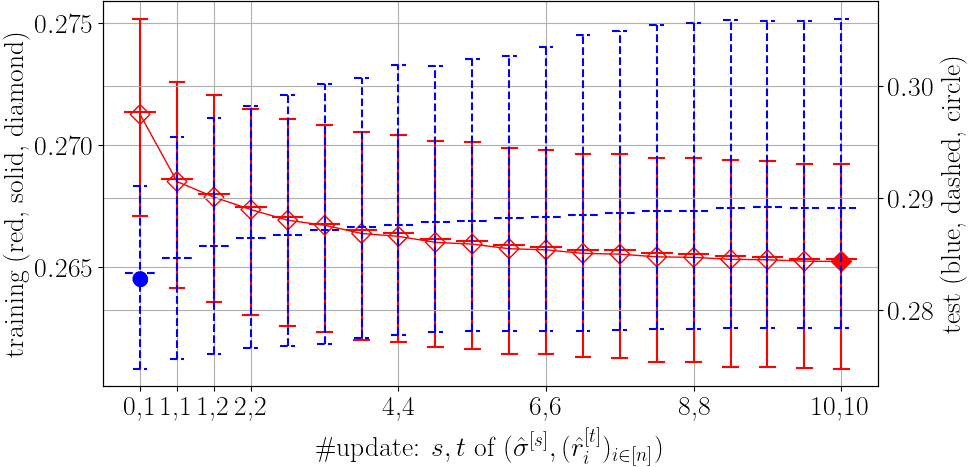}}&
{\includegraphics[width=2.0cm]{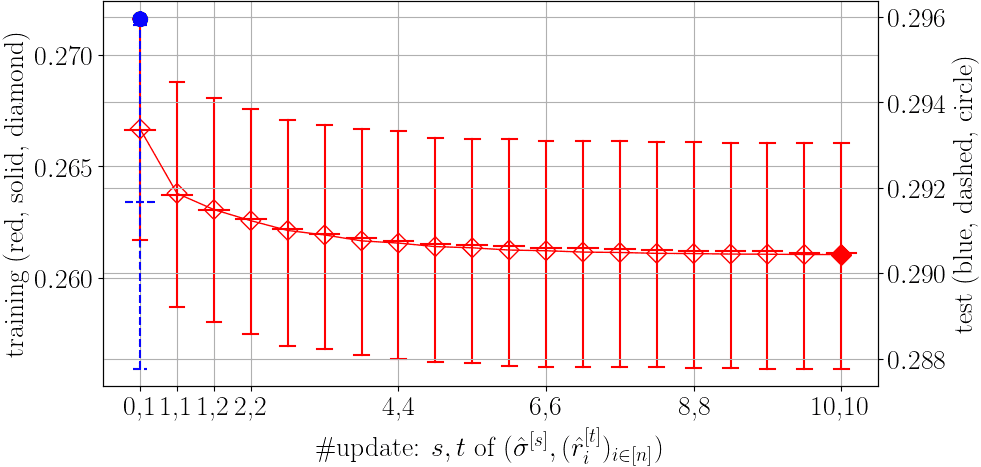}}&
{\includegraphics[width=2.0cm]{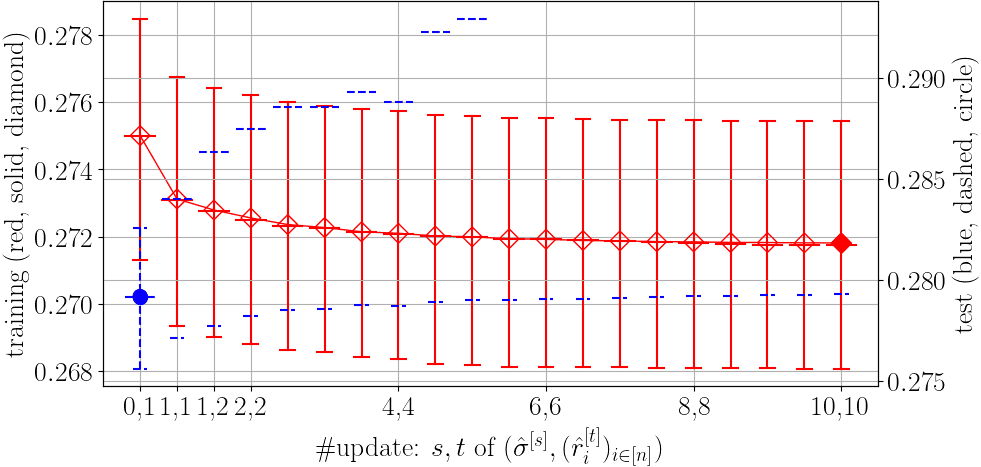}}&
{\includegraphics[width=2.0cm]{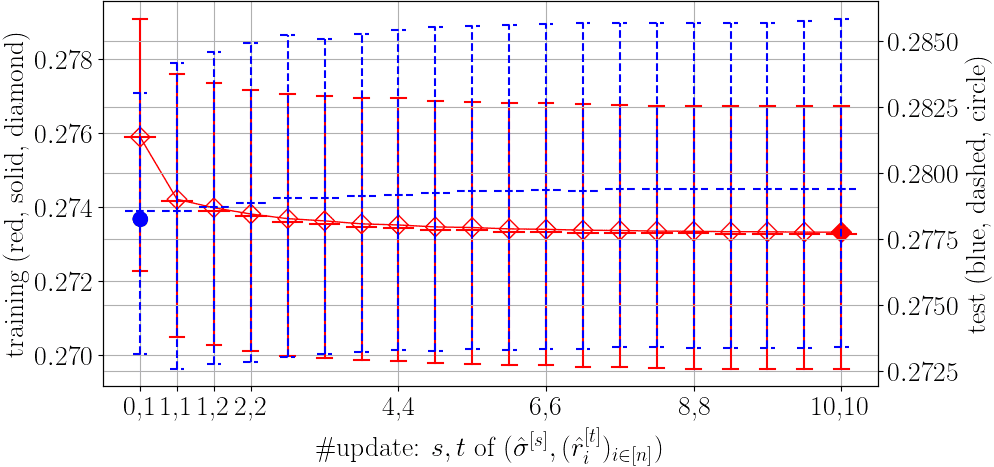}}&
{\includegraphics[width=2.0cm]{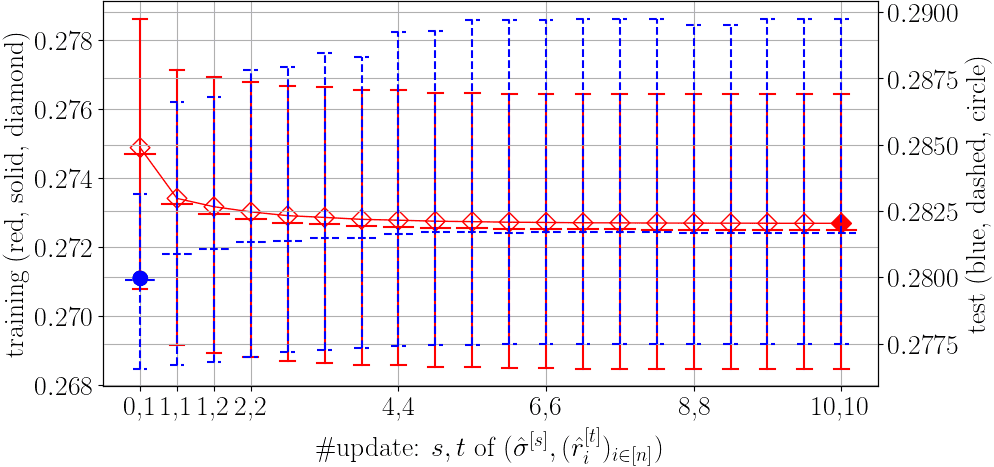}}&
{\includegraphics[width=2.0cm]{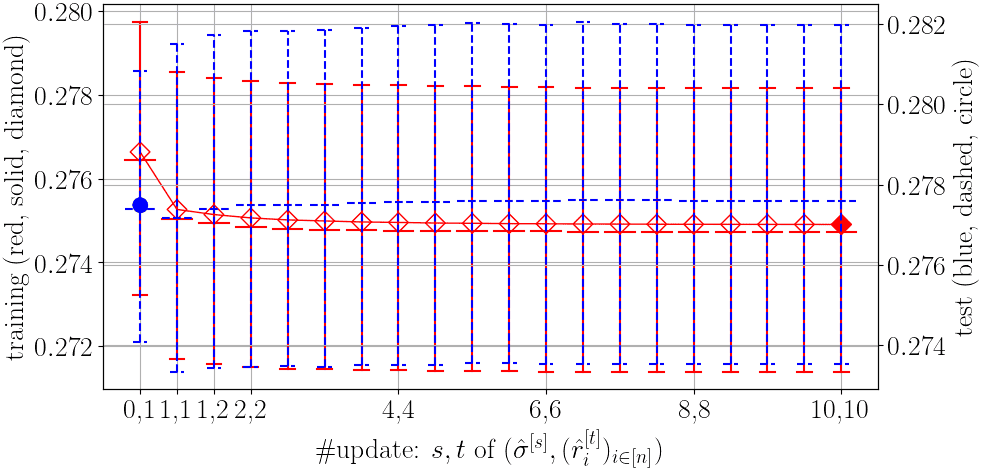}}&
{\includegraphics[width=2.0cm]{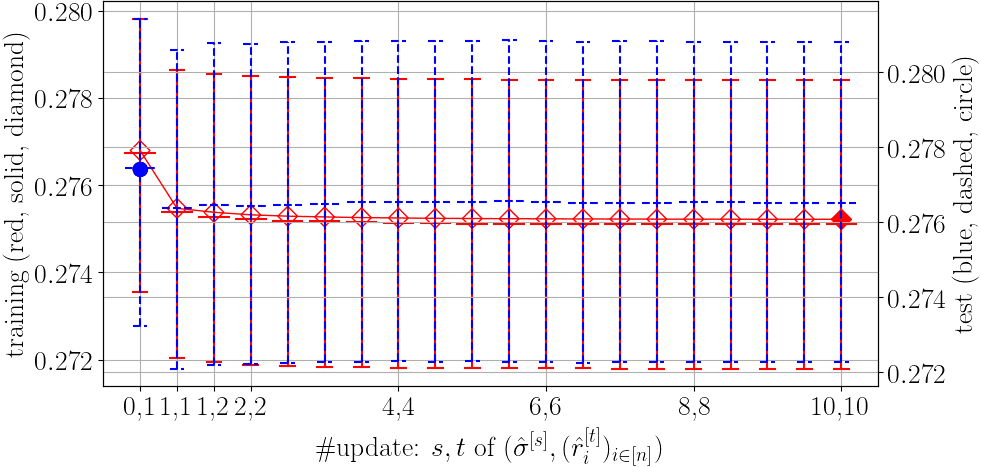}}\\
&\rotatebox{90}{\tiny\,~~\,$400$}&
{\includegraphics[width=2.0cm]{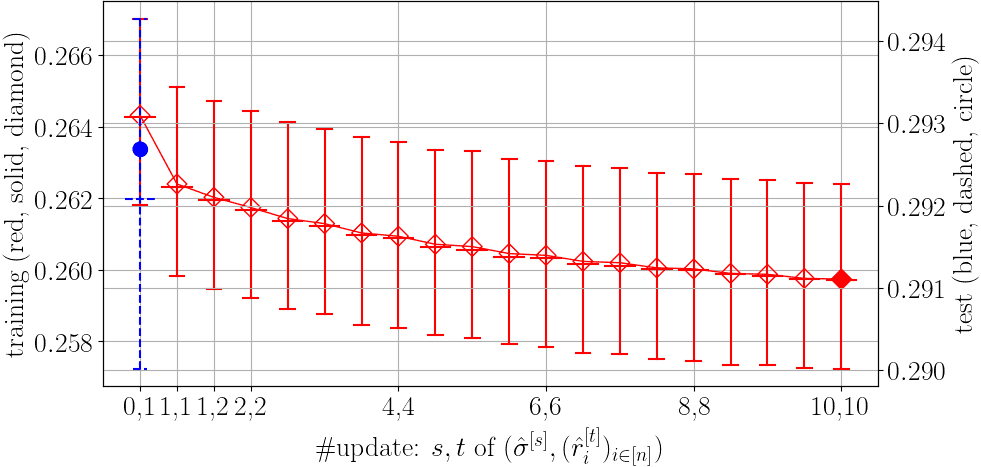}}&
{\includegraphics[width=2.0cm]{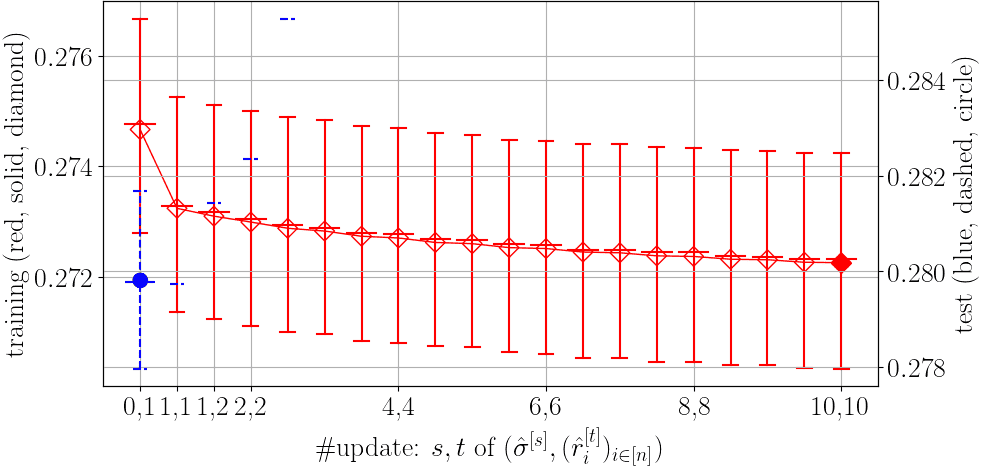}}&
{\includegraphics[width=2.0cm]{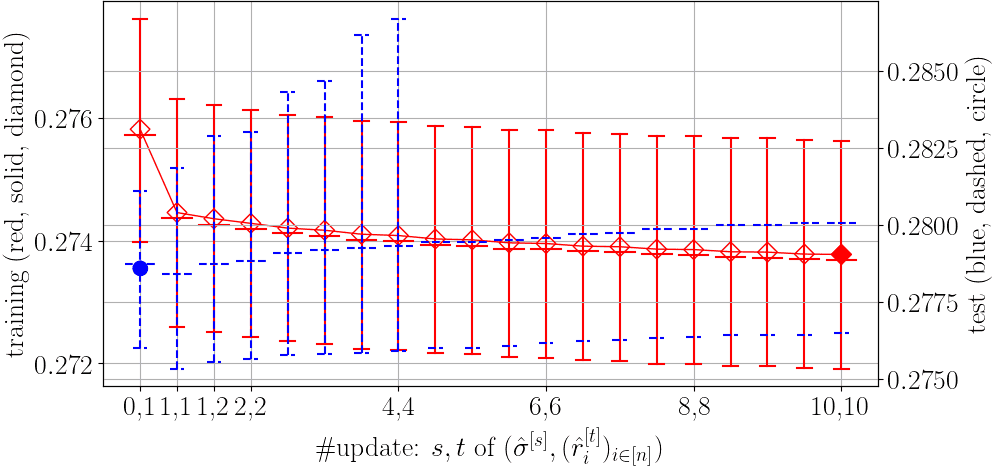}}&
{\includegraphics[width=2.0cm]{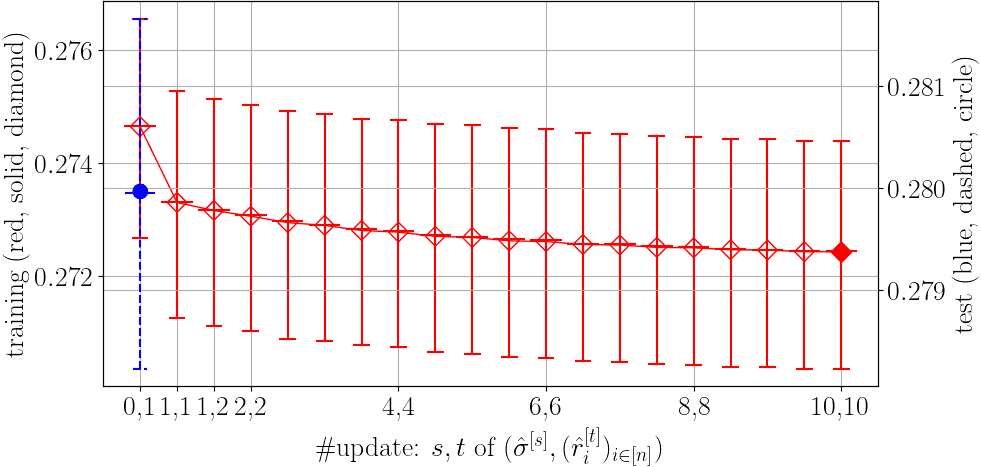}}&
{\includegraphics[width=2.0cm]{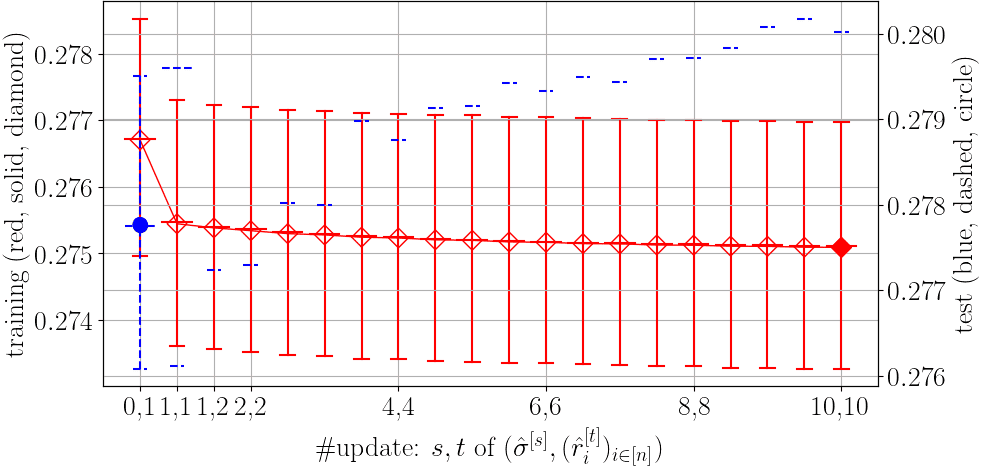}}&
{\includegraphics[width=2.0cm]{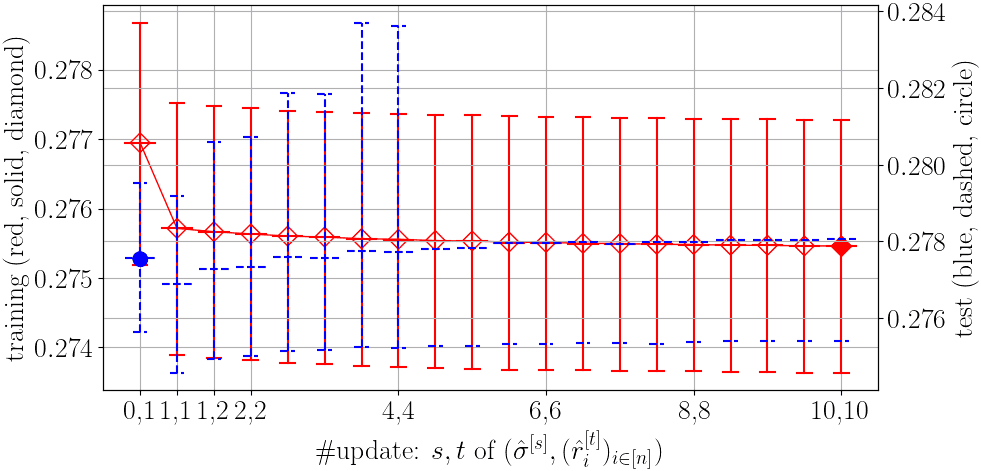}}&
{\includegraphics[width=2.0cm]{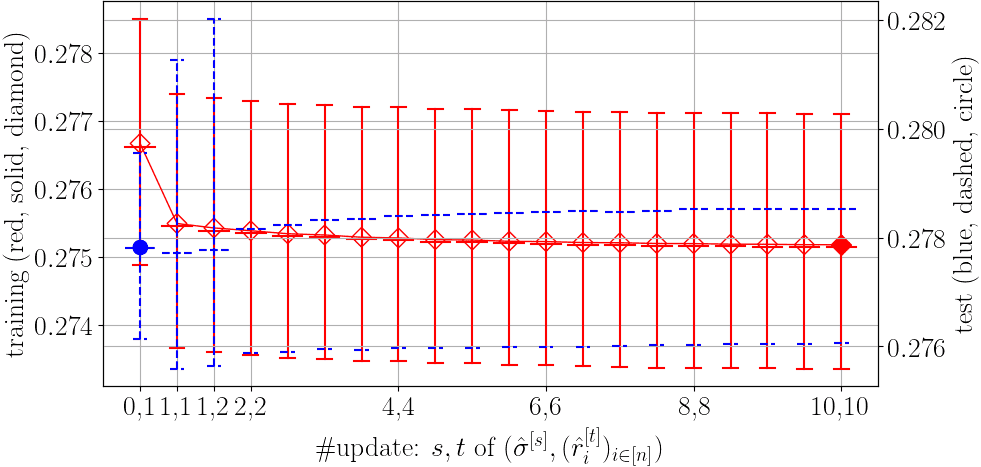}}&
{\includegraphics[width=2.0cm]{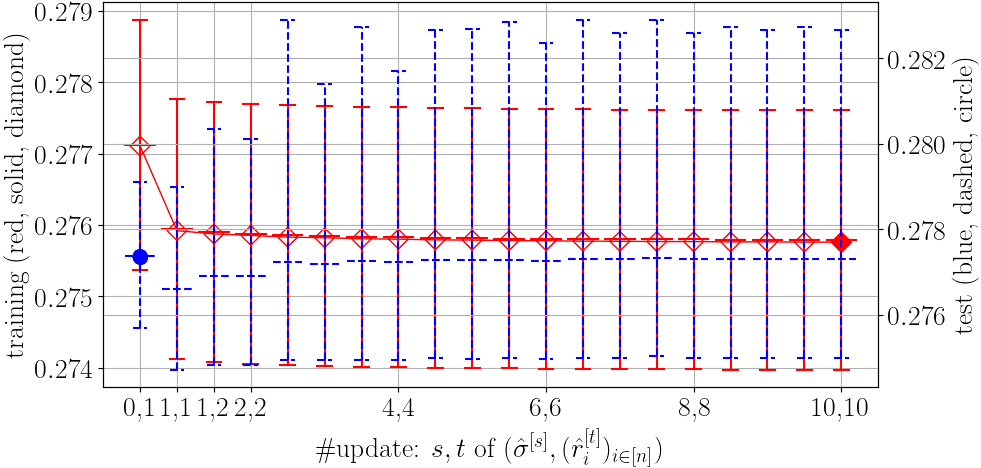}}&
{\includegraphics[width=2.0cm]{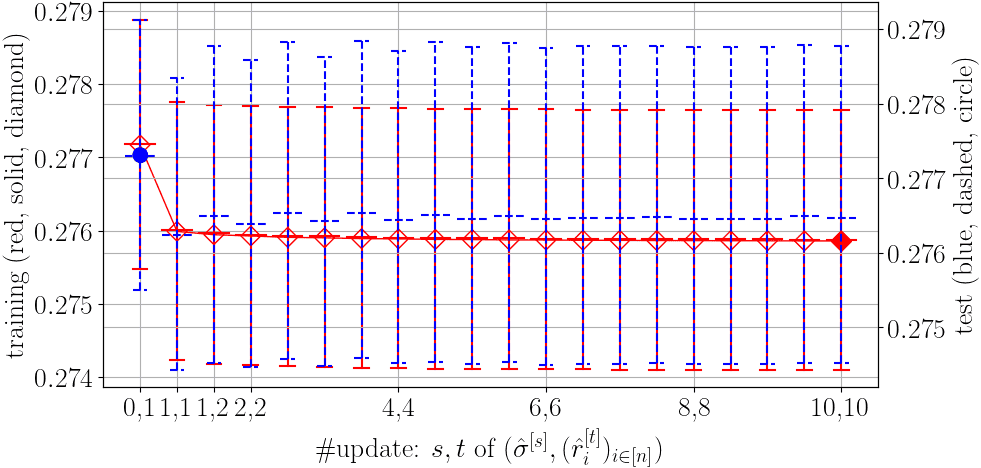}}
\\\midrule
\multirow{3}{*}[-2.5mm]{\rotatebox{90}{\tiny\eqref{eq:Kendall}, $n=$}}
&\rotatebox{90}{\tiny\,~~~\,$25$}&
{\includegraphics[width=2.0cm]{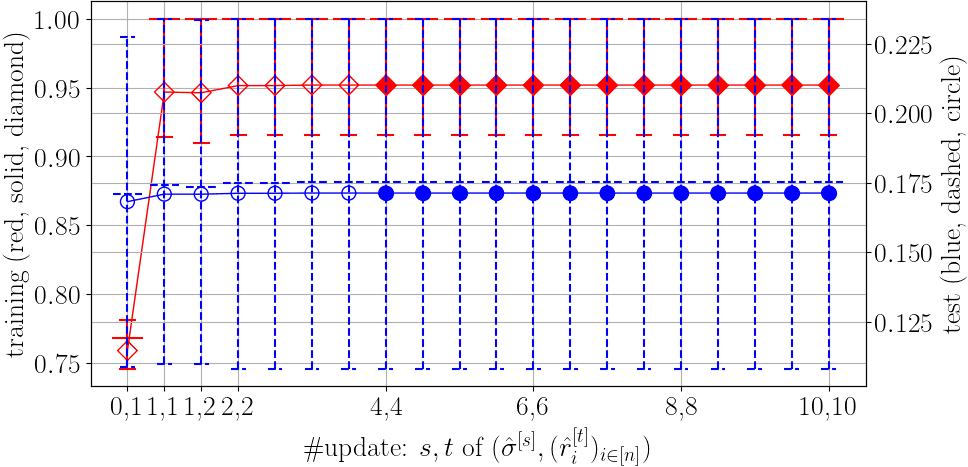}}&
\CF{\includegraphics[width=2.0cm]{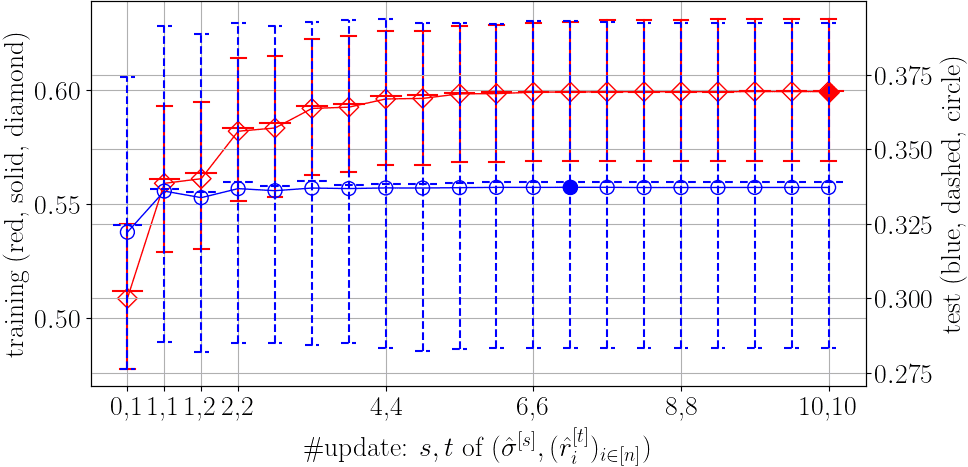}}&
\CF{\includegraphics[width=2.0cm]{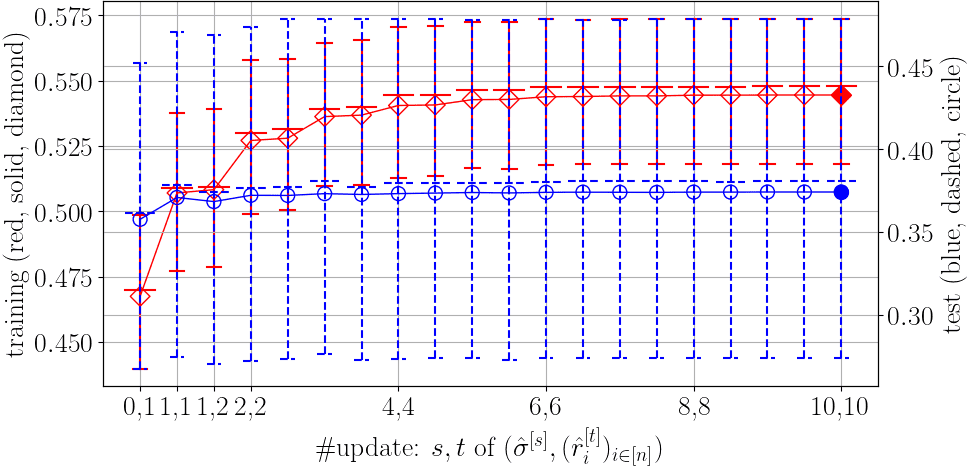}}&
\CF{\includegraphics[width=2.0cm]{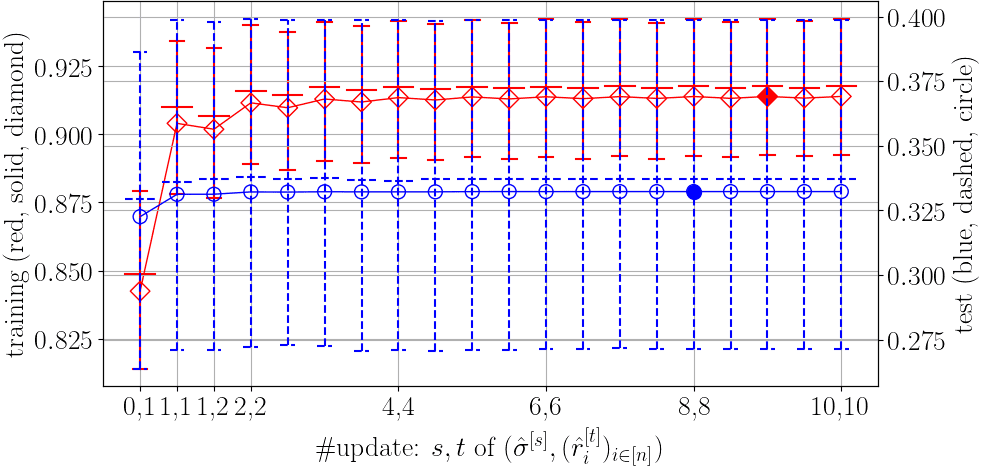}}&
\CF{\includegraphics[width=2.0cm]{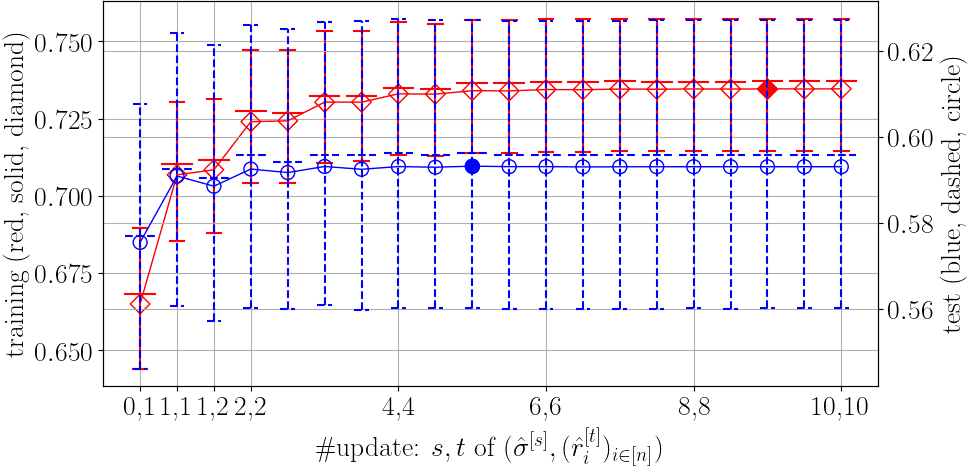}}&
\CF{\includegraphics[width=2.0cm]{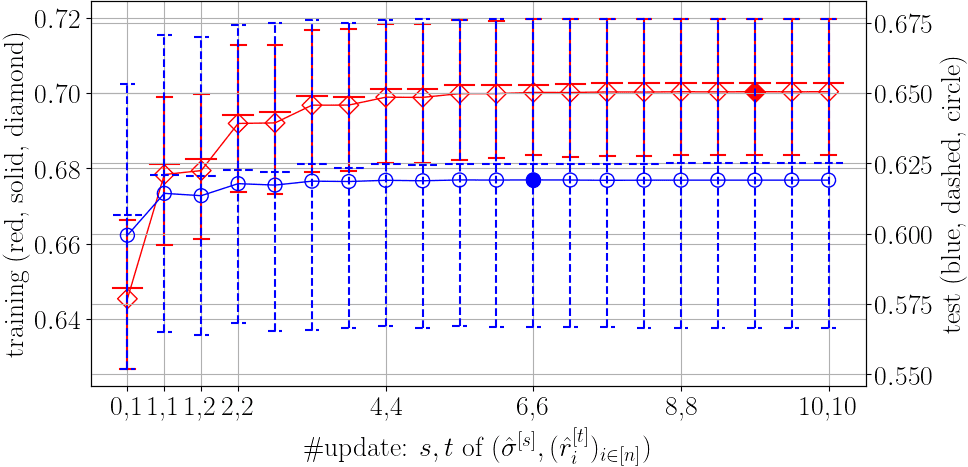}}&
\CF{\includegraphics[width=2.0cm]{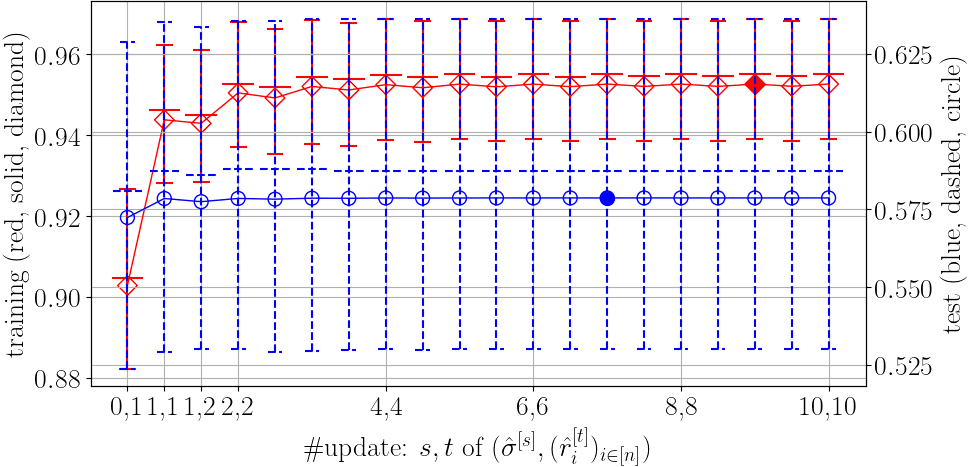}}&
\CF{\includegraphics[width=2.0cm]{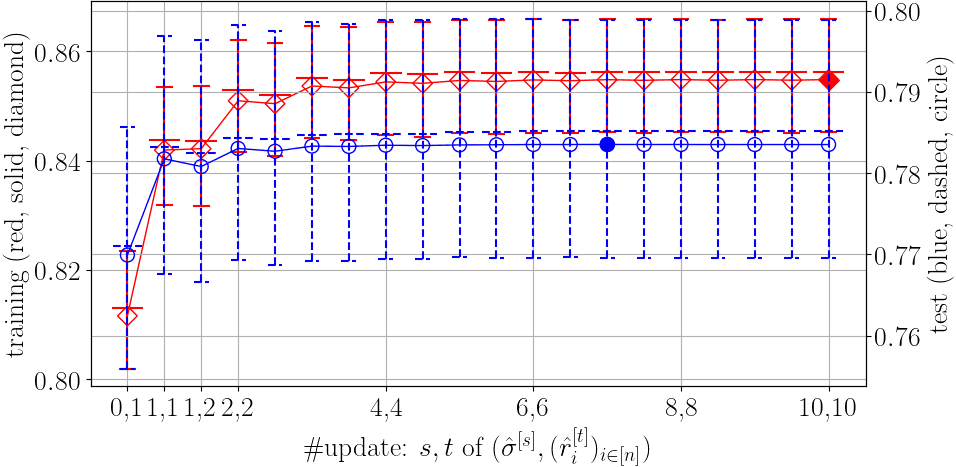}}&
\CF{\includegraphics[width=2.0cm]{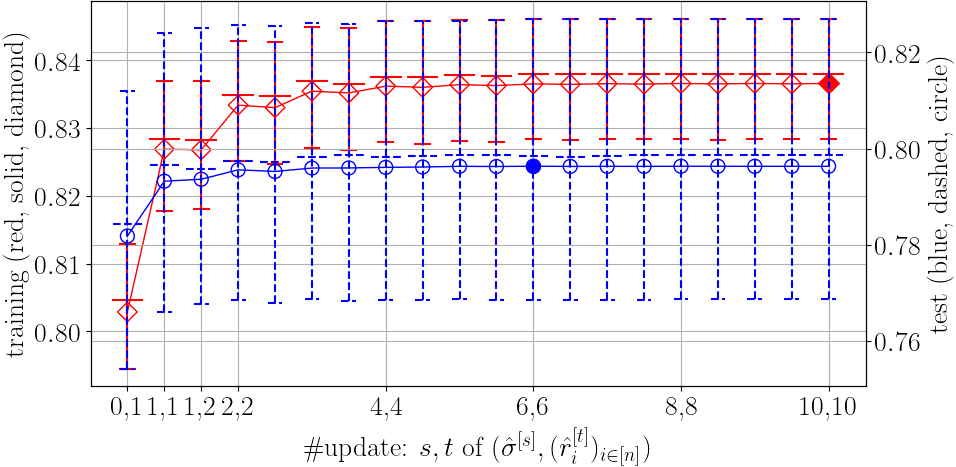}}\\
&\rotatebox{90}{\tiny\,~~\,$100$}&
\CF{\includegraphics[width=2.0cm]{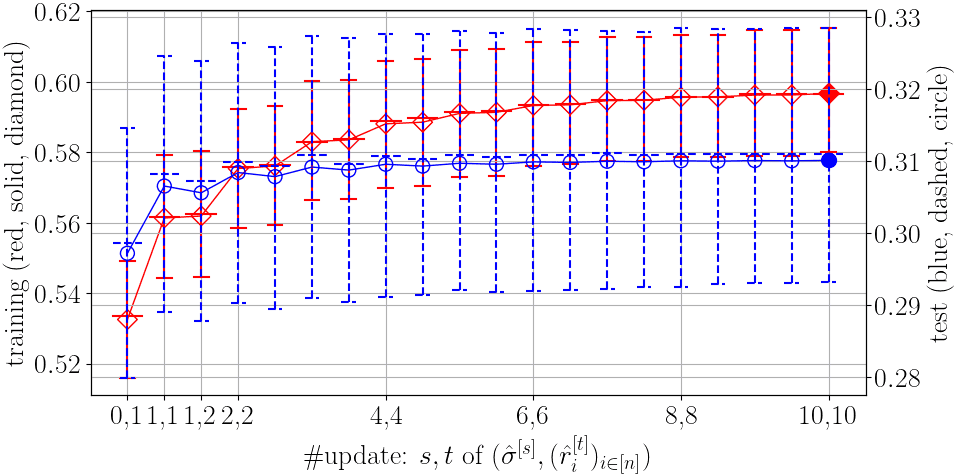}}&
\CF{\includegraphics[width=2.0cm]{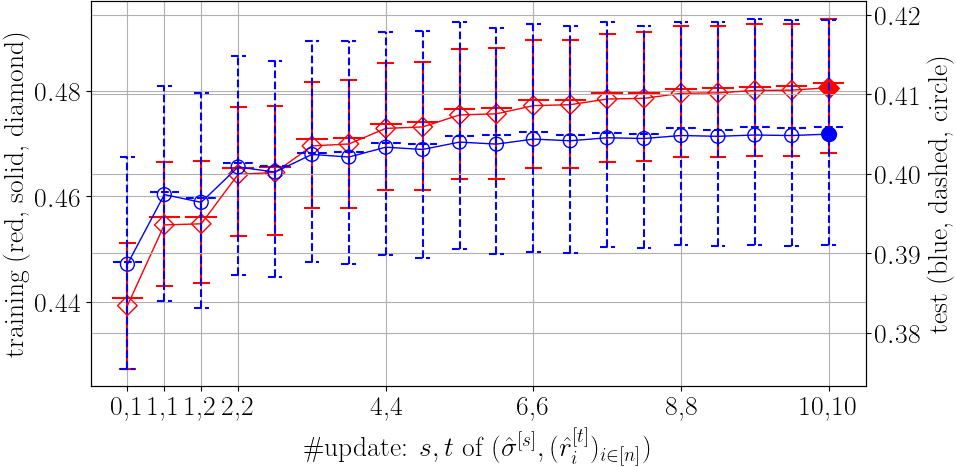}}&
\CF{\includegraphics[width=2.0cm]{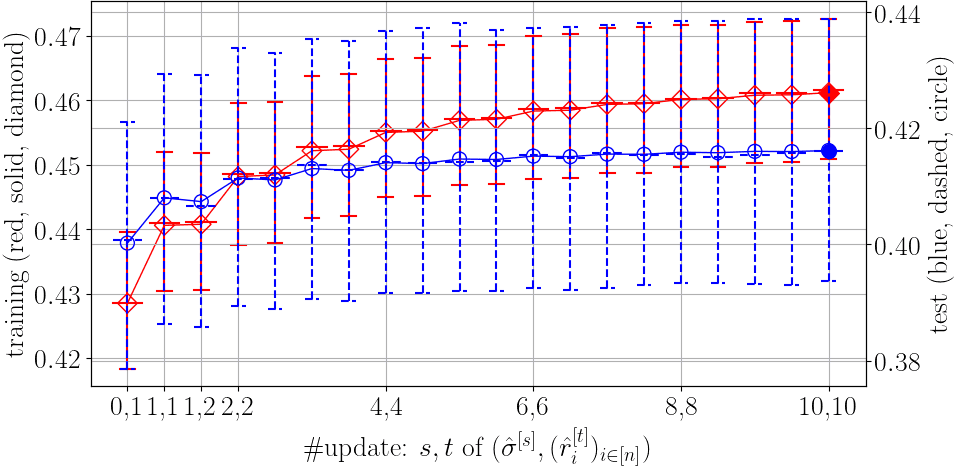}}&
\CF{\includegraphics[width=2.0cm]{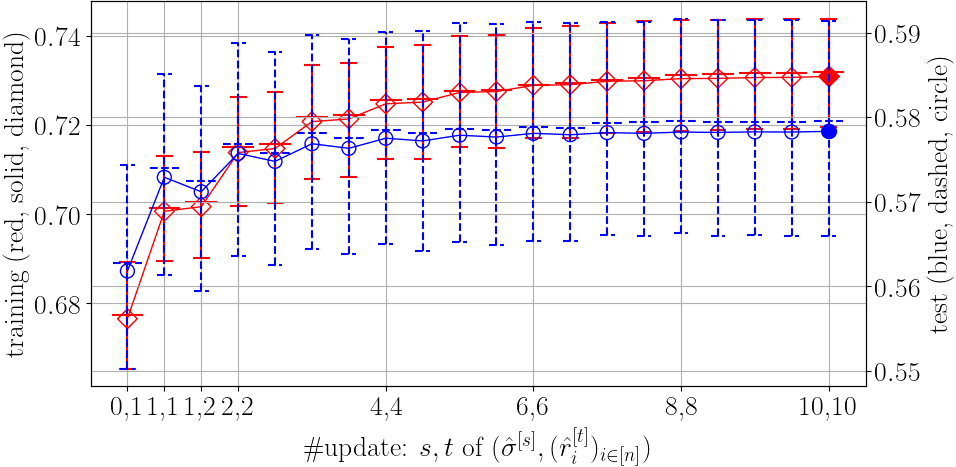}}&
\CF{\includegraphics[width=2.0cm]{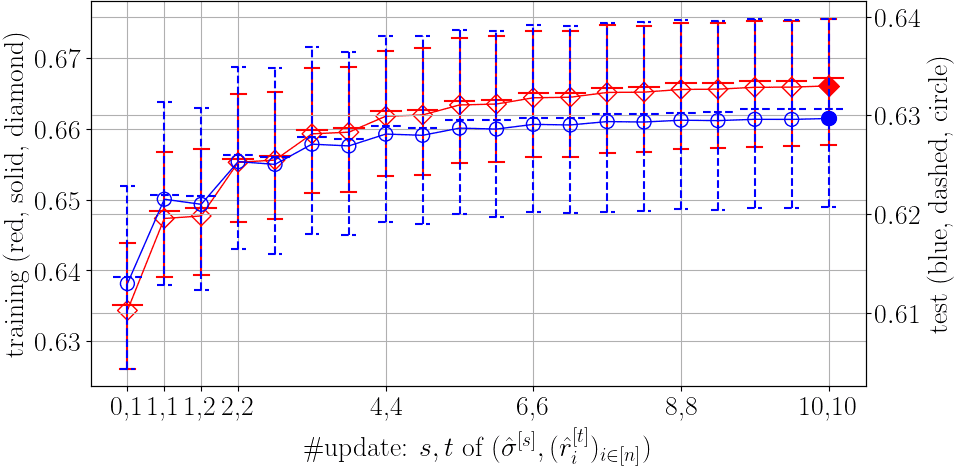}}&
\CF{\includegraphics[width=2.0cm]{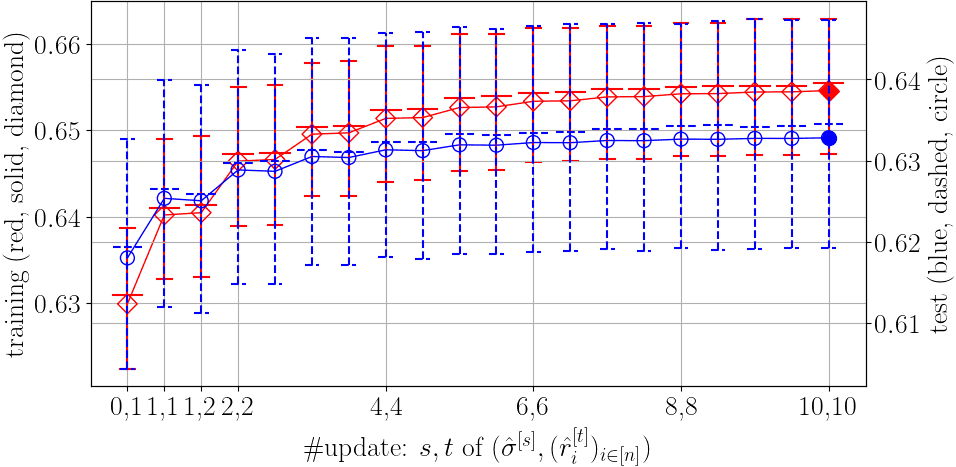}}&
\CF{\includegraphics[width=2.0cm]{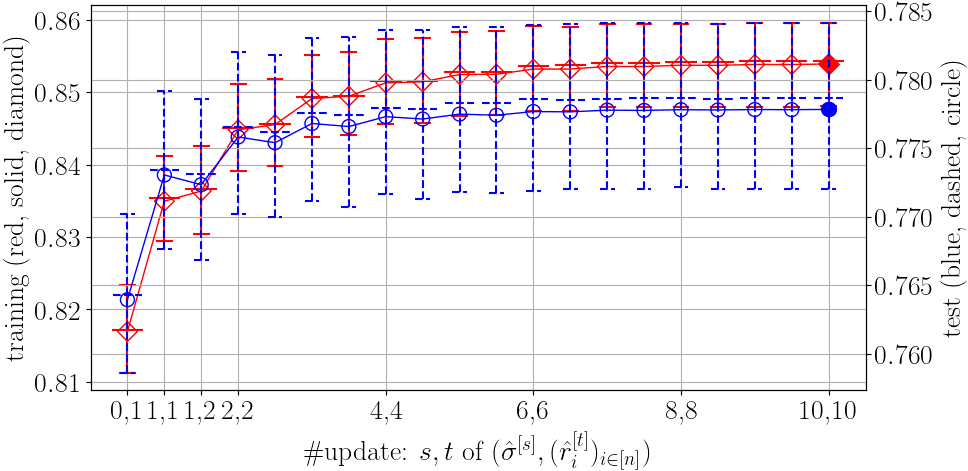}}&
\CF{\includegraphics[width=2.0cm]{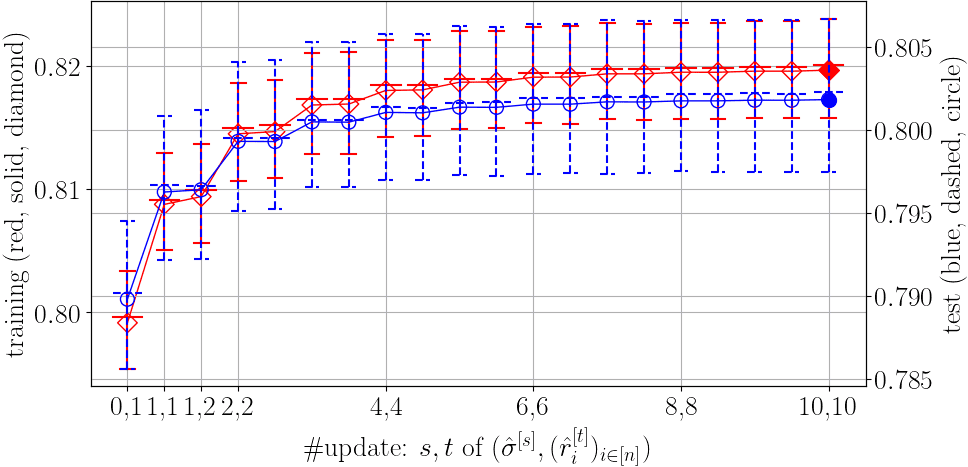}}&
\CF{\includegraphics[width=2.0cm]{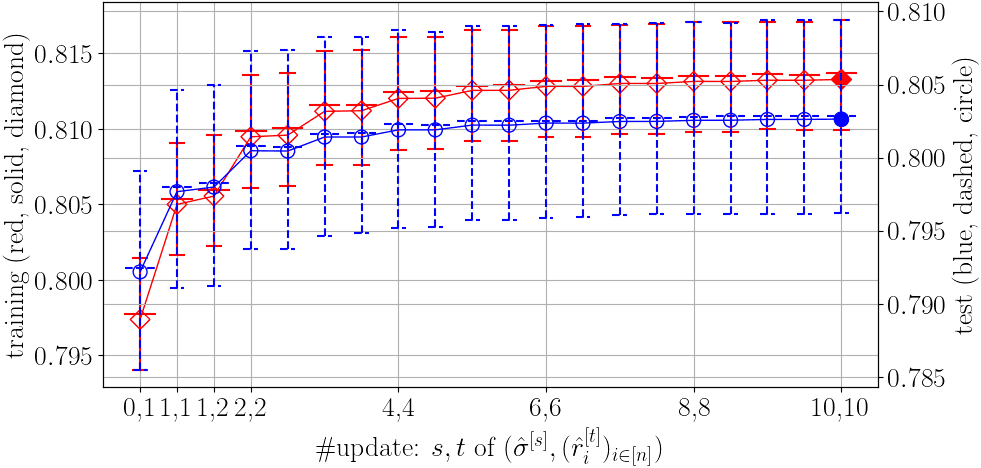}}\\
&\rotatebox{90}{\tiny\,~~\,$400$}&
\CF{\includegraphics[width=2.0cm]{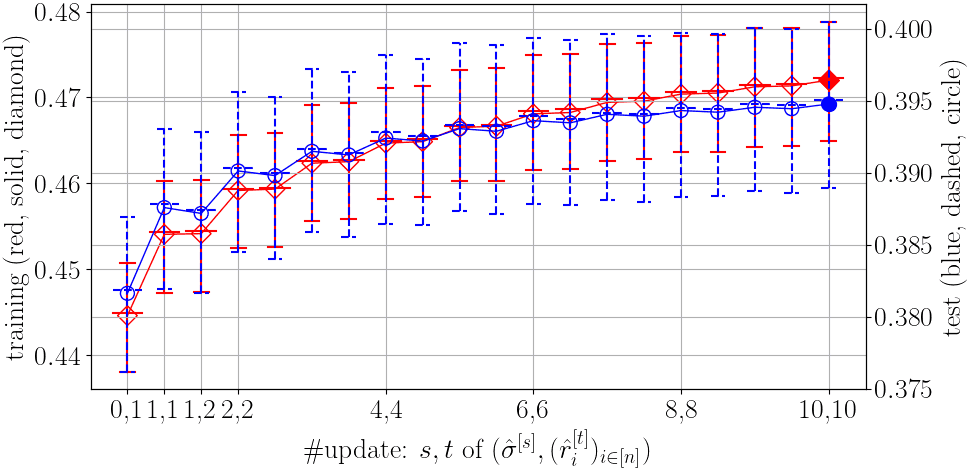}}&
\CF{\includegraphics[width=2.0cm]{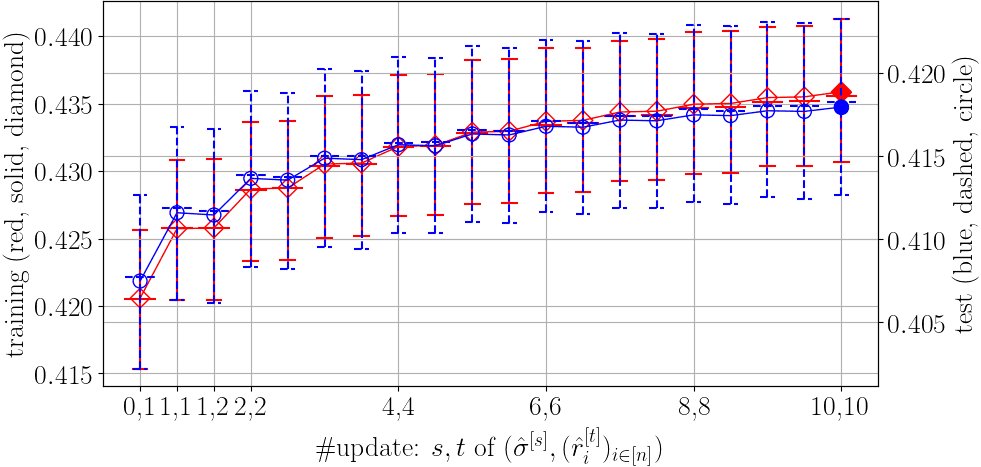}}&
\CF{\includegraphics[width=2.0cm]{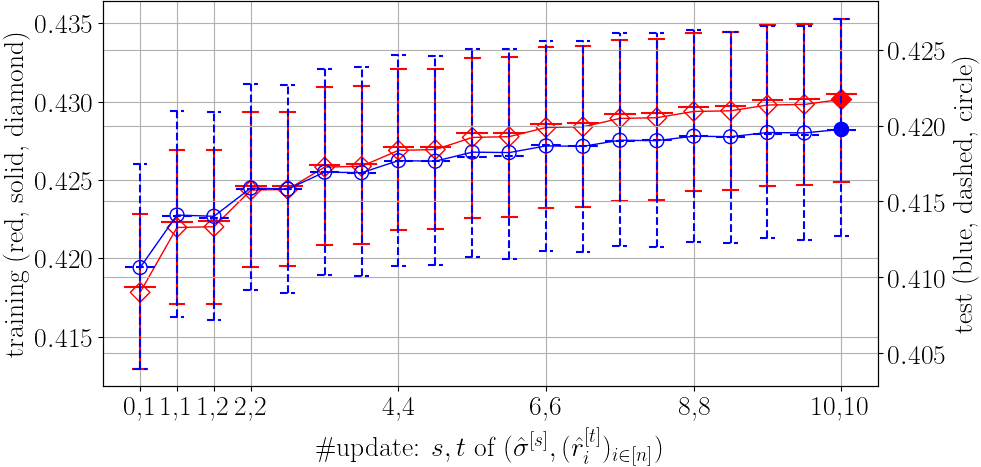}}&
\CF{\includegraphics[width=2.0cm]{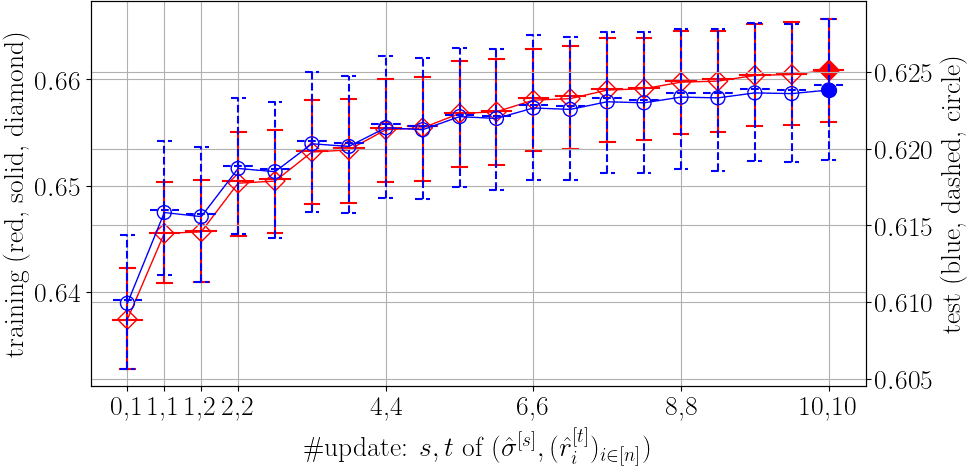}}&
\CF{\includegraphics[width=2.0cm]{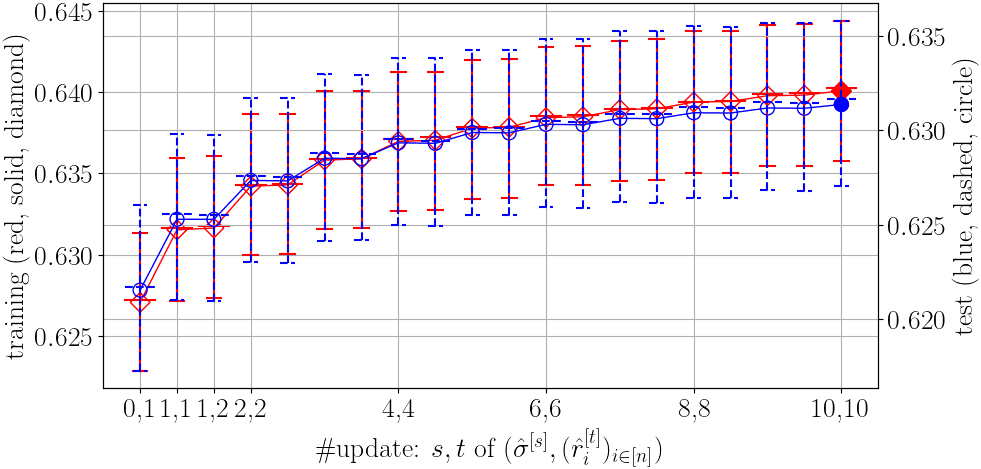}}&
\CF{\includegraphics[width=2.0cm]{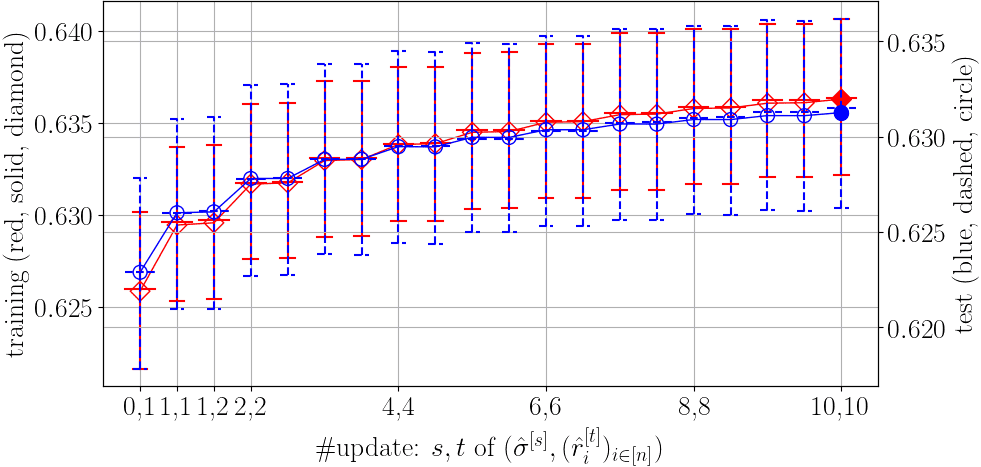}}&
\CF{\includegraphics[width=2.0cm]{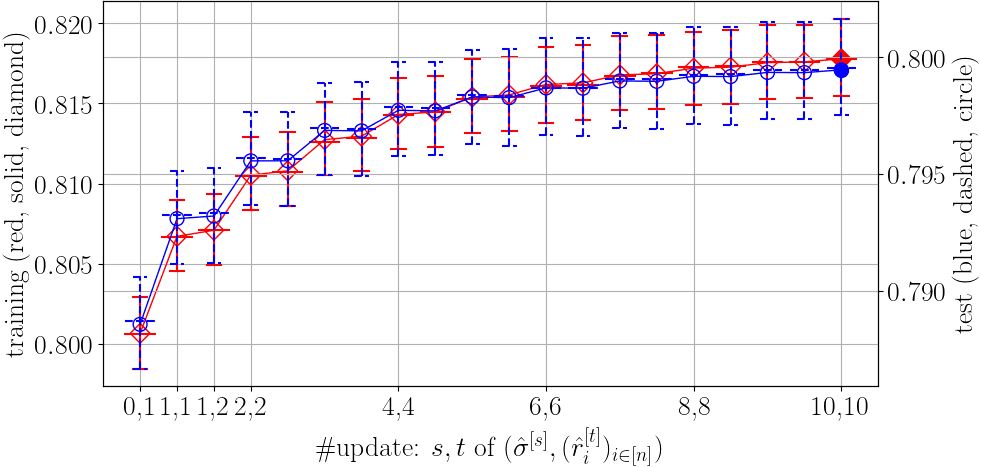}}&
\CF{\includegraphics[width=2.0cm]{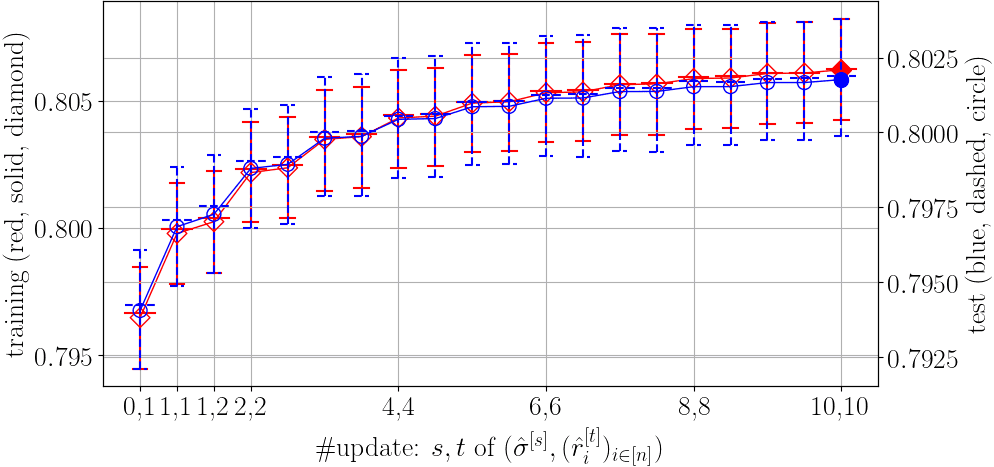}}&
\CF{\includegraphics[width=2.0cm]{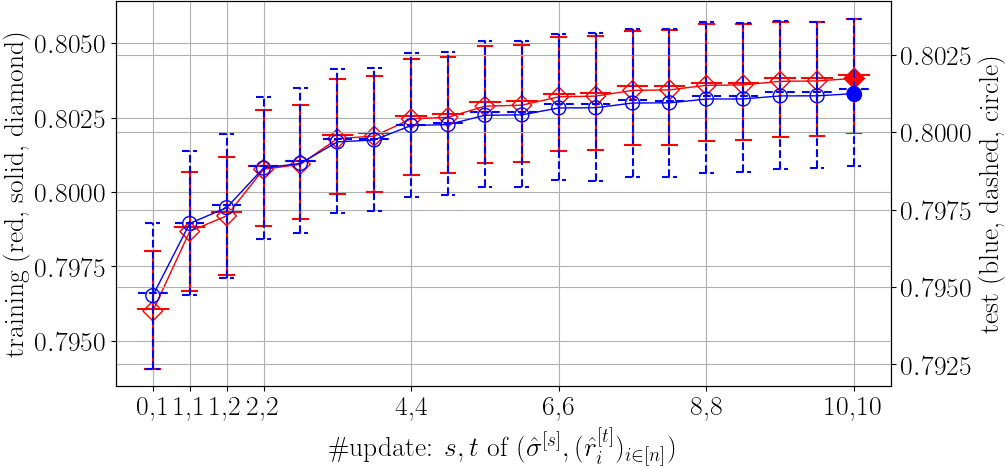}}
\\\midrule
\multirow{3}{*}[-2.5mm]{\rotatebox{90}{\tiny\eqref{eq:TIE}, $n=$}}
&\rotatebox{90}{\tiny\,~~~\,$25$}&
{\includegraphics[width=2.0cm]{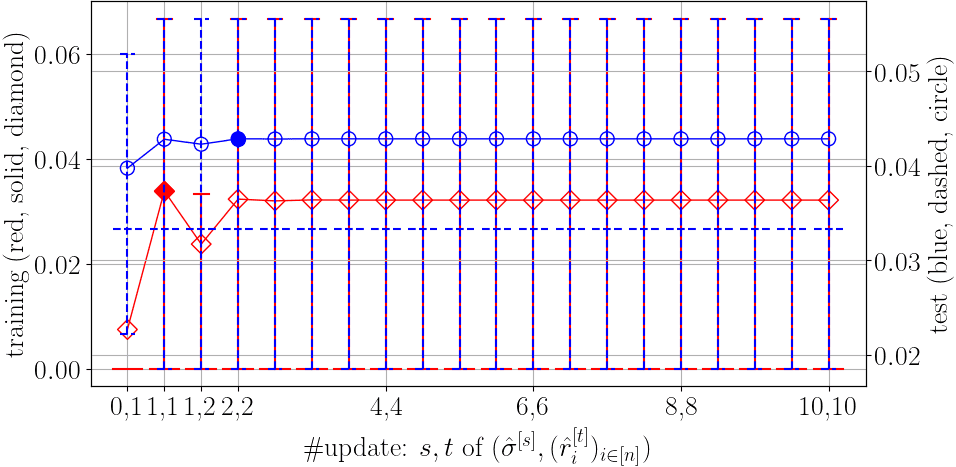}}&
{\includegraphics[width=2.0cm]{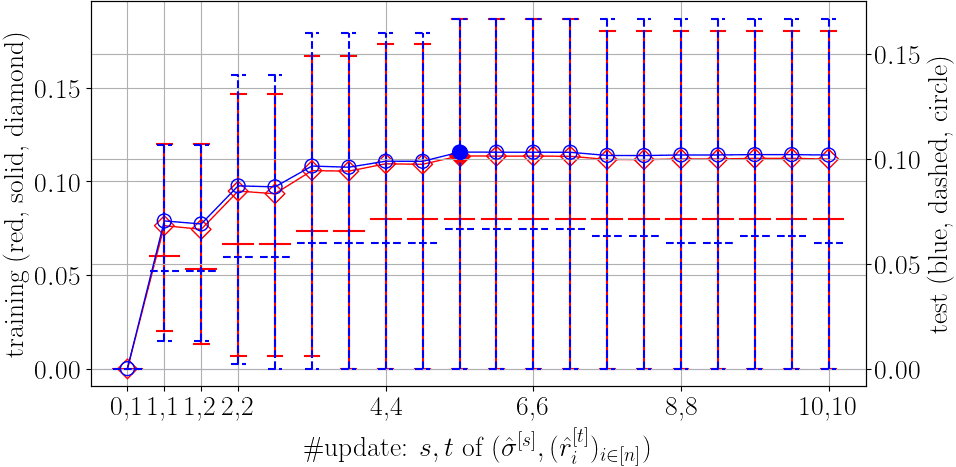}}&
{\includegraphics[width=2.0cm]{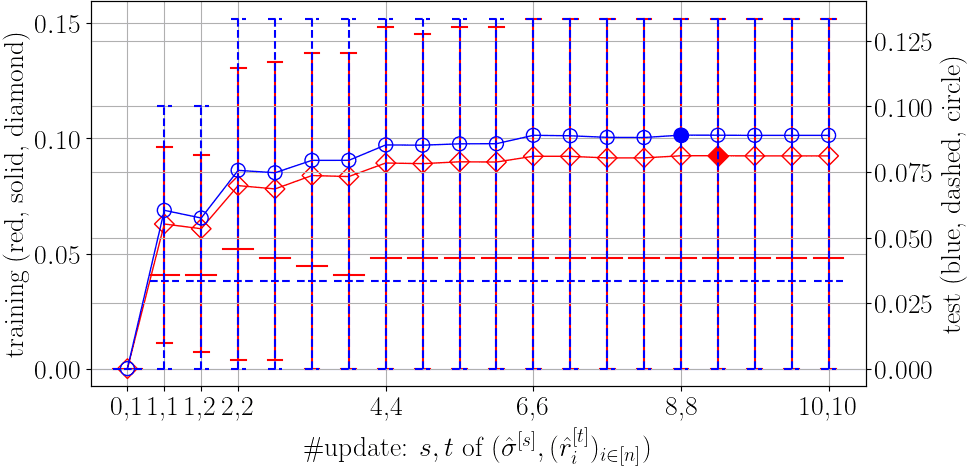}}&
{\includegraphics[width=2.0cm]{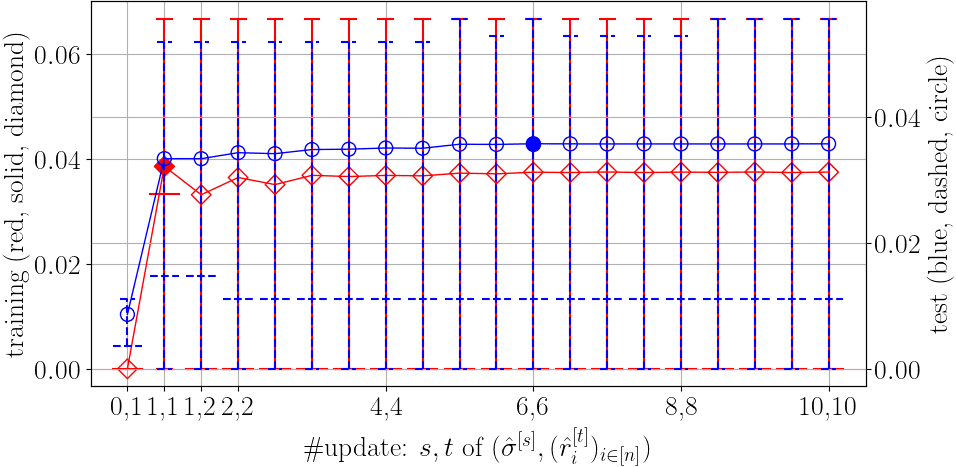}}&
{\includegraphics[width=2.0cm]{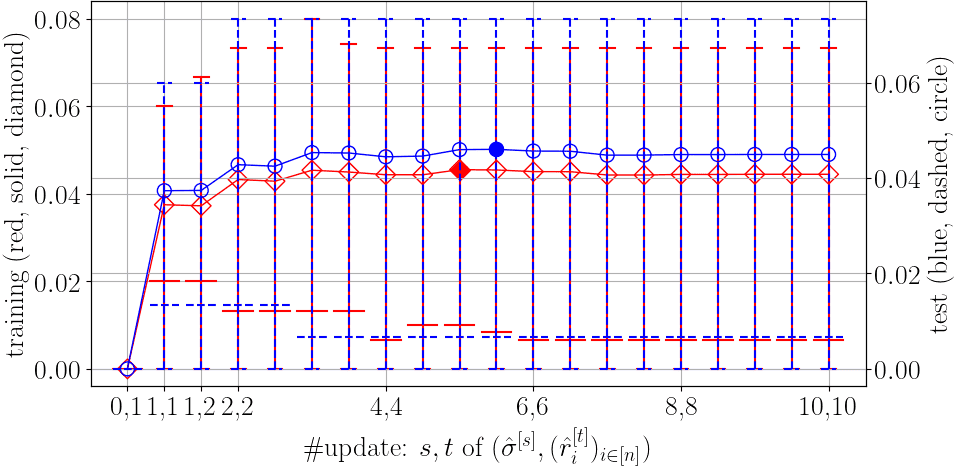}}&
{\includegraphics[width=2.0cm]{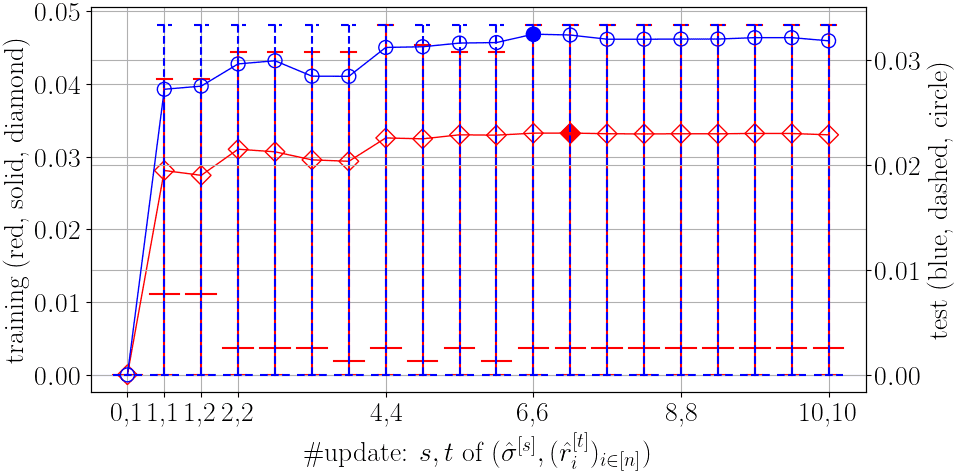}}&
{\includegraphics[width=2.0cm]{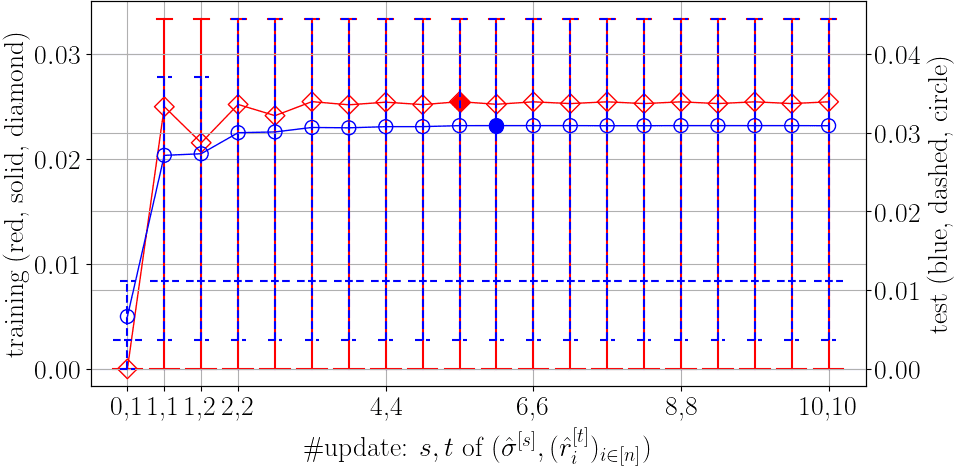}}&
{\includegraphics[width=2.0cm]{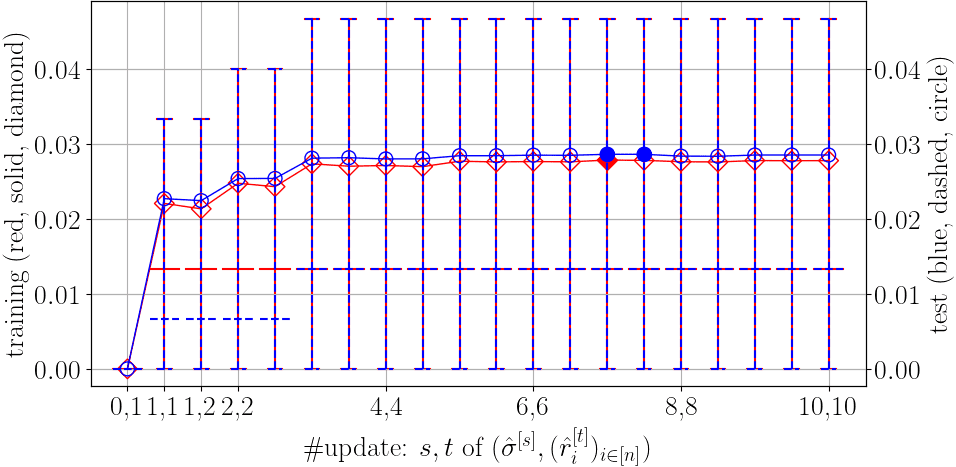}}&
{\includegraphics[width=2.0cm]{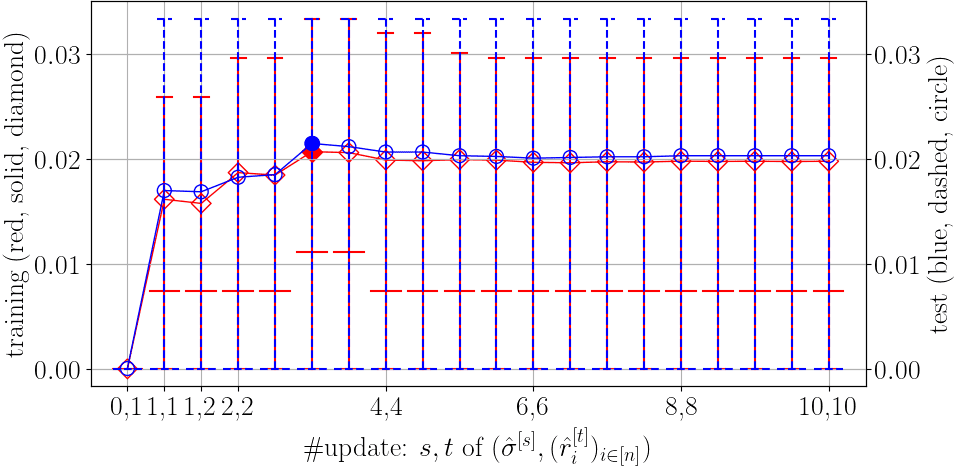}}\\
&\rotatebox{90}{\tiny\,~~\,$100$}&
{\includegraphics[width=2.0cm]{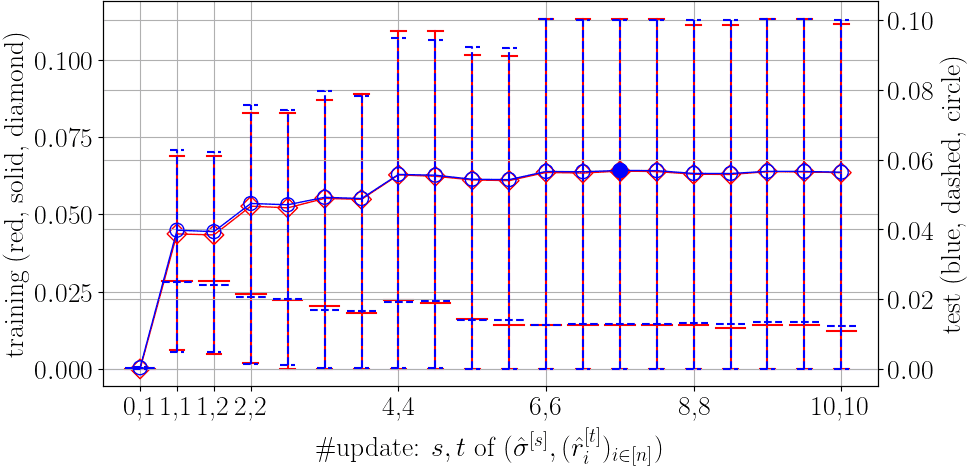}}&
{\includegraphics[width=2.0cm]{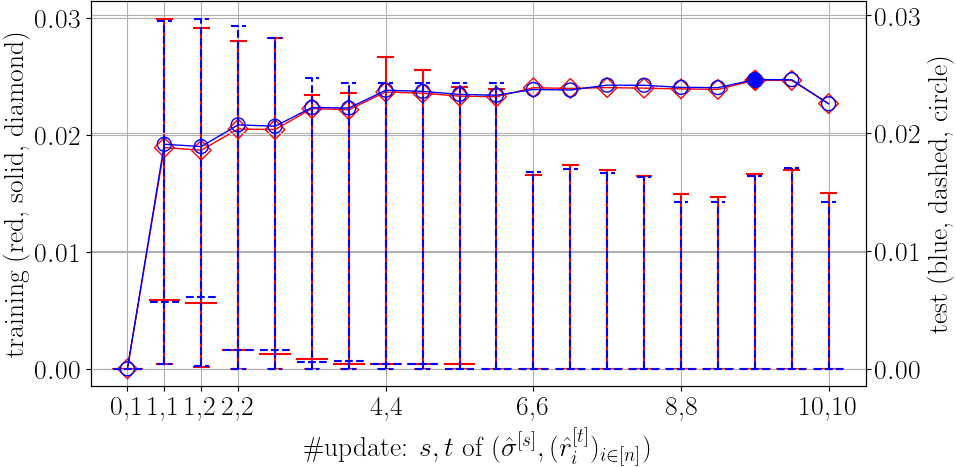}}&
{\includegraphics[width=2.0cm]{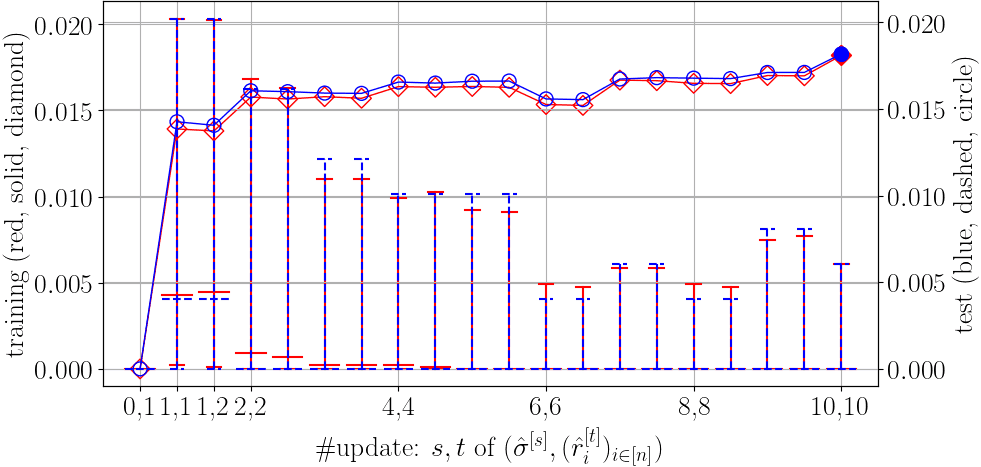}}&
{\includegraphics[width=2.0cm]{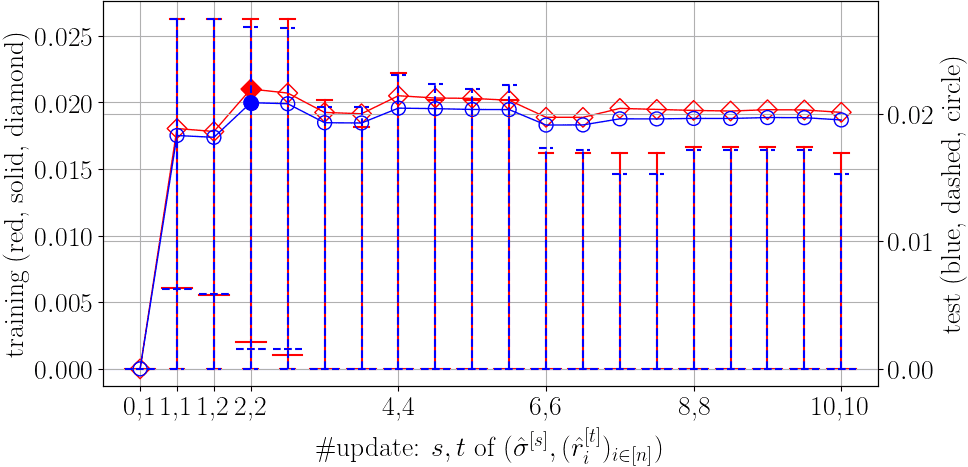}}&
{\includegraphics[width=2.0cm]{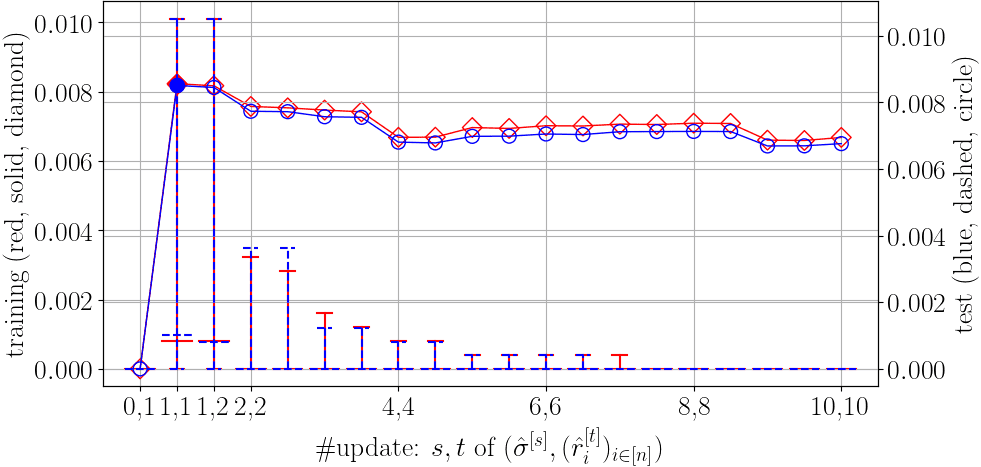}}&
{\includegraphics[width=2.0cm]{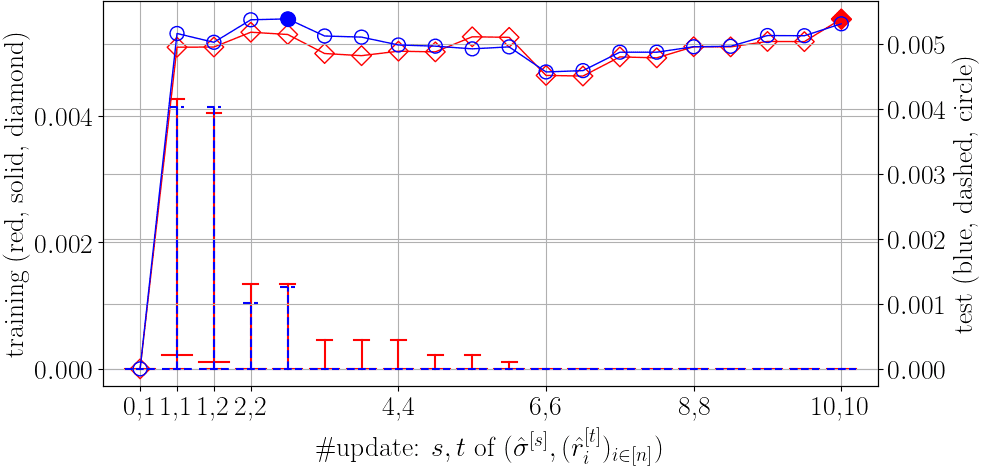}}&
{\includegraphics[width=2.0cm]{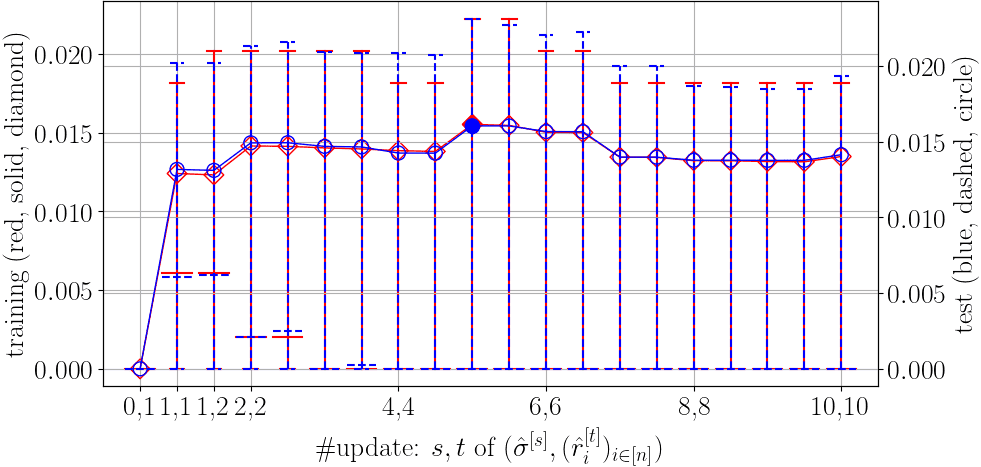}}&
{\includegraphics[width=2.0cm]{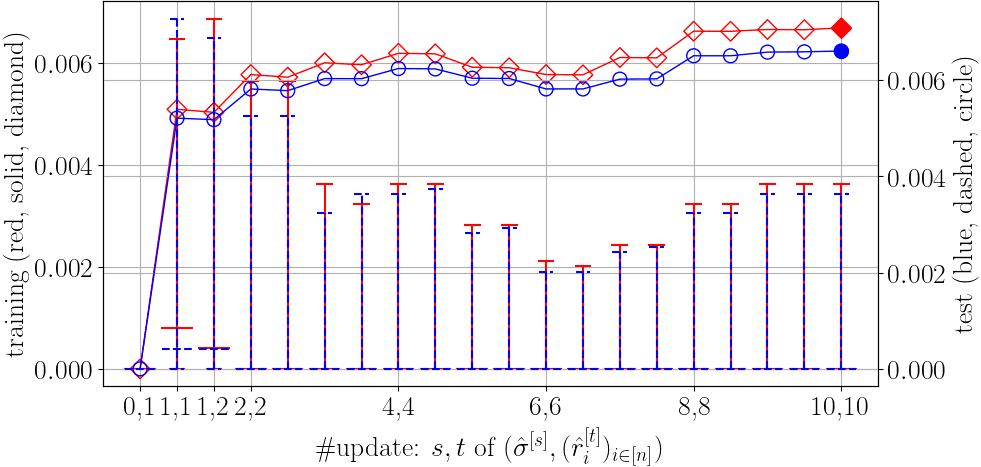}}&
{\includegraphics[width=2.0cm]{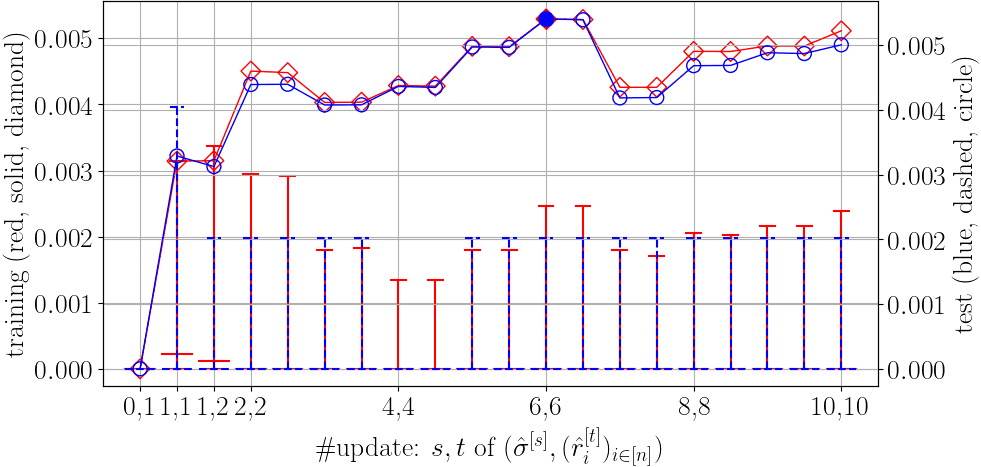}}\\
&\rotatebox{90}{\tiny\,~~\,$400$}&
{\includegraphics[width=2.0cm]{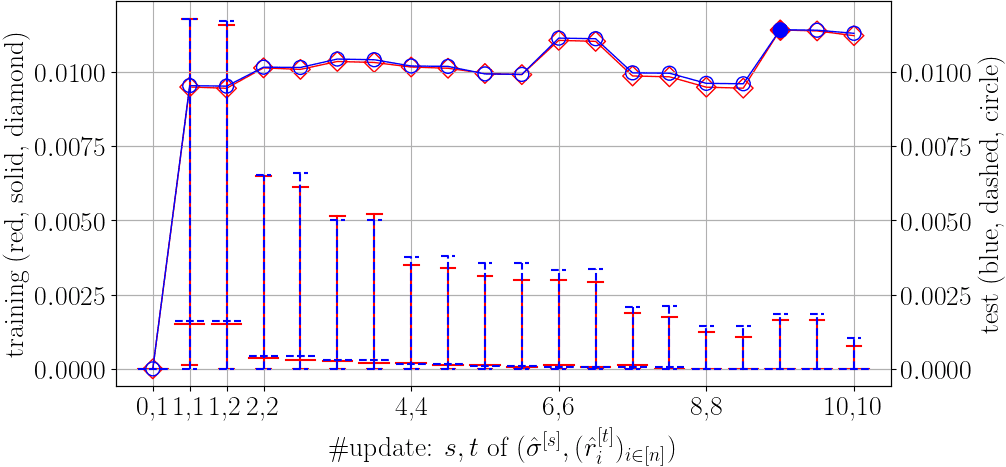}}&
{\includegraphics[width=2.0cm]{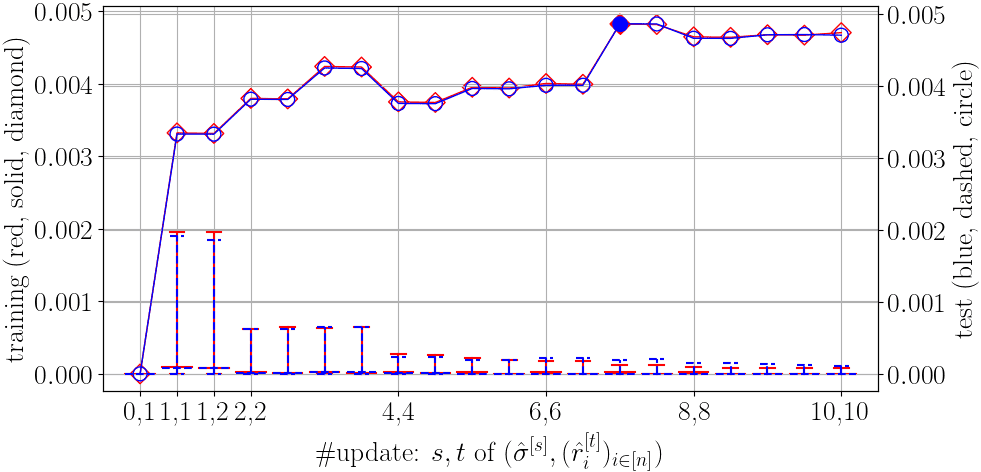}}&
{\includegraphics[width=2.0cm]{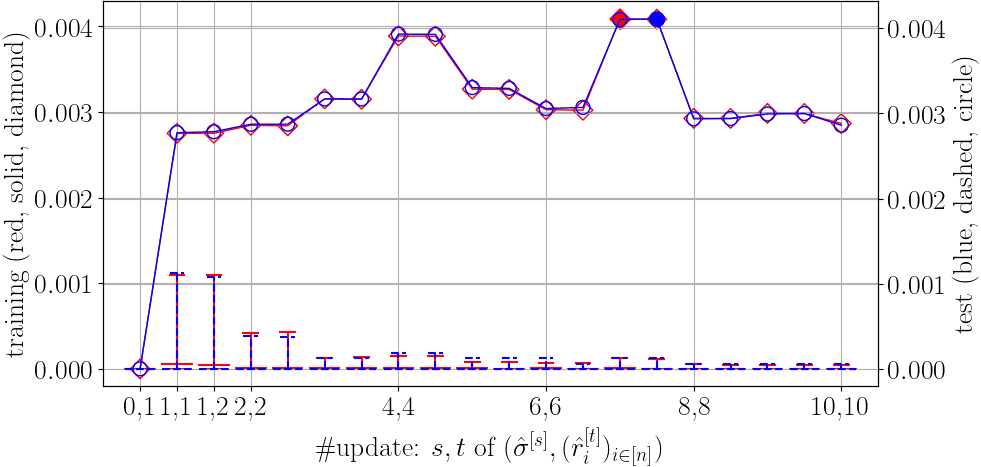}}&
{\includegraphics[width=2.0cm]{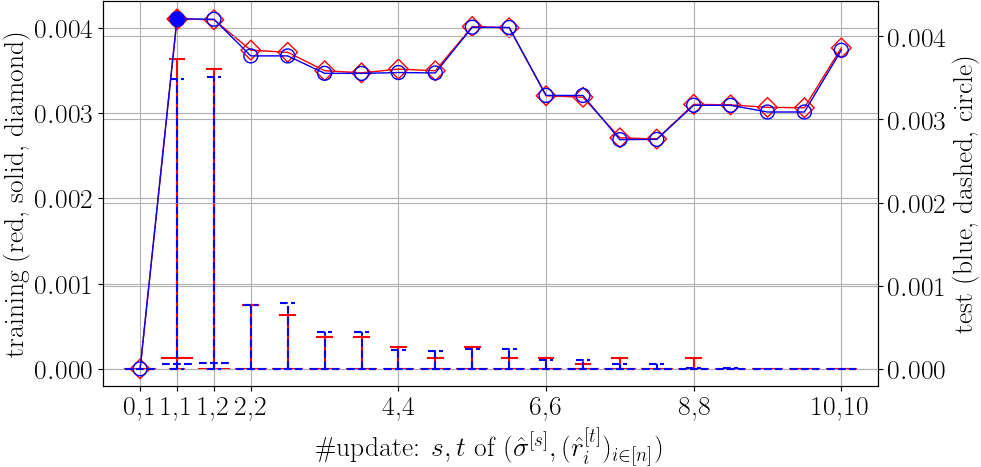}}&
{\includegraphics[width=2.0cm]{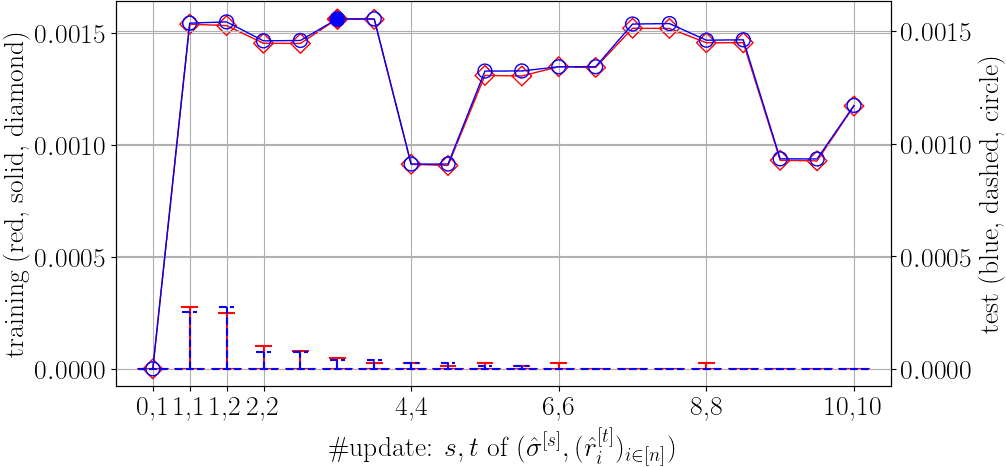}}&
{\includegraphics[width=2.0cm]{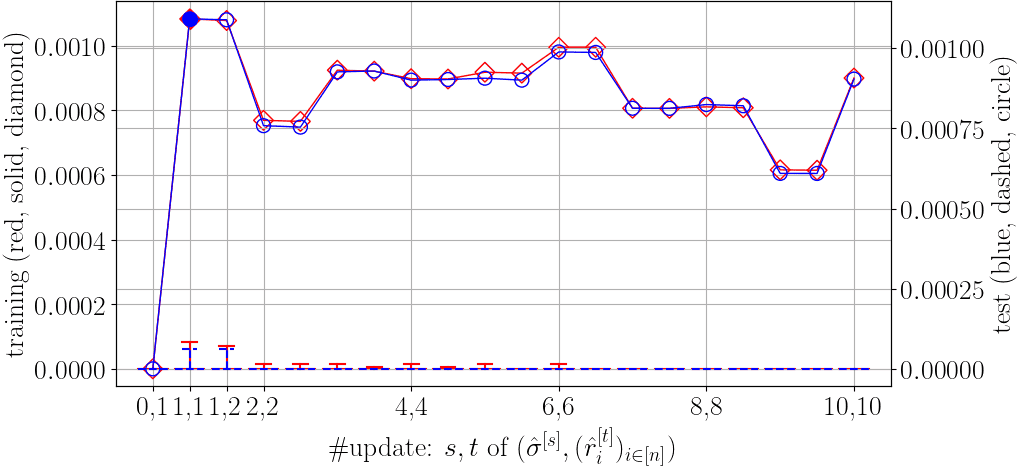}}&
{\includegraphics[width=2.0cm]{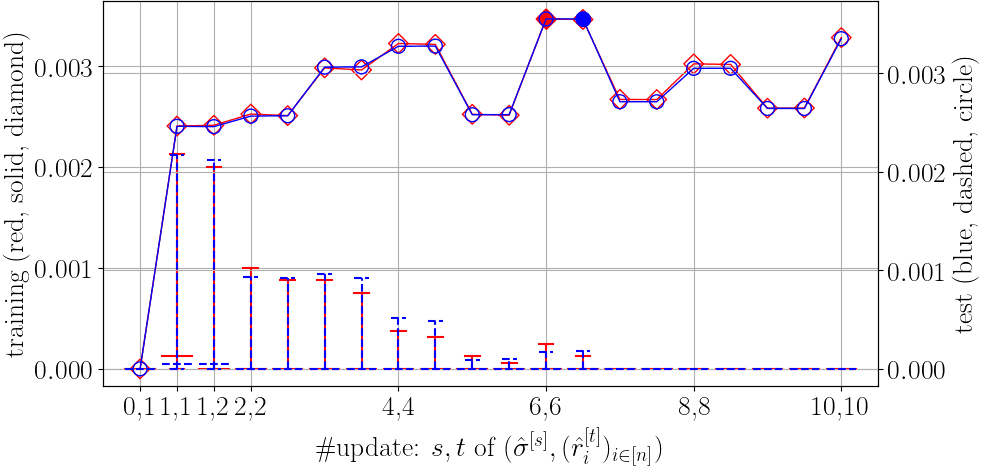}}&
{\includegraphics[width=2.0cm]{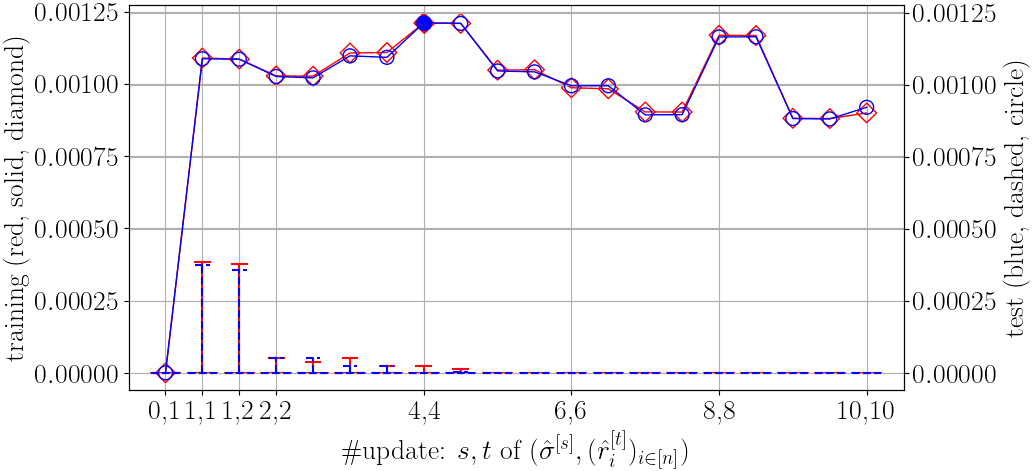}}&
{\includegraphics[width=2.0cm]{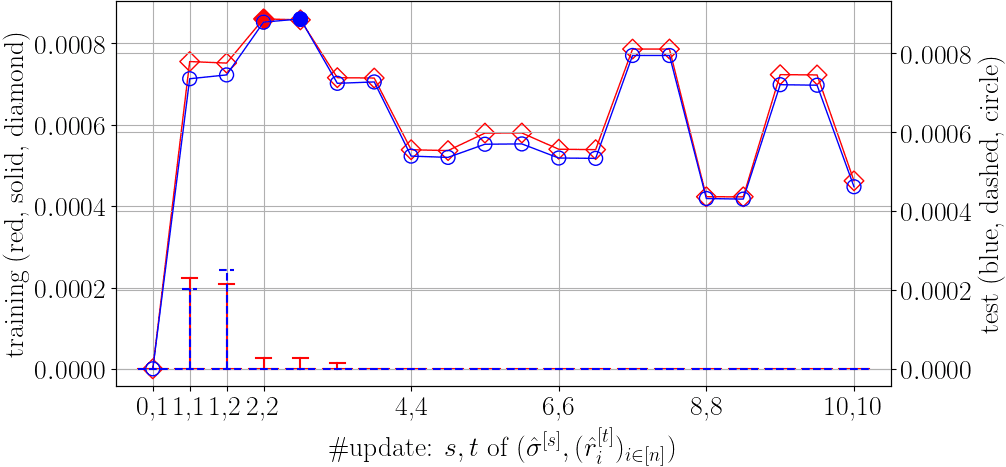}}
\end{tabular}
\caption{%
Part of results of synthetic data experiments, Procedure 1 in Section~\ref{sec:Synthetic} and Appendix~\ref{sec:NLL}:
For Cauchy-$N$ synthetic data with $N=1,5,25$ (left to right),
mean (marker) and 0.25, 0.5, and 0.75 quantiles (lower, middle, and upper bars) of 
1000 trial training (red, solid, diamond) and test (blue, dashed, circle) evaluation of 
WPP error \eqref{eq:WPE} with the NLL loss $\phi=\phi_\nll$, 
Kendall's Tau \eqref{eq:Kendall}, and tie rate \eqref{eq:TIE} (top to bottom)
for the isotonic Bradley-Terry model learned with the NLL loss $\phi=\phi_\nll$.
Smaller \eqref{eq:WPE}, or larger \eqref{eq:Kendall} indicates a better model.
The marker for the best model and model with the most ties was filled in, 
and the outer frame of the figure was highlighted in gray
if the best model was significantly better than the Bradley-Terry model
with respect to Mann-Whitney U test of the significance level 0.05.}
\label{fig:Cauchy-NLL}
\end{sidewaysfigure}
\begin{sidewaysfigure}
\centering%
\renewcommand{\arraystretch}{0.5}%
\renewcommand{\tabcolsep}{0.5pt}%
\begin{tabular}{cc|ccc|ccc|ccc}%
&&\multicolumn{3}{c|}{\tiny$N=1$, $|D_\tra|:|D_\tes|=$}&\multicolumn{3}{c|}{\tiny$N=5$, $|D_\tra|:|D_\tes|=$}&\multicolumn{3}{c}{\tiny$N=25$, $|D_\tra|:|D_\tes|=$}\\
&&{\tiny$1:9$}&{\tiny$5:5$}&{\tiny$9:1$}&{\tiny$1:9$}&{\tiny$5:5$}&{\tiny$9:1$}&{\tiny$1:9$}&{\tiny$5:5$}&{\tiny$9:1$}\\
\midrule
\multirow{3}{*}[-3.4mm]{\rotatebox{90}{\tiny $n=$}}
&\rotatebox{90}{\tiny\,~~~~\,$25$}&
{\includegraphics[width=2.0cm]{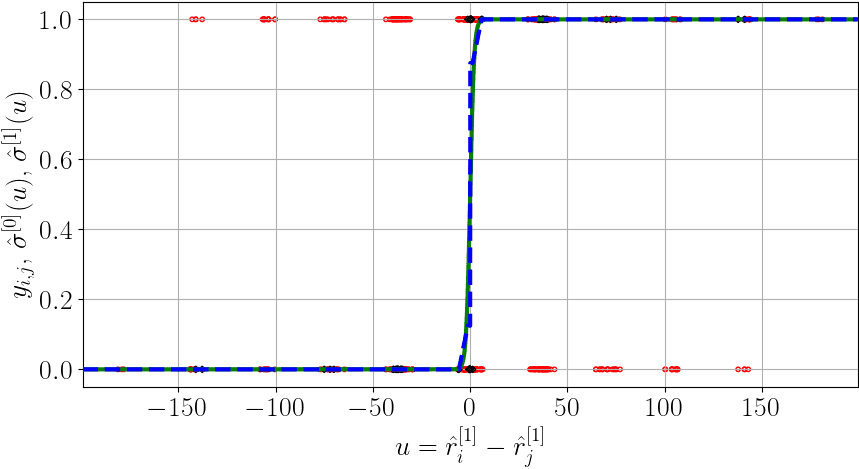}}&
{\includegraphics[width=2.0cm]{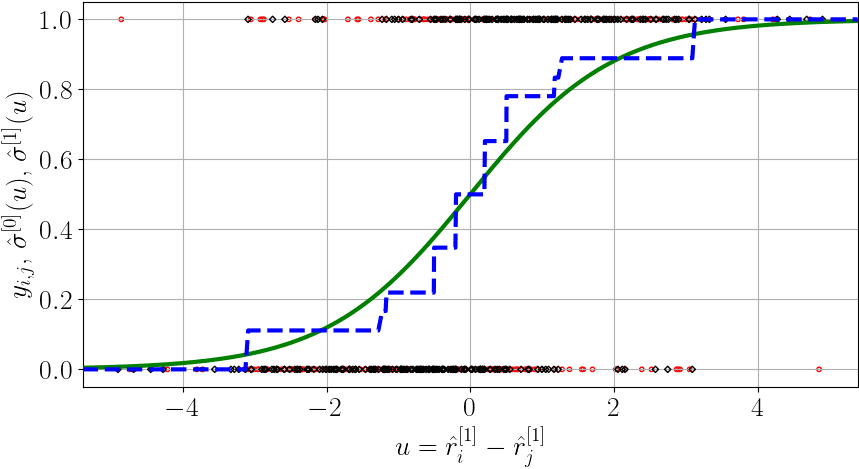}}&
{\includegraphics[width=2.0cm]{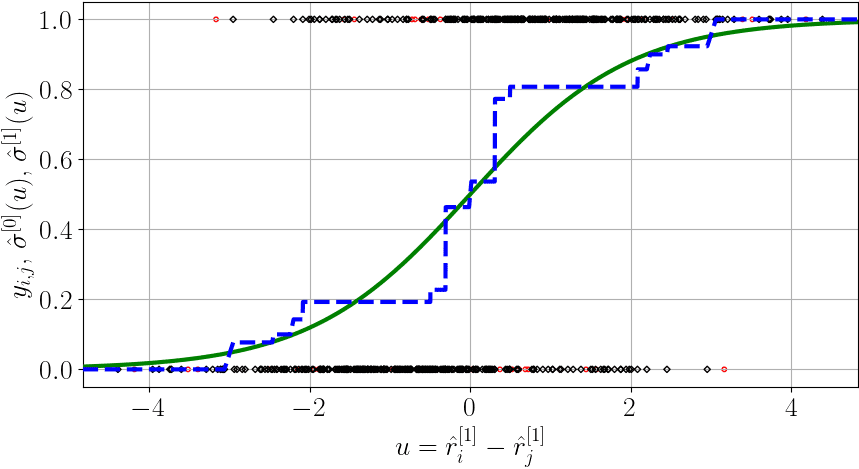}}&
{\includegraphics[width=2.0cm]{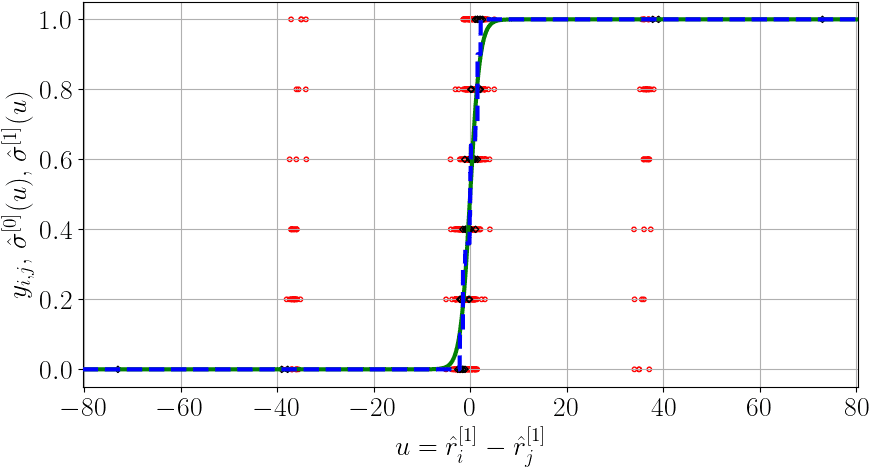}}&
{\includegraphics[width=2.0cm]{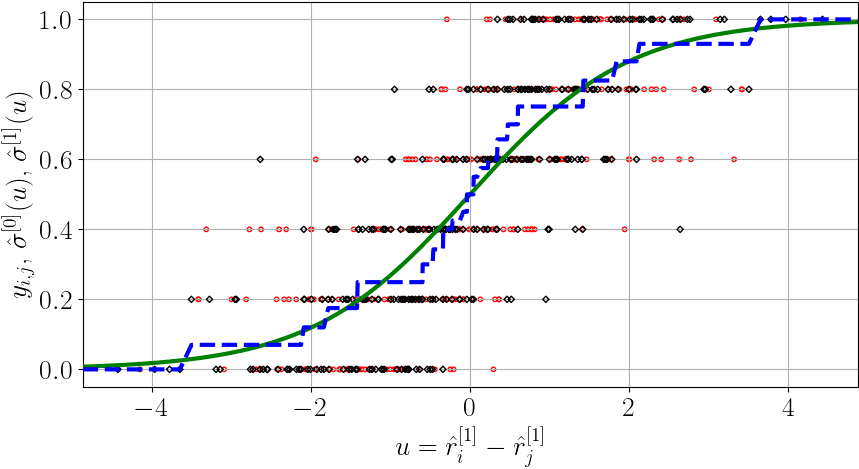}}&
{\includegraphics[width=2.0cm]{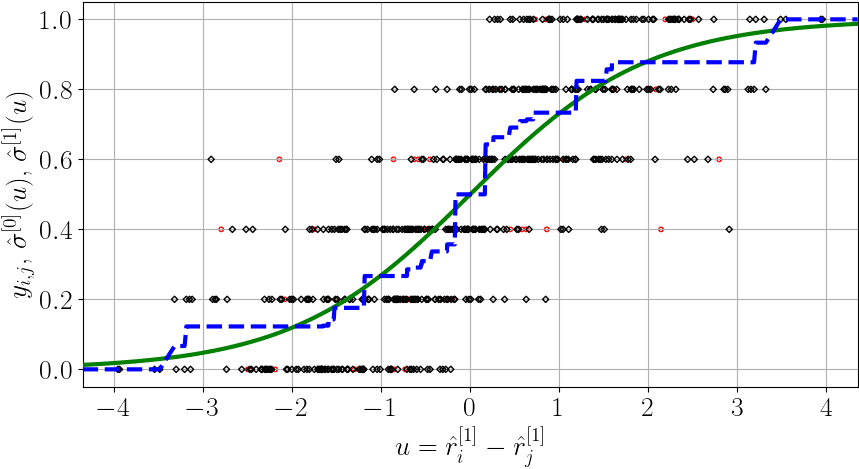}}&
{\includegraphics[width=2.0cm]{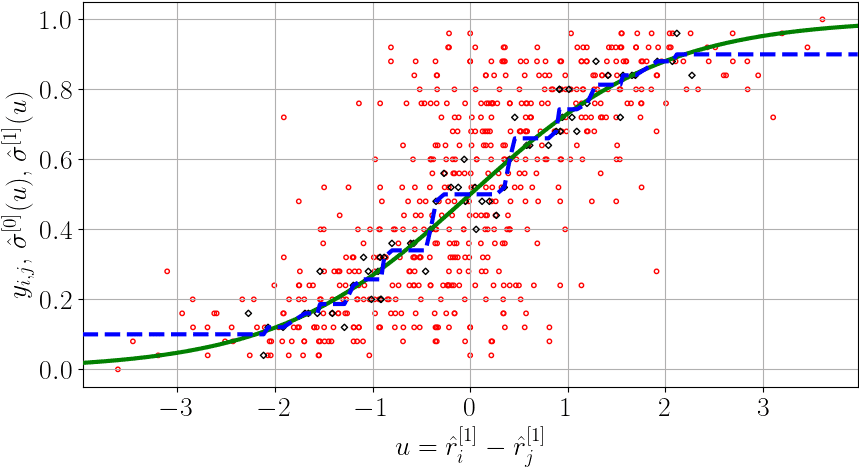}}&
{\includegraphics[width=2.0cm]{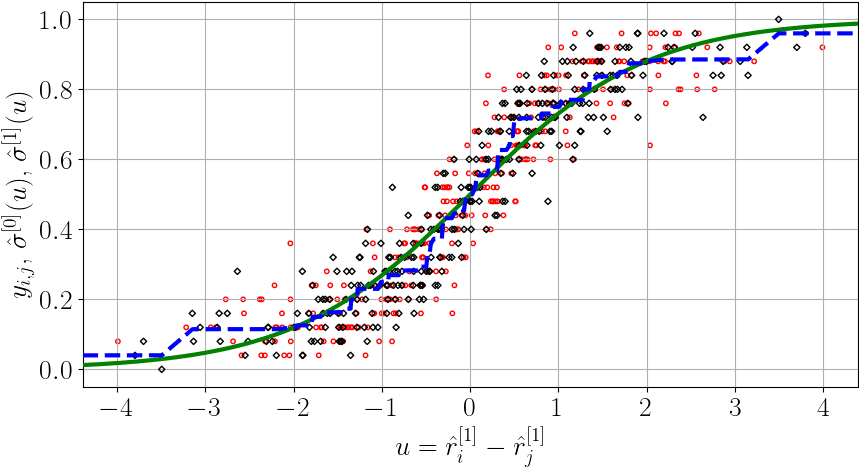}}&
{\includegraphics[width=2.0cm]{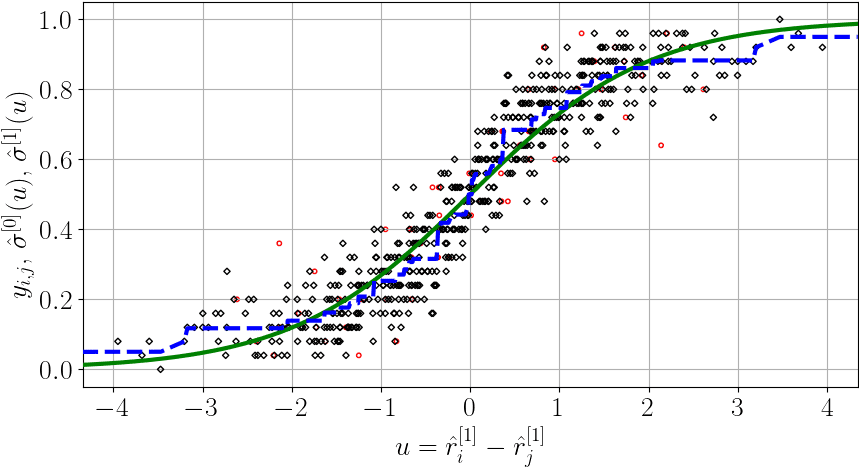}}\\
&\rotatebox{90}{\tiny\,~~~\,$100$}&
{\includegraphics[width=2.0cm]{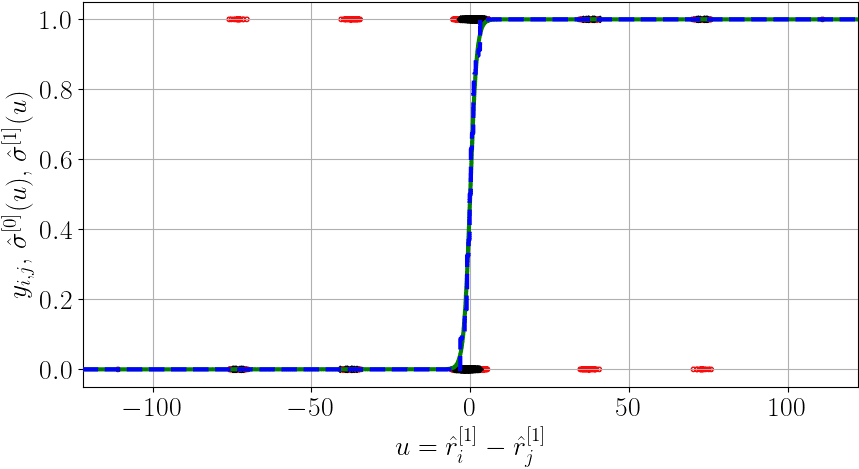}}&
{\includegraphics[width=2.0cm]{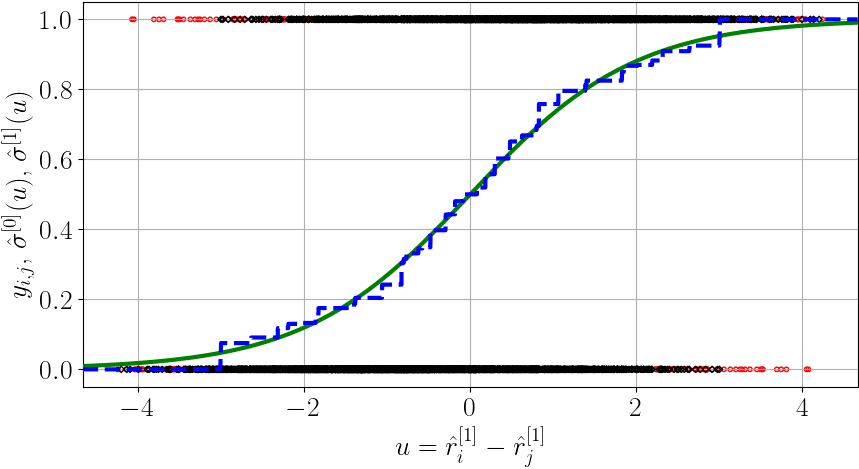}}&
{\includegraphics[width=2.0cm]{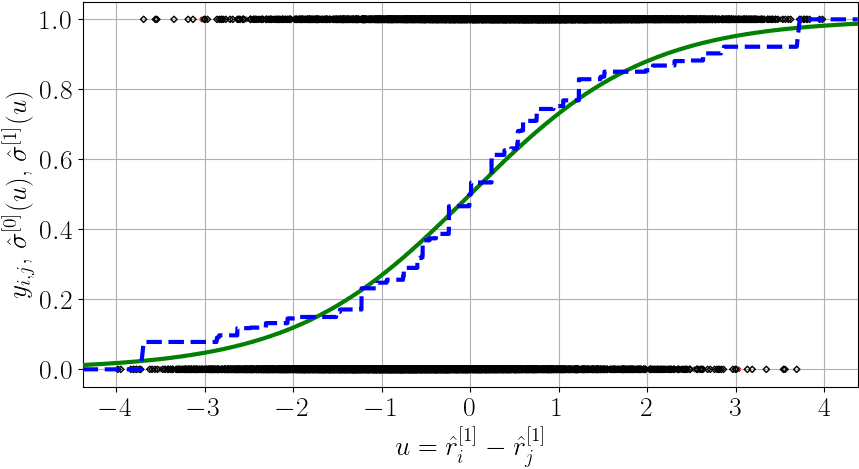}}&
{\includegraphics[width=2.0cm]{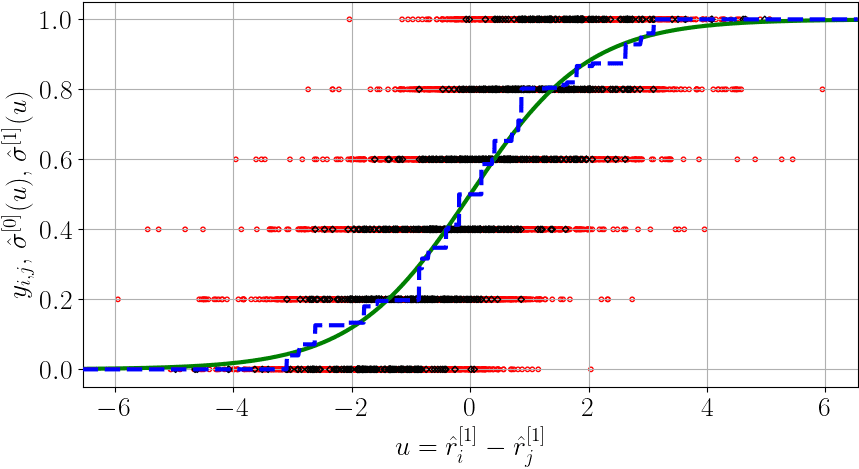}}&
{\includegraphics[width=2.0cm]{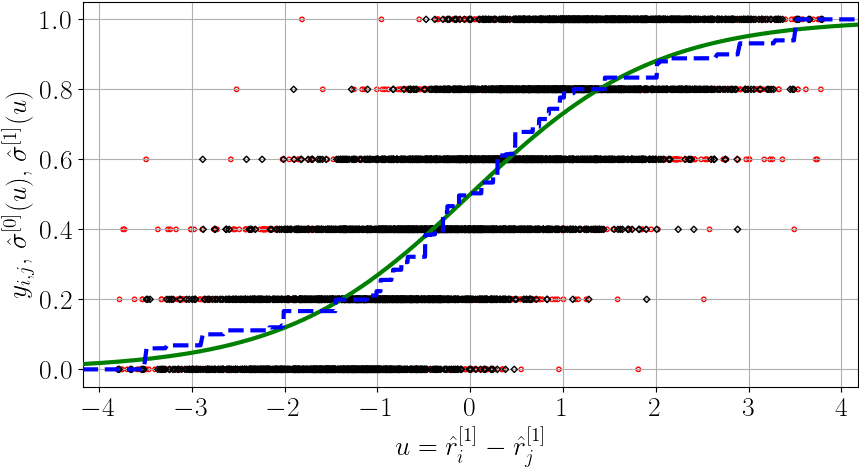}}&
{\includegraphics[width=2.0cm]{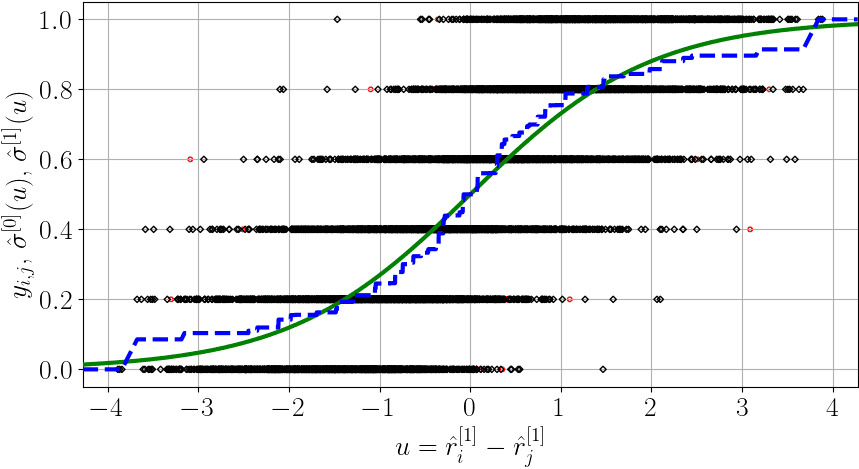}}&
{\includegraphics[width=2.0cm]{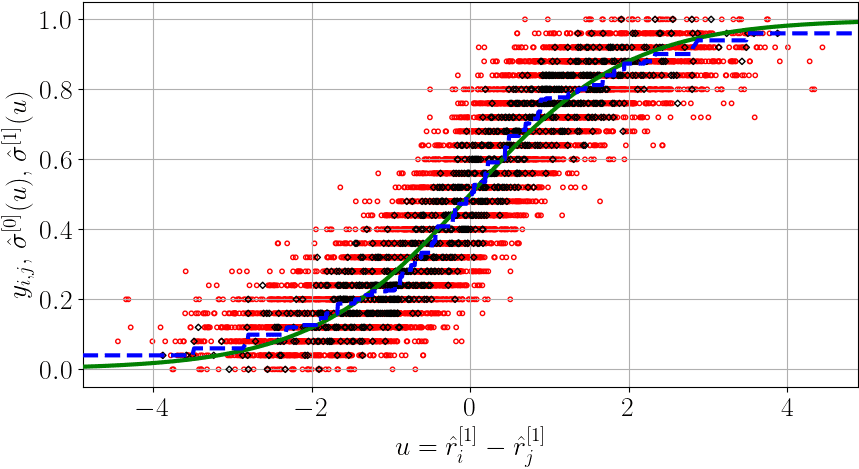}}&
{\includegraphics[width=2.0cm]{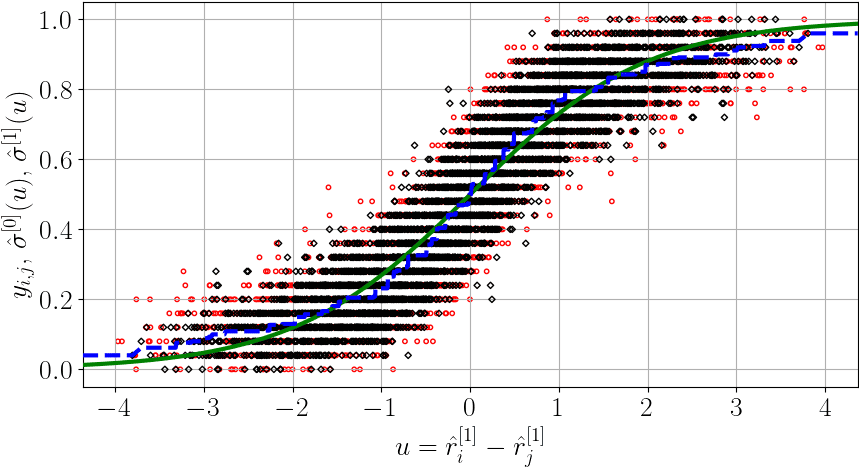}}&
{\includegraphics[width=2.0cm]{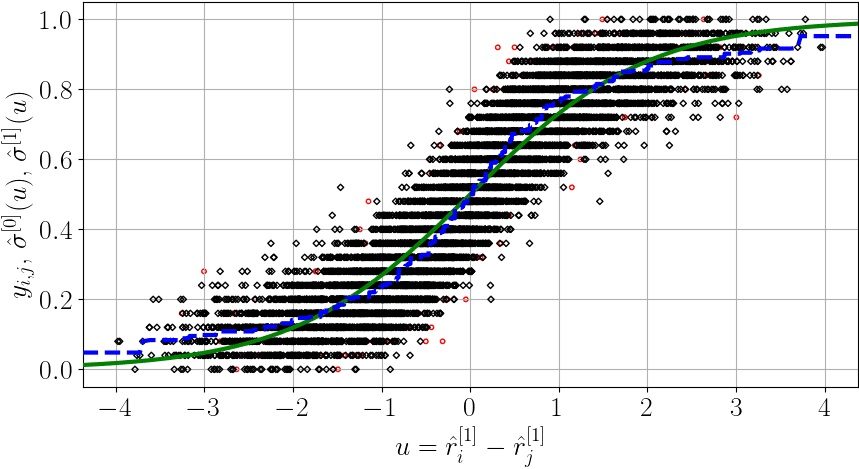}}\\
&\rotatebox{90}{\tiny\,~~~\,$400$}&
{\includegraphics[width=2.0cm]{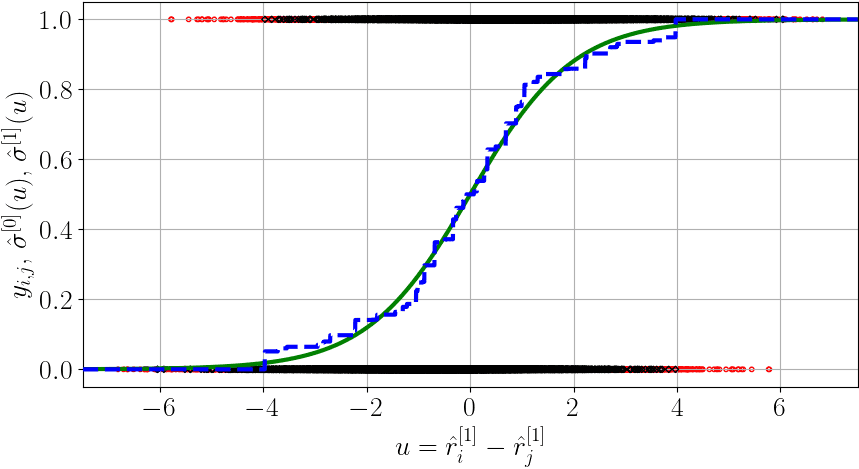}}&
{\includegraphics[width=2.0cm]{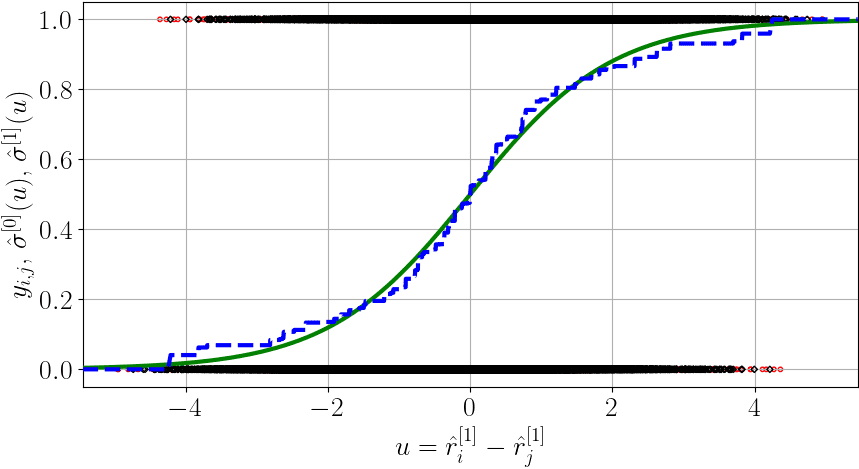}}&
{\includegraphics[width=2.0cm]{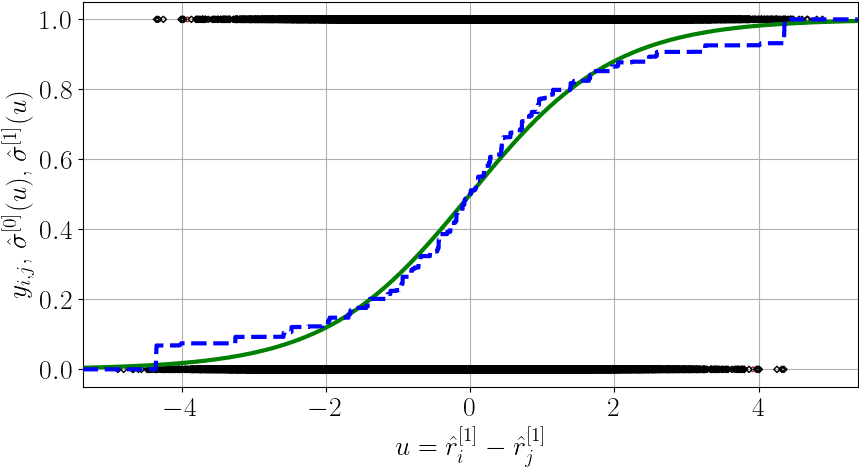}}&
{\includegraphics[width=2.0cm]{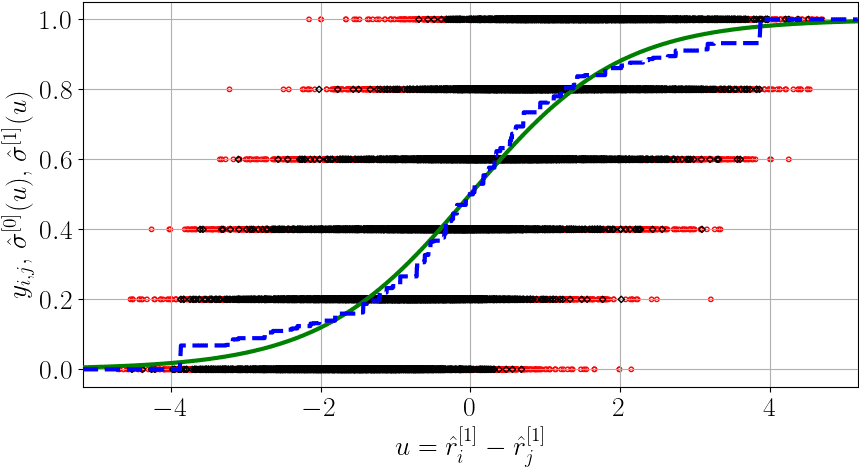}}&
{\includegraphics[width=2.0cm]{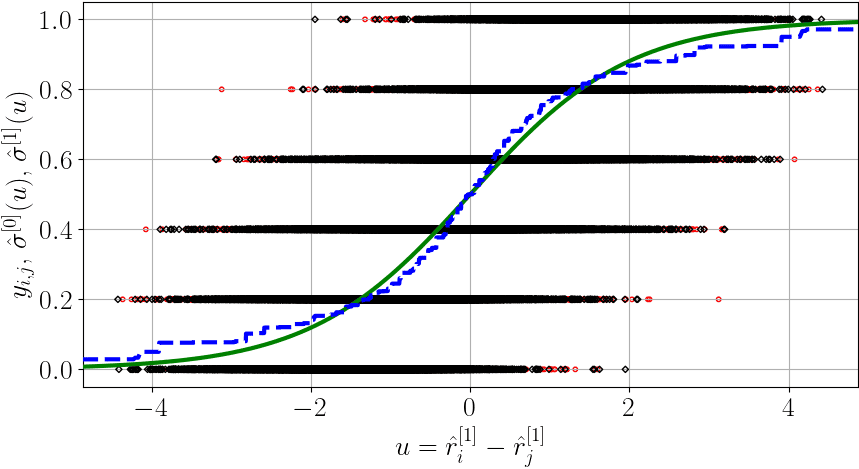}}&
{\includegraphics[width=2.0cm]{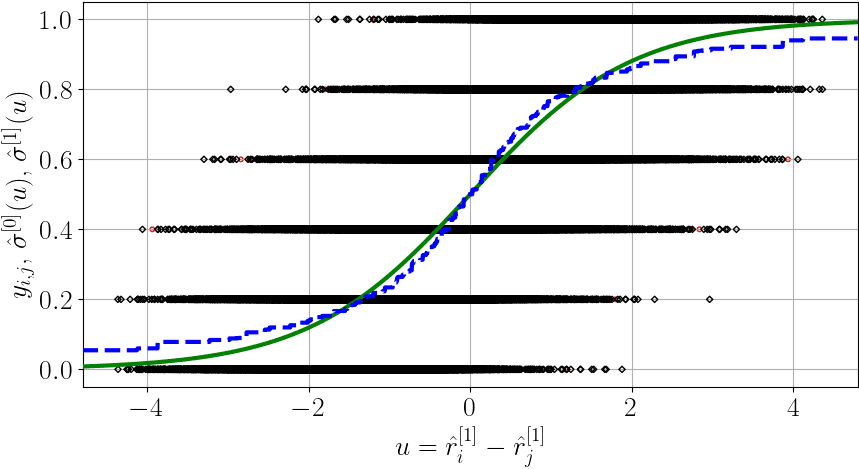}}&
{\includegraphics[width=2.0cm]{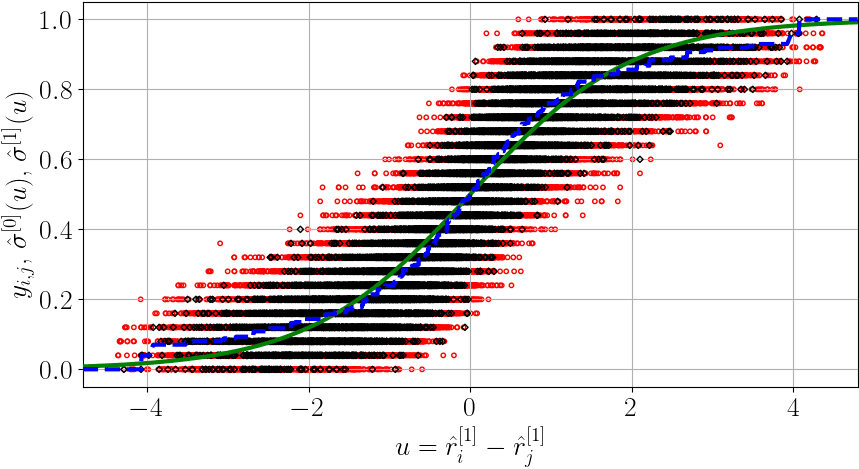}}&
{\includegraphics[width=2.0cm]{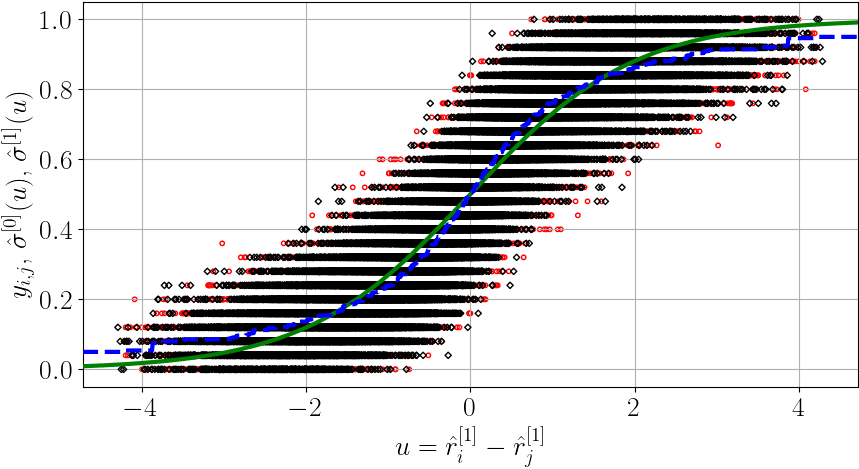}}&
{\includegraphics[width=2.0cm]{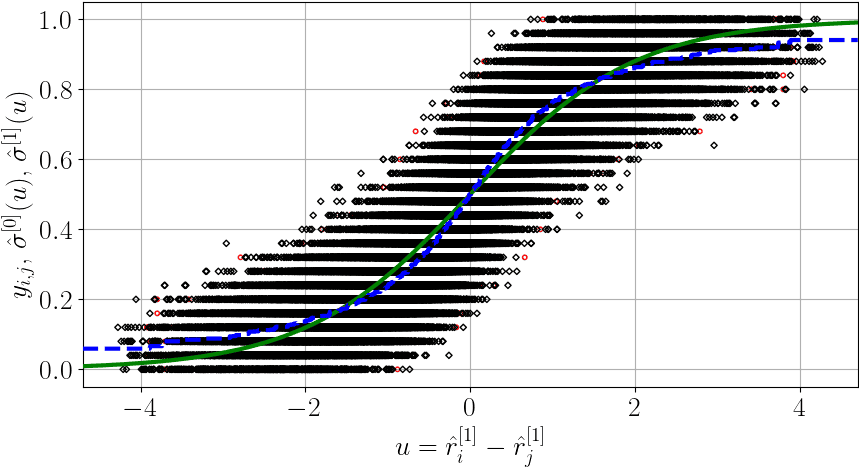}}
\end{tabular}
\begin{tabular}{ccccc}%
\multicolumn{5}{c}{\tiny Rescaled version: $n$, $N$, $|D_\tra|:|D_\tes|=$}\\
{\tiny25, 1, 1:9}&{\tiny25, 1, 5:5}&{\tiny25, 1, 9:1}&{\tiny25, 5, 1:9}&{\tiny100, 1, 1:9}\\
\midrule
{\includegraphics[width=2.0cm]{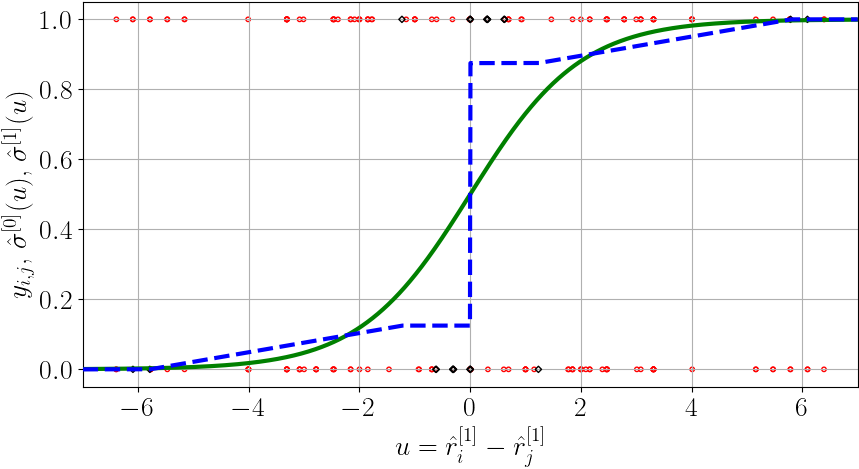}}&
{\includegraphics[width=2.0cm]{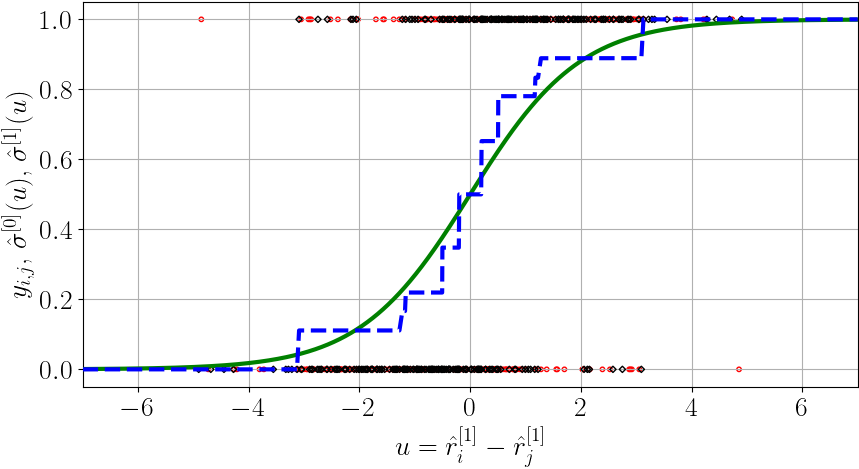}}&
{\includegraphics[width=2.0cm]{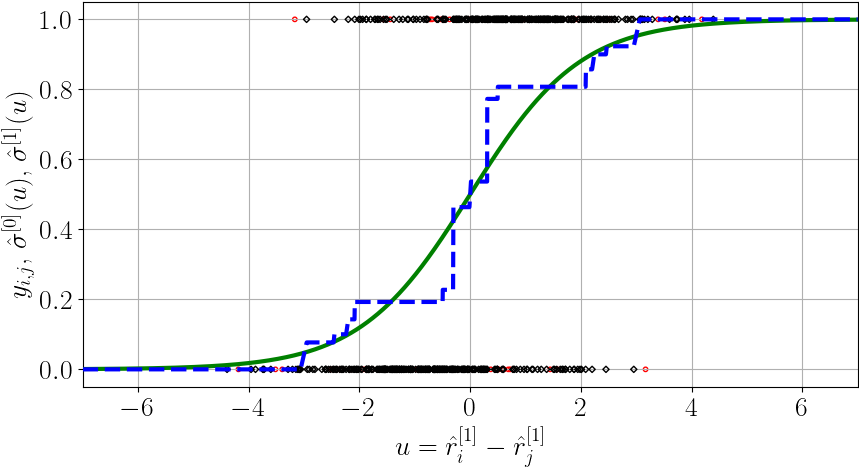}}&
{\includegraphics[width=2.0cm]{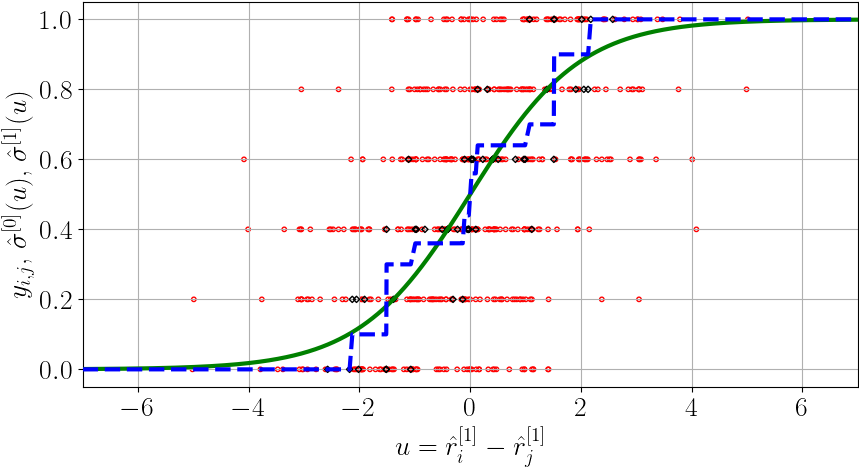}}&
{\includegraphics[width=2.0cm]{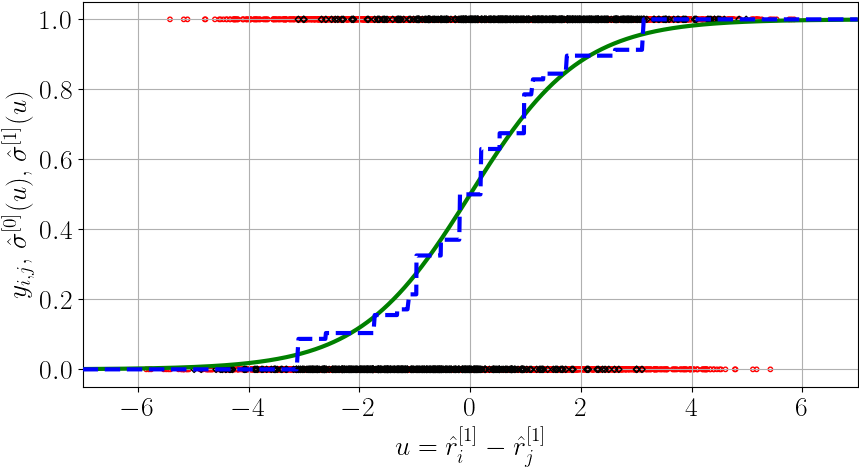}}
\end{tabular}
\caption{%
Part of results of synthetic data experiments, Procedure 1 in Section~\ref{sec:Synthetic} and Appendix~\ref{sec:NLL}:
For Cauchy-$N$ synthetic data with $N=1,5,25$ (left to right),
black diamonds and red circles are training and test data $y_{i,j}$, 
and a green curve and a blue polyline are
the Bradley-Terry model $\hat{\sigma}^{[0]}(\hat{r}_i^{[1]}-\hat{r}_j^{[1]})$ and
isotonic Bradley-Terry model $\hat{\sigma}^{[1]}(\hat{r}_i^{[1]}-\hat{r}_j^{[1]})$
learned with the NLL loss $\phi=\phi_\nll$ in a certain trial,
over the range $[-1.1\cdot\max_{i,j}|\hat{r}_i^{[1]}-\hat{r}_j^{[1]}|,1.1\cdot\max_{i,j}|\hat{r}_i^{[1]}-\hat{r}_j^{[1]}|]$
or $[-7,7]$ in the rescaled version.}
\label{fig:Res-Cauchy-NLL}
\end{sidewaysfigure}
\begin{sidewaystable}
\centering%
\renewcommand{\arraystretch}{0.5}%
\renewcommand{\tabcolsep}{0.5pt}%
\caption{%
Results of synthetic data experiments, Procedure 2 in Section~\ref{sec:Synthetic} and Appendix~\ref{sec:NLL}:
For Cauchy-$N$ synthetic data with $N=1,5,25$ (left to right),
mean and std ($\text{mean}_{\text{std}}$) of 1000 trial test evaluation of 
WPP error \eqref{eq:WPE} with the NLL loss $\phi=\phi_\nll$
and Kendall's Tau \eqref{eq:Kendall}
for the Bradley-Terry model (upper) and model-selected isotonic 
Bradley-Terry model learned with the NLL loss $\phi=\phi_\nll$ (lower).
Smaller \eqref{eq:WPE}, or larger \eqref{eq:Kendall} indicates a better model.
It also includes $p$-value for the Mann-Whitney U test in the parentheses:
A value below the significance level 0.05 implies 
that the isotonic Bradley-Terry model performs significantly better
and is highlighted in red bold.
-- implies that errors include NaN.}
\label{tab:Cauchy-NLL}
\scalebox{0.9}{\begin{minipage}{25cm}
\begin{tabular}{cc|ccccc|ccccc|ccccc}%
&&\multicolumn{5}{c|}{\tiny $N=1$, $|D_\tra|:|D_\tes|=$}&\multicolumn{5}{c|}{\tiny $N=5$, $|D_\tra|:|D_\tes|=$}&\multicolumn{5}{c}{\tiny $N=25$, $|D_\tra|:|D_\tes|=$}\\
&&{\tiny$1:9$}&{\tiny$3:7$}&{\tiny$5:5$}&{\tiny$7:3$}&{\tiny$9:1$}&{\tiny$1:9$}&{\tiny$3:7$}&{\tiny$5:5$}&{\tiny$7:3$}&{\tiny$9:1$}&{\tiny$1:9$}&{\tiny$3:7$}&{\tiny$5:5$}&{\tiny$7:3$}&{\tiny$9:1$}
\\\midrule
\multirow{15}{*}[-6.75mm]{\rotatebox{90}{\tiny\eqref{eq:WPE}, $n=$}}
&\multirow{3}{*}{\rotatebox{90}{\tiny\,$25$~~}}
&$11.8076_{4.3672}$&$2.1516_{1.6534}$&$.6032_{.4154}$&$.3801_{.2002}$&$.3341_{.1798}$&$1.3032_{.6484}$&$.3235_{.1244}$&$.2877_{.0172}$&$.2833_{.0169}$&$.2809_{.0250}$&$.3371_{.1100}$&$.2810_{.0106}$&$.2790_{.0112}$&$.2780_{.0123}$&$.2773_{.0166}$\\&
&--&$2.1516_{1.6534}$&--&--&--&--&--&--&--&--&--&--&--&--&--\\&
&($.7136$)&($.5000$)&($.5362$)&($.5438$)&($.5460$)&($.7645$)&($.8782$)&($.9476$)&($.8897$)&($.6829$)&($.8684$)&($.9655$)&($.9281$)&($.6875$)&($.4358$)\\
&\multirow{3}{*}{\rotatebox{90}{\tiny\,$50$~~}}
&$6.3959_{3.1135}$&$.4528_{.1934}$&$.3143_{.0543}$&$.2967_{.0256}$&$.2905_{.0254}$&$.5110_{.2260}$&$.2860_{.0097}$&$.2819_{.0085}$&$.2806_{.0091}$&$.2797_{.0125}$&$.2893_{.0247}$&$.2790_{.0073}$&$.2783_{.0074}$&$.2780_{.0077}$&$.2778_{.0093}$\\&
&$6.3959_{3.1135}$&--&--&--&--&--&--&--&--&--&--&--&--&--&--\\&
&($.5000$)&($.5119$)&($.5367$)&($.6271$)&($.5387$)&($.5739$)&($.9896$)&($.9988$)&($.9624$)&($.7921$)&($.9575$)&($.9768$)&($.8379$)&($.2425$)&($.1130$)\\
&\multirow{3}{*}{\rotatebox{90}{\tiny\,$100$~~}}
&$.9624_{.3679}$&$.3020_{.0186}$&$.2886_{.0083}$&$.2848_{.0085}$&$.2828_{.0124}$&$.2960_{.0235}$&$.2807_{.0052}$&$.2792_{.0053}$&$.2786_{.0057}$&$.2783_{.0072}$&$.2800_{.0049}$&$.2778_{.0049}$&$.2775_{.0050}$&$.2774_{.0051}$&\tcr{$\bf.2774_{.0059}$}\\&
&$.9624_{.3679}$&--&--&--&--&--&--&--&--&--&--&--&--&--&\tcr{--}\\&
&($.5000$)&($.6354$)&($.8181$)&($.8499$)&($.5953$)&($.8472$)&($1.0000$)&($1.0000$)&($.9995$)&($.6208$)&($.9986$)&($.9980$)&($.5629$)&($.0551$)&(\tcr{$\bf.0037$})\\
&\multirow{3}{*}{\rotatebox{90}{\tiny\,$200$~~}}
&$.3476_{.0412}$&$.2863_{.0044}$&$.2824_{.0044}$&$.2808_{.0048}$&$.2798_{.0066}$&$.2831_{.0036}$&$.2789_{.0036}$&$.2782_{.0037}$&$.2779_{.0038}$&$.2777_{.0045}$&$.2784_{.0035}$&$.2775_{.0035}$&$.2774_{.0036}$&\tcr{$\bf.2773_{.0036}$}&\tcr{$\bf.2772_{.0039}$}\\&
&$.3476_{.0412}$&--&--&--&--&--&--&--&--&--&--&--&--&\tcr{--}&\tcr{--}\\&
&($.5000$)&($.9847$)&($.9992$)&($.9750$)&($.8104$)&($1.0000$)&($1.0000$)&($1.0000$)&($.9838$)&($.3354$)&($.9859$)&($.8261$)&($.5511$)&(\tcr{$\bf.0136$})&(\tcr{$\bf.0000$})\\
&\multirow{3}{*}{\rotatebox{90}{\tiny\,$400$~~}}
&$.2927_{.0046}$&$.2816_{.0028}$&$.2798_{.0028}$&$.2790_{.0030}$&$.2786_{.0038}$&$.2800_{.0026}$&$.2781_{.0026}$&$.2778_{.0026}$&$.2776_{.0027}$&\tcr{$\bf.2775_{.0029}$}&$.2778_{.0025}$&$.2774_{.0026}$&\tcr{$\bf.2774_{.0026}$}&\tcr{$\bf.2773_{.0026}$}&\tcr{$\bf.2773_{.0027}$}\\&
&--&--&--&--&--&--&--&--&--&\tcr{--}&--&--&\tcr{--}&\tcr{--}&\tcr{--}\\&
&($.7571$)&($1.0000$)&($1.0000$)&($.9986$)&($.3936$)&($1.0000$)&($1.0000$)&($1.0000$)&($.9800$)&(\tcr{$\bf.0429$})&($.9437$)&($.3308$)&(\tcr{$\bf.0378$})&(\tcr{$\bf.0009$})&(\tcr{$\bf.0000$})\\
\midrule
\multirow{15}{*}[-6.75mm]{\rotatebox{90}{\tiny\eqref{eq:Kendall}, $n=$}}
&\multirow{3}{*}{\rotatebox{90}{\tiny\,$25$~~}}
&$.1681_{.0867}$&\tcr{$\bf.2679_{.0743}$}&\tcr{$\bf.3221_{.0720}$}&\tcr{$\bf.3479_{.0836}$}&\tcr{$\bf.3578_{.1319}$}&$.3226_{.0896}$&\tcr{$\bf.5402_{.0493}$}&\tcr{$\bf.5755_{.0456}$}&\tcr{$\bf.5903_{.0518}$}&\tcr{$\bf.5995_{.0780}$}&$.5724_{.0800}$&\tcr{$\bf.7514_{.0250}$}&\tcr{$\bf.7699_{.0220}$}&\tcr{$\bf.7774_{.0255}$}&\tcr{$\bf.7818_{.0416}$}\\&
&$.1705_{.0892}$&\tcr{$\bf.2799_{.0786}$}&\tcr{$\bf.3359_{.0776}$}&\tcr{$\bf.3619_{.0886}$}&\tcr{$\bf.3724_{.1421}$}&$.3289_{.0921}$&\tcr{$\bf.5542_{.0515}$}&\tcr{$\bf.5920_{.0471}$}&\tcr{$\bf.6071_{.0531}$}&\tcr{$\bf.6175_{.0812}$}&$.5770_{.0807}$&\tcr{$\bf.7634_{.0257}$}&\tcr{$\bf.7828_{.0224}$}&\tcr{$\bf.7905_{.0256}$}&\tcr{$\bf.7956_{.0414}$}\\&
&($.2606$)&(\tcr{$\bf.0004$})&(\tcr{$\bf.0000$})&(\tcr{$\bf.0001$})&(\tcr{$\bf.0091$})&($.0581$)&(\tcr{$\bf.0000$})&(\tcr{$\bf.0000$})&(\tcr{$\bf.0000$})&(\tcr{$\bf.0000$})&($.0879$)&(\tcr{$\bf.0000$})&(\tcr{$\bf.0000$})&(\tcr{$\bf.0000$})&(\tcr{$\bf.0000$})\\
&\multirow{3}{*}{\rotatebox{90}{\tiny\,$50$~~}}
&\tcr{$\bf.2113_{.0518}$}&\tcr{$\bf.3330_{.0364}$}&\tcr{$\bf.3631_{.0366}$}&\tcr{$\bf.3761_{.0404}$}&\tcr{$\bf.3843_{.0611}$}&\tcr{$\bf.4774_{.0416}$}&\tcr{$\bf.5831_{.0242}$}&\tcr{$\bf.5998_{.0234}$}&\tcr{$\bf.6065_{.0256}$}&\tcr{$\bf.6109_{.0365}$}&\tcr{$\bf.7146_{.0252}$}&\tcr{$\bf.7749_{.0114}$}&\tcr{$\bf.7833_{.0112}$}&\tcr{$\bf.7868_{.0124}$}&\tcr{$\bf.7889_{.0181}$}\\&
&\tcr{$\bf.2184_{.0543}$}&\tcr{$\bf.3461_{.0382}$}&\tcr{$\bf.3780_{.0393}$}&\tcr{$\bf.3918_{.0429}$}&\tcr{$\bf.4010_{.0664}$}&\tcr{$\bf.4898_{.0428}$}&\tcr{$\bf.6001_{.0253}$}&\tcr{$\bf.6174_{.0244}$}&\tcr{$\bf.6239_{.0263}$}&\tcr{$\bf.6278_{.0379}$}&\tcr{$\bf.7255_{.0260}$}&\tcr{$\bf.7881_{.0116}$}&\tcr{$\bf.7965_{.0114}$}&\tcr{$\bf.7995_{.0126}$}&\tcr{$\bf.8010_{.0181}$}\\&
&(\tcr{$\bf.0023$})&(\tcr{$\bf.0000$})&(\tcr{$\bf.0000$})&(\tcr{$\bf.0000$})&(\tcr{$\bf.0000$})&(\tcr{$\bf.0000$})&(\tcr{$\bf.0000$})&(\tcr{$\bf.0000$})&(\tcr{$\bf.0000$})&(\tcr{$\bf.0000$})&(\tcr{$\bf.0000$})&(\tcr{$\bf.0000$})&(\tcr{$\bf.0000$})&(\tcr{$\bf.0000$})&(\tcr{$\bf.0000$})\\
&\multirow{3}{*}{\rotatebox{90}{\tiny\,$100$~~}}
&\tcr{$\bf.2972_{.0254}$}&\tcr{$\bf.3727_{.0196}$}&\tcr{$\bf.3886_{.0198}$}&\tcr{$\bf.3958_{.0225}$}&\tcr{$\bf.4002_{.0322}$}&\tcr{$\bf.5618_{.0174}$}&\tcr{$\bf.6051_{.0138}$}&\tcr{$\bf.6129_{.0139}$}&\tcr{$\bf.6161_{.0151}$}&\tcr{$\bf.6180_{.0208}$}&\tcr{$\bf.7639_{.0084}$}&\tcr{$\bf.7859_{.0064}$}&\tcr{$\bf.7898_{.0065}$}&\tcr{$\bf.7913_{.0072}$}&\tcr{$\bf.7922_{.0099}$}\\&
&\tcr{$\bf.3089_{.0266}$}&\tcr{$\bf.3875_{.0211}$}&\tcr{$\bf.4044_{.0208}$}&\tcr{$\bf.4112_{.0237}$}&\tcr{$\bf.4158_{.0339}$}&\tcr{$\bf.5773_{.0182}$}&\tcr{$\bf.6223_{.0143}$}&\tcr{$\bf.6294_{.0143}$}&\tcr{$\bf.6317_{.0154}$}&\tcr{$\bf.6326_{.0211}$}&\tcr{$\bf.7771_{.0088}$}&\tcr{$\bf.7988_{.0064}$}&\tcr{$\bf.8017_{.0066}$}&\tcr{$\bf.8023_{.0072}$}&\tcr{$\bf.8026_{.0099}$}\\&
&(\tcr{$\bf.0000$})&(\tcr{$\bf.0000$})&(\tcr{$\bf.0000$})&(\tcr{$\bf.0000$})&(\tcr{$\bf.0000$})&(\tcr{$\bf.0000$})&(\tcr{$\bf.0000$})&(\tcr{$\bf.0000$})&(\tcr{$\bf.0000$})&(\tcr{$\bf.0000$})&(\tcr{$\bf.0000$})&(\tcr{$\bf.0000$})&(\tcr{$\bf.0000$})&(\tcr{$\bf.0000$})&(\tcr{$\bf.0000$})\\
&\multirow{3}{*}{\rotatebox{90}{\tiny\,$200$~~}}
&\tcr{$\bf.3525_{.0138}$}&\tcr{$\bf.3926_{.0120}$}&\tcr{$\bf.4010_{.0122}$}&\tcr{$\bf.4050_{.0132}$}&\tcr{$\bf.4072_{.0178}$}&\tcr{$\bf.5947_{.0100}$}&\tcr{$\bf.6149_{.0090}$}&\tcr{$\bf.6187_{.0092}$}&\tcr{$\bf.6205_{.0097}$}&\tcr{$\bf.6213_{.0119}$}&\tcr{$\bf.7810_{.0046}$}&\tcr{$\bf.7909_{.0042}$}&\tcr{$\bf.7927_{.0043}$}&\tcr{$\bf.7936_{.0046}$}&\tcr{$\bf.7940_{.0056}$}\\&
&\tcr{$\bf.3656_{.0146}$}&\tcr{$\bf.4070_{.0126}$}&\tcr{$\bf.4147_{.0127}$}&\tcr{$\bf.4181_{.0137}$}&\tcr{$\bf.4199_{.0184}$}&\tcr{$\bf.6108_{.0104}$}&\tcr{$\bf.6296_{.0092}$}&\tcr{$\bf.6319_{.0094}$}&\tcr{$\bf.6327_{.0098}$}&\tcr{$\bf.6327_{.0121}$}&\tcr{$\bf.7939_{.0048}$}&\tcr{$\bf.8019_{.0043}$}&\tcr{$\bf.8026_{.0044}$}&\tcr{$\bf.8025_{.0046}$}&\tcr{$\bf.8023_{.0056}$}\\&
&(\tcr{$\bf.0000$})&(\tcr{$\bf.0000$})&(\tcr{$\bf.0000$})&(\tcr{$\bf.0000$})&(\tcr{$\bf.0000$})&(\tcr{$\bf.0000$})&(\tcr{$\bf.0000$})&(\tcr{$\bf.0000$})&(\tcr{$\bf.0000$})&(\tcr{$\bf.0000$})&(\tcr{$\bf.0000$})&(\tcr{$\bf.0000$})&(\tcr{$\bf.0000$})&(\tcr{$\bf.0000$})&(\tcr{$\bf.0000$})\\
&\multirow{3}{*}{\rotatebox{90}{\tiny\,$400$~~}}
&\tcr{$\bf.3816_{.0083}$}&\tcr{$\bf.4031_{.0078}$}&\tcr{$\bf.4075_{.0079}$}&\tcr{$\bf.4095_{.0083}$}&\tcr{$\bf.4106_{.0104}$}&\tcr{$\bf.6099_{.0064}$}&\tcr{$\bf.6197_{.0062}$}&\tcr{$\bf.6215_{.0063}$}&\tcr{$\bf.6224_{.0064}$}&\tcr{$\bf.6229_{.0074}$}&\tcr{$\bf.7885_{.0029}$}&\tcr{$\bf.7932_{.0029}$}&\tcr{$\bf.7941_{.0029}$}&\tcr{$\bf.7945_{.0029}$}&\tcr{$\bf.7947_{.0035}$}\\&
&\tcr{$\bf.3944_{.0088}$}&\tcr{$\bf.4147_{.0081}$}&\tcr{$\bf.4179_{.0082}$}&\tcr{$\bf.4192_{.0084}$}&\tcr{$\bf.4197_{.0107}$}&\tcr{$\bf.6237_{.0066}$}&\tcr{$\bf.6308_{.0064}$}&\tcr{$\bf.6314_{.0064}$}&\tcr{$\bf.6313_{.0064}$}&\tcr{$\bf.6313_{.0074}$}&\tcr{$\bf.7994_{.0031}$}&\tcr{$\bf.8019_{.0029}$}&\tcr{$\bf.8018_{.0029}$}&\tcr{$\bf.8015_{.0029}$}&\tcr{$\bf.8012_{.0035}$}\\&
&(\tcr{$\bf.0000$})&(\tcr{$\bf.0000$})&(\tcr{$\bf.0000$})&(\tcr{$\bf.0000$})&(\tcr{$\bf.0000$})&(\tcr{$\bf.0000$})&(\tcr{$\bf.0000$})&(\tcr{$\bf.0000$})&(\tcr{$\bf.0000$})&(\tcr{$\bf.0000$})&(\tcr{$\bf.0000$})&(\tcr{$\bf.0000$})&(\tcr{$\bf.0000$})&(\tcr{$\bf.0000$})&(\tcr{$\bf.0000$})\\
\end{tabular}\end{minipage}}
\end{sidewaystable}

\begin{sidewaysfigure}
\centering%
\renewcommand{\arraystretch}{0.5}%
\renewcommand{\tabcolsep}{0.5pt}%
\begin{tabular}{c|ccc|ccc|ccc}%
&\multicolumn{3}{c|}{\tiny PL, $|D_\tra|:|D_\tes|=$}&\multicolumn{3}{c|}{\tiny MLB, $|D_\tra|:|D_\tes|=$}&\multicolumn{3}{c}{\tiny ATP, $|D_\tra|:|D_\tes|=$}\\
&{\tiny$1:9$}&{\tiny$5:5$}&{\tiny$9:1$}&{\tiny$1:9$}&{\tiny$5:5$}&{\tiny$9:1$}&{\tiny$1:9$}&{\tiny$5:5$}&{\tiny$9:1$}
\\\midrule
\raisebox{1.75ex}{\rotatebox{90}{\tiny\eqref{eq:WPE}}}&
{\includegraphics[width=2.0cm]{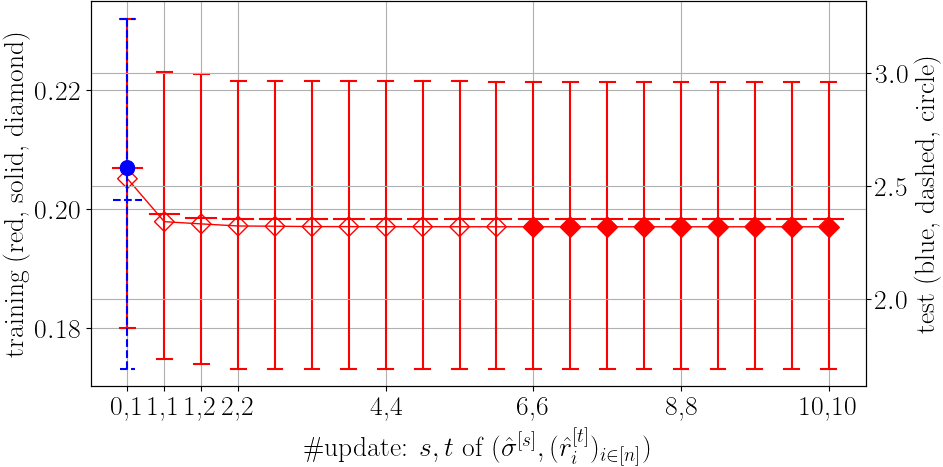}}&
{\includegraphics[width=2.0cm]{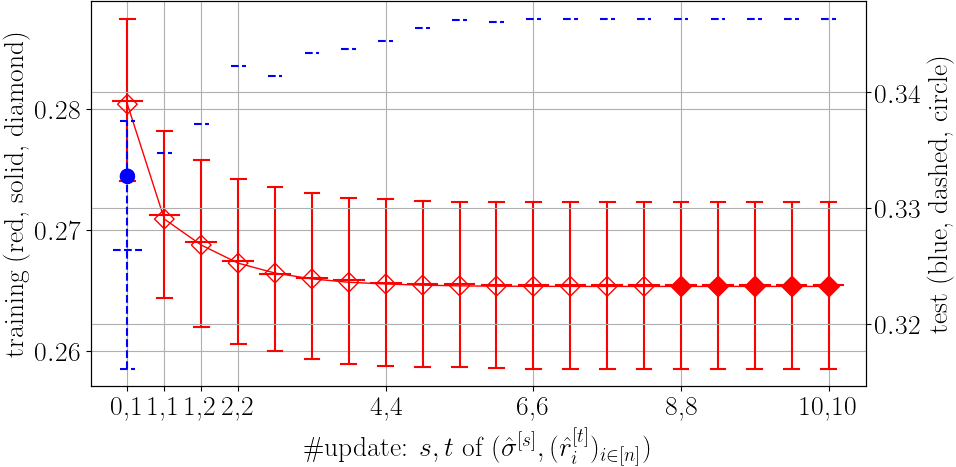}}&
{\includegraphics[width=2.0cm]{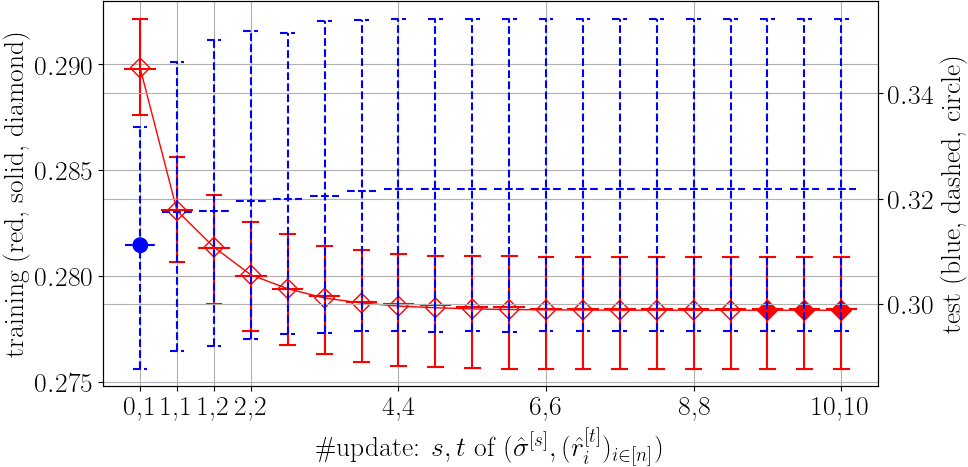}}&
{\includegraphics[width=2.0cm]{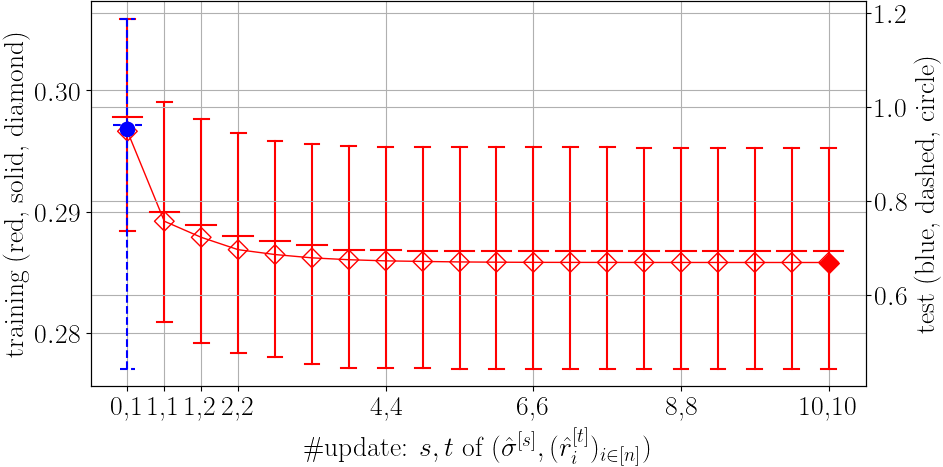}}&
{\includegraphics[width=2.0cm]{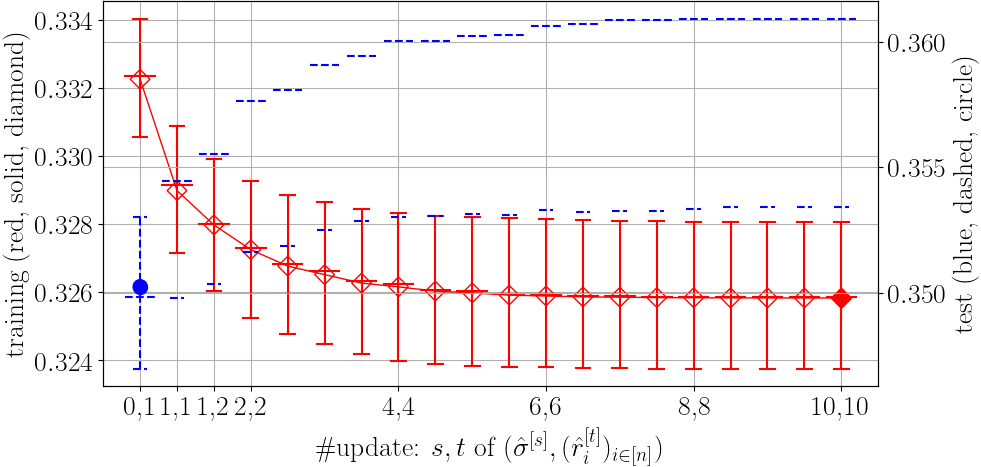}}&
{\includegraphics[width=2.0cm]{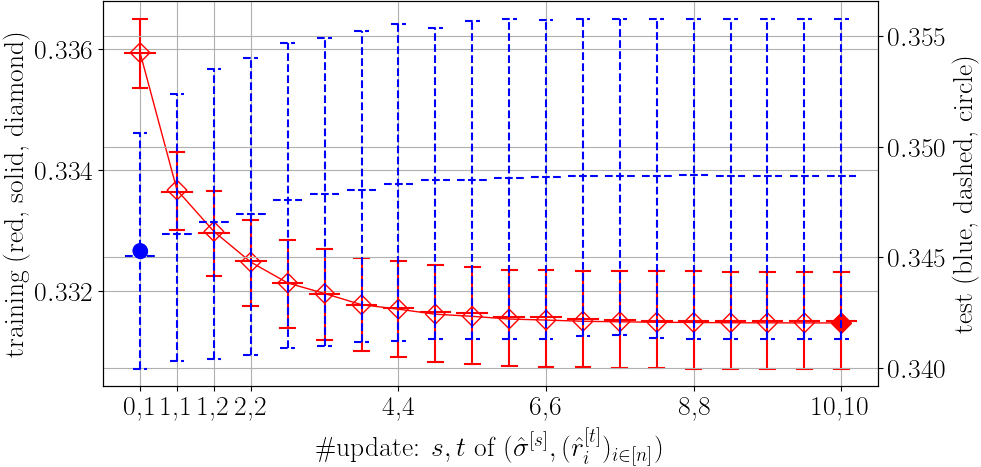}}&
{\includegraphics[width=2.0cm]{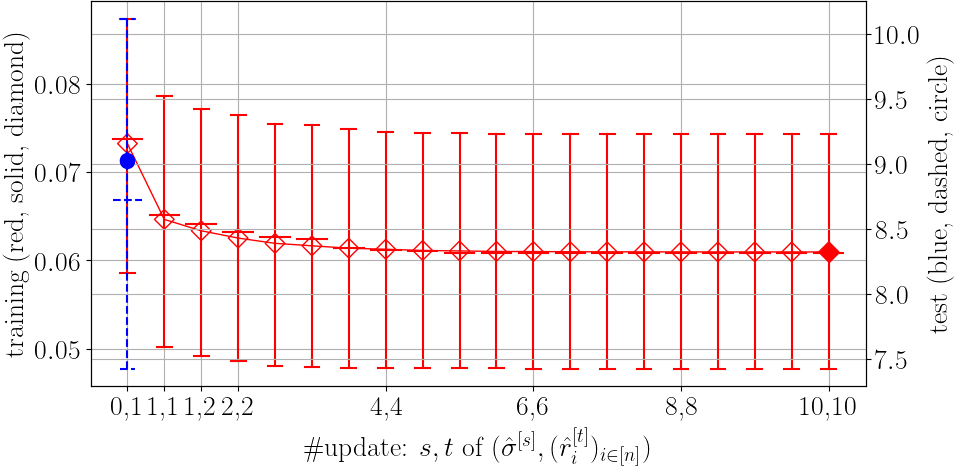}}&
{\includegraphics[width=2.0cm]{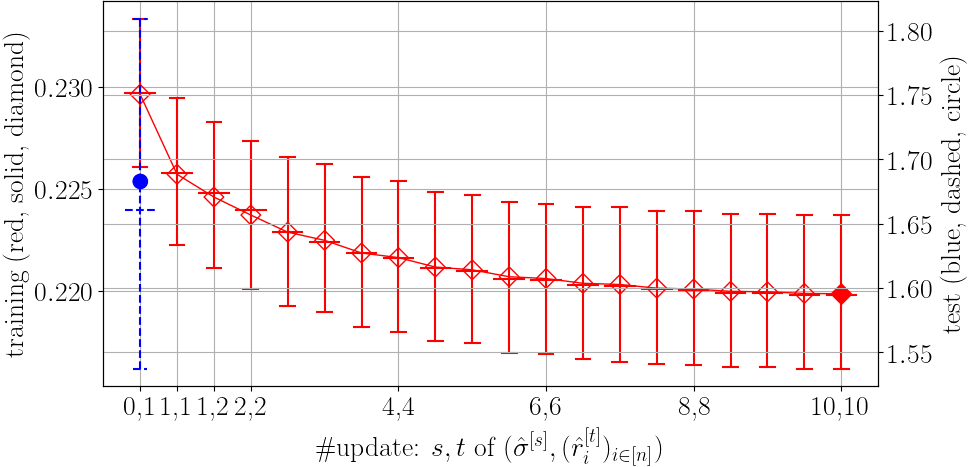}}&
{\includegraphics[width=2.0cm]{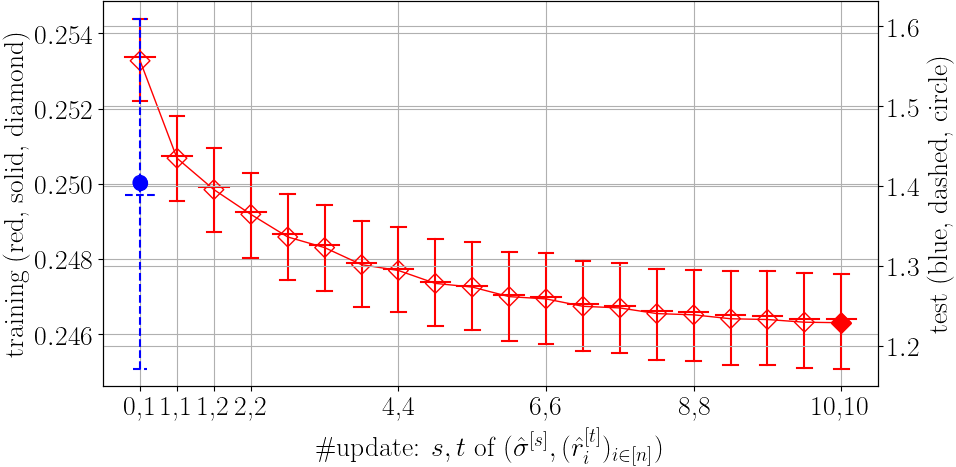}}
\\\midrule
\raisebox{1.75ex}{\rotatebox{90}{\tiny\eqref{eq:Kendall}}}&
{\includegraphics[width=2.0cm]{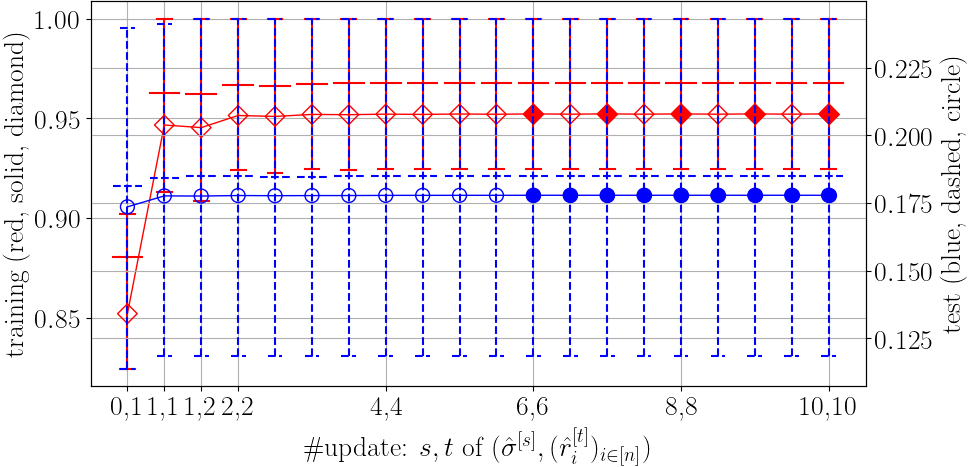}}&
\CF{\includegraphics[width=2.0cm]{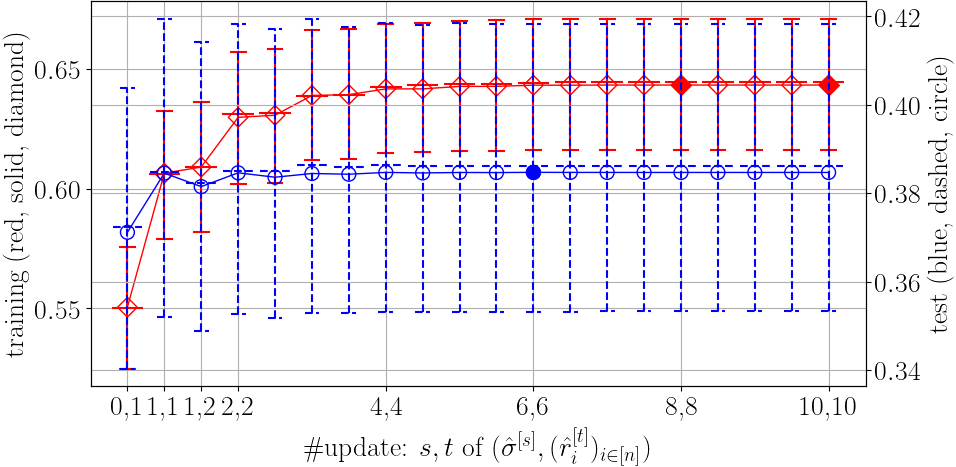}}&
\CF{\includegraphics[width=2.0cm]{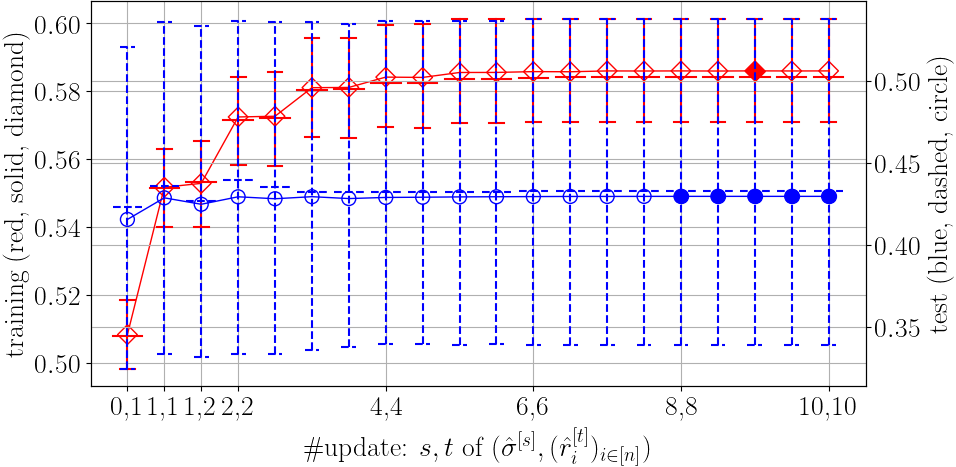}}&
{\includegraphics[width=2.0cm]{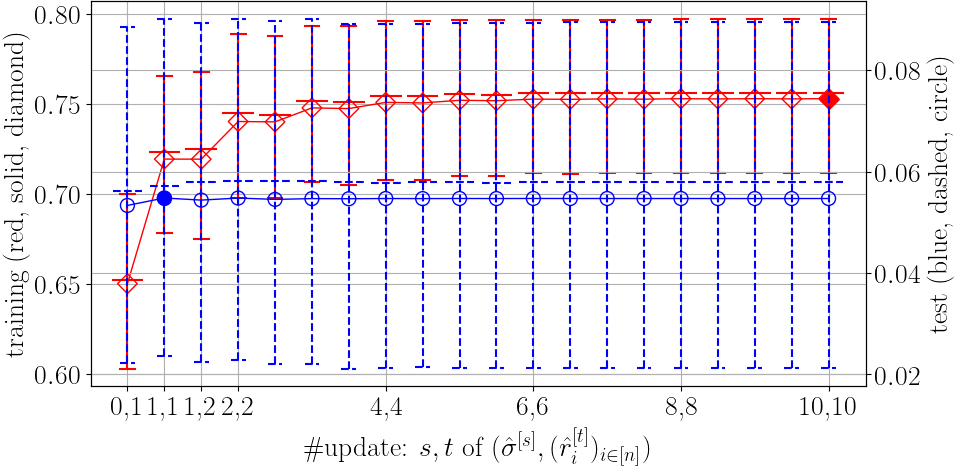}}&
\CF{\includegraphics[width=2.0cm]{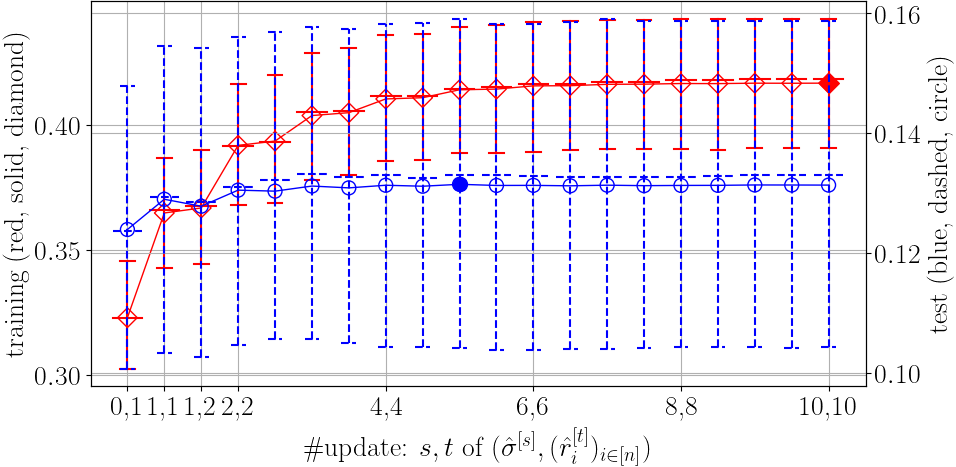}}&
\CF{\includegraphics[width=2.0cm]{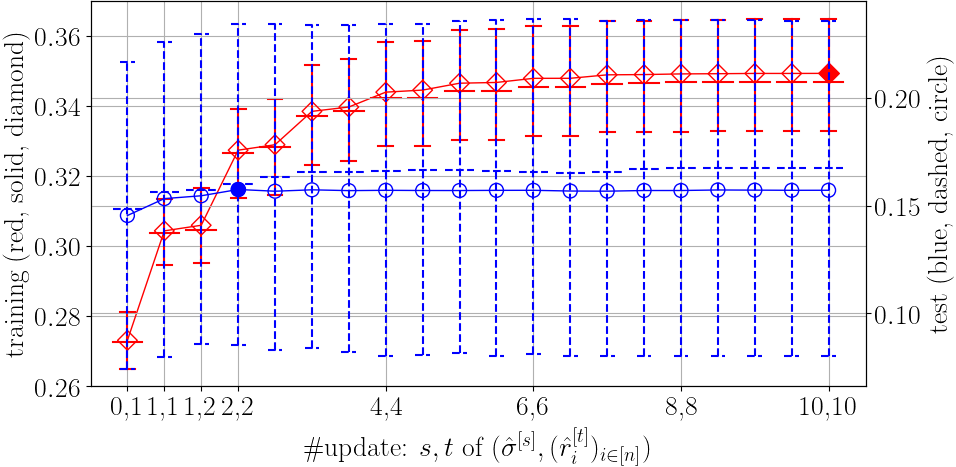}}&
\CF{\includegraphics[width=2.0cm]{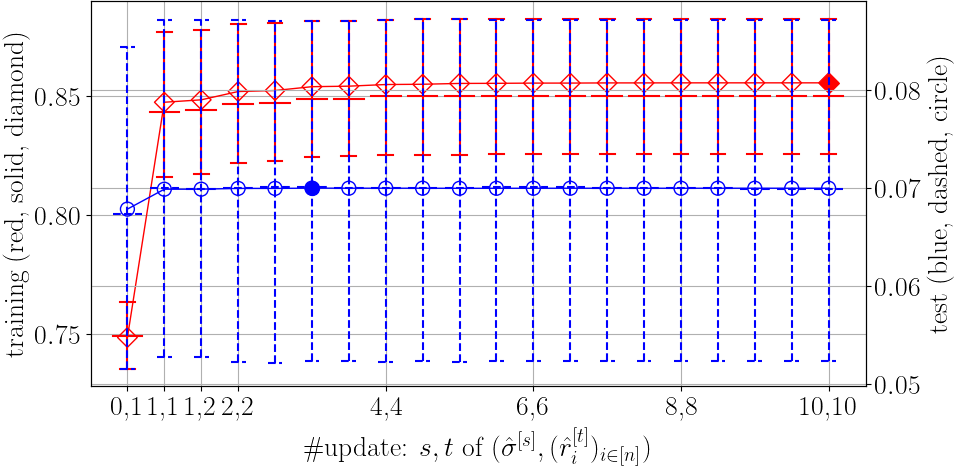}}&
\CF{\includegraphics[width=2.0cm]{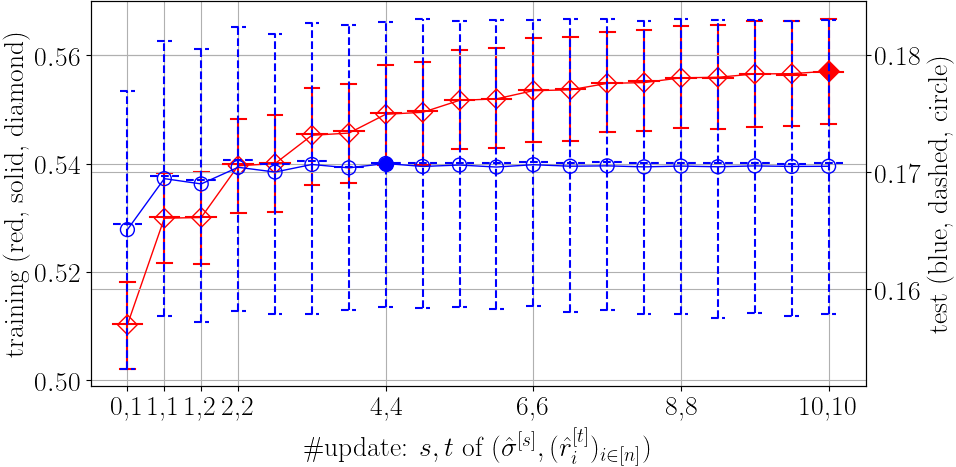}}&
\CF{\includegraphics[width=2.0cm]{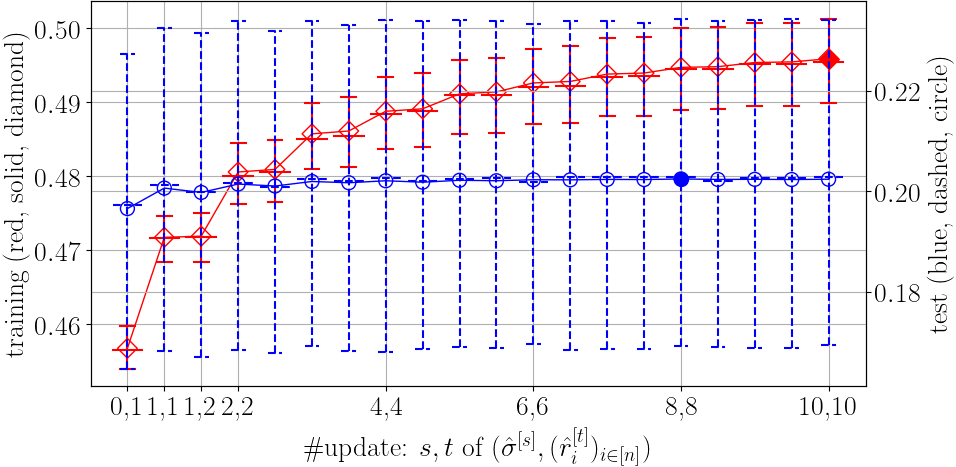}}
\\\midrule
\raisebox{1.75ex}{\rotatebox{90}{\tiny\eqref{eq:TIE}}}&
{\includegraphics[width=2.0cm]{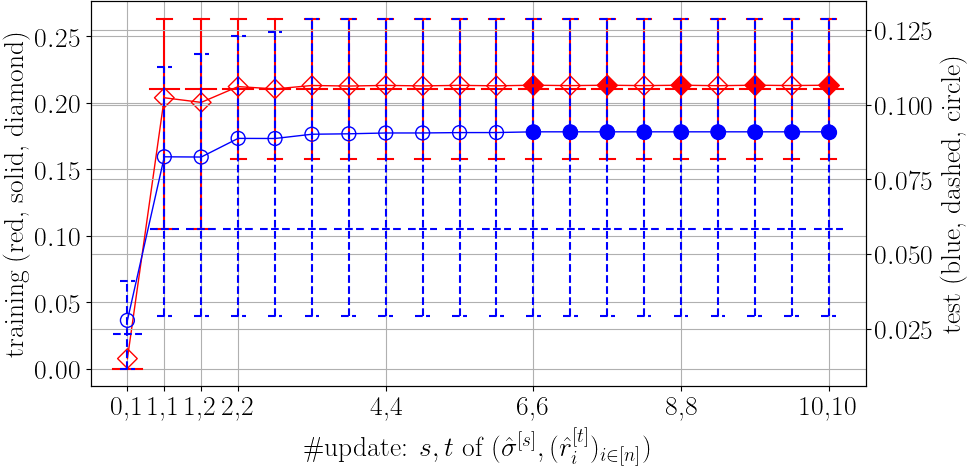}}&
{\includegraphics[width=2.0cm]{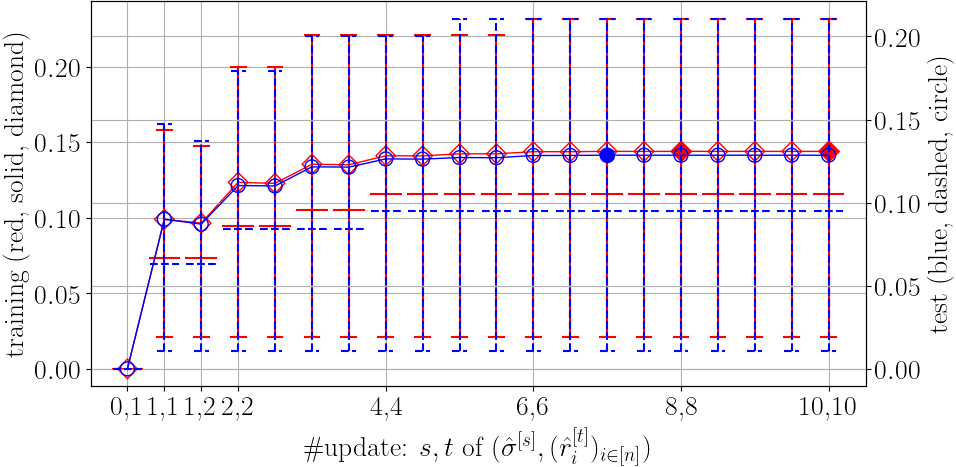}}&
{\includegraphics[width=2.0cm]{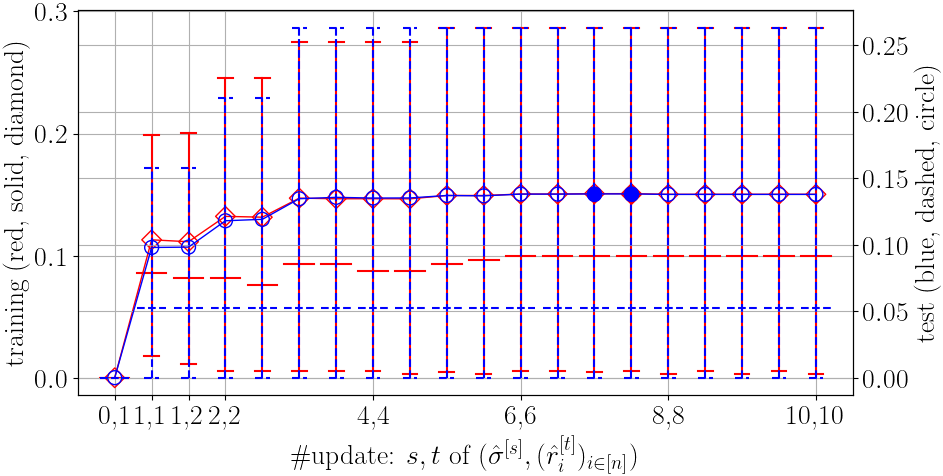}}&
{\includegraphics[width=2.0cm]{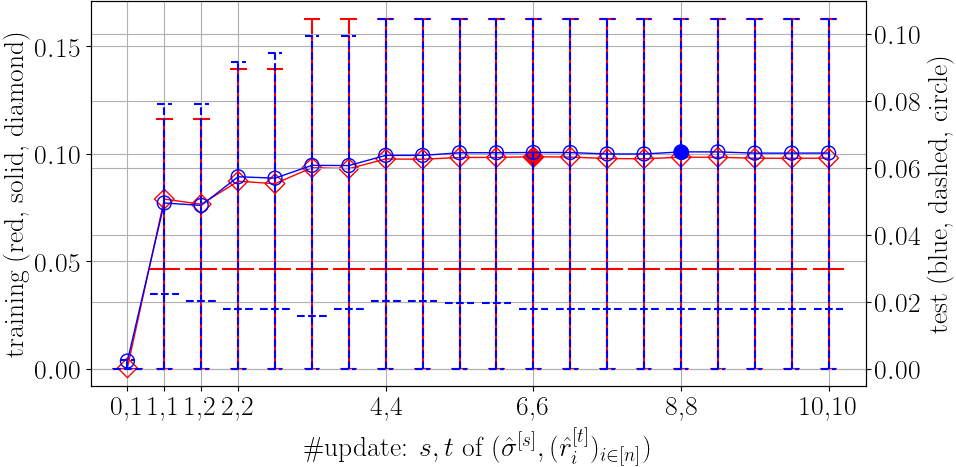}}&
{\includegraphics[width=2.0cm]{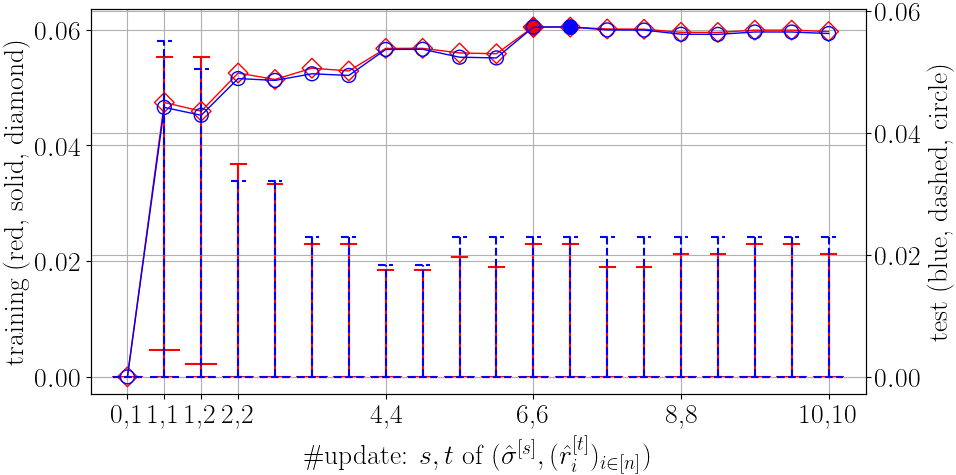}}&
{\includegraphics[width=2.0cm]{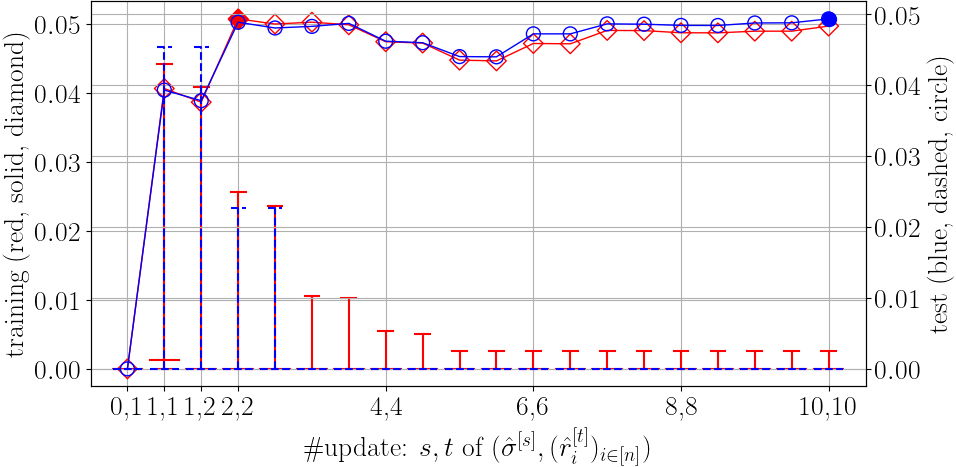}}&
{\includegraphics[width=2.0cm]{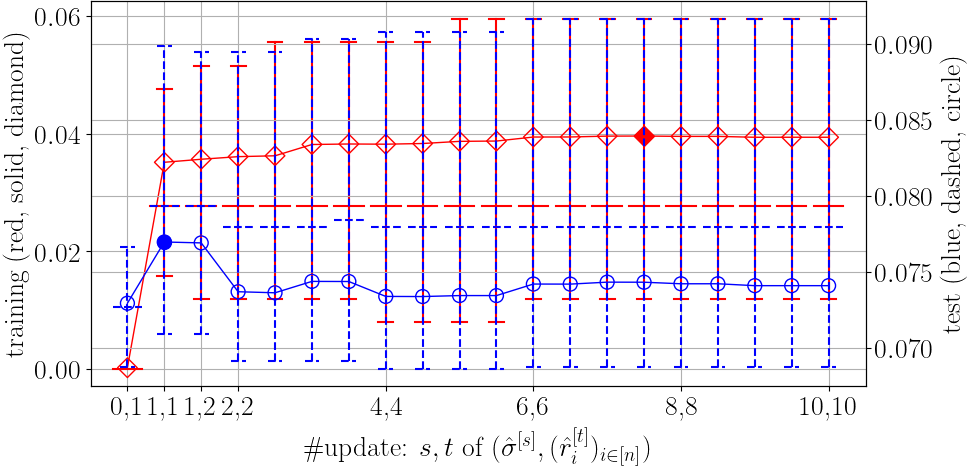}}&
{\includegraphics[width=2.0cm]{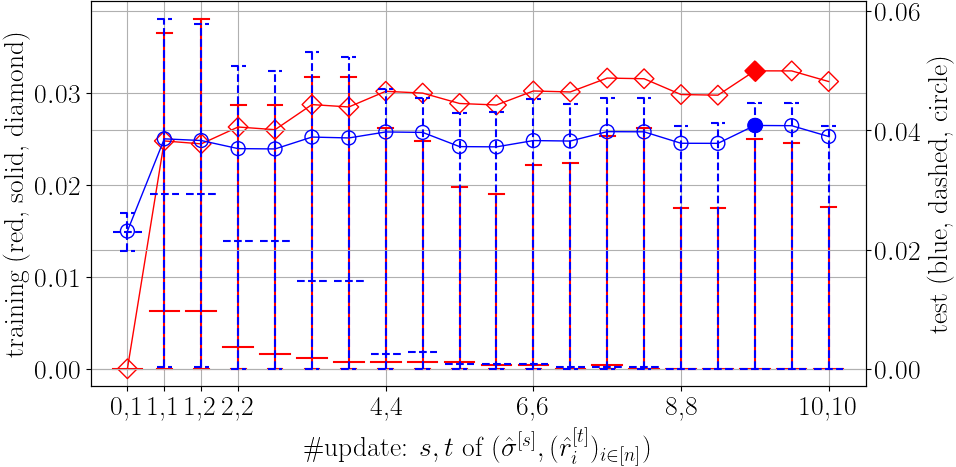}}&
{\includegraphics[width=2.0cm]{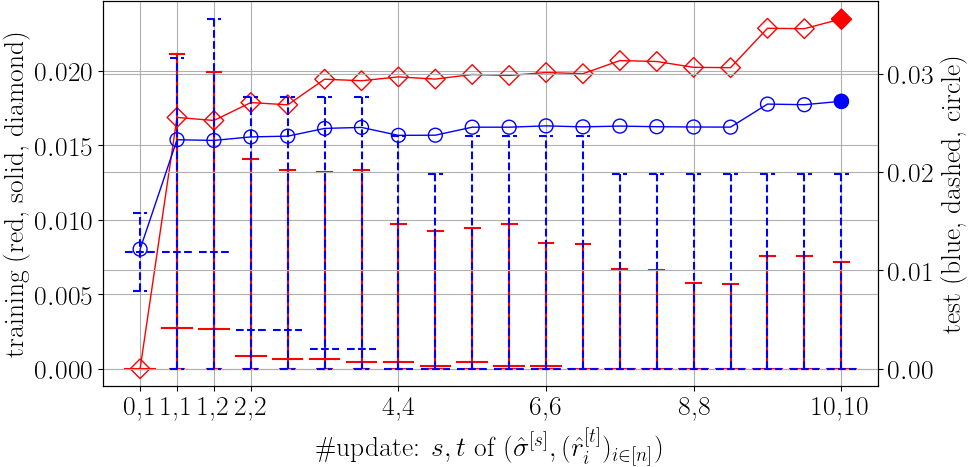}}
\end{tabular}
\caption{%
Part of results of real-world data experiments, Procedure 1 in Section~\ref{sec:RealWorld} and Appendix~\ref{sec:NLL}:
For PL, MLB, and ATP data (left to right),
mean (marker) and 0.25, 0.5, and 0.75 quantiles (lower, middle, and upper bars) of 
1000 trial training (red, solid, diamond) and test (blue, dashed, circle) evaluation of 
WPP error \eqref{eq:WPE} with the NLL loss $\phi=\phi_\nll$, 
Kendall's Tau \eqref{eq:Kendall}, and tie rate \eqref{eq:TIE} (top to bottom)
for the isotonic Bradley-Terry model learned with the NLL loss $\phi=\phi_\nll$.
Smaller \eqref{eq:WPE}, or larger \eqref{eq:Kendall} indicates a better model.
The marker for the best model and model with the most ties was filled in, 
and the outer frame of the figure was highlighted in gray
if the best model was significantly better than the Bradley-Terry model
with respect to Mann-Whitney U test of the significance level 0.05.}
\label{fig:Real-NLL}
\centering%
\renewcommand{\arraystretch}{0.5}%
\renewcommand{\tabcolsep}{0.5pt}%
\begin{tabular}{ccc|ccc|ccc}%
\multicolumn{9}{c}{~}\\
\multicolumn{9}{c}{~}\\
\multicolumn{3}{c|}{\tiny PL, $|D_\tra|:|D_\tes|=$}&\multicolumn{3}{c|}{\tiny MLB, $|D_\tra|:|D_\tes|=$}&\multicolumn{3}{c}{\tiny ATP, $|D_\tra|:|D_\tes|=$}\\
{\tiny$1:9$}&{\tiny$5:5$}&{\tiny$9:1$}&{\tiny$1:9$}&{\tiny$5:5$}&{\tiny$9:1$}&{\tiny$1:9$}&{\tiny$5:5$}&{\tiny$9:1$}\\
\midrule
{\includegraphics[width=2.0cm]{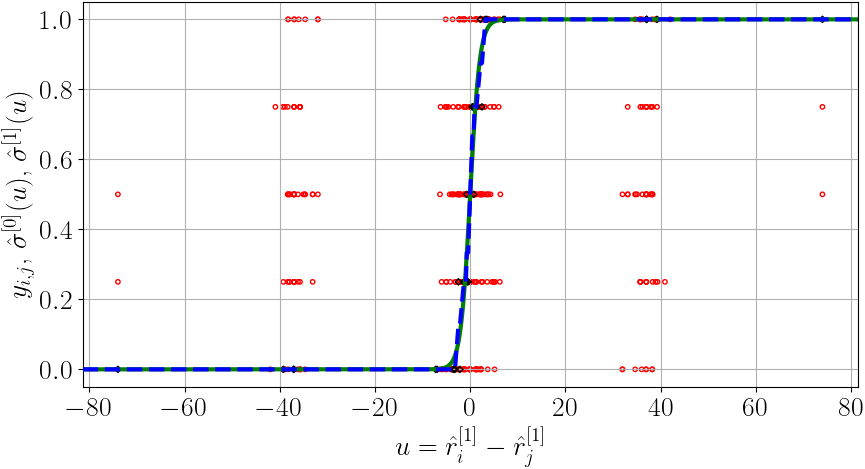}}&
{\includegraphics[width=2.0cm]{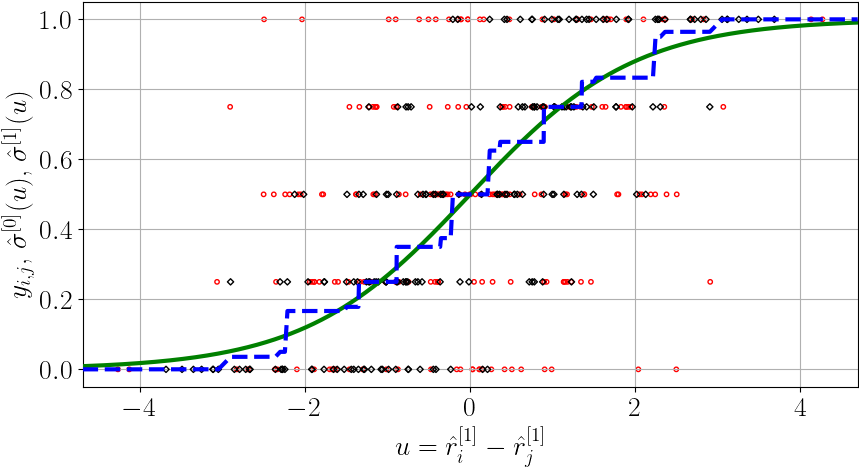}}&
{\includegraphics[width=2.0cm]{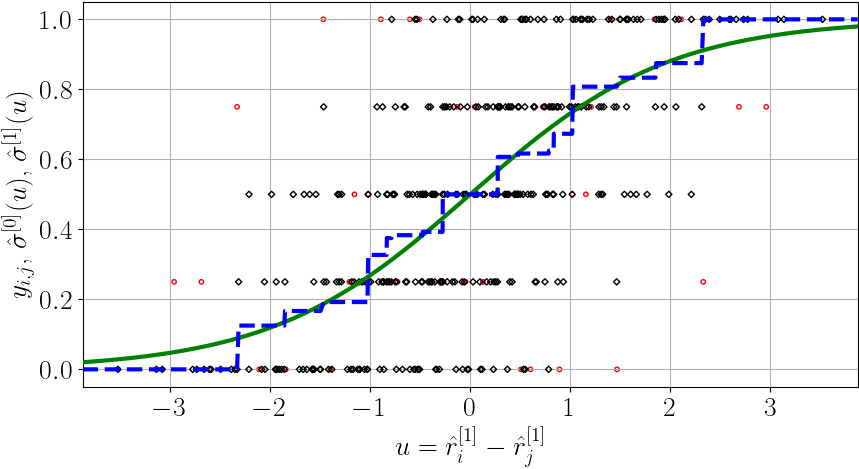}}&
{\includegraphics[width=2.0cm]{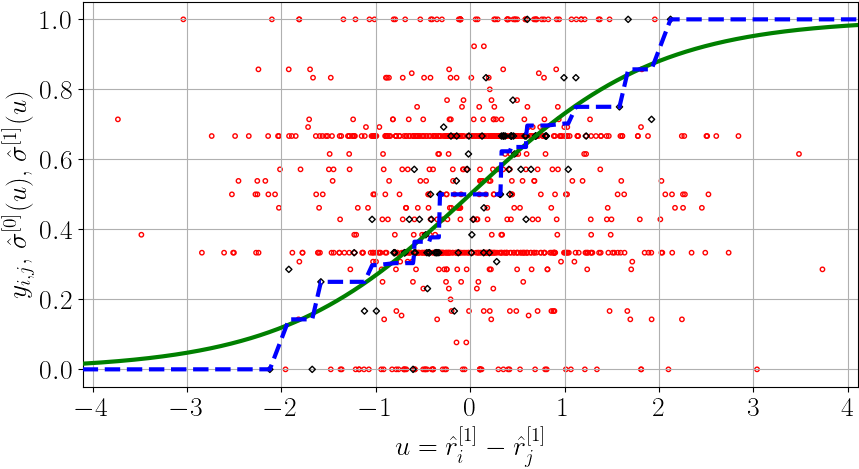}}&
{\includegraphics[width=2.0cm]{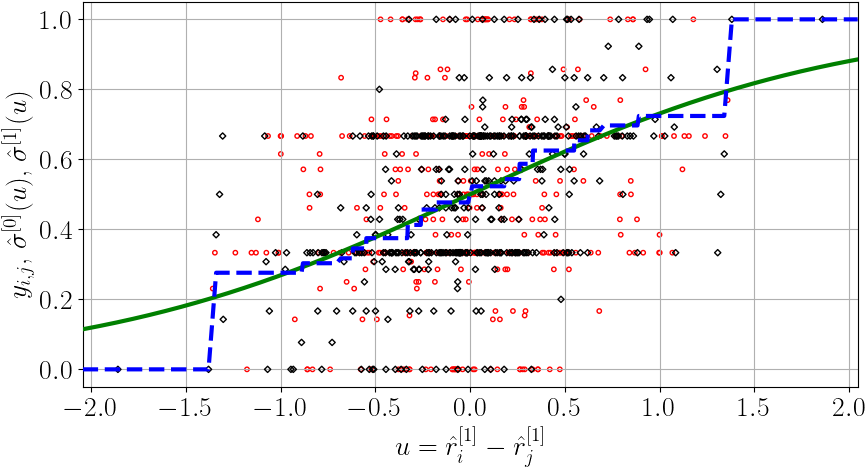}}&
{\includegraphics[width=2.0cm]{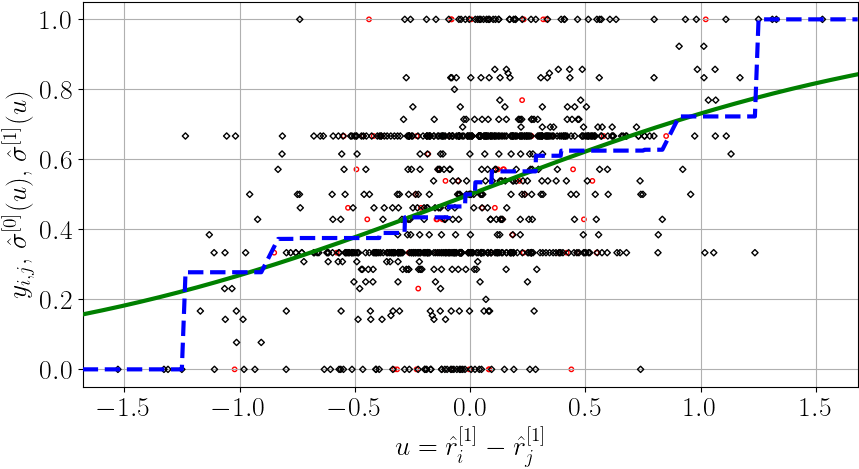}}&
{\includegraphics[width=2.0cm]{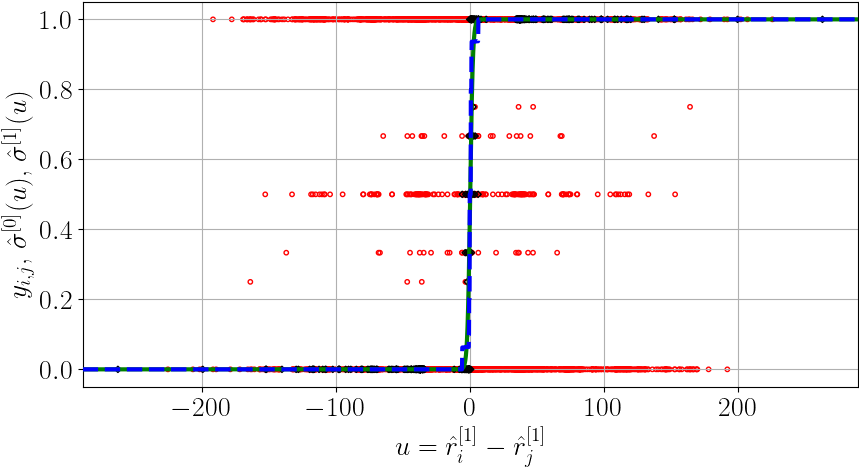}}&
{\includegraphics[width=2.0cm]{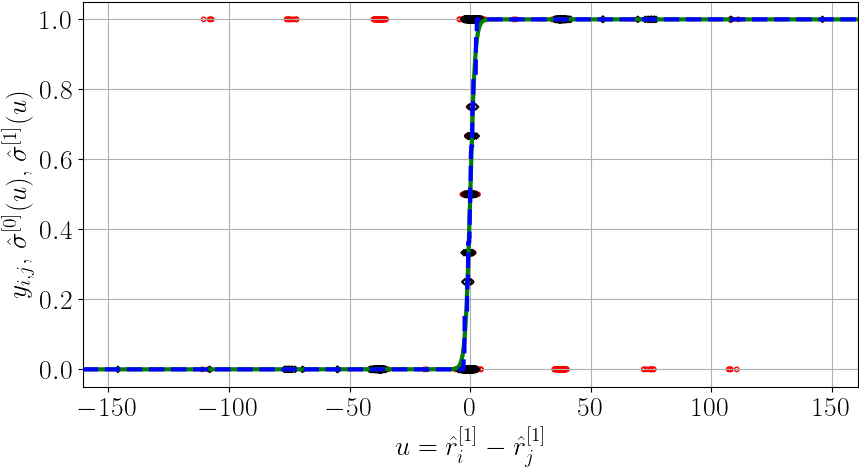}}&
{\includegraphics[width=2.0cm]{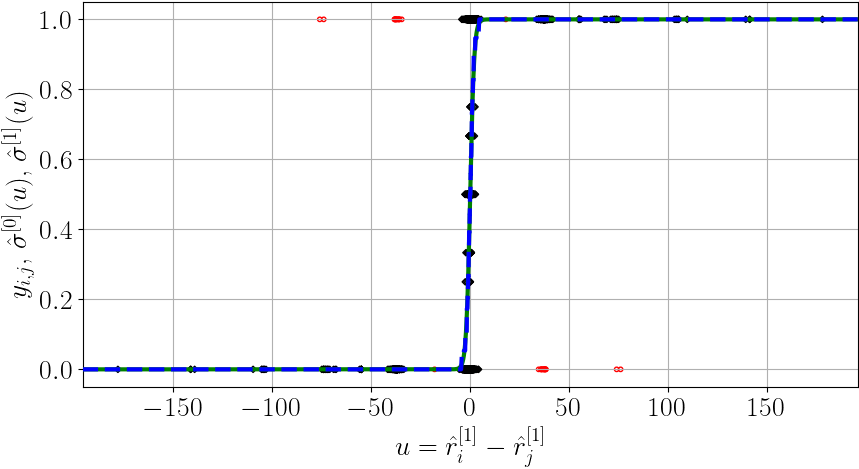}}
\end{tabular}
\begin{tabular}{cccc}%
\multicolumn{4}{c}{\tiny Rescaled version: Data, $|D_\tra|:|D_\tes|=$}\\
{\tiny PL, 1:9}&{\tiny ATP, 1:9}&{\tiny ATP, 5:5}&{\tiny ATP, 9:1}\\
\midrule
{\includegraphics[width=2.0cm]{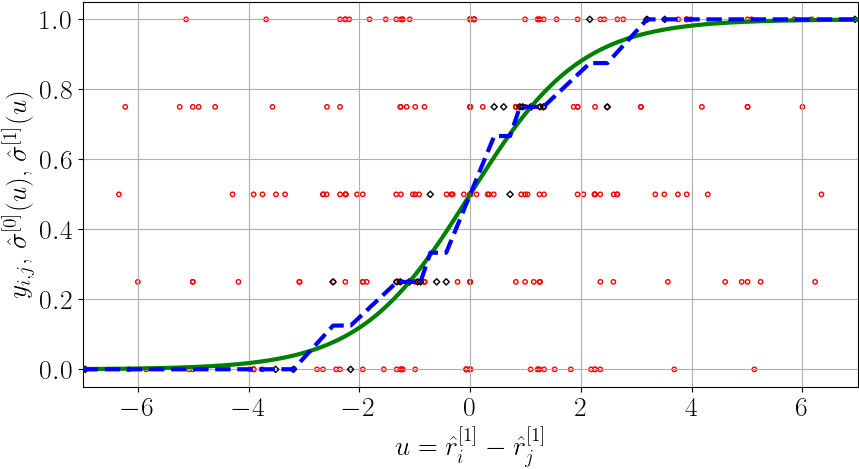}}&
{\includegraphics[width=2.0cm]{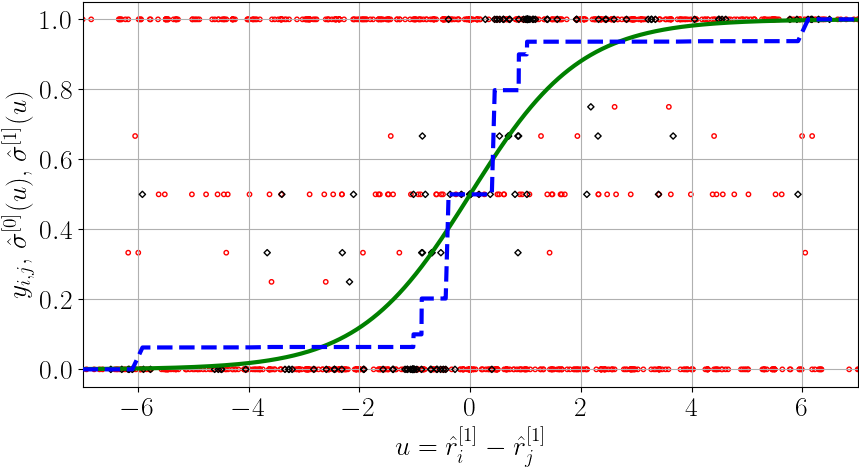}}&
{\includegraphics[width=2.0cm]{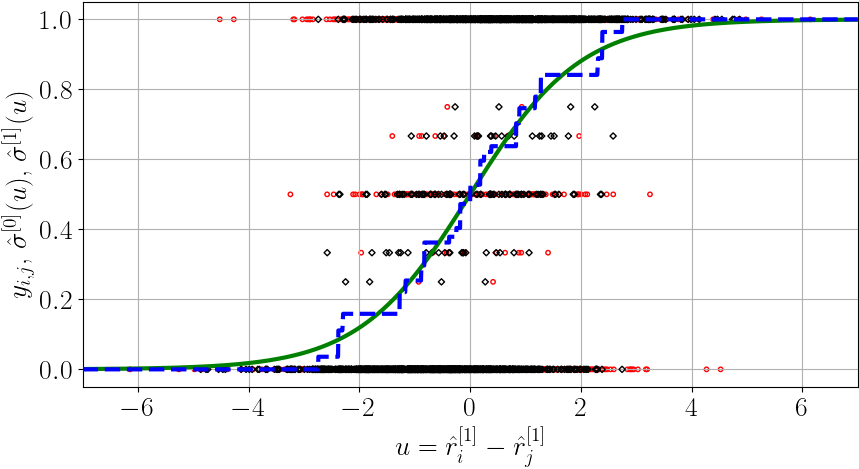}}&
{\includegraphics[width=2.0cm]{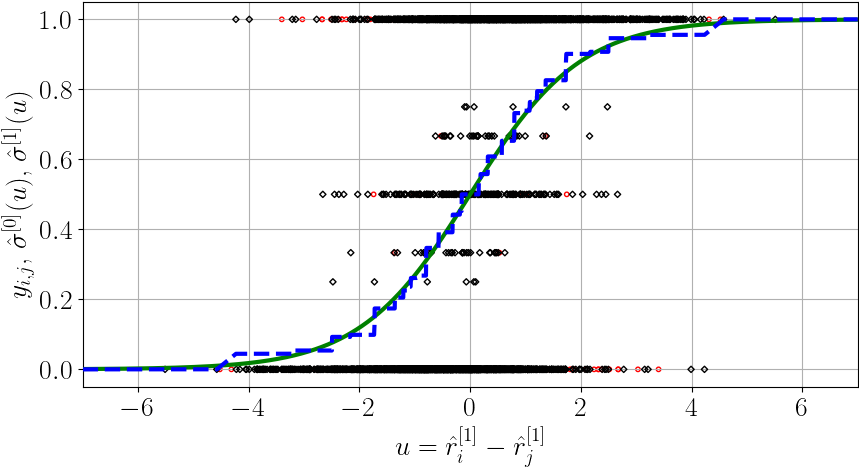}}
\end{tabular}
\caption{%
Part of results of real-world data experiments, Procedure 1 in Section~\ref{sec:RealWorld} and Appendix~\ref{sec:NLL}:
For PL, MLB, and ATP data (left to right),
black diamonds and red circles are training and test data $y_{i,j}$, 
and a green curve and a blue polyline are
the Bradley-Terry model $\hat{\sigma}^{[0]}(\hat{r}_i^{[1]}-\hat{r}_j^{[1]})$ and
isotonic Bradley-Terry model $\hat{\sigma}^{[1]}(\hat{r}_i^{[1]}-\hat{r}_j^{[1]})$
learned with the NLL loss $\phi=\phi_\nll$ in a certain trial,
over the range $[-1.1\cdot\max_{i,j}|\hat{r}_i^{[1]}-\hat{r}_j^{[1]}|,1.1\cdot\max_{i,j}|\hat{r}_i^{[1]}-\hat{r}_j^{[1]}|]$
or $[-7,7]$ in the rescaled version.}
\label{fig:Real-Res-Cauchy-NLL}
\end{sidewaysfigure}
\begin{sidewaystable}
\centering%
\renewcommand{\arraystretch}{0.5}%
\renewcommand{\tabcolsep}{0.5pt}%
\caption{%
Results of real-world data experiments, Procedure 2 in Section~\ref{sec:RealWorld} and Appendix~\ref{sec:NLL}:
For PL, MLB, and ATP data (left to right),
mean and std ($\text{mean}_{\text{std}}$) of 1000 trial test evaluation of 
WPP error \eqref{eq:WPE} with the NLL loss $\phi=\phi_\nll$
and Kendall's Tau \eqref{eq:Kendall}
for the Bradley-Terry model (upper) and model-selected isotonic 
Bradley-Terry model learned with the NLL loss $\phi=\phi_\nll$ (lower).
Smaller \eqref{eq:WPE}, or larger \eqref{eq:Kendall} indicates a better model.
It also includes $p$-value for the Mann-Whitney U test in the parentheses:
A value below the significance level 0.05 implies 
that the isotonic Bradley-Terry model performs significantly better
and is highlighted in red bold.
-- implies that errors include NaN.}
\label{tab:Real-NLL}
\scalebox{0.9}{\begin{minipage}{25cm}
\begin{tabular}{c|ccccc|ccccc|ccccc}%
&\multicolumn{5}{c|}{\tiny PL, $|D_\tra|:|D_\tes|=$}&\multicolumn{5}{c|}{\tiny MLB, $|D_\tra|:|D_\tes|=$}&\multicolumn{5}{c}{\tiny ATP, $|D_\tra|:|D_\tes|=$}\\
&{\tiny$1:9$}&{\tiny$3:7$}&{\tiny$5:5$}&{\tiny$7:3$}&{\tiny$9:1$}&{\tiny$1:9$}&{\tiny$3:7$}&{\tiny$5:5$}&{\tiny$7:3$}&{\tiny$9:1$}&{\tiny$1:9$}&{\tiny$3:7$}&{\tiny$5:5$}&{\tiny$7:3$}&{\tiny$9:1$}
\\\midrule
\multirow{3}{*}[-1.375mm]{\rotatebox{90}{\tiny\eqref{eq:WPE}}}
&$2.5804_{1.2334}$&$.5358_{.3756}$&$.3328_{.0542}$&$.3189_{.0318}$&$.3112_{.0334}$
&$.9516_{.5431}$&$.3638_{.0609}$&$.3502_{.0046}$&$.3469_{.0049}$&$.3453_{.0079}$
&$9.0269_{2.2872}$&$2.2744_{.3300}$&$1.6829_{.2006}$&$1.4985_{.1946}$&$1.4047_{.3220}$\\
&--&--&--&--&--
&--&--&--&--&--
&$9.0269_{2.2872}$&$2.2744_{.3300}$&$1.6829_{.2006}$&$1.4985_{.1946}$&$1.4047_{.3220}$\\
&($.9870$)&($.6934$)&($.7872$)&($.7159$)&($.7064$)
&($.7125$)&($.7757$)&($.9274$)&($.9519$)&($.9238$)
&($.5000$)&($.5000$)&($.5000$)&($.5000$)&($.5000$)\\
\midrule
\multirow{3}{*}[-1.375mm]{\rotatebox{90}{\tiny\eqref{eq:Kendall}}}
&$.1736_{.0932}$&\tcr{$\bf.3206_{.0575}$}&\tcr{$\bf.3711_{.0488}$}&\tcr{$\bf.3934_{.0693}$}&\tcr{$\bf.4154_{.1429}$}
&$.0533_{.0483}$&\tcr{$\bf.1031_{.0341}$}&\tcr{$\bf.1239_{.0354}$}&\tcr{$\bf.1375_{.0508}$}&\tcr{$\bf.1454_{.0989}$}
&$.0678_{.0239}$&\tcr{$\bf.1364_{.0173}$}&\tcr{$\bf.1651_{.0175}$}&\tcr{$\bf.1818_{.0234}$}&\tcr{$\bf.1966_{.0463}$}\\
&$.1779_{.0969}$&\tcr{$\bf.3299_{.0604}$}&\tcr{$\bf.3833_{.0513}$}&\tcr{$\bf.4051_{.0740}$}&\tcr{$\bf.4279_{.1460}$}
&$.0545_{.0497}$&\tcr{$\bf.1065_{.0363}$}&\tcr{$\bf.1297_{.0398}$}&\tcr{$\bf.1454_{.0560}$}&\tcr{$\bf.1551_{.1072}$}
&$.0693_{.0244}$&\tcr{$\bf.1414_{.0185}$}&\tcr{$\bf.1704_{.0186}$}&\tcr{$\bf.1870_{.0249}$}&\tcr{$\bf.2020_{.0485}$}\\
&($.1219$)&(\tcr{$\bf.0001$})&(\tcr{$\bf.0000$})&(\tcr{$\bf.0001$})&(\tcr{$\bf.0232$})
&($.2910$)&(\tcr{$\bf.0151$})&(\tcr{$\bf.0003$})&(\tcr{$\bf.0004$})&(\tcr{$\bf.0127$})
&($.0814$)&(\tcr{$\bf.0000$})&(\tcr{$\bf.0000$})&(\tcr{$\bf.0000$})&(\tcr{$\bf.0064$})\\
\end{tabular}\end{minipage}}
\end{sidewaystable}

\end{appendices}



\begin{thebibliography}{48}
\ifx \bisbn   \undefined \def \bisbn  #1{ISBN #1}\fi
\ifx \binits  \undefined \def \binits#1{#1}\fi
\ifx \bauthor  \undefined \def \bauthor#1{#1}\fi
\ifx \batitle  \undefined \def \batitle#1{#1}\fi
\ifx \bjtitle  \undefined \def \bjtitle#1{#1}\fi
\ifx \bvolume  \undefined \def \bvolume#1{\textbf{#1}}\fi
\ifx \byear  \undefined \def \byear#1{#1}\fi
\ifx \bissue  \undefined \def \bissue#1{#1}\fi
\ifx \bfpage  \undefined \def \bfpage#1{#1}\fi
\ifx \blpage  \undefined \def \blpage #1{#1}\fi
\ifx \burl  \undefined \def \burl#1{\textsf{#1}}\fi
\ifx \doiurl  \undefined \def \doiurl#1{\url{https://doi.org/#1}}\fi
\ifx \betal  \undefined \def \betal{\textit{et al.}}\fi
\ifx \binstitute  \undefined \def \binstitute#1{#1}\fi
\ifx \binstitutionaled  \undefined \def \binstitutionaled#1{#1}\fi
\ifx \bctitle  \undefined \def \bctitle#1{#1}\fi
\ifx \beditor  \undefined \def \beditor#1{#1}\fi
\ifx \bpublisher  \undefined \def \bpublisher#1{#1}\fi
\ifx \bbtitle  \undefined \def \bbtitle#1{#1}\fi
\ifx \bedition  \undefined \def \bedition#1{#1}\fi
\ifx \bseriesno  \undefined \def \bseriesno#1{#1}\fi
\ifx \blocation  \undefined \def \blocation#1{#1}\fi
\ifx \bsertitle  \undefined \def \bsertitle#1{#1}\fi
\ifx \bsnm \undefined \def \bsnm#1{#1}\fi
\ifx \bsuffix \undefined \def \bsuffix#1{#1}\fi
\ifx \bparticle \undefined \def \bparticle#1{#1}\fi
\ifx \barticle \undefined \def \barticle#1{#1}\fi
\bibcommenthead
\ifx \bconfdate \undefined \def \bconfdate #1{#1}\fi
\ifx \botherref \undefined \def \botherref #1{#1}\fi
\ifx \url \undefined \def \url#1{\textsf{#1}}\fi
\ifx \bchapter \undefined \def \bchapter#1{#1}\fi
\ifx \bbook \undefined \def \bbook#1{#1}\fi
\ifx \bcomment \undefined \def \bcomment#1{#1}\fi
\ifx \oauthor \undefined \def \oauthor#1{#1}\fi
\ifx \citeauthoryear \undefined \def \citeauthoryear#1{#1}\fi
\ifx \endbibitem  \undefined \def \endbibitem {}\fi
\ifx \bconflocation  \undefined \def \bconflocation#1{#1}\fi
\ifx \arxivurl  \undefined \def \arxivurl#1{\textsf{#1}}\fi
\csname PreBibitemsHook\endcsname

\bibitem[\protect\citeauthoryear{Bradley}{1976}]{bradley1976science}
\begin{barticle}
\bauthor{\bsnm{Bradley}, \binits{R.A.}}:
\batitle{{Science, statistics, and paired comparisons}}.
\bjtitle{Biometrics}
\bvolume{32}(\bissue{2}),
\bfpage{213}--\blpage{239}
(\byear{1976})
\doiurl{10.2307/2529494}
\end{barticle}
\endbibitem

\bibitem[\protect\citeauthoryear{Cattelan}{2012}]{cattelan2012models}
\begin{barticle}
\bauthor{\bsnm{Cattelan}, \binits{M.}}:
\batitle{{Models for Paired Comparison Data: A Review with Emphasis on
  Dependent Data}}.
\bjtitle{Statistical Science}
\bvolume{27}(\bissue{3}),
\bfpage{412}--\blpage{433}
(\byear{2012})
\doiurl{10.1214/12-sts396}
\end{barticle}
\endbibitem

\bibitem[\protect\citeauthoryear{Kissler and
  B{\"a}uml}{2000}]{kissler2000effects}
\begin{barticle}
\bauthor{\bsnm{Kissler}, \binits{J.}},
\bauthor{\bsnm{B{\"a}uml}, \binits{K.-H.}}:
\batitle{{Effects of the beholder's age on the perception of facial
  attractiveness}}.
\bjtitle{Acta Psychologica}
\bvolume{104}(\bissue{2}),
\bfpage{145}--\blpage{166}
(\byear{2000})
\doiurl{10.1016/s0001-6918(00)00018-4}
\end{barticle}
\endbibitem

\bibitem[\protect\citeauthoryear{Maydeu-Olivares and
  B{\"o}ckenholt}{2008}]{maydeu2008modeling}
\begin{barticle}
\bauthor{\bsnm{Maydeu-Olivares}, \binits{A.}},
\bauthor{\bsnm{B{\"o}ckenholt}, \binits{U.}}:
\batitle{{Modeling subjective health outcomes: Top 10 reasons to use
  Thurstone's method}}.
\bjtitle{Medical Care}
\bvolume{46}(\bissue{4}),
\bfpage{346}--\blpage{348}
(\byear{2008})
\doiurl{10.1097/mlr.0b013e31816dd8d9}
\end{barticle}
\endbibitem

\bibitem[\protect\citeauthoryear{Mazzuchi et~al.}{2008}]{mazzuchi2008paired}
\begin{barticle}
\bauthor{\bsnm{Mazzuchi}, \binits{T.A.}},
\bauthor{\bsnm{Linzey}, \binits{W.G.}},
\bauthor{\bsnm{Bruning}, \binits{A.}}:
\batitle{{A paired comparison experiment for gathering expert judgment for an
  aircraft wiring risk assessment}}.
\bjtitle{Reliability Engineering \& System Safety}
\bvolume{93}(\bissue{5}),
\bfpage{722}--\blpage{731}
(\byear{2008})
\doiurl{10.1016/j.ress.2007.03.011}
\end{barticle}
\endbibitem

\bibitem[\protect\citeauthoryear{Rafailov et~al.}{2023}]{rafailov2023direct}
\begin{bchapter}
\bauthor{\bsnm{Rafailov}, \binits{R.}},
\bauthor{\bsnm{Sharma}, \binits{A.}},
\bauthor{\bsnm{Mitchell}, \binits{E.}},
\bauthor{\bsnm{Manning}, \binits{C.D.}},
\bauthor{\bsnm{Ermon}, \binits{S.}},
\bauthor{\bsnm{Finn}, \binits{C.}}:
\bctitle{Direct preference optimization: Your language model is secretly a
  reward model}.
In: \bbtitle{Advances in Neural Information Processing Systems},
vol. \bseriesno{36},
pp. \bfpage{53728}--\blpage{53741}
(\byear{2023}).
\doiurl{10.52202/075280-2338}
\end{bchapter}
\endbibitem

\bibitem[\protect\citeauthoryear{Chiang et~al.}{2024}]{Chatbot}
\begin{bchapter}
\bauthor{\bsnm{Chiang}, \binits{W.-L.}},
\bauthor{\bsnm{Zheng}, \binits{L.}},
\bauthor{\bsnm{Sheng}, \binits{Y.}},
\bauthor{\bsnm{Angelopoulos}, \binits{A.N.}},
\bauthor{\bsnm{Li}, \binits{T.}},
\bauthor{\bsnm{Li}, \binits{D.}},
\bauthor{\bsnm{Zhu}, \binits{B.}},
\bauthor{\bsnm{Zhang}, \binits{H.}},
\bauthor{\bsnm{Jordan}, \binits{M.I.}},
\bauthor{\bsnm{Gonzalez}, \binits{J.E.}},
\bauthor{\bsnm{Stoica}, \binits{I.}}:
\bctitle{Chatbot arena: An open platform for evaluating llms by human
  preference}.
In: \bbtitle{Proceedings of the 41st International Conference on Machine
  Learning}
(\byear{2024})
\end{bchapter}
\endbibitem

\bibitem[\protect\citeauthoryear{Zermelo}{1929}]{Zermelo1929}
\begin{barticle}
\bauthor{\bsnm{Zermelo}, \binits{E.}}:
\batitle{{Die Berechnung der Turnier-Ergebnisse als ein Maximumproblem der
  Wahrscheinlichkeitsrechnung}}.
\bjtitle{Mathematische Zeitschrift}
\bvolume{29},
\bfpage{436}--\blpage{460}
(\byear{1929})
\doiurl{10.1007/bf01180541}
\end{barticle}
\endbibitem

\bibitem[\protect\citeauthoryear{Ford~Jr.}{1957}]{ford1957solution}
\begin{barticle}
\bauthor{\bsnm{Ford~Jr.}, \binits{L.R.}}:
\batitle{{Solution of a ranking problem from binary comparisons}}.
\bjtitle{The American Mathematical Monthly}
\bvolume{64}(\bissue{8, Part 2}),
\bfpage{28}--\blpage{33}
(\byear{1957})
\doiurl{10.2307/2308513}
\end{barticle}
\endbibitem

\bibitem[\protect\citeauthoryear{Bradley and Terry}{1952}]{bradley1952rank}
\begin{barticle}
\bauthor{\bsnm{Bradley}, \binits{R.A.}},
\bauthor{\bsnm{Terry}, \binits{M.E.}}:
\batitle{{Rank analysis of incomplete block designs: I. {The} method of paired
  comparisons}}.
\bjtitle{Biometrika}
\bvolume{39}(\bissue{3/4}),
\bfpage{324}--\blpage{345}
(\byear{1952})
\doiurl{10.2307/2527550}
\end{barticle}
\endbibitem

\bibitem[\protect\citeauthoryear{Luce}{1959}]{luce1959individual}
\begin{bbook}
\bauthor{\bsnm{Luce}, \binits{R.D.}}:
\bbtitle{{Individual Choice Behavior}}
vol. \bseriesno{4}.
\bpublisher{John Wiley \& Sons},
\blocation{New York}
(\byear{1959})
\end{bbook}
\endbibitem

\bibitem[\protect\citeauthoryear{Thurstone}{1927}]{thurstone1927law}
\begin{barticle}
\bauthor{\bsnm{Thurstone}, \binits{L.L.}}:
\batitle{{A law of comparative judgment}}.
\bjtitle{Psychological Review}
\bvolume{34}(\bissue{4}),
\bfpage{273}--\blpage{286}
(\byear{1927})
\doiurl{10.1037/h0070288}
\end{barticle}
\endbibitem

\bibitem[\protect\citeauthoryear{Mosteller}{1951}]{mosteller1951remarks1}
\begin{barticle}
\bauthor{\bsnm{Mosteller}, \binits{F.}}:
\batitle{{Remarks on the method of paired comparisons: I. {The} least squares
  solution assuming equal standard deviations and equal correlations}}.
\bjtitle{Psychometrika}
\bvolume{16}(\bissue{1}),
\bfpage{3}--\blpage{9}
(\byear{1951})
\doiurl{10.1007/978-0-387-44956-2_8}
\end{barticle}
\endbibitem

\bibitem[\protect\citeauthoryear{Stern}{1990}]{stern1990continuum}
\begin{barticle}
\bauthor{\bsnm{Stern}, \binits{H.}}:
\batitle{{A continuum of paired comparisons models}}.
\bjtitle{Biometrika}
\bvolume{77}(\bissue{2}),
\bfpage{265}--\blpage{273}
(\byear{1990})
\doiurl{10.2307/2336804}
\end{barticle}
\endbibitem

\bibitem[\protect\citeauthoryear{Ayer et~al.}{1955}]{ayer1955empirical}
\begin{barticle}
\bauthor{\bsnm{Ayer}, \binits{M.}},
\bauthor{\bsnm{Brunk}, \binits{H.D.}},
\bauthor{\bsnm{Ewing}, \binits{G.M.}},
\bauthor{\bsnm{Reid}, \binits{W.T.}},
\bauthor{\bsnm{Silverman}, \binits{E.}}:
\batitle{An empirical distribution function for sampling with incomplete
  information}.
\bjtitle{The Annals of Mathematical Statistics}
\bvolume{26}(\bissue{4}),
\bfpage{641}--\blpage{647}
(\byear{1955})
\doiurl{10.1214/aoms/1177728423}
\end{barticle}
\endbibitem

\bibitem[\protect\citeauthoryear{Brunk}{1955}]{brunk1955maximum}
\begin{barticle}
\bauthor{\bsnm{Brunk}, \binits{H.D.}}:
\batitle{Maximum likelihood estimates of monotone parameters}.
\bjtitle{The Annals of Mathematical Statistics}
\bvolume{26}(\bissue{4}),
\bfpage{607}--\blpage{616}
(\byear{1955})
\doiurl{10.1214/aoms/1177728420}
\end{barticle}
\endbibitem

\bibitem[\protect\citeauthoryear{Van~Eeden}{1956}]{van1956maximum}
\begin{barticle}
\bauthor{\bsnm{Van~Eeden}, \binits{C.}}:
\batitle{Maximum likelihood estimation of ordered probabilities}.
\bjtitle{Proceedings of the Koninklijke Nederlandse Akademie van Wetenschappen,
  Seriese A}
\bvolume{59}(\bissue{4}),
\bfpage{444}--\blpage{455}
(\byear{1956})
\doiurl{10.1016/s1385-7258(56)50060-1}
\end{barticle}
\endbibitem

\bibitem[\protect\citeauthoryear{Barlow et~al.}{1972}]{barlow1972statistical}
\begin{bbook}
\bauthor{\bsnm{Barlow}, \binits{R.E.}},
\bauthor{\bsnm{Bartholomew}, \binits{D.J.}},
\bauthor{\bsnm{Bremner}, \binits{J.M.}},
\bauthor{\bsnm{Brunk}, \binits{H.D.}}:
\bbtitle{Statistical Inference Under Order Restrictions: The Theory and
  Application of Isotonic Regression}.
\bpublisher{John Wiley \& Sons},
\blocation{London and New York}
(\byear{1972})
\end{bbook}
\endbibitem

\bibitem[\protect\citeauthoryear{Robertson et~al.}{1988}]{robertson1988order}
\begin{bbook}
\bauthor{\bsnm{Robertson}, \binits{T.}},
\bauthor{\bsnm{Wright}, \binits{F.T.}},
\bauthor{\bsnm{Dykstra}, \binits{R.L.}}:
\bbtitle{Order Restricted Statistical Inference}.
\bpublisher{John Wiley \& Sons},
\blocation{Chichester}
(\byear{1988})
\end{bbook}
\endbibitem

\bibitem[\protect\citeauthoryear{Chatterjee}{2015}]{chatterjee2015matrix}
\begin{barticle}
\bauthor{\bsnm{Chatterjee}, \binits{S.}}:
\batitle{{Matrix estimation by universal singular value thresholding}}.
\bjtitle{The Annals of Statistics}
\bvolume{43}(\bissue{1}),
\bfpage{177}--\blpage{214}
(\byear{2015})
\doiurl{10.1214/14-aos1272}
\end{barticle}
\endbibitem

\bibitem[\protect\citeauthoryear{Shah et~al.}{2016}]{shah2016stochastically}
\begin{bchapter}
\bauthor{\bsnm{Shah}, \binits{N.}},
\bauthor{\bsnm{Balakrishnan}, \binits{S.}},
\bauthor{\bsnm{Guntuboyina}, \binits{A.}},
\bauthor{\bsnm{Wainwright}, \binits{M.}}:
\bctitle{{Stochastically transitive models for pairwise comparisons:
  Statistical and computational issues}}.
In: \bbtitle{{Proceedings of the 33rd International Conference on Machine
  Learning}},
pp. \bfpage{11}--\blpage{20}
(\byear{2016}).
\doiurl{10.1109/tit.2016.2634418}
\end{bchapter}
\endbibitem

\bibitem[\protect\citeauthoryear{Chatterjee
  et~al.}{2018}]{chatterjee2018matrix}
\begin{barticle}
\bauthor{\bsnm{Chatterjee}, \binits{S.}},
\bauthor{\bsnm{Guntuboyina}, \binits{A.}},
\bauthor{\bsnm{Sen}, \binits{B.}}:
\batitle{{On matrix estimation under monotonicity constraints}}.
\bjtitle{Bernoulli}
\bvolume{24}(\bissue{2}),
\bfpage{1072}--\blpage{1100}
(\byear{2018})
\doiurl{10.3150/16-bej865}
\end{barticle}
\endbibitem

\bibitem[\protect\citeauthoryear{Elo and Sloan}{1978}]{elo1978rating}
\begin{bbook}
\bauthor{\bsnm{Elo}, \binits{A.E.}},
\bauthor{\bsnm{Sloan}, \binits{S.}}:
\bbtitle{The Rating of Chessplayers: Past and Present}.
\bpublisher{Arco Publishing, Inc.},
\blocation{New York}
(\byear{1978})
\end{bbook}
\endbibitem

\bibitem[\protect\citeauthoryear{Hunter}{2004}]{hunter2004mm}
\begin{barticle}
\bauthor{\bsnm{Hunter}, \binits{D.R.}}:
\batitle{{MM algorithms for generalized Bradley-Terry models}}.
\bjtitle{The Annals of Statistics}
\bvolume{32}(\bissue{1}),
\bfpage{384}--\blpage{406}
(\byear{2004})
\doiurl{10.1214/aos/1079120141}
\end{barticle}
\endbibitem

\bibitem[\protect\citeauthoryear{Nelder and
  Wedderburn}{1972}]{nelder1972generalized}
\begin{barticle}
\bauthor{\bsnm{Nelder}, \binits{J.A.}},
\bauthor{\bsnm{Wedderburn}, \binits{R.W.M.}}:
\batitle{{Generalized linear models}}.
\bjtitle{Journal of the Royal Statistical Society Series A: Statistics in
  Society}
\bvolume{135}(\bissue{3}),
\bfpage{370}--\blpage{384}
(\byear{1972})
\doiurl{10.2307/2344614}
\end{barticle}
\endbibitem

\bibitem[\protect\citeauthoryear{Bianco and Yohai}{1996}]{bianco1996robust}
\begin{bchapter}
\bauthor{\bsnm{Bianco}, \binits{A.M.}},
\bauthor{\bsnm{Yohai}, \binits{V.J.}}:
\bctitle{Robust estimation in the logistic regression model}.
In: \bbtitle{Robust Statistics, Data Analysis, and Computer Intensive Methods},
pp. \bfpage{17}--\blpage{34}
(\byear{1996}).
\doiurl{10.1007/978-1-4612-2380-1_2}
\end{bchapter}
\endbibitem

\bibitem[\protect\citeauthoryear{Yamasaki and Tanaka}{2023}]{yamasaki2023label}
\begin{botherref}
\oauthor{\bsnm{Yamasaki}, \binits{R.}},
\oauthor{\bsnm{Tanaka}, \binits{T.}}:
{Label Smoothing is Robustification against Model Misspecification}.
arXiv preprint arXiv:2305.08501
(2023)
\end{botherref}
\endbibitem

\bibitem[\protect\citeauthoryear{Kendall}{1938}]{kendall1938new}
\begin{barticle}
\bauthor{\bsnm{Kendall}, \binits{M.G.}}:
\batitle{{A new measure of rank correlation}}.
\bjtitle{Biometrika}
\bvolume{30}(\bissue{1-2}),
\bfpage{81}--\blpage{93}
(\byear{1938})
\doiurl{10.2307/2332226}
\end{barticle}
\endbibitem

\bibitem[\protect\citeauthoryear{Spearman}{1904}]{Spearman04}
\begin{barticle}
\bauthor{\bsnm{Spearman}, \binits{C.}}:
\batitle{{The Proof and Measurement of Association between Two Things}}.
\bjtitle{The American Journal of Psychology}
\bvolume{15}(\bissue{1}),
\bfpage{72}--\blpage{101}
(\byear{1904})
\doiurl{10.1037/11491-005}
\end{barticle}
\endbibitem

\bibitem[\protect\citeauthoryear{Noether}{1981}]{noether1981kendall}
\begin{barticle}
\bauthor{\bsnm{Noether}, \binits{G.E.}}:
\batitle{{Why kendall tau?}}
\bjtitle{Teaching Statistics}
\bvolume{3}(\bissue{2}),
\bfpage{41}--\blpage{43}
(\byear{1981})
\doiurl{10.1111/j.1467-9639.1981.tb00422.x}
\end{barticle}
\endbibitem

\bibitem[\protect\citeauthoryear{Brunk et~al.}{1957}]{brunk1957minimizing}
\begin{barticle}
\bauthor{\bsnm{Brunk}, \binits{H.D.}},
\bauthor{\bsnm{Ewing}, \binits{G.M.}},
\bauthor{\bsnm{Utz}, \binits{W.R.}}:
\batitle{Minimizing integrals in certain classes of monotone functions}.
\bjtitle{Pacific Journal of Mathematics}
\bvolume{7}(\bissue{1}),
\bfpage{833}--\blpage{847}
(\byear{1957})
\doiurl{10.2140/pjm.1957.7.833}
\end{barticle}
\endbibitem

\bibitem[\protect\citeauthoryear{Thompson~Jr.}{1962}]{thompson1962problem}
\begin{barticle}
\bauthor{\bsnm{Thompson~Jr.}, \binits{W.A.}}:
\batitle{The problem of negative estimates of variance components}.
\bjtitle{The Annals of Mathematical Statistics}
\bvolume{33}(\bissue{1}),
\bfpage{273}--\blpage{289}
(\byear{1962})
\doiurl{10.1214/aoms/1177704731}
\end{barticle}
\endbibitem

\bibitem[\protect\citeauthoryear{Lee}{1983}]{lee1983min}
\begin{barticle}
\bauthor{\bsnm{Lee}, \binits{C.-I.C.}}:
\batitle{The min-max algorithm and isotonic regression}.
\bjtitle{The Annals of Statistics}
\bvolume{11}(\bissue{2}),
\bfpage{467}--\blpage{477}
(\byear{1983})
\doiurl{10.1214/aos/1176346153}
\end{barticle}
\endbibitem

\bibitem[\protect\citeauthoryear{Best and Chakravarti}{1990}]{best1990active}
\begin{barticle}
\bauthor{\bsnm{Best}, \binits{M.J.}},
\bauthor{\bsnm{Chakravarti}, \binits{N.}}:
\batitle{Active set algorithms for isotonic regression; a unifying framework}.
\bjtitle{Mathematical Programming}
\bvolume{47}(\bissue{1}),
\bfpage{425}--\blpage{439}
(\byear{1990})
\doiurl{10.1007/bf01580873}
\end{barticle}
\endbibitem

\bibitem[\protect\citeauthoryear{de~Leeuw et~al.}{2010}]{de2010isotone}
\begin{barticle}
\bauthor{\bsnm{Leeuw}, \binits{J.}},
\bauthor{\bsnm{Hornik}, \binits{K.}},
\bauthor{\bsnm{Mair}, \binits{P.}}:
\batitle{Isotone optimization in {R}: Pool-adjacent-violators algorithm
  ({PAVA}) and active set methods}.
\bjtitle{Journal of Statistical Software}
\bvolume{32}(\bissue{5}),
\bfpage{1}--\blpage{24}
(\byear{2010})
\doiurl{10.18637/jss.v032.i05}
\end{barticle}
\endbibitem

\bibitem[\protect\citeauthoryear{Shah and Wainwright}{2018}]{Nihar18}
\begin{barticle}
\bauthor{\bsnm{Shah}, \binits{N.B.}},
\bauthor{\bsnm{Wainwright}, \binits{M.J.}}:
\batitle{Simple, robust and optimal ranking from pairwise comparisons}.
\bjtitle{Journal of Machine Learning Research}
\bvolume{18}(\bissue{199}),
\bfpage{1}--\blpage{38}
(\byear{2018})
\end{barticle}
\endbibitem

\bibitem[\protect\citeauthoryear{Han et~al.}{2020}]{han2020asymptotic}
\begin{barticle}
\bauthor{\bsnm{Han}, \binits{R.}},
\bauthor{\bsnm{Ye}, \binits{R.}},
\bauthor{\bsnm{Tan}, \binits{C.}},
\bauthor{\bsnm{Chen}, \binits{K.}}:
\batitle{{Asymptotic theory of sparse Bradley--Terry model}}.
\bjtitle{Annals of Applied Probability}
\bvolume{30}(\bissue{5}),
\bfpage{2491}--\blpage{2515}
(\byear{2020})
\doiurl{10.1214/20-aap1564}
\end{barticle}
\endbibitem

\bibitem[\protect\citeauthoryear{Simons and Yao}{1999}]{simons1999asymptotics}
\begin{barticle}
\bauthor{\bsnm{Simons}, \binits{G.}},
\bauthor{\bsnm{Yao}, \binits{Y.-C.}}:
\batitle{{Asymptotics when the number of parameters tends to infinity in the
  Bradley-Terry model for paired comparisons}}.
\bjtitle{The Annals of Statistics}
\bvolume{27}(\bissue{3}),
\bfpage{1041}--\blpage{1060}
(\byear{1999})
\doiurl{10.1214/aos/1018031267}
\end{barticle}
\endbibitem

\bibitem[\protect\citeauthoryear{Pedregosa et~al.}{2011}]{scikit-learn}
\begin{barticle}
\bauthor{\bsnm{Pedregosa}, \binits{F.}},
\bauthor{\bsnm{Varoquaux}, \binits{G.}},
\bauthor{\bsnm{Gramfort}, \binits{A.}},
\bauthor{\bsnm{Michel}, \binits{V.}},
\bauthor{\bsnm{Thirion}, \binits{B.}},
\bauthor{\bsnm{Grisel}, \binits{O.}},
\bauthor{\bsnm{Blondel}, \binits{M.}},
\bauthor{\bsnm{Prettenhofer}, \binits{P.}},
\bauthor{\bsnm{Weiss}, \binits{R.}},
\bauthor{\bsnm{Dubourg}, \binits{V.}},
\bauthor{\bsnm{Vanderplas}, \binits{J.}},
\bauthor{\bsnm{Passos}, \binits{A.}},
\bauthor{\bsnm{Cournapeau}, \binits{D.}},
\bauthor{\bsnm{Brucher}, \binits{M.}},
\bauthor{\bsnm{Perrot}, \binits{M.}},
\bauthor{\bsnm{Duchesnay}, \binits{E.}}:
\batitle{Scikit-learn: Machine learning in {P}ython}.
\bjtitle{Journal of Machine Learning Research}
\bvolume{12},
\bfpage{2825}--\blpage{2830}
(\byear{2011})
\end{barticle}
\endbibitem

\bibitem[\protect\citeauthoryear{Springall}{1973}]{springall1973response}
\begin{barticle}
\bauthor{\bsnm{Springall}, \binits{A.}}:
\batitle{{Response surface fitting using a generalization of the Bradley-Terry
  paired comparison model}}.
\bjtitle{Journal of the Royal Statistical Society Series C: Applied Statistics}
\bvolume{22}(\bissue{1}),
\bfpage{59}--\blpage{68}
(\byear{1973})
\doiurl{10.2307/2346303}
\end{barticle}
\endbibitem

\bibitem[\protect\citeauthoryear{De~Soete and
  Winsberg}{1993}]{de1993thurstonian}
\begin{barticle}
\bauthor{\bsnm{De~Soete}, \binits{G.}},
\bauthor{\bsnm{Winsberg}, \binits{S.}}:
\batitle{{A Thurstonian pairwise choice model with univariate and multivariate
  spline transformations}}.
\bjtitle{Psychometrika}
\bvolume{58}(\bissue{2}),
\bfpage{233}--\blpage{256}
(\byear{1993})
\doiurl{10.1007/bf02294575}
\end{barticle}
\endbibitem

\bibitem[\protect\citeauthoryear{David}{1963}]{david1963method}
\begin{bbook}
\bauthor{\bsnm{David}, \binits{H.A.}}:
\bbtitle{{The Method of Paired Comparisons}}
vol. \bseriesno{12}.
\bpublisher{Charles Griffin and Company Ltd.},
\blocation{London}
(\byear{1963})
\end{bbook}
\endbibitem

\bibitem[\protect\citeauthoryear{Glickman}{1995}]{glickman1995comprehensive}
\begin{barticle}
\bauthor{\bsnm{Glickman}, \binits{M.E.}}:
\batitle{A comprehensive guide to chess ratings}.
\bjtitle{American Chess Journal}
\bvolume{3}(\bissue{1}),
\bfpage{59}--\blpage{102}
(\byear{1995})
\end{barticle}
\endbibitem

\bibitem[\protect\citeauthoryear{Glenn and David}{1960}]{glenn1960ties}
\begin{barticle}
\bauthor{\bsnm{Glenn}, \binits{W.A.}},
\bauthor{\bsnm{David}, \binits{H.A.}}:
\batitle{{Ties in paired-comparison experiments using a modified
  Thurstone-Mosteller model}}.
\bjtitle{Biometrics}
\bvolume{16}(\bissue{1}),
\bfpage{86}--\blpage{109}
(\byear{1960})
\doiurl{10.2307/2527957}
\end{barticle}
\endbibitem

\bibitem[\protect\citeauthoryear{Rao and Kupper}{1967}]{rao1967ties}
\begin{barticle}
\bauthor{\bsnm{Rao}, \binits{P.V.}},
\bauthor{\bsnm{Kupper}, \binits{L.L.}}:
\batitle{{Ties in paired-comparison experiments: A generalization of the
  Bradley-Terry model}}.
\bjtitle{Journal of the American Statistical Association}
\bvolume{62}(\bissue{317}),
\bfpage{194}--\blpage{204}
(\byear{1967})
\doiurl{10.2307/2285921}
\end{barticle}
\endbibitem

\bibitem[\protect\citeauthoryear{Davidson}{1970}]{davidson1970extending}
\begin{barticle}
\bauthor{\bsnm{Davidson}, \binits{R.R.}}:
\batitle{{On extending the Bradley-Terry model to accommodate ties in paired
  comparison experiments}}.
\bjtitle{Journal of the American Statistical Association}
\bvolume{65}(\bissue{329}),
\bfpage{317}--\blpage{328}
(\byear{1970})
\doiurl{10.1080/01621459.1970.10481082}
\end{barticle}
\endbibitem

\bibitem[\protect\citeauthoryear{Plackett}{1975}]{plackett1975analysis}
\begin{barticle}
\bauthor{\bsnm{Plackett}, \binits{R.L.}}:
\batitle{{The analysis of permutations}}.
\bjtitle{Journal of the Royal Statistical Society Series C: Applied Statistics}
\bvolume{24}(\bissue{2}),
\bfpage{193}--\blpage{202}
(\byear{1975})
\doiurl{10.2307/2346567}
\end{barticle}
\endbibitem

\bibitem[\protect\citeauthoryear{Marden}{1996}]{marden1996analyzing}
\begin{bbook}
\bauthor{\bsnm{Marden}, \binits{J.I.}}:
\bbtitle{{Analyzing and Modeling Rank Data}}.
\bpublisher{CRC Press},
\blocation{Boca Raton}
(\byear{1996})
\end{bbook}
\endbibitem

\end{thebibliography}
\end{document}